\documentclass{article} 
\usepackage{arxiv}


\usepackage{amsmath,amsfonts,bm}









\def\eqref#1{equation~\ref{#1}}









\def\1{\bm{1}}










\DeclareMathAlphabet{\mathsfit}{\encodingdefault}{\sfdefault}{m}{sl}
\SetMathAlphabet{\mathsfit}{bold}{\encodingdefault}{\sfdefault}{bx}{n}













\usepackage{hyperref}

\usepackage{graphicx}
\graphicspath{ {./images/} }

\usepackage{tcolorbox}
\usepackage{mathtools}

\usepackage{adjustbox}
\usepackage{color,soul}
\usepackage{amsmath}
\usepackage{amssymb}

\usepackage{bm}
\usepackage{amsfonts}       
\usepackage{url}            
\usepackage{float}

\newcommand{\St}{\text{St}^m_n}
\newcommand{\Sttan}{\mathcal{T}_U\text{St}^m_n}
\newcommand{\DeltaT}{\Delta^{\intercal}}
\newcommand{\UT}{U^{\intercal}}
\newcommand{\UTilde}{\tilde{U}}
\newcommand{\T}{\intercal}
\newcommand{\Gr}{\text{Gr}^m_n}

\usepackage{amsthm}

\newtheorem{theorem}{Theorem}[section]
\newtheorem{cor}{Corollary}[theorem]
\newtheorem{lemma}[theorem]{Lemma}

\usepackage{nicematrix}
\usepackage{arydshln}

\usepackage{tikz}

\usepackage{subfigure}

\usepackage{algorithm2e}
\RestyleAlgo{ruled}
\SetKwComment{Comment}{/* }{ */}

\usepackage[cachedir=minted-cache,frozencache=true]{minted} 
\tcbuselibrary{minted,breakable,xparse,skins}

\definecolor{bg}{gray}{0.95}
\DeclareTCBListing{mintedbox}{O{}m!O{}}{%
  breakable=true,
  listing engine=minted,
  listing only,
  minted language=#2,
  minted style=default,
  minted options={%
    linenos,
    gobble=0,
    breaklines=true,
    breakafter=,,
    fontsize=\small,
    numbersep=8pt,
    #1},
  boxsep=0pt,
  left skip=0pt,
  right skip=0pt,
  left=25pt,
  right=0pt,
  top=3pt,
  bottom=3pt,
  arc=5pt,
  leftrule=0pt,
  rightrule=0pt,
  bottomrule=2pt,
  toprule=2pt,
  colback=bg,
  colframe=orange!70,
  enhanced,
  overlay={%
    \begin{tcbclipinterior}
    \fill[orange!20!white] (frame.south west) rectangle ([xshift=20pt]frame.north west);
    \end{tcbclipinterior}},
  #3}

\usepackage{textcomp}

\usepackage{multirow}

\title{StiefelGen: A Simple, Model Agnostic Approach\\ for Time Series Data Augmentation\\ over Riemannian Manifolds}

\author{
  Prasad Cheema \\
  \textit{National Institute of Informatics}\\
  \texttt{prasad@nii.co.jp}
\And
  Mahito Sugiyama\\
  \textit{National Institute of Informatics}\\
  \textit{SOKENDAI}\\
  \texttt{mahito@nii.co.jp}
}

\begin{document}

\maketitle

\begin{abstract}

Data augmentation is an area of research which has seen active development in many machine learning fields, such as in image-based learning models, reinforcement learning for self driving vehicles, and general noise injection for point cloud data. However, convincing methods for general time series data augmentation still leaves much to be desired, especially since the methods developed for these models do not readily cross-over. Three common approaches for time series data augmentation include: (i) Constructing a physics-based model and then imbuing uncertainty over the coefficient space (for example), (ii) Adding noise to the observed data set(s), and, (iii) Having access to ample amounts of time series data sets from which a robust generative neural network model can be trained. However, for many practical problems that work with time series data in the industry: (i) One usually does not have access to a robust physical model, (ii) The addition of noise can in of itself require large or difficult assumptions (for example, what probability distribution should be used? Or, how large should the noise variance be?), and, (iii) In practice, it can be difficult to source a large representative time series data base with which to train the neural network model for the underlying problem. In this paper, we propose a methodology which attempts to simultaneously tackle all three of these previous limitations to a large extent. The method relies upon the well-studied matrix differential geometry of the Stiefel manifold, as it proposes a simple way in which time series signals can placed on, and then smoothly perturbed over the manifold. We attempt to clarify how this method works by showcasing several potential use cases which in particular work to take advantage of the unique properties of this underlying manifold.

\end{abstract}

\section{Introduction}

Deep learning has proven effective in various domains such as natural language processing (NLP), computer vision (CV), and speech recognition, requiring extensive datasets for training models with billions of parameters. Fortunately, substantial data currently exists within these sub-fields. However, one can further enhance the number of available data, through synthetic data generation which aims to create artificial samples resembling unexplored areas of the input space. For instance, in CV, common techniques include stretching, flipping, cropping, and adjusting hue in the image training data set \cite{shorten2019survey}. More recently, there have been innovations seen in \textit{mix-up} strategies for data augmentation, which is essentially a form of randomized interpolation (often linear) between pairs of input data (sometimes over the latent space) and or class labels \cite{zhang2017mixup, verma2018manifold}. Despite the recent success of the aforementioned approaches for synthetic data generation, research in the time series domain has received relatively limited attention. This lack of focus can be partially attributed to the difficulties within the inherent temporal dependency structures within the time series tasks, meaning that many existing data augmentation methods simply fail. For example, a mix-up approach between two different physical signals may result in a third generated signal that is entirely non-realistic, flipping portions of the signal may result in a system which disobeys system physics, and methods such as hue-adjustment simply do not apply. Although in practice, such methods have been used to show some level of model improvement, they still leave a lot to be desired and do not demonstrate a level of maturity comparable to those data augmentation methods within the CV field \cite{iglesias2023data,iwana2021empirical}. In the following subsections we shall briefly introduce some of the classical and modern approaches towards time series data generation.

\subsection{Classical Approaches}

Classical approaches in time series data augmentation typically involve direct modifications to the underlying signal in either the time and/or frequency domains. Common techniques include jittering (adding noise), magnitude alterations, signal warping (e.g., Dynamic Time Warping - DTW), and signal flipping \cite{iglesias2023data,iwana2021empirical}. However, these traditional methods present several drawbacks, which have prompted the shift towards modern innovations. For instance, jittering tends to limit signal perturbation to noise-like features only, neglecting other sources of novelty in time series (that is, strongly favoring an aleatoric source of uncertainty as opposed to an epistemic one) \cite{balakrishnan1962problem,zhang2007neural}. Magnitude alterations may entirely change the semantic meaning of the domain \cite{iglesias2023data}, the size and style of a desired warped signal (time or frequency) may be hard to control in practice due to difficult hyper parameter optimization \cite{iwana2021time,fujiwara2021data} , and as mentioned earlier, signal flipping (and or permuting) usually presents as a strong violation of ordering assumptions built into system physics \cite{small2005applied}. Furthermore, although these time series data augmentation methods can be combined to offer slight increases in task accuracy, the combination of doing so often requires increased care in hand engineering the hyper parameters for the task. However, if one has access to the equations governing time series, then one could traditionally also push towards justification for time series data augmentation aligned with Bayesian learning. For instance, in mathematical engineering, the Navier-Stokes (NS) equations offer a precise representation of fluid flow. Leveraging these equations allows for introducing distributions over parameters in the partial differential equation (PDE) model, revealing diverse solution pathways for the fluid flow field \cite{abucide2021data}. The difficulty however, is that such approaches are not commonly seen in the machine learning community due to the inability to have knowledge of the ground truth equations of the generative functions for the physics in general. However there exist ways to form a basis approximation of the underlying signal which sees use in GeneRAting TIme Series (GRATIS) \cite{kang2020gratis} which work by generating a mixture of autoregressive (MAR) models, from which multiple instance of non stationary time series can be generated. Similar traditional methodologies can be seen via dynamic linear modeling of the system signal \cite{fruhwirth1994data}, and in augmentation via sampling over conditional and marginal distributions \cite{meng1999seeking}.

\subsection{Modern Approaches}

Modern approaches to time series data generation have predominantly attempted to leverage the power of deep learning (DL) methods, making use of extensive time series databases and the general non-linear learning capabilities of deep generative model architectures. An exemplar is the encoder-decoder type models, which project high-dimensional structural input to a lower-dimensional latent space, facilitating the generation of new high-dimensional structural data through latent space sampling and exploration. One such example may be seen in the LSTM-AE model used to assist in skeleton-based human action recognition \cite{tu2018spatial}. Another example may be seen in \cite{devries2017dataset}, with the main difference being that an LSTM variational autoencoders (VAE) was used. Another popular DL approach sees the use of Generative Adversarial Networks (GANs) \cite{goodfellow2014generative}, which resembles AEs but with a different cost function rooted in game theoretic principles. Time series GANs exhibit several distinct architectural groupings, including fully-connected GANs \cite{lou2018one}, GANs with recurrent neural networks topologies (RNNs) constructed with respect to LSTM models \cite{haradal2018biosignal} or hidden Markov models \cite{hasibi2019augmentation}, and GANs utilizing sliding windows via convolutional neural networks (CNNs) \cite{ramponi2018t, che2017boosting}. These GANs may model over the time or spectral domain. Despite promising results in certain applications, DL-based methods do require substantial training data, prolonged training times, and face challenging tuning, which are common issues seen in the broader DL field \cite{sutskever2011generating}. In time series however, data scarcity and high generation costs compound these challenges, especially when compared to image, or language databases. Furthermore, it can be expensive to collect and or generate as seen in the examples of wind tunnel data in aerospace engineering \cite{osburg2011pricing}. As a result some methods have tried to work directly with signal and perturb it in the frequency domain to induce different noisy instances of some central reference signal. This can be seen in augmentation bank where it has been used by the authors in order to improve pre-training of unsupervised domain adaptation tasks \cite{zhang2022self}, and in Spectral and Time Augmentation (STAug), where the authors apply the empirical mode decomposition (EMD) \cite{rilling2003empirical} to split a
time series into multiple subcomponents, which can be then be individually perturbed to generate new instances of the original signal \cite{zhang2023towards}. Lastly, an important contribution to the data augmentation field as a whole may be seen in the plethora of mixup strategies which exist. Starting from its linear mixing roots \cite{zhang2017mixup}, there now exist binary \cite{beckham2019adversarial}, cut \cite{yun2019cutmix}, weighted geometric \cite{verma2021towards}, amplitude \cite{xu2021fourier}, and spectrogram \cite{kim2021specmix} instances of the mixing formulation among many others. Moreover recently, by controlling for the \textit{degree of chaos}, there have been efforts to offer a controlled form mixing for time domain specific tasks \cite{demirel2023finding} 

However, mixup methods as applied to time series data still leave a lot to be desired as the intermediary signals they generate don't necessarily have the guarantee that they produce signals which are physically realizable due to the underlying complexities which often plague dynamical data generation physics \cite{west1999evaluation}. Further, there exists difficulties in the deep learning approaches since they often require relatively large amounts of data not found in certain fields, such as engineering and physics. Lastly, the classical approaches tend to not offer enough nuances required for effective signal generation.

This paper introduces a novel approach to time series data augmentation, distinct from both classical and contemporary methods within the field. Our proposed methodology, named \textit{StiefelGen}, leverages matrix differential geometry to seamlessly provide tailored levels of aleatoric and epistemic uncertainty by traversing geodesic curves in matrix space. Futhermore, it is distinguished by its minimal hyperparameter requirements, simplicity in implementation, interpretability, and model-agnostic nature. Initially, we shall showcase its effectiveness on empirical signals, elucidating its smooth geodesic properties. Subsequently, in recognition of the distinctiveness of this time series data augmentation strategy, we shall explore some unique use cases of it. We delve into a pertinent application in structural health monitoring (SHM) and explore \textit{StiefelGen's} ability to study the robustness of SHM one-class classifier, and showcase its ability to generate adversarial time series samples. Moreover, we explore its capacity to integrate uncertainty quantification (UQ) and forecasting through the incorporation of dynamic mode decomposition (DMD), another model-agnostic, data-driven technique. Finally, fulfilling the essential role of time series data augmentation, we illustrate \textit{StiefelGen}'s proficiency in generating synthetic data, thereby enhancing the learning capabilities of a generic LSTM model.

\section{Methodology}

The algorithm posited in this paper draws its foundations from the realm of matrix differential geometry \cite{absil2008optimization}, with a specific emphasis on the Stiefel manifold—a linchpin in the broader landscape of matrix manifold theory. To impart a succinct conceptual grasp of this intricate framework, we offer a concise elucidation herein. For those seeking a more nuanced initiation into this intricate domain, we direct the reader to the Appendix (Section \ref{app_sec:Riemannian_Intro}), where a comprehensive and accessible introduction to the topic has been prepared to assist people new to this field.

\subsection{Mathematical Preliminaries} \label{sec:background_math}

Consider a smooth manifold $\mathcal{M}$. If we take $\mathcal{M}\subseteq \mathbb{R}^{m\times n}$, then $\mathcal{M}$ is what is known as a \textit{matrix manifold}. In particular, the \textit{Stiefel manifold}, $\St\subseteq \mathbb{R}^{m\times n}$ is defined as the set of matrix elements in $\mathbb{R}^{m\times n}$ which satisfy common matrix orthogonality constraints. More specifically, 
\begin{equation}
    \St \coloneqq \{U\in\mathbb{R}^{m\times n} \mid U^{\intercal}U = I_n, m\geq n\}.
\end{equation}
Notice that if $n=1$, then one is working over the geometry of a hyper sphere, and thus $\St$ can be thought of as the matrix generalization of the hyper sphere. Alternately, $\St$ can be thought of as the geometry of the set of $n$ orthonormal $m$-frames. Moreover, if one works with $\St$ considering $m=n$, then one is said to be working with the geometry of the \textit{special orthogonal group}, defined formally as follows:
 \begin{equation}
     \mathcal{O}(m)\coloneqq \{U \in \mathbb{R}^{m\times m} \mid \UT U = I_m = U \UT\}.
 \end{equation}
In other words, $\text{St}^m_m = \mathcal{O}(m)$ and similarly $\text{St}^n_n = \mathcal{O}(n)$. Due to the close connection between $\mathcal{O}(m)$ and $\St$, there exists an alternate approach to defining $\St$, in relation to $\mathcal{O}(m)$ via an equivalence relation. This involves using another matrix geometry structure known as the \textit{Grassmanian manifold} \cite{bendokat2024grassmann}, $\Gr$. Mathematically, $\St$ works as an intermediate ``jump'' or ``stepping stone'' in order to move between $\mathcal{O}(m)$ and $\Gr$. The equivalence relation structure is defined as follows,
\begin{equation}
  \Gr \coloneqq \mathcal{O}(m) / (\mathcal{O}(n)\times \mathcal{O}(m-n)),
\end{equation}
where the cross symbol should be understood as a direct product acting over groups. Further information on the intricate relationship between $\St$, $\Gr$, and $\mathcal{O}(m)$, and the reasons why it is preferable to work over $\St$ rather than $\Gr$ for the proposed methodology are clarified in Section \ref{app_sec:Understanding_Grassmann}. 

Given the manifold $\St$, consider now a point, $U \in \St\subseteq \mathbb{R}^{m\times n}$, and define the tangent space local at $U$ to be $\Sttan$. A common parameterization of this tangent (vector) space at a point $U$ is given by
\begin{equation} \label{eqn:tangent_space_background}
    \Sttan = \{\Delta \in \mathbb{R}^{m\times n} \mid \Delta^{\intercal} U + U^{\intercal}\Delta =0 \},
\end{equation}
thus implying that $U^{\intercal}\Delta \in \mathbb{R}^{n\times n}$ is skew-symmetric. Further, consider a smooth curve $\gamma: [0,1]\rightarrow \St$ such that $\gamma(0)=U$. Then, $\gamma$ is considered to be geodesic over $\St$ if $\nabla_{\dot{\gamma}} \dot{\gamma}=0$ (a condition known as auto-parallelism), where $\nabla$ is some affine connection over $\St$, as it can be shown that under this condition $\gamma$ is indeed locally length minimizing. Let $\Delta \in \Sttan$ be a tangent vector emanating from $U$ on $\St$, then there exists a geodesic $\gamma_{\Delta}: [0,1] \rightarrow \St$ such that $\gamma_{\Delta}(0)=U$, and $
\dot{\gamma}_{\Delta}(0)=\Delta$. The \textit{exponential map} corresponding to this condition is defined as the function, $\text{Exp}_U(\Delta)\coloneq\gamma_{\Delta}(1)$. The infimum radius around $U$ with which the calculation of $\text{Exp}_U(\Delta)$ exists in a diffeomorphism with the manifold $\St$ is known as the \textit{radius of injectivity}. Intuitively, it defines the largest possible distance one can travel along a geodesic starting at $U$ whilst remaining in a well-behaved, one-to-one correspondence between points on the manifold $\St$ and the tangent space generated relative to $\Sttan$. Finally, given $\Sttan$ one can define a \textit{canonical} inner product $ \langle \cdot, \cdot \rangle_w: \mathbb{R}^{m\times n} \times \mathbb{R}^{m\times n} \rightarrow \mathbb{R}$, with associated weight $w=\left(I-\frac{1}{2} U U^{T}\right)\in\mathbb{R}^{m\times m}$. The reasoning behind opting for the canonical Stiefel metric and not the one respect to the ambient space of $\mathbb{R}^{m\times n}$, is shown in Section \ref{app_sec:Riemannian_Intro}. The exact form of the canonical metric is given as follows:
\begin{align}
    \langle\Delta, \tilde{\Delta}\rangle_{U}=\text{tr}
\left(\Delta^{T}\left(I-\frac{1}{2} U U^{T}\right)
\tilde{\Delta}\right),
\end{align}
where evidently the weighting is performed with respect to $U$. Given this structure one now has a way in which tangent vectors, can be worked with in a normalized fashion as follows:
\begin{align}
    \bar{\Delta } = \frac{\Delta}{\sqrt{\langle \Delta, \Delta \rangle_w}} = \frac{\Delta}{\|\Delta\|_w}.
\end{align}
The reason for considering this in the paper is so that one can scale randomly sampled $\Delta$ vectors with respect to the \textit{radius of injectivity}, so that for example, if one desires a $\Delta$ within 0.4 times relative to the radius of injectivity, one can first normalize the sampled $\Delta$ then multiply this by 0.4. Interestingly (and rather luckily) for $\St$ the radius of injectivity is known globally to be $0.89 \pi$ (to within a tight lower bound) \cite{rentmeesters2013algorithms}.

\subsection{The StiefelGen Algorithm}

Consider a uni-variate time series, $\mathcal{T} = (t_i)_{i=1}^N$. Our proposal, the S(tiefel)Gen(ereation) algorithm, requires constructing what is known as the \textit{page matrix} \cite{damen1982approximate} relative to $\mathcal{T}$. This involves choosing a particular window size, $m$, and reshaping it as:
\begin{align} \label{eqn:page_matrix}
\mathcal{T}_{\text{mat}} = 
    \begin{bmatrix}
    t_1 & \hdots & t_m\\
    t_{m+1} &    & t_{2m} \\
    \vdots  & \ddots & \vdots \\
    t_{m(n-1)+1} & \hdots & t_{N}
    \end{bmatrix},
\end{align}
where we must naturally require that $m \mid N$ ($m$ divides $N$). In the event that $m \nmid N$, the resolution entails either (i) implementing a rounding operation, (ii) applying padding, or (iii) introducing a slight signal overlap.

The origin of page matrices can be traced back to their utilization as a  method to approximate Hankel matrices \cite{damen1982approximate}. This technique has garnered historical significance within the domain of singular spectrum analysis (SSA). In particular, SSA can be construed as a form of spectrum-based principal components analysis (PCA) tailored for time series \cite{schoellhamer2001singular}. Thus, the intertwining narratives of page matrix construction and time series analysis delineate a rich history. More recently, page matrices have played a pivotal role in Koopman-based data-driven simulation and control \cite{lian2021koopman}. And contemporaneously, they have been instrumental in a body of work which shares some motivational similarity with this paper: to conduct a model-agnostic analysis of time series signals using page matrices \cite{agarwal2018model}. However, it should be noted that while the authors in \cite{agarwal2018model} concentrate on time series forecasting and imputation within a specific subclass of time series signals, our trajectory diverges from this point. Specifically, we focus on the innovative utilization of page matrices to generate synthetic data through the matrix geometry.

A natural question then, is what constitutes a good choice of $m$ (and by consequence, $n$). Evidently, these act as hyper parameters. However, an argument will be made later on that in practice, the exact choice of these do not matter greatly. In fact, one's choice of $m$ and $n$ will tend to dictate whether or not one is looking for synthetic signals which have more aleatoric or epistemic sources of uncertainty. Moreover, notice that if one is working with a set of multivariate signals, then there is no need to even select an $m$ or $n$ as this can be pre-determined by the stacking of the multivariate signals (meaning no page matrix is required to be constructed). The one caveat is that for an interpretable augmentation to occur, the multivariate signals should ideally be produced via the same underlying generating source. For example, one shouldn't stack random, unrelated signals, which are generated from a different set of physics.  Examples of valid signal stacking will be shown in this paper, as they pertain to structural health monitoring, and the Japanese vowel data set. Lastly, it should be noted that another manner in which page matrix construction can be avoided is simply to perform  \textit{StiefelGen} in the degenerate base case of $\text{St}_1^m\in\mathbb{R}^m$. This is a perfectly valid Stiefel manifold which has collapsed to hyper sphere geometry. However this would necessitate ``performing the SVD for a 1D'' matrix which is an unusual corner case. Due to this, we call this by a new name, \textit{SphereGen}, and explore its construction and intuitions in Appendix \ref{app_sec:quirks_lims}. Although \textit{SphereGen} has some limitations compared to its generalization, \textit{StiefelGen}, its main benefit comes in its linear $O(m)$ time complexity (as opposed to the $O(mn^2)$ complexity of \textit{StiefelGen} due to matrix exponentiation. Or one can take it as $O(mn^2)$ complexity, and consider $n=1$, to arrive at $O(m)$ anyway).

Upon construction of the page matrix, \textit{StiefelGen} requires performing a singular value decomposition (SVD) over the input matrix, $\mathcal{T}_{\text{mat},1}$ in the way of: $\mathcal{T}_{\text{mat},1} = U_1 \Sigma V^{\intercal}_1$. In particular, notice that in this factorization, geometric constraints are placed upon $U_1$ and $V_1$, as they are unitary by definition, implying that $U_1 U_1^{\intercal} = I_m$, and $V_1 V_1^{T} = I_n$. Now, given that both $U_1, V_1 \in \St$\footnote{More correctly $U_1\in \text{St}_m^m = \mathcal{O}(m)$, and $V_1\in \text{St}_n^n = \mathcal{O}(n)$. However we will use $\St$ in general for notational expedience.}, it is possible to perturb these matrices to a new set of matrices so that $U_1 \rightarrow U_2$, and similarly, $V_1 \rightarrow V_2$. If one were to na\"ively perform a linear perturbation in the way of, $U_2 = U_1 + \varepsilon$ instead, $U_2$ would naturally leave $\St$ and be pushed into the ambient $\mathbb{R}^{m\times n}$ space for large enough choice of $\varepsilon$, which by implication would mean that $U_2$ is no longer a rotation (and by extension, unitary) matrix. The key part to \textit{StiefelGen} is the fundamental awareness that since $U_1, V_1 \in \St$, then one can perturb these matrices without leaving the manifold since the geodesics of the manifold are well known \cite{absil2008optimization, edelman1998geometry}. Therefore one is able to obtain a valid $U_2$ and $V_2$ as unitary rotation matrices, which form a new page matrix $\mathcal{T}_{\text{mat},2} = U_2 \Sigma V_2^{\intercal}$, and then with a reshape operation, we can recover a newly generated time series $\mathcal{T}_{2}$. Notice that throughout this entire process the singular values of the time series remain unmodified. This is the key which remains to be invariant and which ensures that the original signal physics are always referenced. Only new components of the rotation matrices are being smoothly generated, but the amount of energy being contributed towards each singular vector remains unchanged. An intuition of the impact of this can also be viewed from the perspective of the dyadic expansion: 
\begin{align}
\mathcal{T}_{\text{mat}} = \Sigma_{i=1}^n \sigma_i u_i v_i^{\intercal},
\end{align}
where $\sigma_i = \Sigma_{ii}$\footnote{Be aware of the overloaded use of $\Sigma$ as both, a summation symbol and that of the SVD singular value matrix.}, and $u_i$ and $v_i$ are the $i$-th columns of matrices $U$ and $V$. According to this expansion by changing only the $u_i$ and $v_i$ vectors, but leaving the $\sigma_i$ as invariant, one is in smoothly perturbing the basis representations of the signal, but leaving the magnitude of the expansion coefficients unchanged. 

However, the issue of how best to perform this perturbation remains, since randomly following geodesics over a manifold requires solving a set of differential equations which can be expensive, and this will also scale with the distance travelled over the manifold. The proposed solution is to make use of the known formulation for the \textit{exponential map} of $\St$, which allows one to efficiently project a random vector $\Delta \in \Sttan$ directly onto $\St$ \cite{edelman1998geometry}. This is far more efficient than having to numerically integrate a set of differential equations over a high dimensional space, as instead the exponential map provides a direct computation of the end point of the geodesic pathway. The one caveat is that one must remain within the \textit{radius of injectivity} to ensure the exponential map calculation remains valid. Beyond the radius of injectivity, (i) The exponential map may no longer be injective, meaning that different tangent vectors can map to the same point on the manifold, (ii) The requirement for the exponential map to be a diffeomorphism onto its image is no longer satisfied outside the radius of injectivity, implying a loss of invertibility and smoothness, and (iii) Beyond the radius of injectivity, geodesics may encounter conjugate points \cite{spivak1978comprehensive, do2016differential}. However, for most practical use cases of \textit{StiefelGen}, one does not need to venture near the radius of injectivity for time series data augmentation. Luckily, the radius of injectivity over $\St$ is globally known to be 0.89$\pi$ to within a tight lower bound (Section \ref{sec:background_math}), and thus scaling to be within this radius remains globally simple. 
Given this, a simple four step process may be envisioned for generating new $U$ and $V$ matrices, as follows: (1) Considering $U_1$ (or $V_1$) as an origin point, randomly sample a matrix. This matrix does not need to lie on $\St$ at the moment. From here, (2) Project the matrix onto the tangent space generated by base point, $U_1$, $\Sttan$, which we may call $\Delta_1\in \Sttan$. (3) Scale $\Delta_1$ to be within the radius of injectivity, in that $\Delta_2 = \Delta_1 \ \|\Delta_1\| \times 0.89\pi \beta$, where $\beta\in [0,1]$. Naturally, if $\beta=0$ you will end up back at $U_1$, and if $\beta=1$ you will end up on the edge of the injectivity radius. (4) Now calculate the matrix exponential map, $\exp(\Delta_2)$, which will lead to a geometrically consistent $U_2$ to be calculated. This can be similarly repeated for $V_1$ to get a new $V_2$. Each of $U_1$ and $V_1$ will naturally lie on their own manifolds. These four steps are summarized in Figure \ref{fig:Stiefel_Projection}.

\begin{figure}[htbp]
  \centering
 {\includegraphics[width=1\textwidth]
  {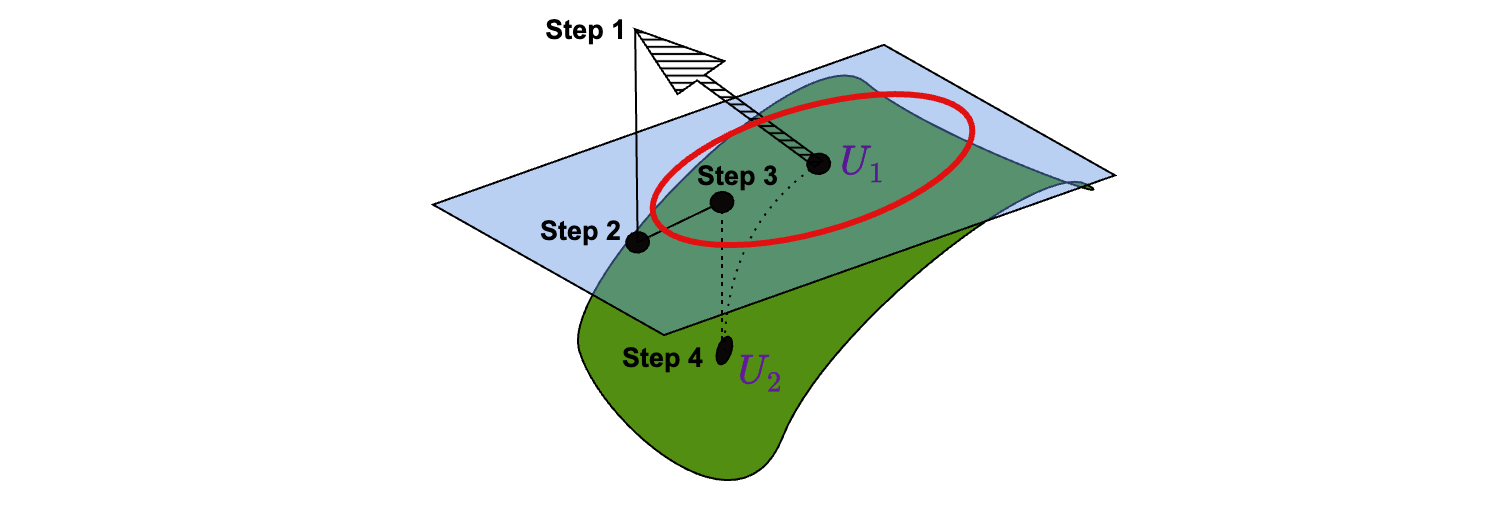}}
  \caption{Summary of the basic steps taken in the \textit{StiefelGen} algorithm.}
  \label{fig:Stiefel_Projection}
\end{figure}

This forms the essential \textit{core} of the \textit{StiefelGen} algorithm. Practically speaking, it is  recommended to usually apply a small smoothing filter to $\mathcal{T}_2$. This is because the vast majority of points over $\St$ tend to be unstructured and lack meaningful information (in other words, they appear as pure noise relative to the original signal), as the only technical requirement is that these matrix points be orthonormal, of which an infinitude of matrices are possible. Although meaningful points \textit{do exist} (as must be the case because our inputs $U_1$ and $V_1$ have desirable structural properties, relative to the unchanged $\Sigma$ in particular), the manifold $\St$ encompasses such a vast array of matrices that actual meaningful data points can become remarkably rare, especially as one moves further away from $U_1$ and $V_1$. This is the sole reason why an additional smoothing filter is often recommended. For small perturbations however, smoothing often does not need to be performed. The pseudocode for the overall algorithmic implementation is shown in Algorithm \ref{alg:one}. The overall time complexity of \textit{StiefelGen} is $O(mn^2)$\footnote{If $m < n$, then it will be $O(nm^2)$.} due to efficiency boosts in the calculation of the matrix exponential for skew-symmetric systems in the retraction stage (Step 3 $\rightarrow$ Step 4) \cite{zimmermann2022computing,edelman1998geometry}. Lastly, to emphasize its terseness and simplicity of implementation, the end-to-end Python code is shown in Appendix Section \ref{app:alg} to complement the pseudocode in Algorithm \ref{alg:two}.

\begin{center}
\begin{algorithm}[htbp]
\caption{S(tiefel)Gen(eration) MAIN}\label{alg:one}
\KwData{Input: $(\mathcal{T}_1, m\geq 2, \beta\in[0,1], \ell \geq 1)$}
\KwResult{Output: $(\mathcal{T}_2)$}
$n\gets \text{round}(\text{len}(\mathcal{T}_1)/m)$\;
$\mathcal{T}_{\text{mat}, 1} \gets \text{reshape}(\mathcal{T}_1$) \Comment*[r]{Reshape to $m\times n$ size.}
$U_1, \Sigma, V_1 \gets \text{SVD}(\mathcal{T}_{\text{mat}, 1})$\;
$U_2 \gets \text{StiefelGen}(U_1,\beta)$\;
$V_2 \gets \text{StiefelGen}(V_1,\beta)$\;
$\mathcal{T}_{\text{mat}, 2} \gets \text{reconstruct}(U_2,\Sigma,V_2)$ \Comment*[r]{Build matrix using SVD elements.}
$\mathcal{T}_2 \gets \text{reshape}(\mathcal{T}_{\text{mat}, 2})$\;
$\mathcal{T}_2 \gets \text{smooth}(\mathcal{T}_2, \ell)$ \Comment*[r]{Set $\ell=1$ if no smoothing required.}
\end{algorithm}
\end{center}

\begin{center}
\begin{algorithm}[htbp]
\caption{StiefelGen}\label{alg:two}
\KwData{Input: $(U_1, \beta\in[0,1])$}
\KwResult{Output: $(U_2)$}
$\Delta = \text{rand\textunderscore tan}(U_1)$ \Comment*[r]{Random tangent space vector centered at $U_1$}
$\tilde{\Delta} = (\Delta / \|\Delta\|_w) \times \beta \times 0.89 \pi$ \Comment*[r]{Scale wrt injectivity radius} 
$U_2 = \text{Exp}(\tilde{\Delta})$ \Comment*[r]{Exponential retraction} 
\end{algorithm}
\end{center}

We again draw attention to the fact that \textit{StiefelGen} requires \textit{minimal} hyper parameters usage, and zero training. The only technically required hyper parameter is $\beta$, which describes how far towards the injectivity radius one intends to move. For generated signals which require a strong ``outlier'' or ``novelty'' flavour, one should opt for larger values of $\beta$, and for a standard data augmentation procedure, one would opt for a smaller value of $\beta$. Further, the $m$ and $n$ reshaping is only required if the signal arrives in a uni-variate fashion, necessitating the construction of the page matrix (recall if a multivariate stack of time series signals is input, then $m$ and $n$ are already known, and no page matrix construction needs to occur). Also, the hyper parameter for smoothing, $\ell$, is completely optional, but recommended if larger perturbations are performed. Finally we point out that in Algorithm \ref{alg:one} we are \textit{technically} following the geometry of the underlying orthogonal groups, which are $\mathcal{O}(m)$ for $U$, and $\mathcal{O}(n)$ for $V$. This is because we are not taking the additional step of considering the first $d < \min(m,n)$ columns of either $U$ or $V$ in the algorithm. For most experiments in this paper we consider working with all the columns of the input $U$ and $V$ matrices, unless otherwise stated. Lastly, we clarify that the step one of Figure \ref{fig:Stiefel_Projection} is shown because the first line in Algorithm \ref{alg:two} \texttt{rand\textunderscore tan(U\textunderscore 1)} is implemented using functions written in the \textit{Geomstats} library \cite{geomstats}, wherein, inspired by \textit{PyManOpt} \cite{townsend2016pymanopt} they first generate a random normal matrix in ambient $\mathbb{R}^{m\times  n}$ space, then apply a vector projection onto $\Sttan$. It should be noted that in principle one can generate elements of $\Sttan$ directly, by the multiplication $\Delta  = UA$, where $U$ is the reference base point, and $A\in\mathbb{R}^{n\times n}$ is a random skew-symmetric matrix. This is because the necessary condition for elements to be in $Sttan$ is that $\Delta^{\intercal} U +  U^{\intercal}\Delta=0$ (Equation \ref{eqn:tangent_space_background}), and thus via substitution we see,
\begin{align*}
    (UA)^{\intercal}U + U^{\intercal}(UA) &= A^{\intercal} U^{\intercal} U + U^{\intercal}UA\\
    &= A^{\intercal} + A\\
    &=0,
\end{align*}
where $U^{\intercal}U=I_n$ due to the orthogonality condition, and $A^{\intercal}+A=0$ due to skew-symmetry. This approach is similarly taken by Zimmerman \cite{zimmermann2017matrix}, except he adds an additional level of randomness by projecting an additional matrix onto $\Sttan$ as follows:
\begin{align}
   \Delta = UA + (I_{m} - UU^{\intercal } )T,
\end{align}
where $T\in\mathbb{R}^{m\times n}$ is any random matrix, $UU^{\intercal }$ is the orthogonal projector matrix onto $\St$ (notice that $UU^{\intercal }\neq U^{\intercal }U$ except in the case that the manifold is $\mathcal{O}(m)$, in which case $UU^{\intercal }= U^{\intercal }U=I_{m}$), and thus $(I_{m} - UU^{\intercal } )$ represents the orthogonal projection matrix onto the orthogonal complement of the column space (or range) of $U\in \St$. The reason this operation lands onto $\Sttan$ once again comes back to the vital condition induced by orthogonality in Equation \ref{eqn:tangent_space_background}. Briefly,
\begin{align*}
    &(UA + (I_{m} - UU^{\intercal } )T)^{\intercal}U + U^{\intercal}(UA + (I_{m} - UU^{\intercal } )T)\\ =\ &(A^{\intercal} U^{\intercal} + T^{\intercal}(I_{m} - U^{\intercal }U)) U + U^{\intercal}(UA + (I_{m} - UU^{\intercal } )T)\\
    =\ &A^{\intercal} U^{\intercal}U + T^{\intercal}(I_{m} - I_{m})U + U^{\intercal}UA + (U^{\intercal} - U^{\intercal}UU^{\intercal })T\\
    =\ &A^{\intercal} + A + (U^{\intercal} - U^{\intercal})T\\
    =\ &0, 
\end{align*}
thus implying the validity of this form of projection on $\Sttan$, as per \cite{zimmermann2017matrix}. Moreover, in this framework the projector matrix, $UU^{\intercal}=\Gr$, can be considered to act as the ``projector viewpoint'' of a Grassmann manifold previously alluded to in \ref{sec:background_math}. Some further information about the Grassman manifold and its relation to the Stiefel manifold is provided in Section \ref{app_sec:Understanding_Grassmann}.

\section{Results and Analysis} \label{sec:ResultandAnalysis}

Throughout the following sections, we shall conduct a diverse set of experiments to exemplify the versatile capabilities of \textit{StiefelGen}. These applications span from perturbing time series signals along geodesics, showcasing a seamless transition from the conventional time series data augmentation problem to outlier signal generation, which has particular significance in synthetic structural health monitoring (SHM) scenarios. Additionally, we integrate this approach into a dynamic mode decomposition (DMD) problem, illustrating how it can generate multiple solution pathways in a model-agnostic manner and facilitate a form of functional uncertainty quantification (UQ) analysis. Finally, we demonstrate its efficacy in addressing its intended purpose: generating synthetic data to enhance the performance of an LSTM classification task.

\subsection{Data Sets}\label{sec:datasets}

\textbf{SteamGen: }The SteamGen dataset contains the output steam flow telemetry generated using fuzzy models to mimic a steam generator at the Abbott Power IL \cite{pellegrinetti1996nonlinear, Stumpy_zenodo}. The dataset features the output steam flow telemetry, which has units of kg/s, and is sampled every three seconds leading to a total of 9600 data points. It has been used for various time series analysis and data mining tasks, such as in the discovery of time series motifs \cite{law2019stumpy}. 

\textbf{New York Taxi: } The New York Taxi dataset represents a half-hourly average of the average number of passengers recorded using NYC taxi services, as recorded by the Government of NYC \cite{Stumpy_zenodo}. We shall focus in on a 75 day period over the Fall of 2014 which is consistent with other times series studies based on this dataset \cite{law2019stumpy, cheema2023use}. This results in a dataset with 3600 data points. This time period contains three notable anomalies in the form of (i) Columbus Day, (ii) Daylight Savings, and (iii) Thanksgiving. The exploration of this dataset is detailed in Appendix \ref{app:Taxi}, distinct from the main paper. Since its analysis parallels that of SteamGen, it solely serves to demonstrate the efficacy of the proposed algorithm on a different dataset.

\textbf{Synthetic SHM: }To showcase further practical applications of Stiefel geodesics in a domain of unused in conventional machine learning (academic) datasets, we use a synthetic Structural Health Monitoring (SHM) dataset. This dataset emulates the customary array stacking procedure (often in 2D or 3D), which is often employed in the multivariate (multi-sensor) SHM context \cite{cheema2016structural, anaissi2018tensor}. Our scenario considers a simple bridge structure with five sensors, each recording 50 sinusoidal responses embedded with white noise and a constant bias. The recording frequency is set at 50Hz, spanning a duration of 9 seconds. Additional details about this dataset are presented in Section \ref{sec:SHM}.

\textbf{Spatio-Temporal DMD: }The geometric properties of Stiefel manifolds find another application in predicting future states of time series data (under certain assumptions), establishing a significant connection with the dynamic mode decomposition (DMD) algorithm \cite{schmid2010dynamic}. Our attention will be directed towards a well studied synthetic dataset, which consists of a blend of two spatio-temporal signals outlined in Chapter One of the textbook entitled \textit{Dynamic Mode Decomposition: Data-Driven Modeling of Complex Systems} \cite{kutz2016dynamic}. Further insights into this synthetic dataset are expounded upon in Section \ref{sec:UQ_DMD}.

\textbf{Japanese Vowels: } 
The Japanese Vowels dataset comprises multivariate time series data capturing the speech patterns of nine male speakers articulating the vowels 'a' and 'e' as a diphthong. Available on the UCI Machine Learning Repository \cite{misc_japanese_vowels_128}, it consists of 640 time series, each featuring 12 LPC (Linear Prediction Cepstrum) coefficients derived from speech signals. These coefficients convey the spectral characteristics of vowel pronunciation \cite{kudo1999multidimensional}. Recorded at 10kHz, the dataset is multivariate and time-series in nature, with 270 training and 370 testing instances. Aimed at conventional time series data augmentation, we shall use it to enhance the overarching accuracy of an under-capacity LSTM model, trying to classify nine speakers.

\subsection{StiefelGen: An Overview} \label{sec:overview}

We shall delve into the impact of generating augmentation instances—marked by moderate perturbations over the Stiefel manifold—as well as outlier instances—characterized by substantial perturbations over the same manifold. We aim to offer insight into the distinct nature of these perturbations, particularly as they apply separately to matrices $U$ and $V$, showcasing the diverse sets of information carried by these matrices. Our examination in this subsection shall center on the SteamGen dataset, with a concise analysis similarly applied to the Taxi dataset available in the Section \ref{app:Taxi}.

\textbf{Moderate Perturbation}

In our initial examination, we apply a ``moderate perturbation factor'' ($\beta=0.4$) as well as a smoothing factor ($\ell=5$) to the first 2000 points of the SteamGen dataset. After reshaping the time series signal into a 50 $\times$ 40 page matrix, and applying a \textit{StiefelGen} perturbation, we arrive at Figure \ref{fig:intro_small}. Notably, Subfigure \ref{fig:intro_small}a presents a global view where the newly generated time series seamlessly integrates with the original signal by a process of appending. To characterise the nature of this signal change a little better, a plot of the residuals in Subfigure \ref{fig:intro_small}d unveils novel intricate noise patterns, deviating from standard probabilistic models. In particular Subfigure \ref{fig:intro_small}d is a histogram plot of the $\Delta$ values generated in Subfigure \ref{fig:intro_small}c by subtracting the newly generated signal from the original signal. We also note that from observing Subfigure \ref{fig:intro_small}b, the newly generated time series has exhibited both shifts with respect to an aleatoric sense (noise), as well as an epistemic sense (functional). The reasons for this will be explored later in this subsection.

\begin{figure}[htbp]
  \centering
  \subfigure[A global view of the signal with the generated signal appended to the end.]{\includegraphics[width=0.45\textwidth]{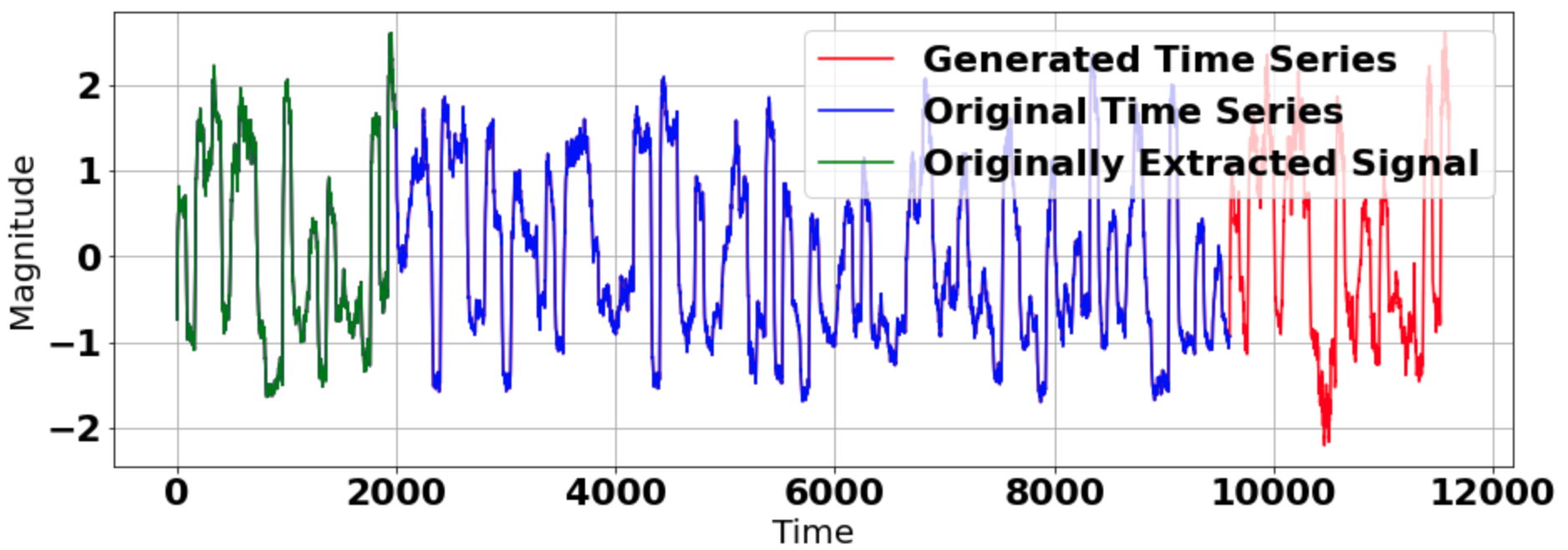}}
  \hspace{0.01\textwidth} 
  \subfigure[An over plot of the original signal with the newly generated signal.]{\includegraphics[width=0.45\textwidth]{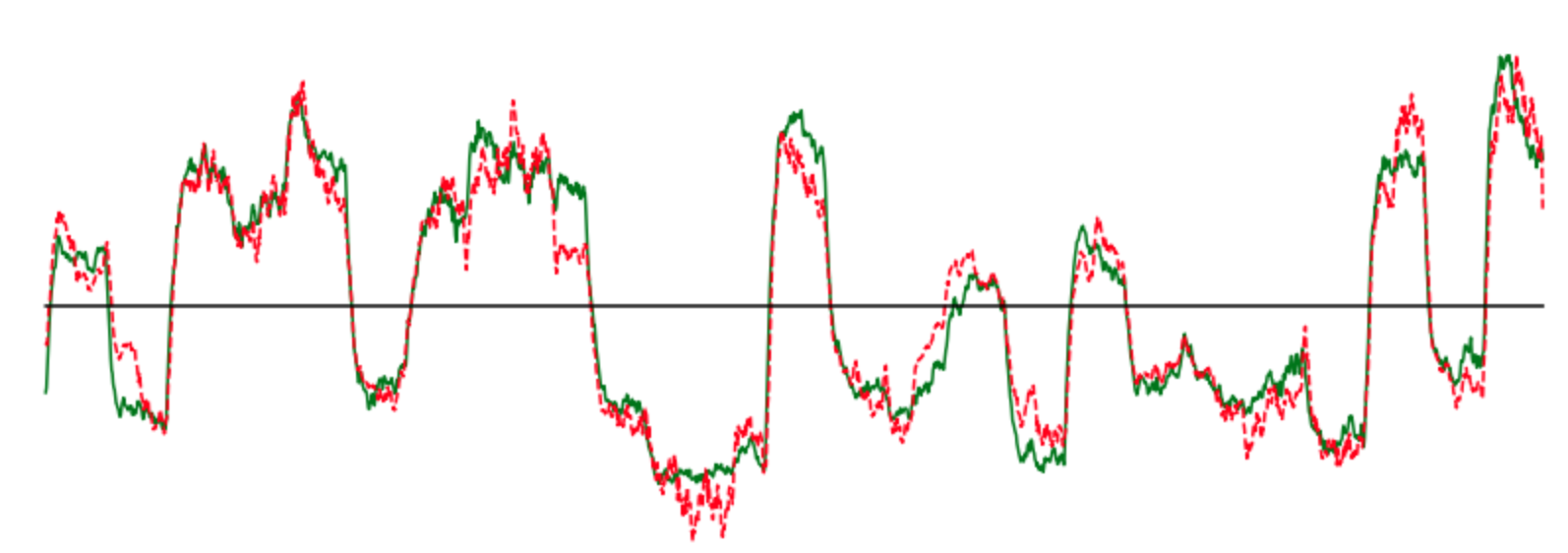}}

  \subfigure[A residual plot of the original and generated signals.]{\includegraphics[width=0.45\textwidth]{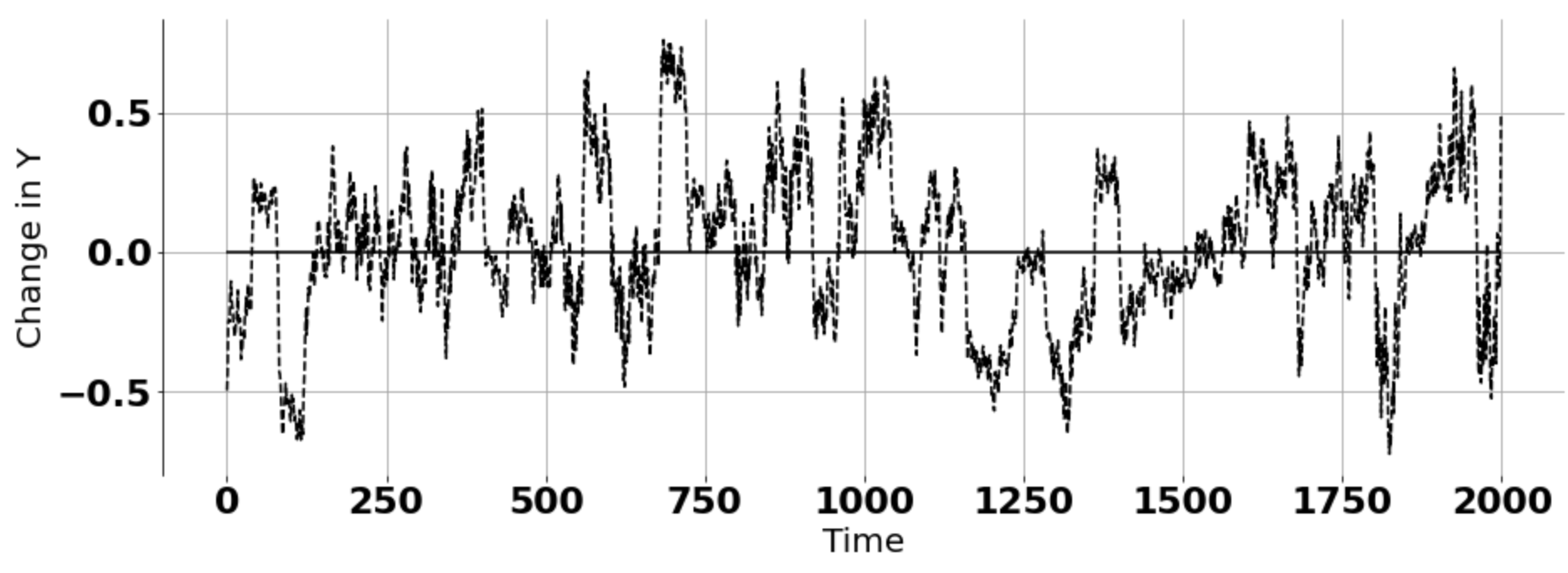}}
  \hspace{0.01\textwidth} 
  \subfigure[A histogram over the residuals.]{\includegraphics[width=0.45\textwidth]{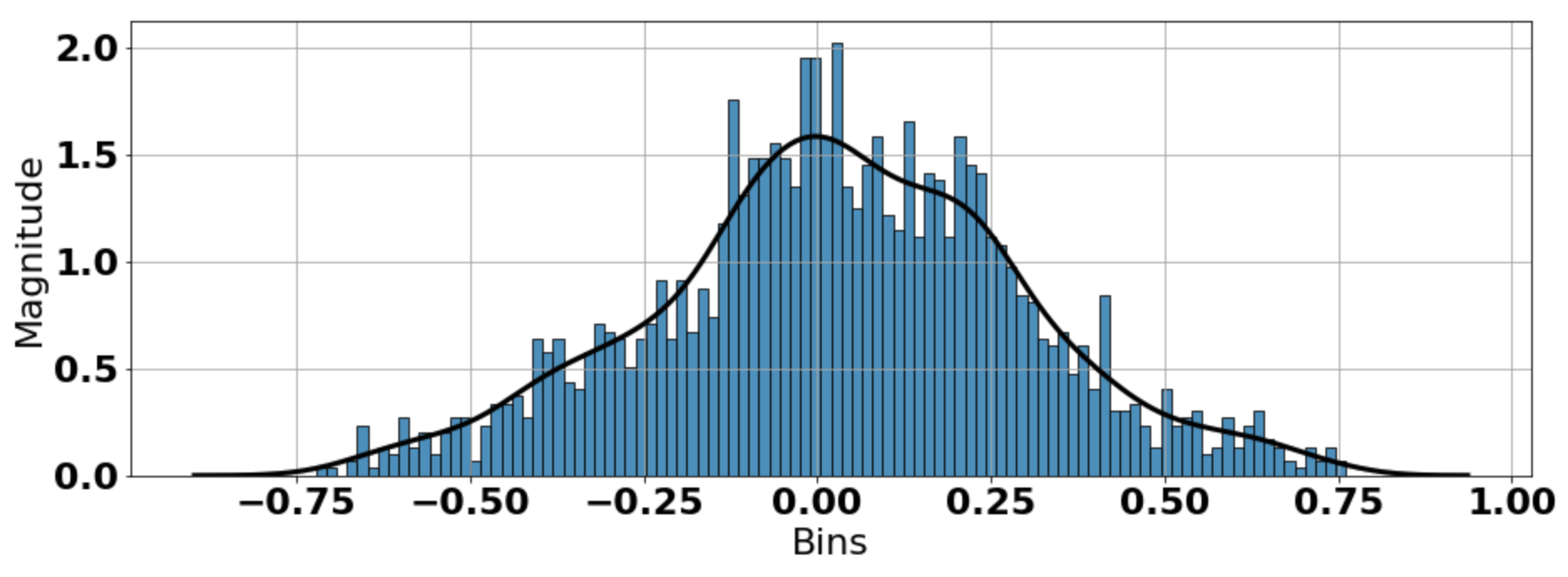}}
  \caption{An overview of the effect of applying a moderate perturbation factor to generate a new signal based on the first 2000 points of input for the SteamGen data set.}
  \label{fig:intro_small}
\end{figure}

\textbf{Large Perturbation}

Now, we turn our focus to the impact of a larger perturbation factor ($\beta=0.9$) on the underlying signal. In this scenario, the aim would be to generat seemingly anomalous looking signals which are not representative of catastrophic failure or triviality (e.g. if the signal degrades simply to white noise). This contrasts the previous subsection on ``moderate perturabation'' where the aim there would be to ``augment'' the original signal. A slightly larger smoothing factor ($\ell=9$) was employed to mitigate the impact of unwanted noise due to the larger perturbation factor. Subfigure \ref{fig:intro_large}a immediately reveals a stark contrast in the generated signal when viewed globally—both in magnitude and noise—compared to the preceding portions of the signal. In particular, Subfigure \ref{fig:intro_large}b showcases a pronounced basis change with significant alterations in noise. Such a signal could indeed serve as valuable input for testing the efficacy of outlier detection systems. Examining the residuals in Subfigure \ref{fig:intro_large}d, we once again notice a large deviation from standard probabilistic models. As anticipated, for the larger $\beta$ value, a much broader dispersion of residual values with more pronounced tail behavior is evident, implying that the generated signal much further away than the base reference signal. 

\begin{figure}[htbp]
  \centering
  \subfigure[A global view of the signal with the generated signal appended to the end.]{\includegraphics[width=0.45\textwidth]{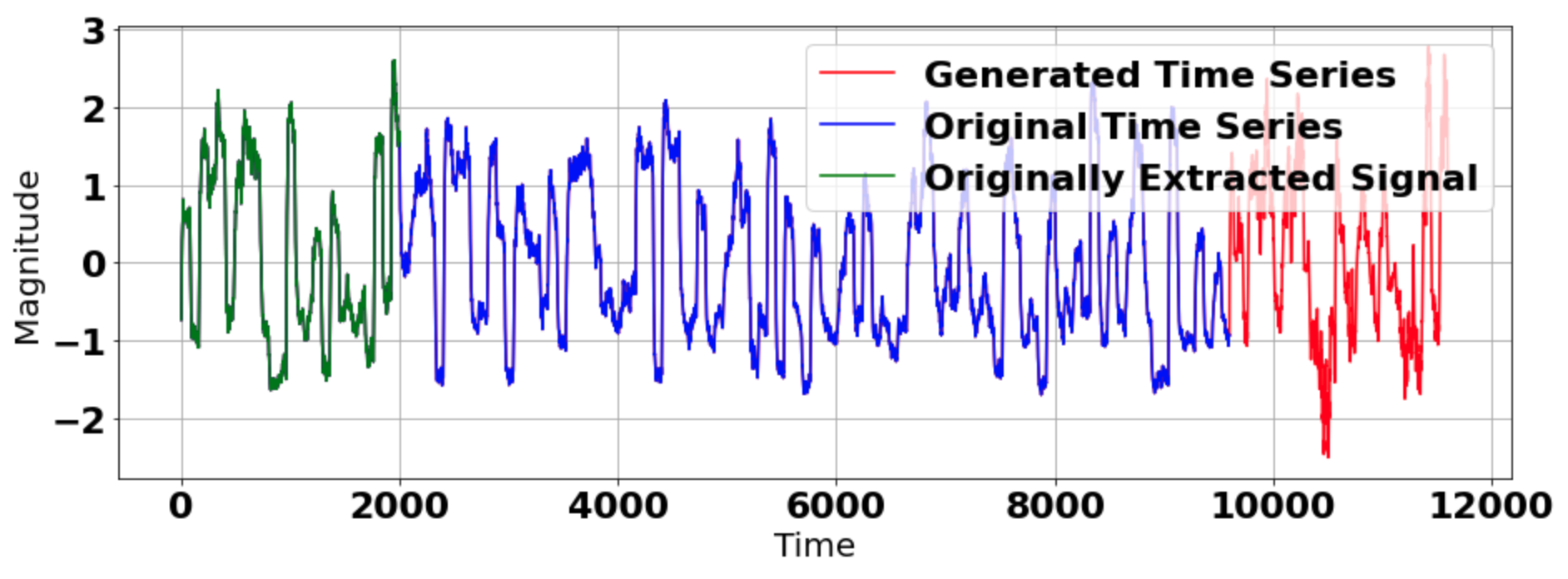}}
    \hspace{0.01\textwidth}  
  \subfigure[An over plot of the original signal with the newly generated signal.]{\includegraphics[width=0.45\textwidth]{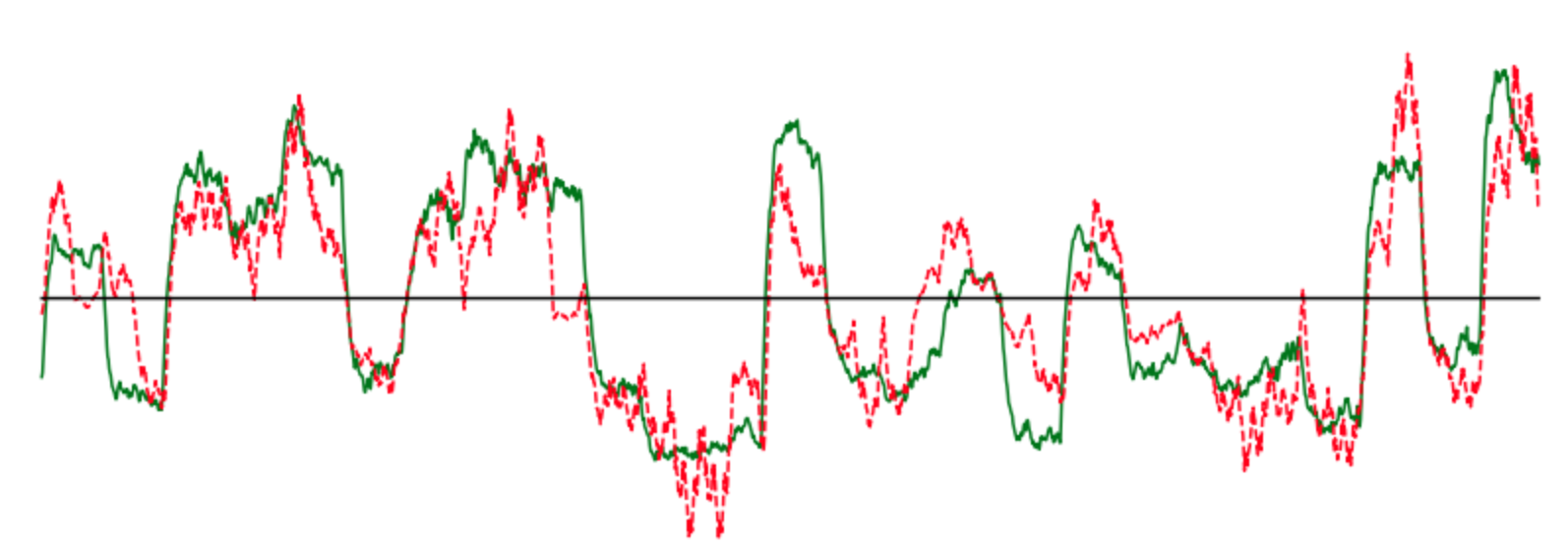}}

    \subfigure[A residual plot of the original and generated signals. ]{\includegraphics[width=0.45\textwidth]{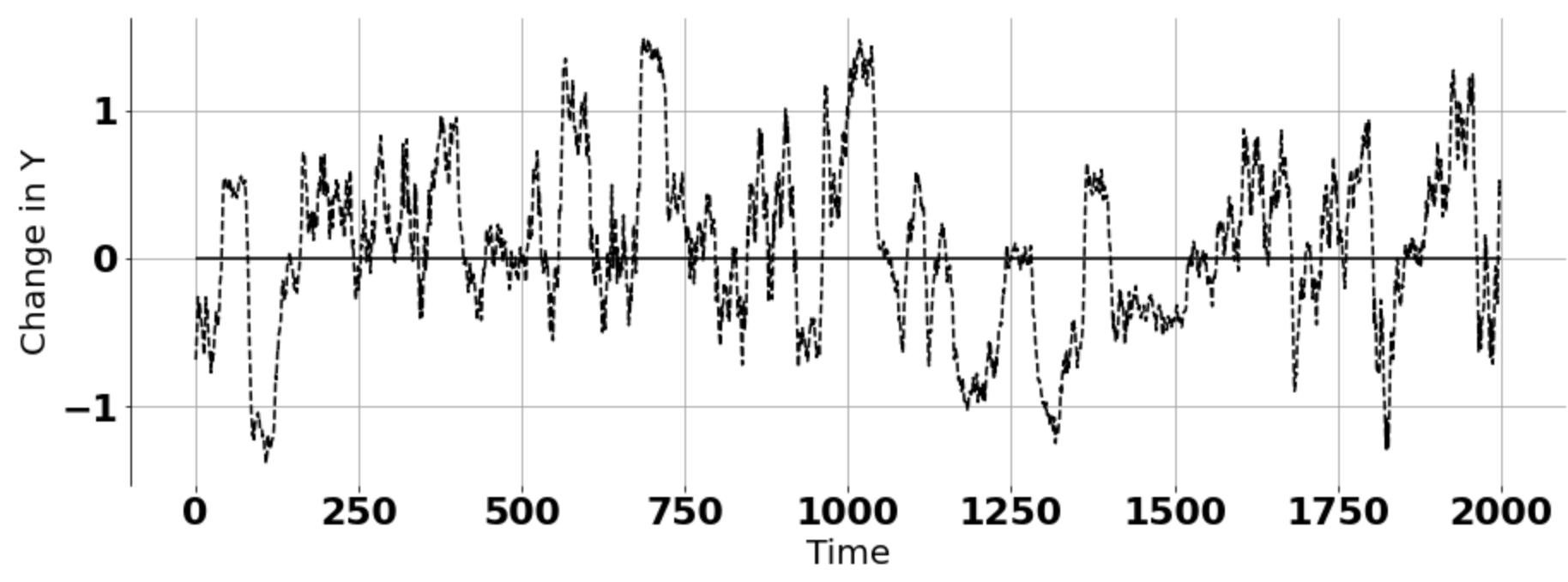}}
     \hspace{0.01\textwidth} 
      \subfigure[A histogram over the residuals.]{\includegraphics[width=0.45\textwidth]{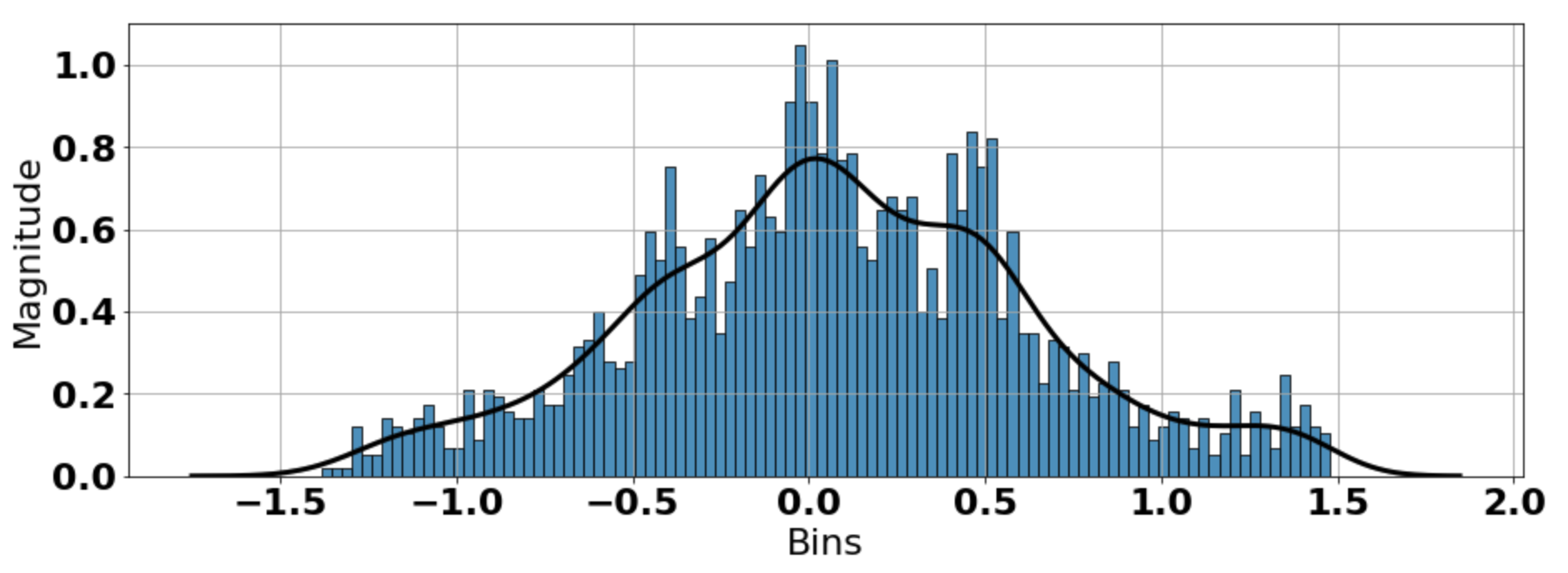}}
  \caption{An overview of the effect of applying a large perturbation factor to generate a new signal based on the first 2000 points of input for the SteamGen data set.}
  \label{fig:intro_large}
\end{figure}

A similar yet brief analysis on a different time series data set (the NYC Taxi data set) is shown in Appendix \ref{app:Taxi}. 

\textbf{Effect of Page Matrix Dimensions}

In the preceding investigations it was assumed that $m=50$ and $n=40$, given the input signal's dimensions of $m\times n = 2000$ units long. Now, maintaining the same $\beta$ and $\ell$ hyper parameters, we shall opt for more extreme values of $m$ and $n$ to highlight the impact of working with tall-skinny page matrices or short-fat page matrices. The purpose of this will be to demonstrate how the reshaped dimensions of the page matrix impact the generated time signal. Our emphasis here will be on the moderate setting; a $\beta$ factor of 0.4 (and thus $\ell=5$), thereby forming a direct comparison with Subfigure \ref{fig:intro_small}b. The comparative results of these scenarios are illustrated in Figure \ref{fig:mn_comparison}.

\begin{figure}[htbp]
  \centering
  \subfigure[The time series generated if a tall-skinny ($m=100,n=20$) page matrix were constructed and perturbed.]{\includegraphics[width=0.45\textwidth]{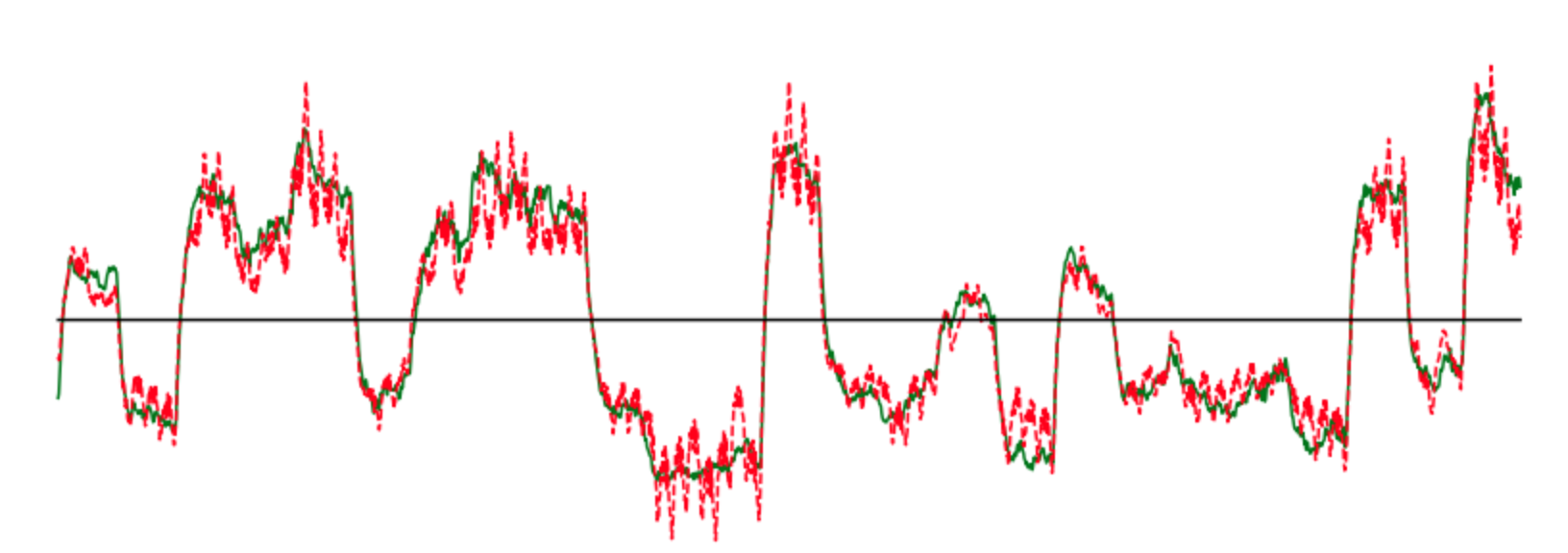}}
  \hspace{0.01\textwidth} 
  \subfigure[The time series generated if a short-fat ($m=20,n=100$) page matrix were constructed and perturbed.]{\includegraphics[width=0.45\textwidth]{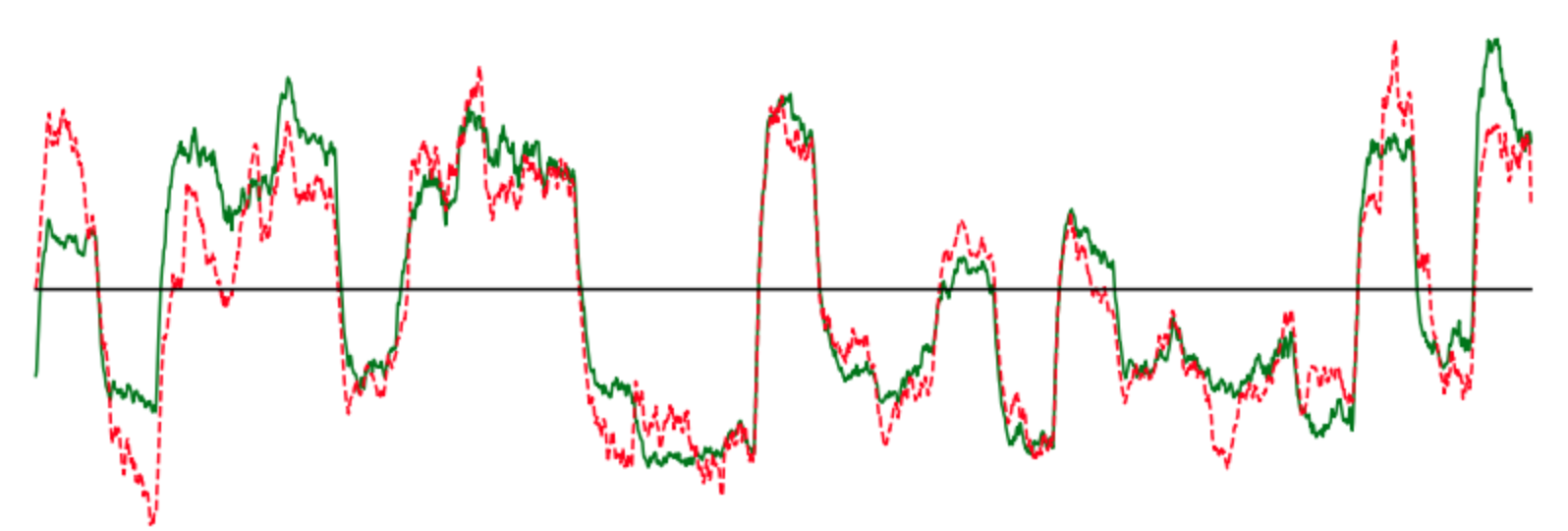}}
  \caption{An overview of the effect of applying a moderate perturbation factor on the SteamGen data set for the two opposite cases of a tall-skinny, and short-fat shape respectively.}
  \label{fig:mn_comparison}
\end{figure}

As can be seen, in contrast to the nearly square page matrix reshape scenario ($m=50, n=40$), a more extreme rearrangement yields noticeably distinct generated signals. Notably, one type of reshaping seems inclined to generate a new time series predominantly driven by noise (Subfigure \ref{fig:mn_comparison}a). On the other hand, the alternative reshaping structure introduces minimal additional noise but instead exhibits a substantial \textit{basis change} relative to the input time series signal (Subfigure \ref{fig:mn_comparison}b). A clearer understanding of this phenomenon may arise when we re-visit the prior, more square ($m=50$, $n=40$) page matrix case. Particularly, visualizing the $U$ and $V$ matrices in Figure \ref{fig:UV_mat} reveals that in fact, the $U$ matrix predominantly contains a more \textit{noisy} structure, whereas the $V$ matrix (especially towards the top portion of the matrix), appears to exhibit much more structured patterns. 

\begin{figure}[htbp]
  \centering
  \subfigure[A visualization of the $U$ matrix.]{\includegraphics[width=0.4\textwidth]{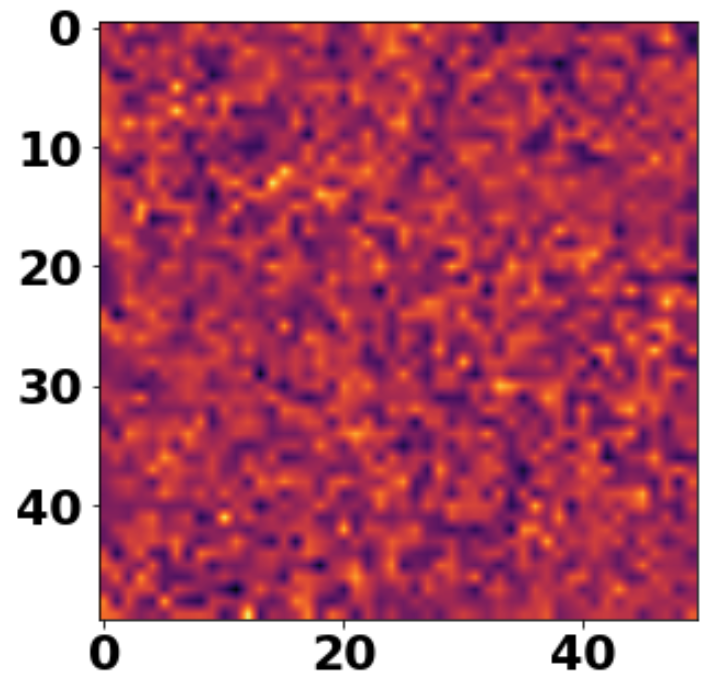}}
  \hspace{0.1\textwidth} 
  \subfigure[A visualization of the $V$ matrix.]{\includegraphics[width=0.4\textwidth]{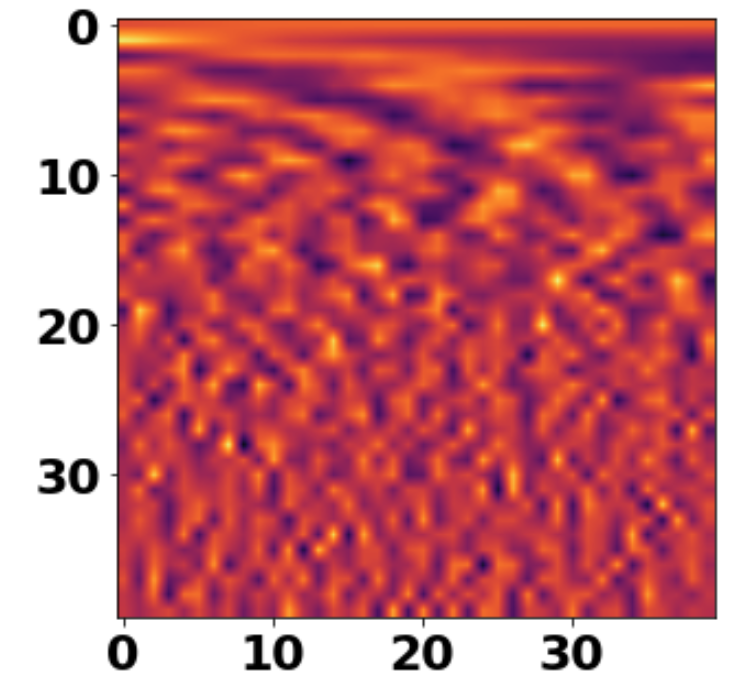}}
  \caption{A comparison between the $U$ and $V$ matrices as a result of an SVD of the $\mathcal{T}_{\text{mat}}$ Page matrix of the SteamGen input when $m=50$ and $n=40$.}
  \label{fig:UV_mat}
\end{figure}

This insight is further reinforced by Figure \ref{fig:UV_basis}, where indeed an empirical basis-like structure is evident in Figure \ref{fig:UV_basis}b, when considering the $V$ matrix. Consequently, the choice of reshaping with respect to the rows ($m$) and columns ($n$) one chooses when constructing the page matrix ultimately serves to emphasize whether the end goal is to generate a time series signal biased towards more noise (aleatoric) or towards basis representation changes (epistemic).

\begin{figure}[htbp]
  \centering
  \subfigure[A plot of the first four rows of the $U$ matrix.]{\includegraphics[width=0.49\textwidth]{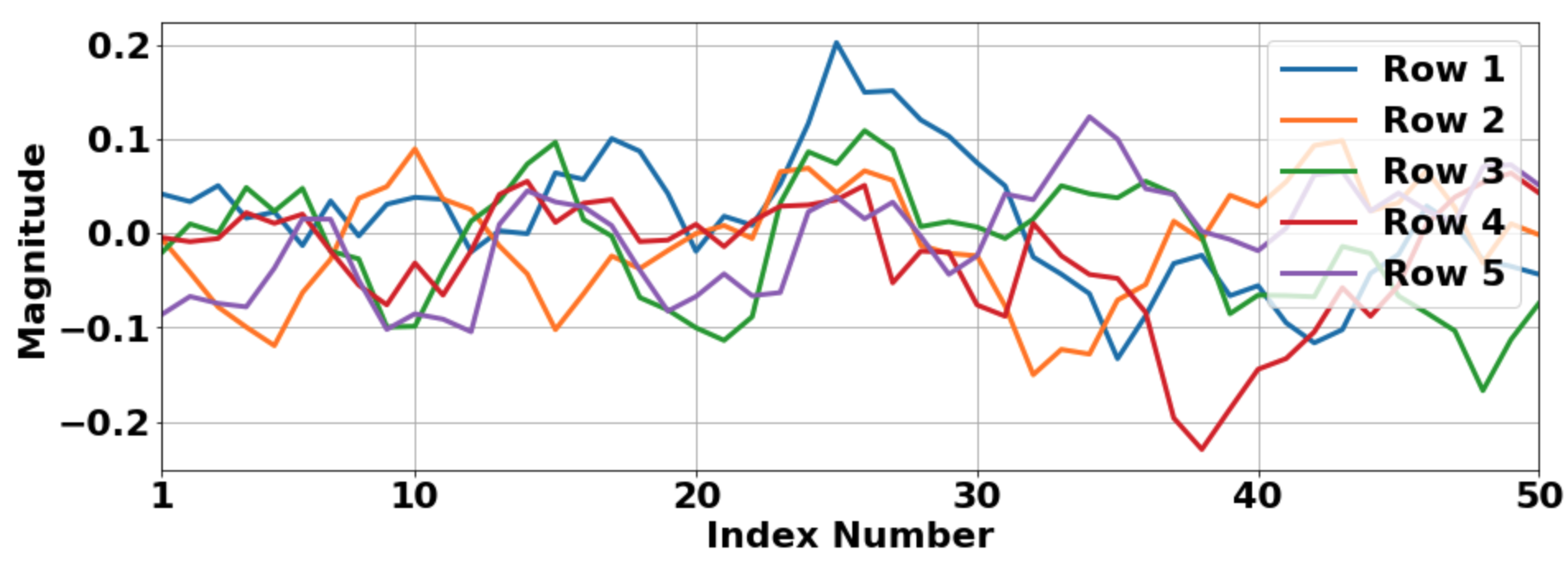}}
  \hfill  
  \subfigure[A plot of the first four rows of the $V$ matrix.]{\includegraphics[width=0.49\textwidth]{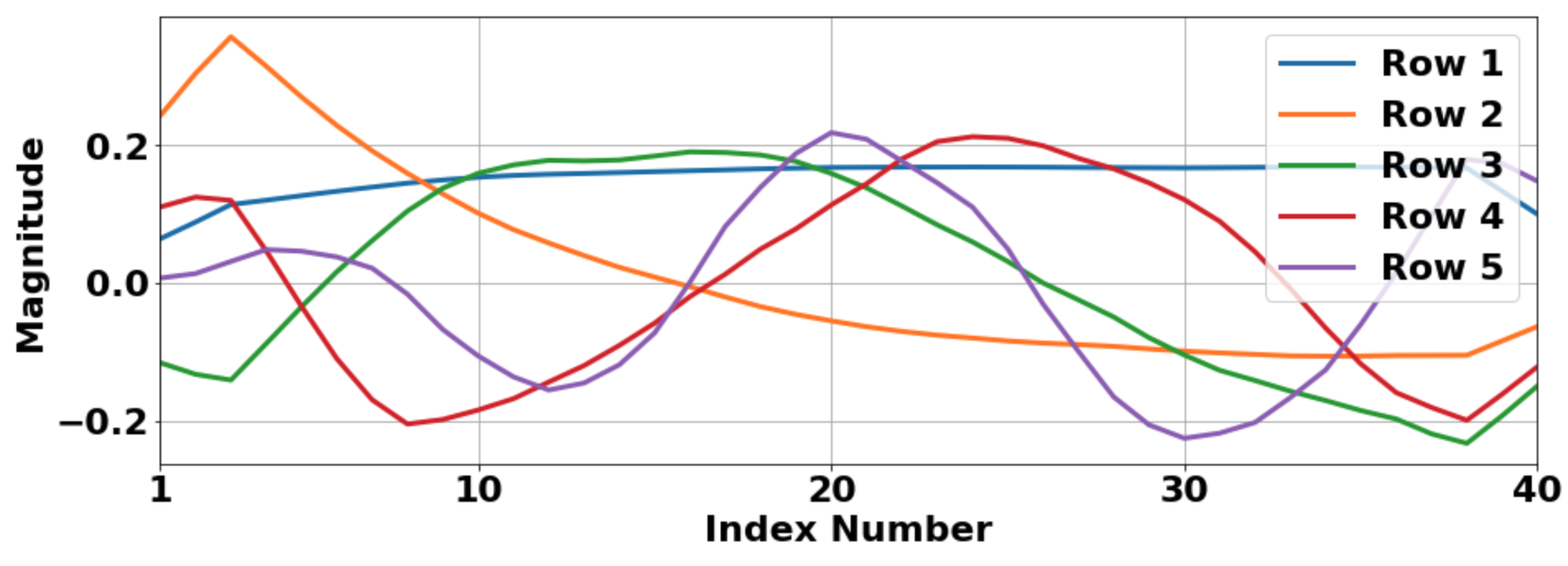}}
  \caption{A plot of the first four rows of the $U$ and $V$ matrices of the SVD of the SteamGen data, with smoothing factor set to $\ell=3$ to clarify the presence of the basis information.}
  \label{fig:UV_basis}
\end{figure}

The distinct treatment of noise and basis information arises from the construction of the page matrix. In Equation \ref{eqn:page_matrix}, $\mathcal{T}_{\text{mat}}$ is formed by \textit{stacking} portions of the uni-variate signal vertically. Reading $\mathcal{T}_{\text{mat}}$ from left to right reveals portions of the entire signal, while inspecting it top-to-bottom emphasizes noise variation as the signal compares itself every 50 data points ahead. This concept is illustrated in Figure \ref{fig:page_matrix_demo}.

\begin{figure}[htbp]
  \centering
{\includegraphics[width=0.9\textwidth]{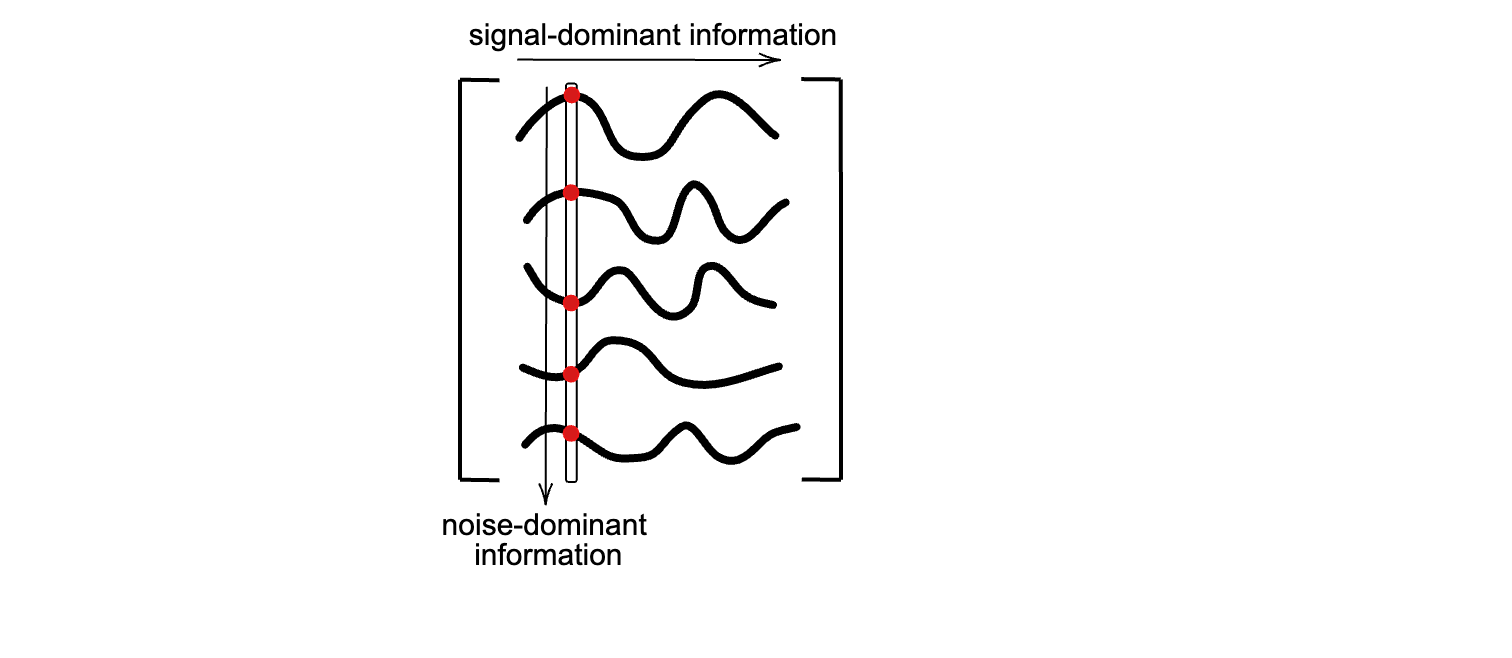}}
  \caption{A demonstration of the stacking scheme used in the Page matrix construction of $\mathcal{T}_{\text{mat}}$.}
  \label{fig:page_matrix_demo}
\end{figure}

Further, noting that a specific property of the SVD (and linear algebra in general) is that, 
\begin{align*}
    \mathcal{T}_{\text{mat},i}v_i &= \sigma_i u_i, \quad \forall i=1,2,\ldots,n, \\
\mathcal{T}_{\text{mat},i}^{\intercal}u_i &= \sigma_i v_i, \quad \forall i=1,2,\ldots,m,
\end{align*}
we see that each $u_i$ acts on $C(\mathcal{T}_{\text{mat}})$, the column space of $\mathcal{T}_{\text{mat}}$, and each $v_i$ acts on $R(\mathcal{T}_{\text{mat}})$, the row space of $\mathcal{T}_{\text{mat}}$. And since signal dominant information is being stored row-wise, it is evident that this particular stacking scheme means short-fat stacking scheme ($V$-dominant) hyper-emphasises those perturbations which lead to a change in basis, whereas the tall-skinny stacking scheme ($U$-dominant) leads to perturbation-drive changes in signal information. 

Returning to an initial premise, while $m$ and $n$ do indeed serve as hyper parameters, their significance lies in emphasizing the type of signal one desires to generate — whether more noise-oriented or more basis-oriented. Combined now, with the choice of the scaling factor $\beta$ (smaller $\beta$ for increased data augmentation examples and larger $\beta$ for more novel signals), this framework offers a spectrum of highly customizable signal generation options tailored to the task at hand. Remarkably, this approach requires minimal assumptions and zero training. The simplicity and interpretability of the method are further underscored by the ease of inspecting the $U$ and $V$ matrices, allowing for a direct examination of the actual (empirical) basis functions used to reconstruct the new signal.

Finally on the topic basis functions, it should be clarified that the ones which are learned (shown in \ref{fig:UV_basis}b) \textit{are not} the same as the common wavelet or Fourier bases, although they provide an apt analogy to the situation (especially when viewed through the lens of the dyadic expansion). The basis functions found for the $U$ and $V$ matrices are purely data-driven and point along the directions of \textit{maximal variation} within the input data.

\subsection{Working over Geodesics}

In this subsection, we delve into the intriguing properties that emerge when working over $\St$, driven by the inherent \textit{smoothness} features of Riemann manifolds. Specifically, we can traverse a geodesic path from a point $U_1$ (or $V_1$) and smoothly adjust deformations until reaching $U_2$ (or $V_2$). This provides a nuanced means to precisely control the degree of signal perturbation. Expanding on this capability, we can seamlessly shift the problem from signal augmentation to novel signal generation, identifying the point where one transforms into the other. This property proves pivotal when exploring potential applications in the structural health monitoring (SHM) field.

For the creation of Figure \ref{fig:geo_full}, we revisit the scenario explored in Figure \ref{fig:intro_large}. However, this time, we dissect the geodesic steps into one-tenth increments, allowing for a gradual progression of signal deformation. The advantage of this approach with the \textit{StiefelGen} algorithm lies in the intuitive notion that \textit{there is no such thing as a bad signal}. In simpler terms, if a generated time series signal exhibits excessive outlier behavior, one can easily "deform the signal back" by a few steps to achieve the desired magnitude changes.

\begin{figure}[htbp]
  \centering
  \subfigure[The first 5 incremental geodesic steps.]{\includegraphics[width=0.48\textwidth]{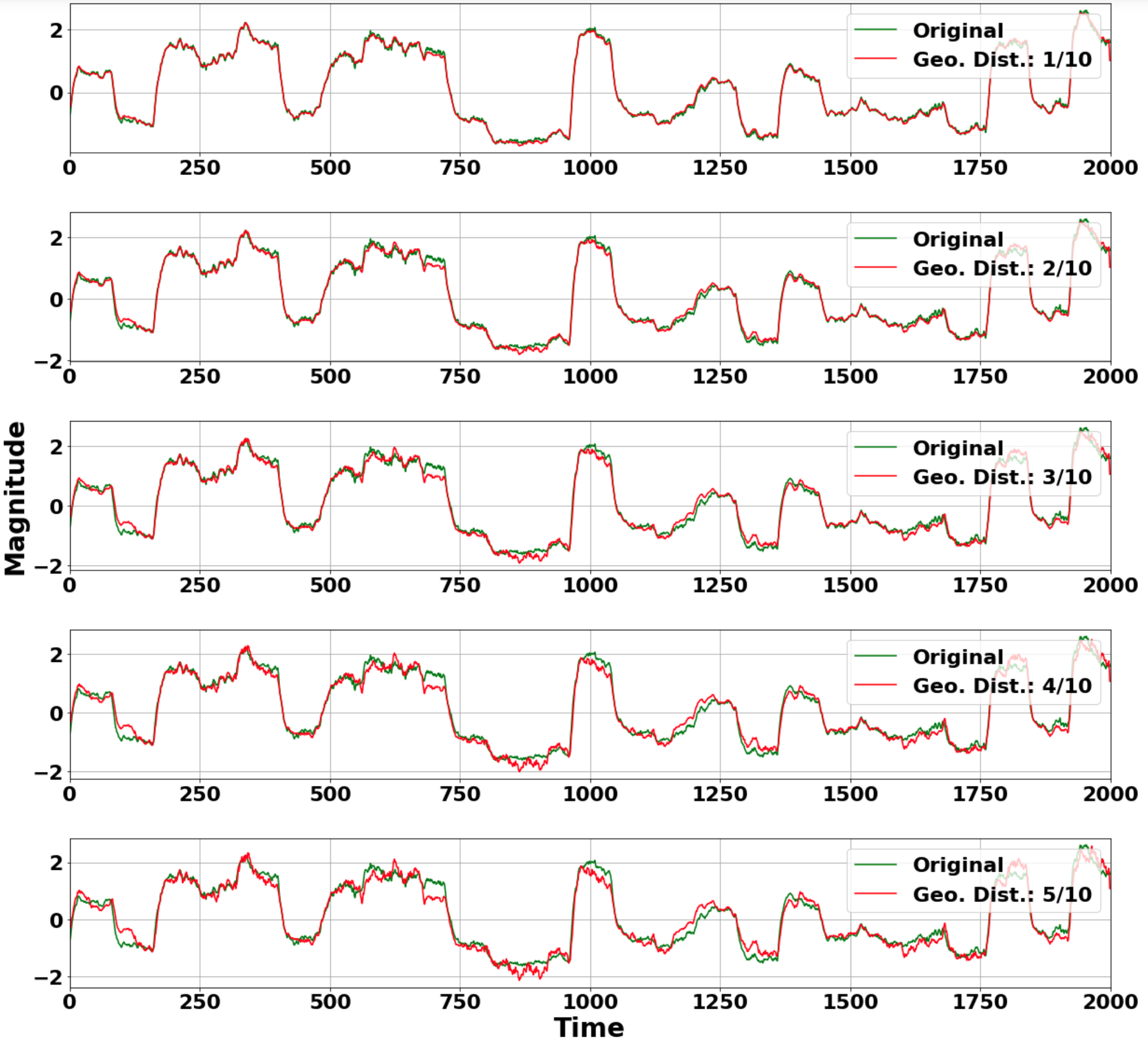}}
  \hspace{0.01\textwidth} 
  \subfigure[The last 5 incremental geodesic steps.]{\includegraphics[width=0.48\textwidth]{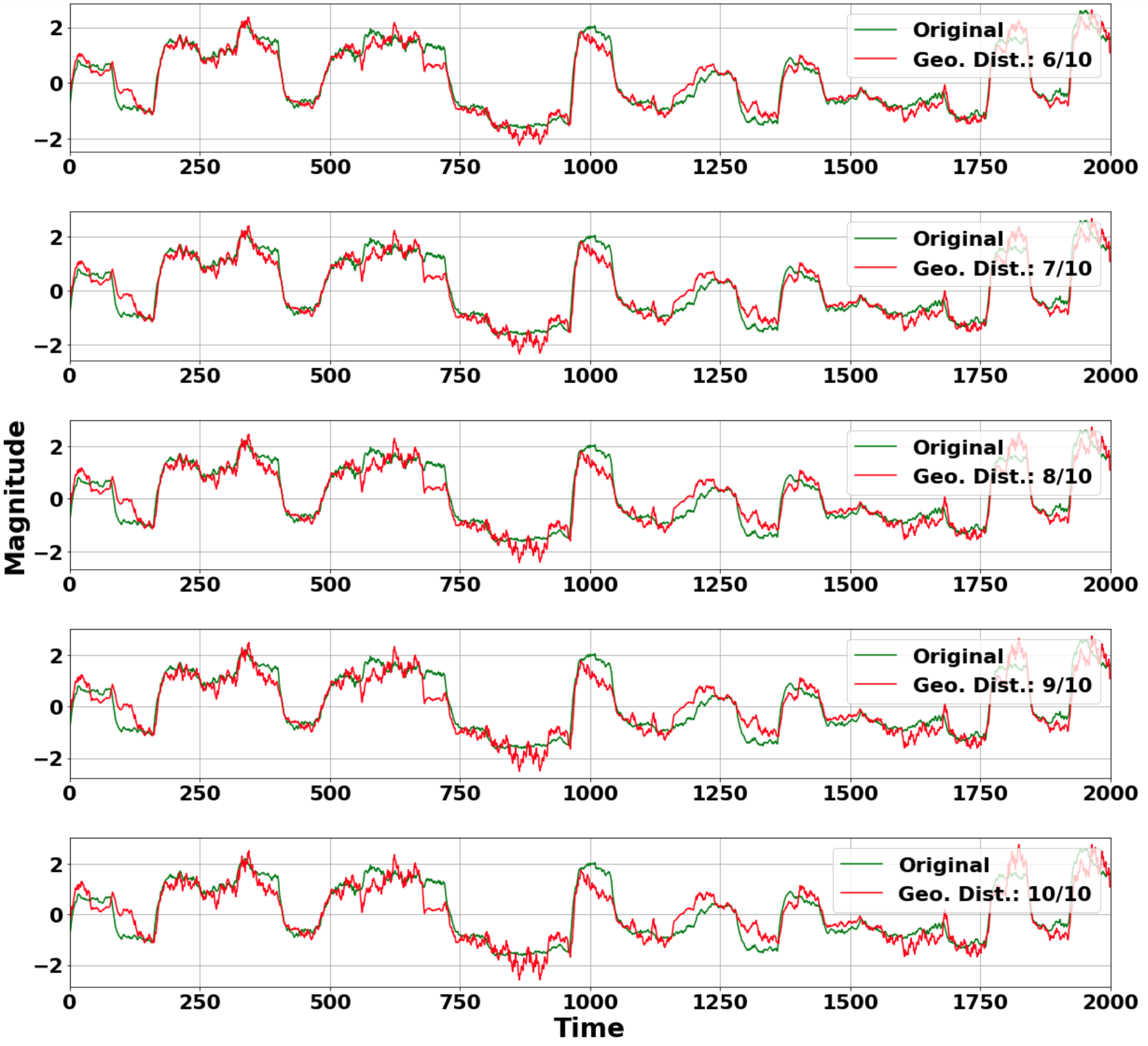}}
  \caption{Incrementally deforming the generated signal until it arrives at the final position investigated previously in Figure \ref{fig:intro_large}.}
  \label{fig:geo_full}
\end{figure}

Recall that the \textit{StiefelGen} signal generation relies upon separating noise and basis-driven deformations through the row and column separation. Given that both the $U$ and $V$ matrices exist on their respective Stiefel manifolds, perturbing either matrix independently becomes feasible. This flexibility allows the generation of time series signals that are either ``more noise-driven'' or ``more basis function-driven''. While a perfect separation along a "noise axis" orthogonal to a "basis axis" is not mathematically possible, practical outcomes tend to exhibit substantial separation, primarily resulting from the row-column space separation seen in the SVD. Consequently, one can choose to emphasize either noise deformations or basis function deformations depending on the chosen approach to stack the page matrix (as illustrated in Figure \ref{fig:mn_comparison}). Thus, instead of uniformly perturbing both $U$ and $V$, one can opt for different magnitudes of perturbation factors for $U$ and/or $V.$ This is demonstrated in Figures \ref{fig:subplot_example_basis} and \ref{fig:subplot_example_noise}, where geodesics are traversed in one-tenth increments, specifically with respect to either $U$ or $V$ matrices. This approach enables smooth basis function deformation or smooth noise deformation, depending on the chosen matrix.

\begin{figure}[htbp]
  \centering
  \subfigure[The first 5 incremental geodesic steps.]{\includegraphics[width=0.48\textwidth]{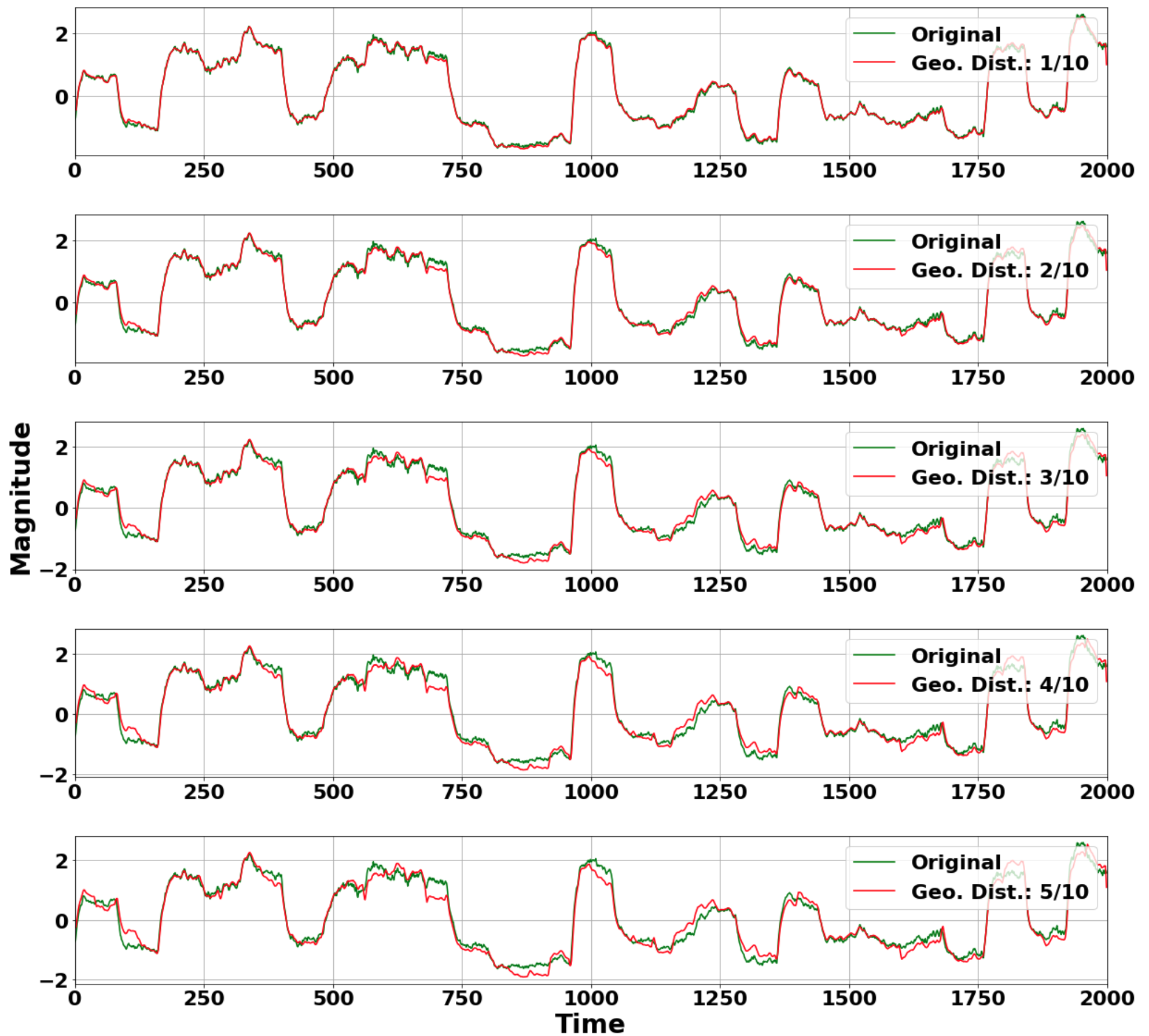}}
  \hspace{0.01\textwidth} 
  \subfigure[The last 5 incremental geodesic steps.]{\includegraphics[width=0.48\textwidth]{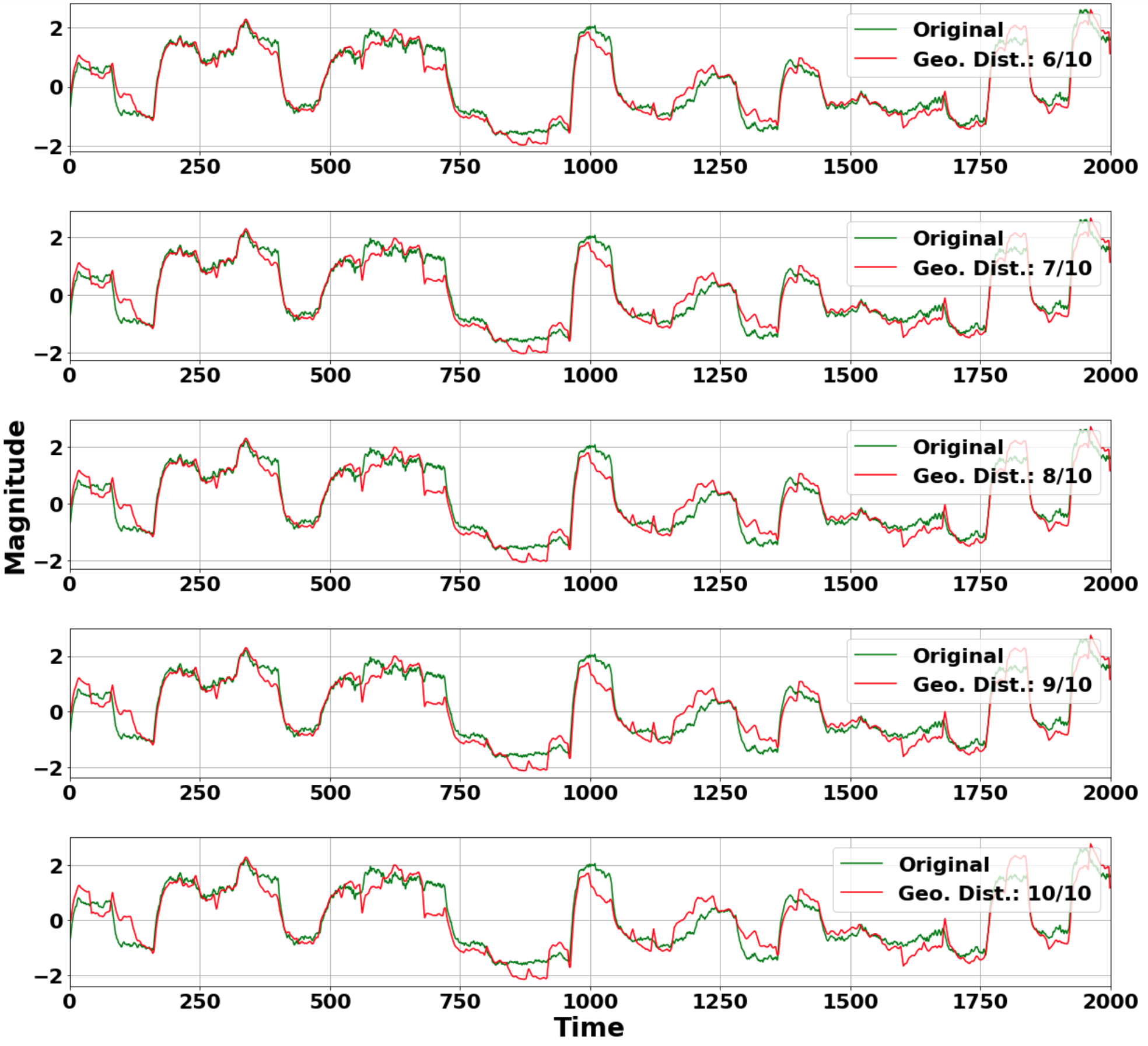}}
  \caption{Incrementally deforming the generated signal with respect to a Stiefel manifold that holds more basis information of the original signal.}
  \label{fig:subplot_example_basis}
\end{figure}

\begin{figure}[htbp]
  \centering
  \subfigure[The first 5 incremental geodesic steps.]{\includegraphics[width=0.48\textwidth]{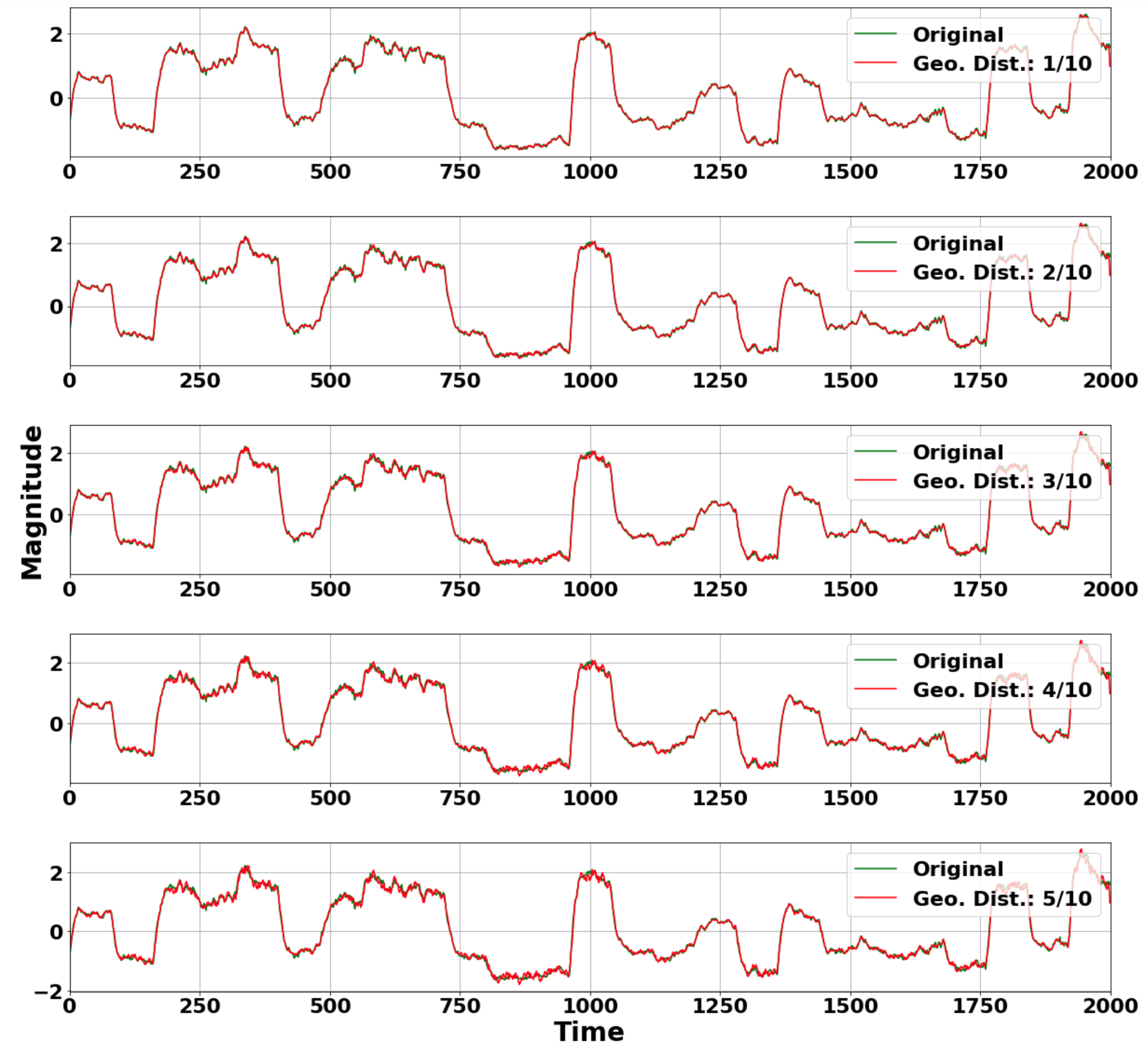}}
  \hspace{0.01\textwidth} 
  \subfigure[The last 5 incremental geodesic steps.]{\includegraphics[width=0.48\textwidth]{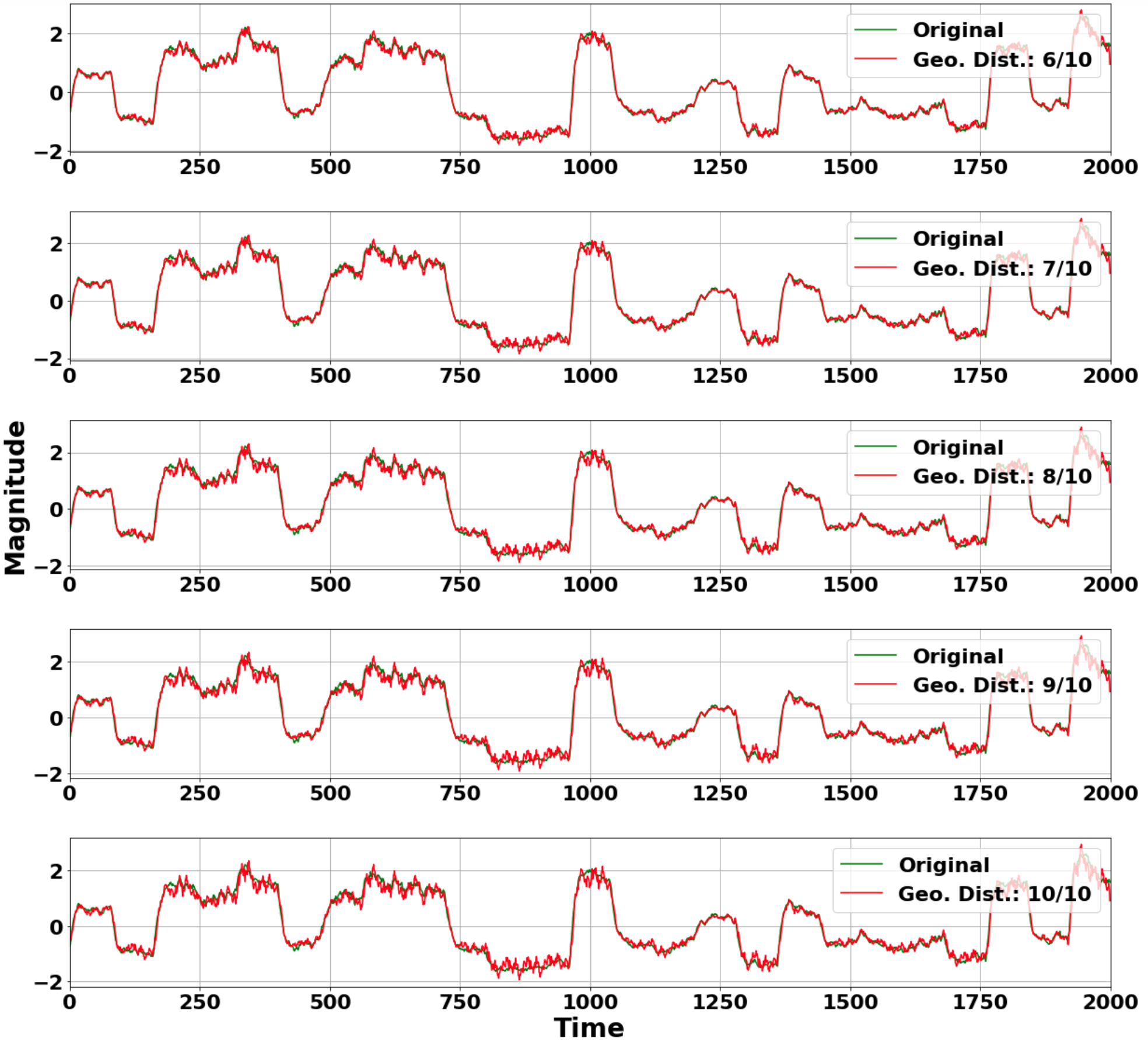}}
  \caption{Incrementally deforming the generated signal with respect to a Stiefel manifold that holds more noisy information of the original signal.}
  \label{fig:subplot_example_noise}
\end{figure}

In summary of this subsection, it has been shown that the \textit{StiefelGen} framework offers a flexible and interpretable approach for time series signal generation. By leveraging properties of the Stiefel manifold and the SVD, noise and basis deformations can be separately controlled to emphasize either signal generation for the purpose of data augmentation or for novel outlier detection training as needed. Further, the incremental geodesic steps enable precise adjustment of the signal, avoiding being stuck with either trivial or catastrophic deformations. The empirical basis functions learned directly from the data require minimal assumptions, and the impact of the page matrix dimensions provides further customization. In the following section, we shall discuss some unique applications of \textit{StiefelGen} which result due to these aforementioned properties.

\section{Applications of \textit{StiefelGen}}

While the conventional application of time series data augmentation involves generating additional data to enhance the training dataset and improve the generalization performance of underlying machine learning models, we now shift our focus to alternative uses stemming from the distinctive properties of \textit{StiefelGen}.

\subsection{Structural Health Monitoring} \label{sec:SHM}

This subsection delves into the application of \textit{StiefelGen} for studying both robustness and adversarial data generation in the context of structural health monitoring (SHM). We shall focus on a classic problem and approach in SHM involving stacking sensor data into a matrix format. Subsequently, dimensionality reduction (PCA) is applied to this matrix, followed by training a one-class support vector machine (OCSVM). The workflow for this problem is exemplified in Figure \ref{fig:SHM_demo}.

\begin{figure}[htbp]
  \centering
{\includegraphics[width=1\textwidth]{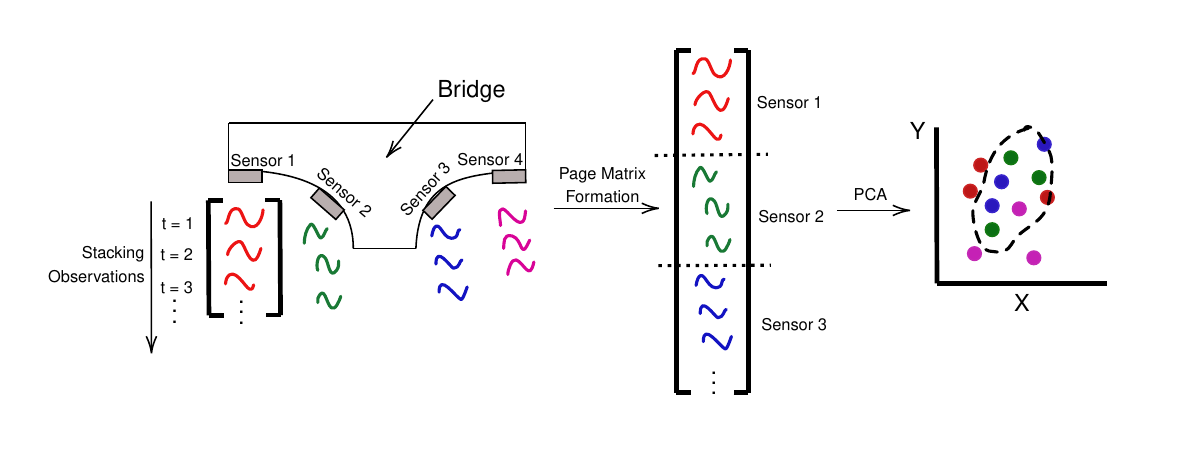}}
  \caption{An example of the conventional data-drive approach in SHM which involves collecting data across a sensor array network, stacking the data into a matrix-like form, and then projecting the data for analysis in lower dimensional spaces. }
  \label{fig:SHM_demo}
\end{figure}

Figure \ref{fig:SHM_demo} illustrates the collection of multiple observations at each sensor location over time, organized into a structured data format. Typically, all the data is consolidated into a larger matrix structure, followed by projection to lower-dimensional spaces \cite{cheema2022bridge, cheema2022drive}. In these spaces, the conventional approach involves training a one-class support vector machine (OCSVM) algorithm. This is due to the ethical constraint that prevents engineers from intentionally damaging deployed structures for the purpose of simply obtaining "damage data" in order to facilitate a two-class SVM analysis.

The use of \textit{StiefelGen} is particularly well-suited for this scenario because the collected data is often stored in a structured matrix form, interpretable as an ``already formed'' page matrix. Given the structured nature of the data, it can be treated as a page matrix with pre-selected values for $m$ and $n$, eliminating the need for reshaping as required for uni-variate time series signals. Therefore one set of hyper parameters (selection of $m$ and $n$) can be completely ignored in practice (if desired). For the SHM problem, without loss of generality, we shall assume identical frequency and duration of measurements across sensors for simple stacking. If there is non-uniformity in the sensor array network, \textit{StiefelGen} analysis can be performed on a per-sensor basis.

In the context of civil engineering, two critical questions arise: (i) How much signal deviation in the original measurement space can be accommodated before official ``damage detection'' is triggered? This poses a model robustness problem, specifically gauging the tolerance for deviation before the OCSVM detects damage. (ii) Are there model signals, severely perturbed to the extent that they should be recognized as damage, but remain undetected by the OCSVM model? Such cases represent adversarial signals for the model. We posit that \textit{StiefelGen} can simultaneously generate sets of time series signals to address both tasks, providing engineers with a nuanced understanding of their chosen outlier detection model, which would be otherwise challenging to achieve in practice.

For this investigation, we employ a simplified model of a SHM problem. The model consists of a toy bridge structure equipped with five sensors, with fifty observations recorded per sensor. The sensor readings have a frequency of 50Hz, and each observation spans a total duration of nine seconds. We assume a small bridge size with symmetrically placed sensors, ensuring similar modal information per sensor. Consequently, variations in observations across different sensors primarily stem from noise, indicating a predominantly aleatorically imposed uncertainty distribution in sensor space. The data generation model applied across the sensor space is as follows,
\begin{equation}
    S = 4\sin\left(6\pi t^{0.5}\right) + \sin\left(15\pi t\right)+ \mathcal{N}(1, 0.5),
\end{equation}
where $t\in\mathbb{R}_{\geq 0}$ represents the time variable, and the mean of 1 used in the Gaussian term represents an arbitrary bias term. As mentioned earlier, if the observed noise model (or underlying epistemic uncertainty) varies greatly between sensors, then \textit{StiefelGen} should be used on a per-sensor basis instead. Lastly, the OCSVM model used in these analysis is based on that found in the Scikit-learn package with $\nu=0.1$, and $\gamma=10^{-3}$ \cite{pedregosa2011scikit}.

\subsubsection{StiefelGen: Robustness in SHM}

For the \textit{StiefelGen} analysis in this context, a single large perturbation is applied to each data point once, taking $\varepsilon=1$. Subsequently, the final outputs are examined in a 2D PCA plot. The expectation is that most, if not all, input data points in the 2D projected space will have undergone a significant movement, potentially surpassing the trained OCSVM boundary. To analyze the impact of this shift, the data point with the largest deviation (with respect to the Euclidean norm) is identified by comparing the norms of the projected data points before and after perturbation, and then considering the top $K$ data points changes. In this case, we shall just focus on $K=1$, that is, the signal which experienced the most substantial change for demonstration purposes. While analyzing a single data point that underwent considerable deviation might not seem advantageous on the surface, because these perturbations occurred over a smooth manifold within/on the radius of injectivity, there then exists a unique geodesic path from the starting position to the end position. This concept allows for the gradual tracking of signal deviations over the data manifold, determining the point at which the original signal's deviations become substantial enough to reach the edge of the OCSVM boundary. This approach facilitates the inspection of the OCSVM model's efficacy and addresses questions such as: ``what levels of deviation are acceptable until the OCSVM boundary is breached?'' This is a crucial consideration for model robustness (``what exactly is happening on the OCSVM boundary and what data points cause the model to fail?''), and the ability to follow a data geodesic is essential for providing meaningful answers to this question. The impact of this approach in the context of structural health monitoring (SHM) is illustrated in Figure \ref{fig:shm_intro_example_b}. 
  
\begin{figure}[htbp]
  \centering
  \subfigure[Plot of the initial projected data samples (green), and the consequence of perturbing to the input data to radius of injectivity (blue). The red points refer to the outlier points used in the training of the OCSVM of the original dataset (green), given the OCSVM hyper parameters.]{\includegraphics[width=0.45\textwidth]{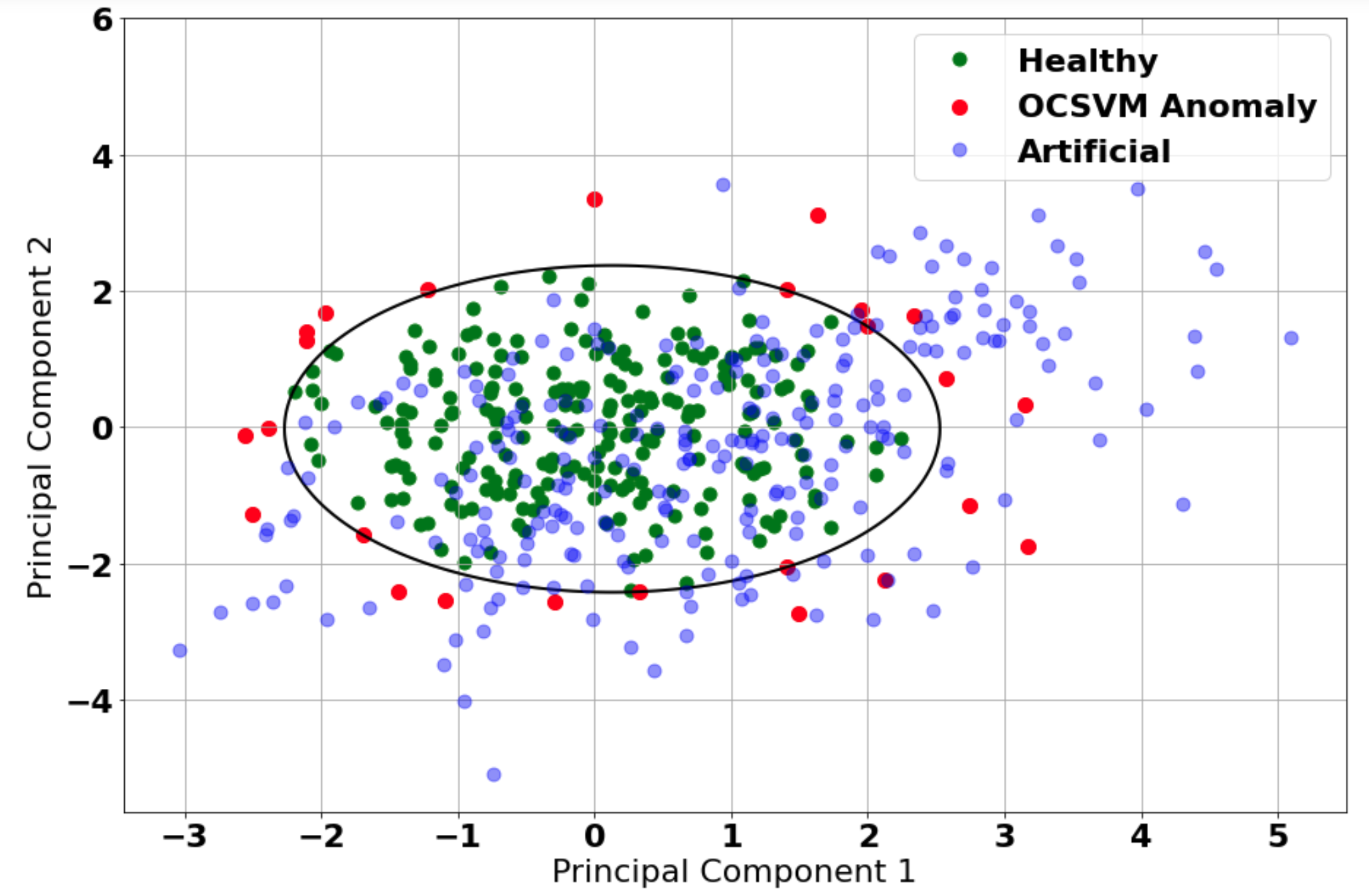}\label{fig:shm_intro_example_a}}
    \hspace{0.01\textwidth} 
     \subfigure[A histogram of the $L_2$ norms before and after the perturbation (that is comparing the healthy and the perturbed datapoints).\label{fig:norms_wrt_origin}]{\includegraphics[width=0.47\textwidth]{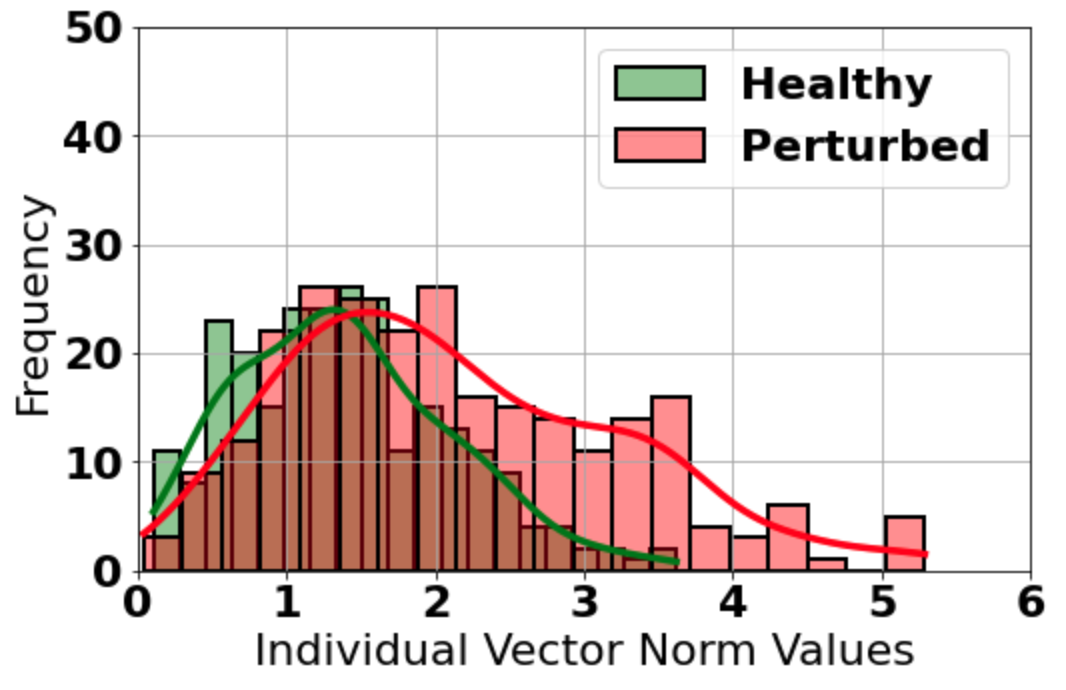}}
  
  \subfigure[Showcase of following a geodesic between a point which is initially approximately in the centre of the OCSVM boundary, and its resultant point. The point at which the OCSVM boundary is to be crossed over is given by the blue star.]{\includegraphics[width=0.48\textwidth]{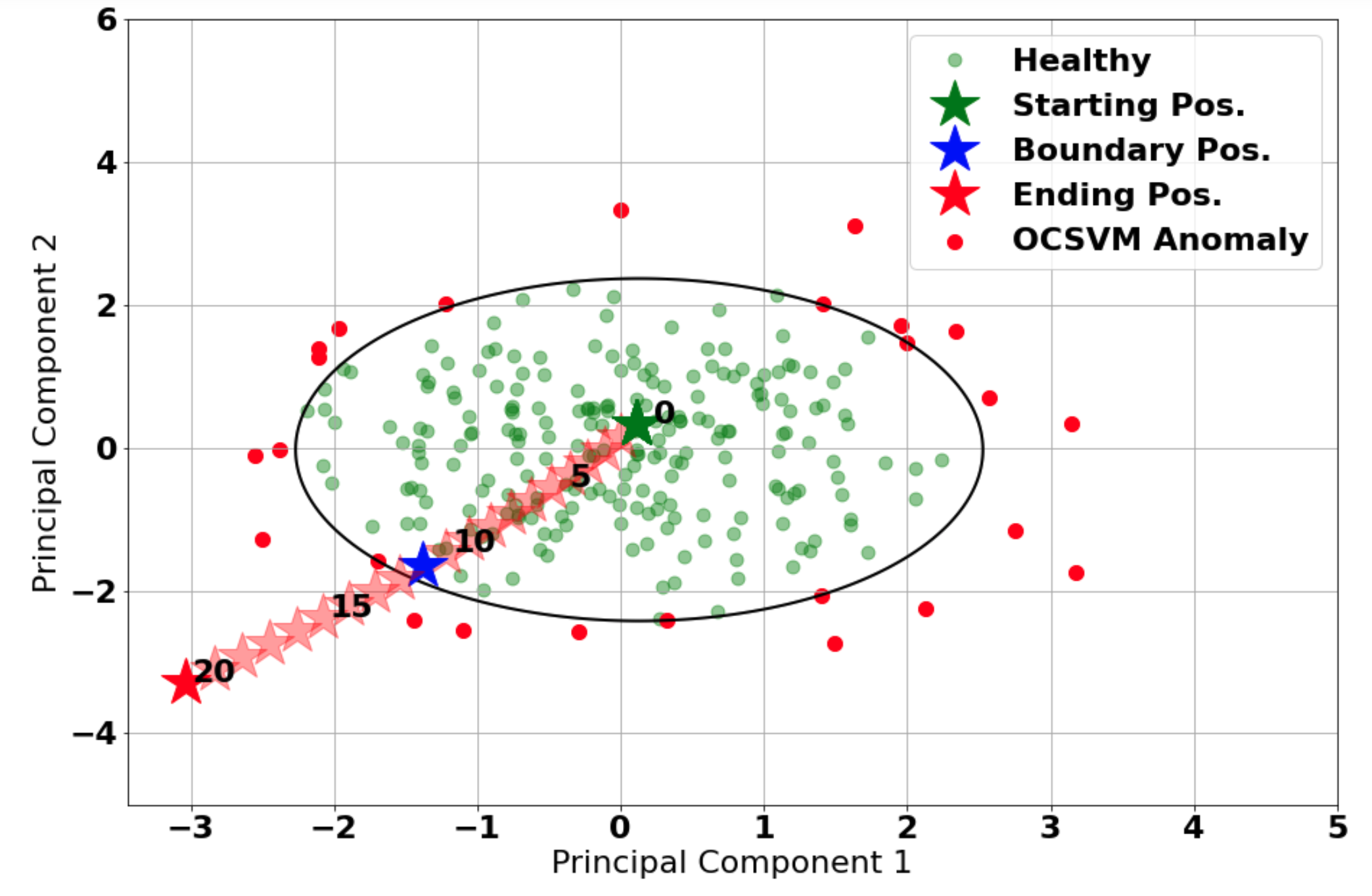}\label{fig:shm_intro_example_b}}
\hspace{0.01\textwidth} 
    \subfigure[Outline of the difference between either following the $U$ perturbation only, and the $V$ perturbation only, in relation to when the $U-V$ perturbation is jointly performed.]{\includegraphics[width=0.45\textwidth]{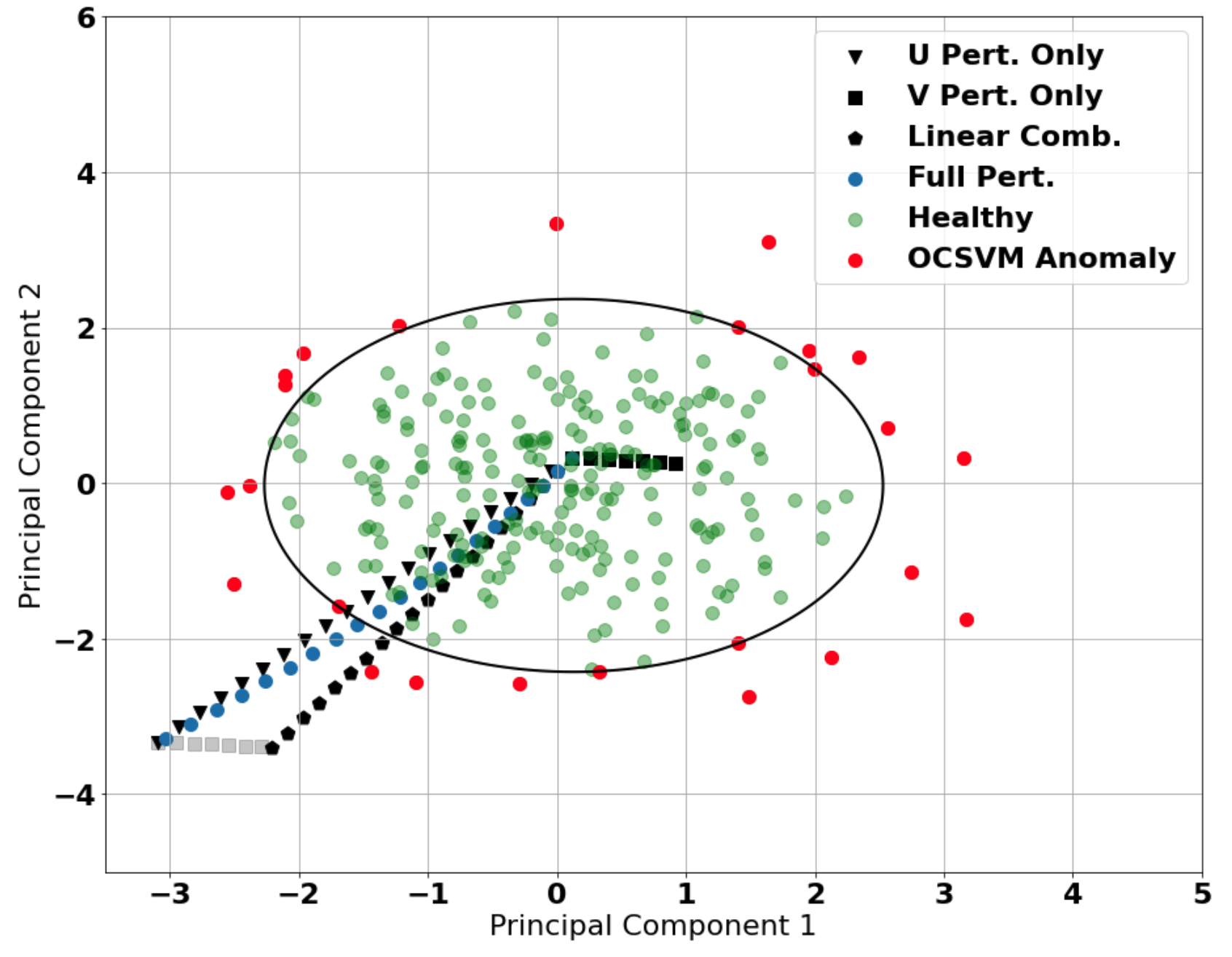}\label{fig:shm_intro_example_c}}

      \subfigure[Plots of the original input data space in relation to positions along the geodesic (every 5-th position is shown in black-dashed line). Signals have been shown with the applied smoothing window of three to assist with visual clarity. These times series signals are based on the path followed by Subfigure \ref{fig:shm_intro_example_c}.\label{fig:shm_intro_example_d}]{\includegraphics[width=0.46\textwidth]{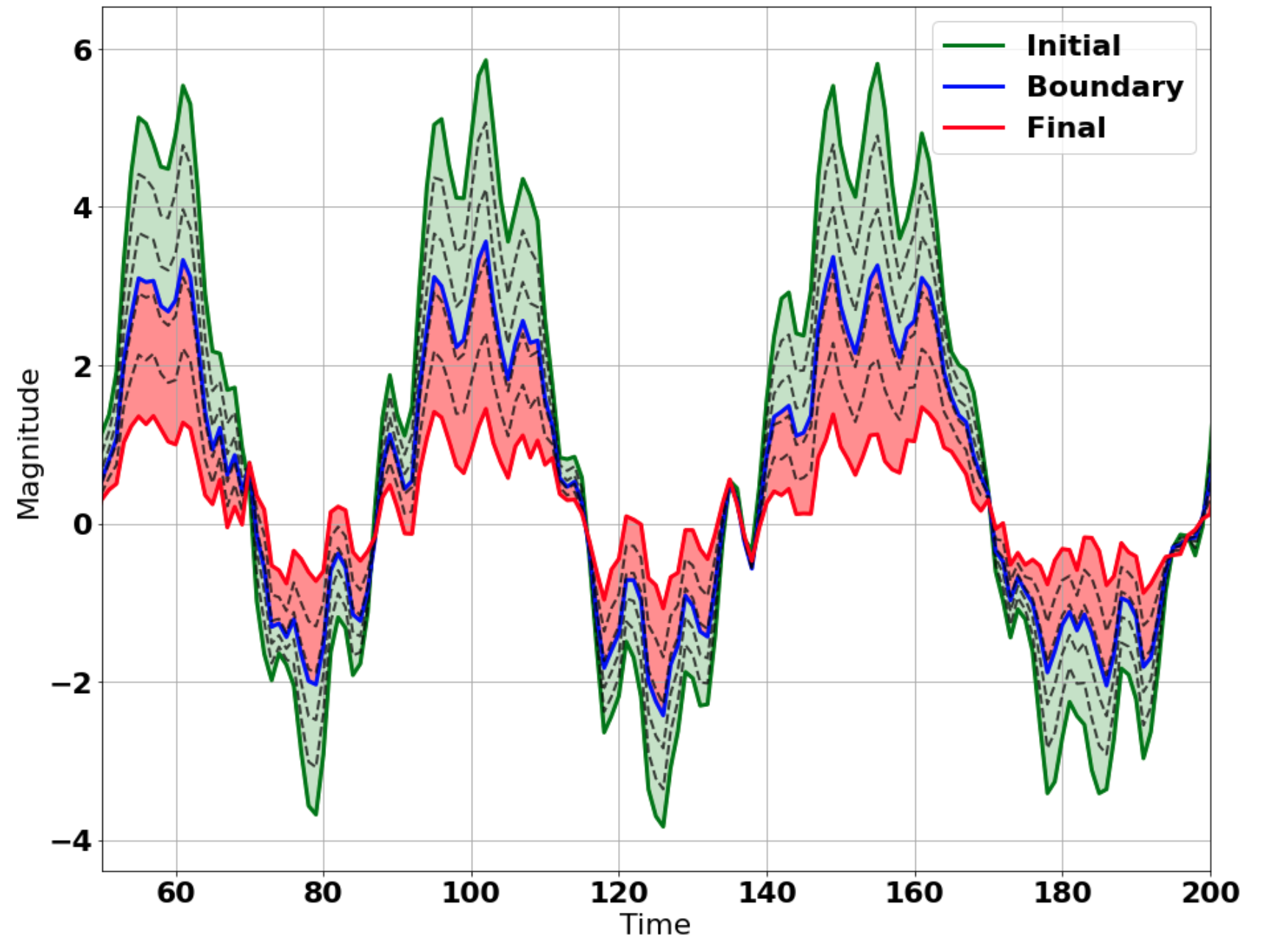}}
  \caption{An example of the effect of exploiting the geodesics in the StiefelGen algorithm.}
  \label{fig:shm_intro_example}
\end{figure}

In Figure \ref{fig:shm_intro_example_b}, the initial projected point is denoted by the dark green star, and its final position after a large perturbation is represented by the solid red star. Intermediate points, totaling eighteen stars (there are twenty stars in total considering the initial and final points), are depicted as light-red stars. Leveraging the properties of \textit{StiefelGen}, we can pinpoint the moment when perturbations become significant enough to push the initial point just outside the OCSVM boundary. This capability allows SHM engineers to critically assess the model's efficacy by gauging how much signal change is allowable at the point of reaching this boundary. Consequently, they can comment on the overall robustness of the model concerning permissible levels of perturbations before triggering an alarm for potential damage. Figure \ref{fig:shm_intro_example_d} illustrates the process of following these geodesics, plotting every fifth time series signal along the geodesic. Notably, the signal corresponding to the solid red star exhibits a considerable deviation from its original waveform, which expectedly represents a damage state which sits firmly outside of the learned OCSVM boundary.

However, a key signal to inspect for assessing the OCSVM model's robustness is that of the blue star. Upon plotting its corresponding time series signal, it is apparent that it has also shifted significantly from its original position yet remains within the OCSVM boundary. In principle, it should technically be classified as a healthy signal according to the OCSVM model, but for the SHM engineer this signal has deviated so much that it should invariably represent a damage state for the structure \cite{cheema2023use}. This paradox underscores the insufficiency of the learned OCSVM model, as expected in this somewhat na\"ive implementation. Drawing this conclusion would be challenging without the ability to smoothly deform a signal from its starting to its ending position, showcasing an application benefit of \textit{StiefelGen}. These results are particularly positive, given that no model assumptions were made except for the diffeomorphic nature of signal changes, and that the set of allowable deformations shall lie on the Stiefel manifold with respect to its corresponding SVD decomposition. Moreover, in this case, the $m$ and $n$ values typically needed to reshape a 1D signal were not required, as they were implicitly chosen when stacking signals from each sensor. Moreover, for this example, the only reason a smoothing window was ever needed was to enhance the visual clarity of Figure \ref{fig:shm_intro_example_d}. Thus, the sole hyper parameter required was that of the percentage of perturbation. However, this choice is easily justified by simply considering the we desire the maximum possible perturbation of 100\%, placing us at the edge of the radius of injectivity, because then we were able to smoothly follow along the geodesic until reaching the OCSVM boundary. In principle then, no parameters, no hyper parameters, or any model training was required. Further, minimal data assumptions had to be made in practice. This approach to studying the robustness properties of the OCSVM boundary of allowable signals, is significantly more straightforward than the alternative: which would be attempting to estimate the level of perturbation required to push the initial point onto the OCSVM boundary by varying the perturbation levels and ``shooting'' the signal forward.

Subfigure \ref{fig:shm_intro_example_c} illustrates the distinct effects of following individual $U$ or $V$ geodesics. As established earlier, depending on the stacking procedure, either the $U$ (column space) or $V$ (row space) geodesic tends to influence a basis or noise change more prominently. In Subfigure \ref{fig:shm_intro_example_d}, a noticeable and sizable basis change in the original signal is evident. Therefore, we anticipate that explicitly moving along the noise direction would result in far less movement in the projected space compared to altering the basis representation. The consequences of traversing these directions concerning the original signal space are elucidated in Figure \ref{fig:shm_proof_noise_basis}.

\begin{figure}[htbp]
  \centering
  
  \subfigure[Noise deviation direction ($V$).]{\includegraphics[width=0.45\textwidth]
  {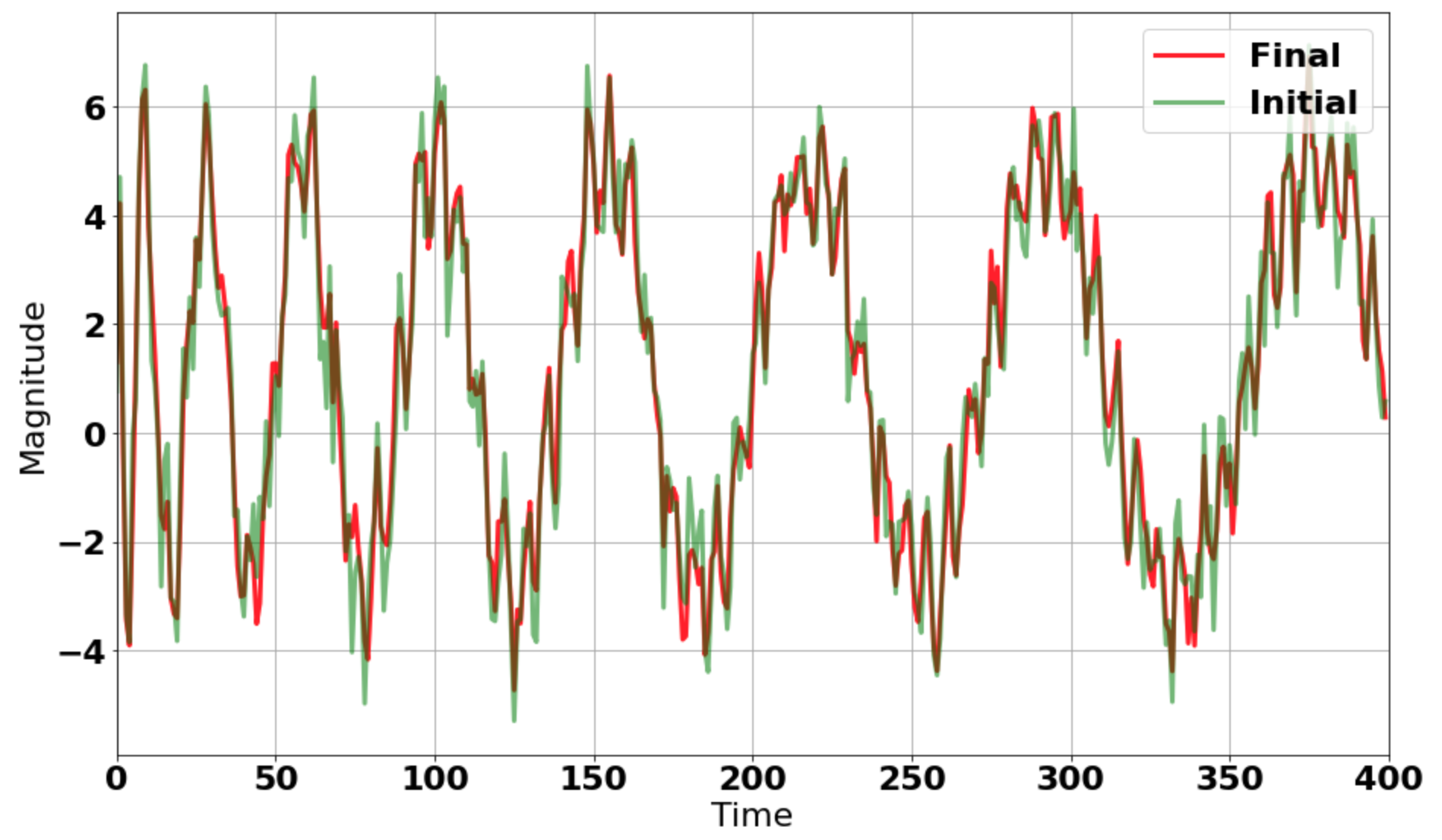}\label{fig:shm_proof_noise_basis_V}}
 \hspace{0.01\textwidth} 
  \subfigure[Basis deviation direction ($U$).]{\includegraphics[width=0.44\textwidth]
  {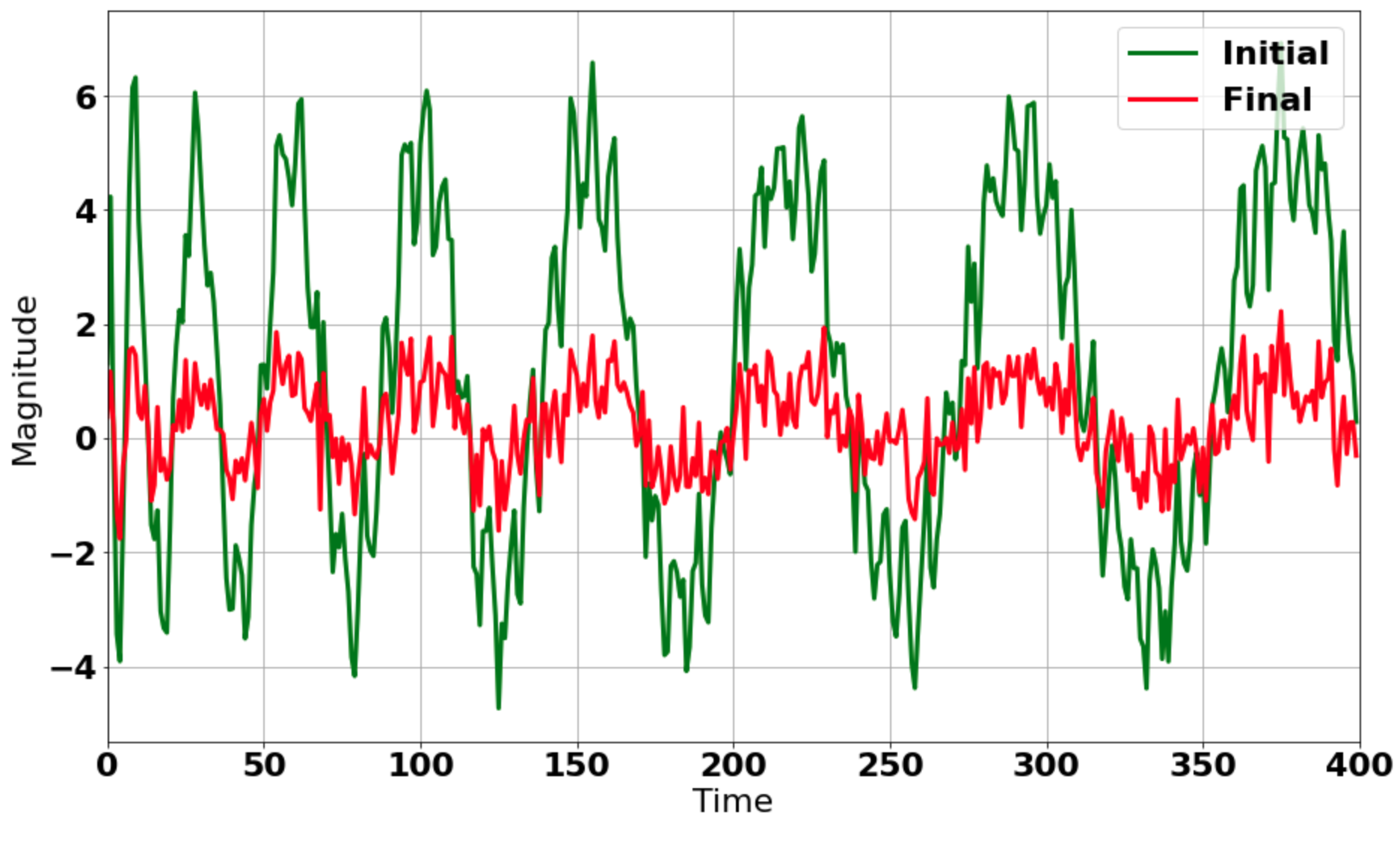}\label{fig:shm_proof_noise_basis_U}}
  \caption{The difference between following along either $U$ and $V$ directions individually, or in a combined fashion, in relation to the original signal space.}
  \label{fig:shm_proof_noise_basis}
\end{figure}

Here, we see that indeed there has been changes to the variation in the noise pattern for Subfigure \ref{fig:shm_proof_noise_basis_V}, and indeed a change in the basis functions in Figure \ref{fig:shm_proof_noise_basis_U}. However, it is crucial to reiterate that the terms "noise direction" and "basis direction" are empirical approximations. Finally, a noteworthy insight from Figure \ref{fig:shm_intro_example_c} is that when applying either a $U$-only or $V$-only perturbation, the projected geodesics onto 2D space ``appear'' linear, while the combined $U-V$ perturbation path exhibits clear curvature. A simplified explanation of this phenomenon can be seen as follows. Consider small perturbations over $U$ and $V$ to be taken as $U + f_U(\delta U)$ and $V + f_V(\delta V)$ respectively, where $f_U, f_V$ are arbitrary matrix-valued non-linear functions which arise due to curvature which may exist on the Stiefel manifold. Consider applying the perturbation in the $U$ direction without loss of generality,
\begin{align*}
   \mathcal{T}_{\text{mat}, 2} &= (U + f_U(\delta U)) \Sigma  V^{\intercal}\\
   &= U \Sigma V^{\intercal} + f_U(\delta U) \Sigma V^{\intercal} \\
   &= \mathcal{T}_{\text{mat}, 1} + \alpha f_U(\delta U) \\
   &\approx \mathcal{T}_{\text{mat}, 1} + \alpha f_U(0) + \alpha f_U'(0) \delta U + \alpha \frac{1}{2} f_U''(0) (\delta U)^2
\end{align*}

where $\alpha= \Sigma V^{\intercal}$ is constant, and we have expanded up to a second order Taylor expansion. Thus if the curvature is minimal at point $U$, we may consider $f_U''(0)\rightarrow 0$ in approximation, meaning that if \textit{StiefelGen} is applied along one of $U$ or $V$, we may expect the ensuing augmentations (and therefore projections) to occur in approximately linear increments. Consider now applying both, $\delta U$ and $\delta V$ perturbations,
\begin{align*}
    \mathcal{T}_{\text{mat}, 2} &= (U + f_U(\delta U)) \Sigma  (V^{\intercal} + f_V(\delta V)^{\intercal})\\
    &= U \Sigma V^{\intercal} + U \Sigma f_V(\delta V)^{\intercal} + f_U(\delta U) \Sigma V^{\intercal} + f_U(\delta U) f_V(\delta V)^{\intercal} \\
    &= U \Sigma V^{\intercal} + \alpha f_U(\delta U) + \beta f_V(\delta V) + f_U(\delta U) f_V(\delta V)^{\intercal} \\
    &\approx U \Sigma V^{\intercal} + \alpha f_U(0) + \alpha f_U'(0) \delta U + \alpha \frac{1}{2} f_U''(0) (\delta U)^2 \\
    &\qquad+ \beta f_V(0) + \beta f_V'(0) \delta V + \beta \frac{1}{2} f_V''(0) (\delta V)^2 +  f_U''(\delta)f_V''(0)^{\intercal} + f_U(\delta U) f_V(\delta V)^{\intercal},
\end{align*}
where,
\begin{align*}
f_U(\delta U) f_V(\delta V)^T \approx
&\quad f_U(0) f_V(0)^T + f_U(0) f_V'(0) \delta V^T + f_U'(0) \delta U f_V(0)^T + f_U'(0) \delta U f_V'(0) \delta V^T \\
&\quad + \frac{1}{2} f_U(0) f_V''(0)(\delta V)^2 + \frac{1}{2} f_U''(0)(\delta U)^2 f_V(0)^T + \frac{1}{2} f_U'(0) \delta U f_V''(0)(\delta V)^2 \\
&\quad + \frac{1}{2} f_U''(0)(\delta U)^2 f_V'(0) \delta V^T + \frac{1}{4} f_U''(0) (\delta U)^2 f_V''(0)(\delta V)^2.
\end{align*}

Thus, by simultaneously perturbing both the $U$ and $V$ matrices, the underlying complexity of the geodesic path in the projected PCA space significantly increases, as evidenced by the interplay between the first and second derivatives in the above equations. In other words, minor curvatures in the Stiefel manifolds of either $U$ or $V$ can lead to substantial downstream coordinate changes. Figure \ref{fig:SHM_curve} illustrates this concept, by comparing the $(x,y)$ coordinate paths of the basis-based $U$-only perturbation path, along with the joint $U-V$ perturbation path. The pronounced curvature in the $x$ coordinate along the geodesic path of the joint $U-V$ perturbation (Subfigure \ref{fig:SHM_curve_x_coord}) is evident, highlighting a major difference when compared to a $U$-only (or $V$-only) approach when using \textit{StiefelGen}. 

\begin{figure}[htbp]
  \centering
  
  \subfigure[Deviation of the $x$ co-ordinate from step 0 (start) through to step 20 (end) of the geodesic path when both $U$ and $V$ are perturbed jointly, and when only $U$ is perturbed. The linear interpolation from step 0 to step 20 is also shown to clarify the deviation which occurs from a simple straight line path.]{\includegraphics[width=0.45\textwidth]
  {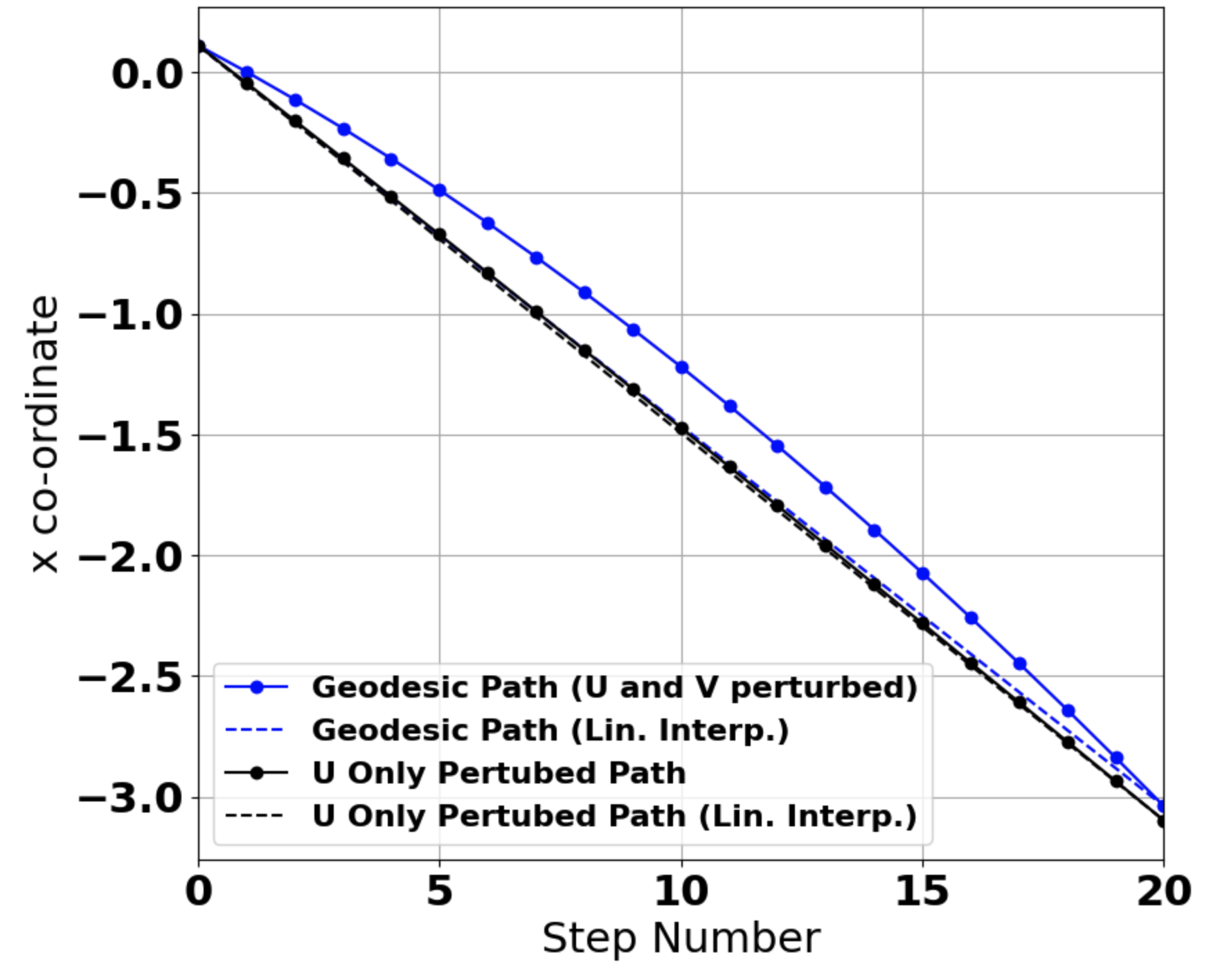}\label{fig:SHM_curve_x_coord}}
 \hspace{0.01\textwidth} 
  \subfigure[Deviation of the $y$ co-ordinate from step 0 (start) through to step 20 (end) of the geodesic path when both $U$ and $V$ are perturbed jointly, and when only $U$ is perturbed. The linear interpolation from step 0 to step 20 is also shown to clarify the deviation which occurs from a simple straight line path.]{\includegraphics[width=0.44\textwidth]
  {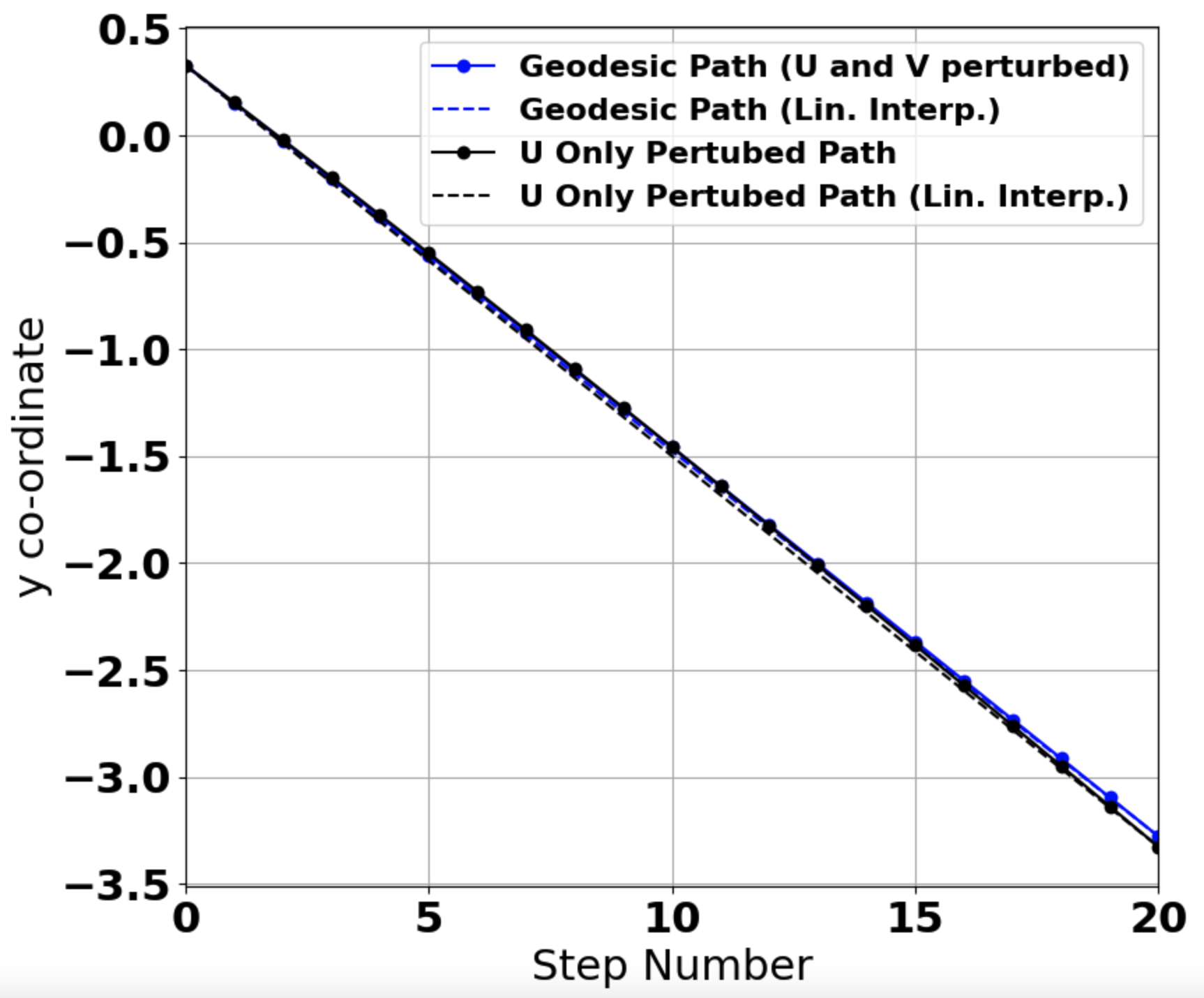}\label{fig:SHM_curve_y_coord}}
  \caption{Visual clarification of the deviation of the co-ordinates from a simple straight line path that two approaches take in the projected space. The first being when $U$ and $V$ are jointly perturbed, the second being only when $U$ is perturbed.}
  \label{fig:SHM_curve}
\end{figure}

\subsubsection{StiefelGen: Adversarial Data Generation in SHM} 

In the SHM domain, addressing the adversarial data generation problem involves answering a critical question: \textit{What does it take for damage signals to pass through my model and be misclassified as healthy?} To leverage \textit{StiefelGen} for this purpose, we follow a simple two-step procedure as a means to generate and search for potential adversarial examples. First, we calculate the Euclidean norm of the difference between the maximally perturbed ($\varepsilon=1$) and non-perturbed ($\varepsilon=0$) data points in the projected space. For each data point $x_1$ in its healthy state and $x_2$ in its perturbed state, the quantity $|x_2 - x_1|_2$ is computed and sorted, as illustrated in Subfigure \ref{fig:shm_adversarial_a}. Upon inspecting the plot, notable features indicative of adversarial vectors are identified. The plot exhibits a sigmoidal shape with two inflection points, roughly dividing it into three distinct portions. The first region corresponds to minimal norm changes, suggesting little alteration in the underlying time series. The final portion represents time series that have undergone significant shifts, likely resulting in the data point leaving the OCSVM boundary. In order to explore for adversarial samples focus is placed upon the intersection between large norm shifts (indicating the presence of substantial data movement) and moderate norm shifts (suggesting that whilst the data exhibited large movement, there is a reasonable chance it did not leave the OCSVM boundary). While not mathematically rigorous, this heuristic approach parallels the elbow method used in $k$-means clustering, albeit instead of looking for diminishing returns, we look for the point of rapidly increasing (norm) returns. In this study, the 85th percentile, centered with the second inflection point in Figure \ref{fig:shm_adversarial_a}, leading to the selection of the first data point after the 85th percentile as the adversarial example for exploration.

This data point exhibits perturbation geodesics in the projected 2D space as shown in Subfigure \ref{fig:shm_adversarial_b}, accompanied by the corresponding time series signals visualized in Subfigure \ref{fig:shm_adversarial_c}. Notably, two key observations emerge: (i) Throughout its geodesic path, the perturbed data point consistently remains within the healthy boundary defined by the OCSVM, and (ii) The change in the time series function escalate fairly rapidly, indicating an early manifestation of damage well before reaching its final position. Such an analysis serves as a valuable case study for the comprehensive examination of the structural health monitoring (SHM) problem. Therefore, \textit{StiefelGen} not only facilitates the exploration of model robustness by scrutinizing the boundary behavior of the OCSVM model concerning potential signals (prior subsection) but also enables the generation of adversarial examples. These nuanced investigations are challenging to conduct with other time series data augmentation methods, as (to the best of the author's knowledge) they tend to lack the combined capability of smoothly deforming signals from an augmentation state to a novelty state, without the need for pre-training or relying on large data sets.

\begin{figure}[htbp]
  \centering
    \subfigure[The change in $L_2$ norms before and after perturbation is applied. Here $x_1$ refers to the healthy state, and $x_2$ the perturbed (damage) state.\label{fig:shm_adversarial_a}]{\includegraphics[width=0.47\textwidth]
  {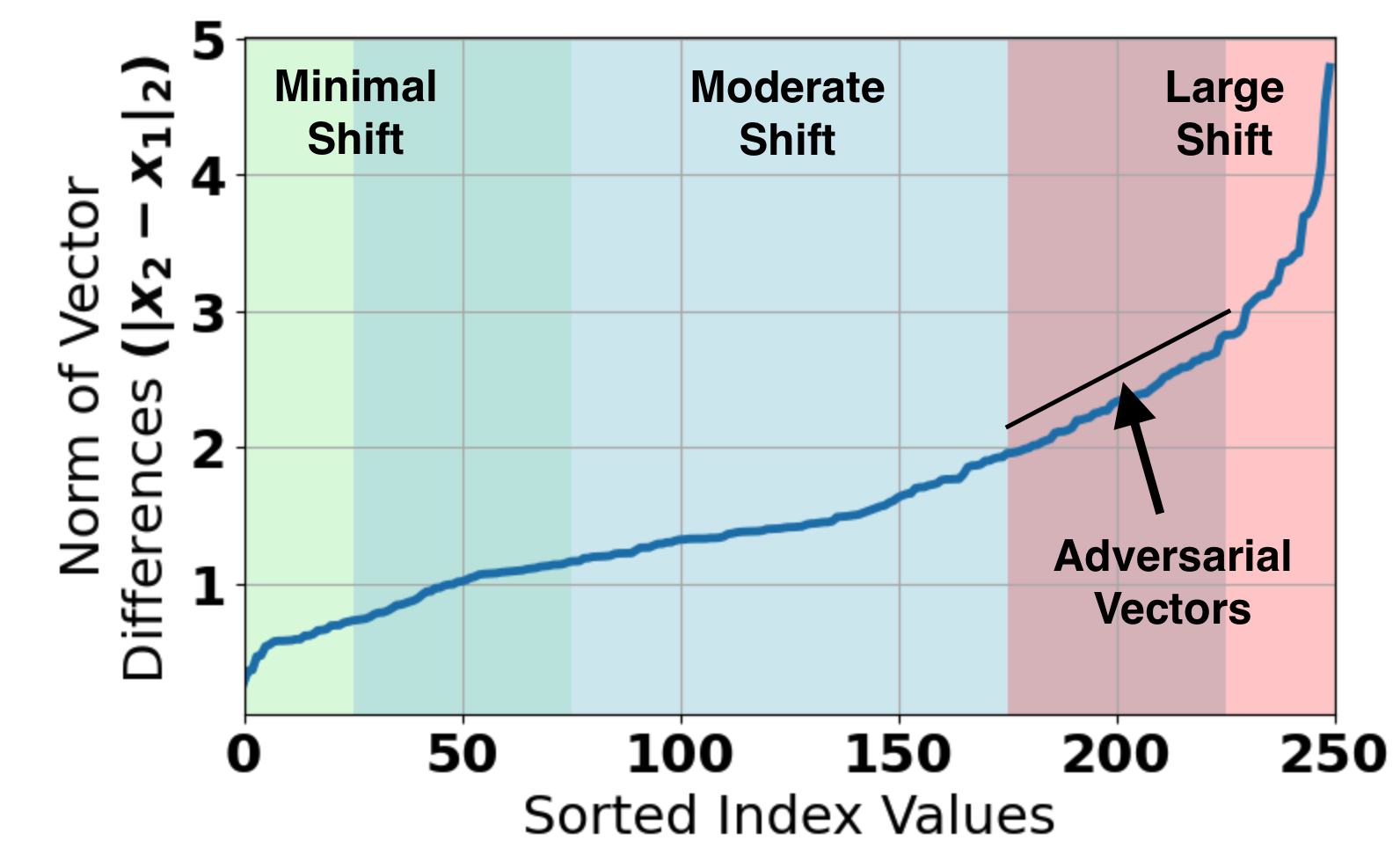}}
  \hspace{0.01\textwidth} 
  \subfigure[The geodesic path of the first data point which satisfied the condition that $\|x_2-x_1\|$ is over the 85-th percentile.\label{fig:shm_adversarial_b}]{\includegraphics[width=0.43\textwidth]
  {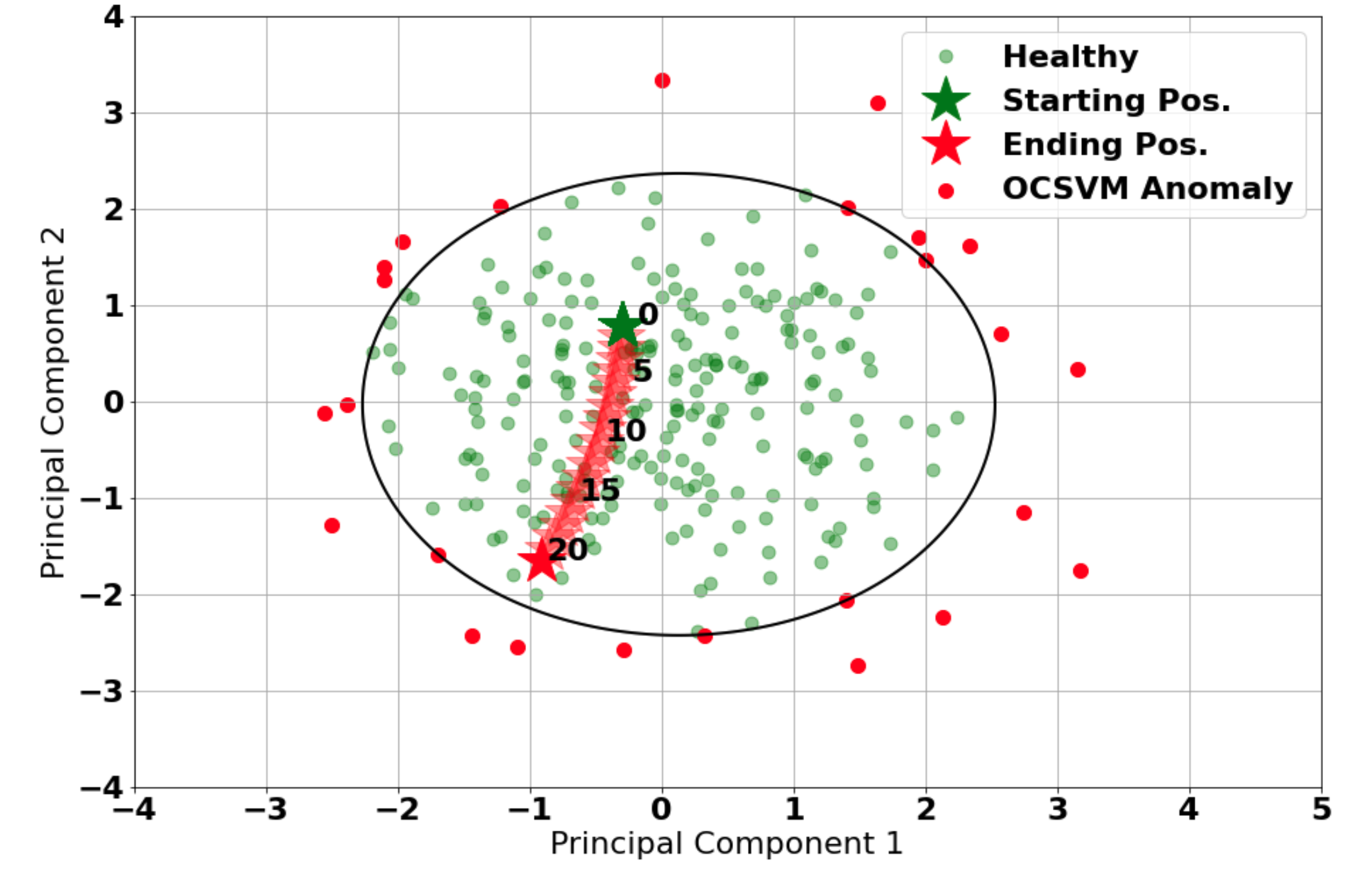}}
 
  \subfigure[The signal as it moved from the healthy to the maximally perturbed state. The black dashed lines represent every 5-th step along the geodesic.\label{fig:shm_adversarial_c}]{\includegraphics[width=0.45\textwidth]
  {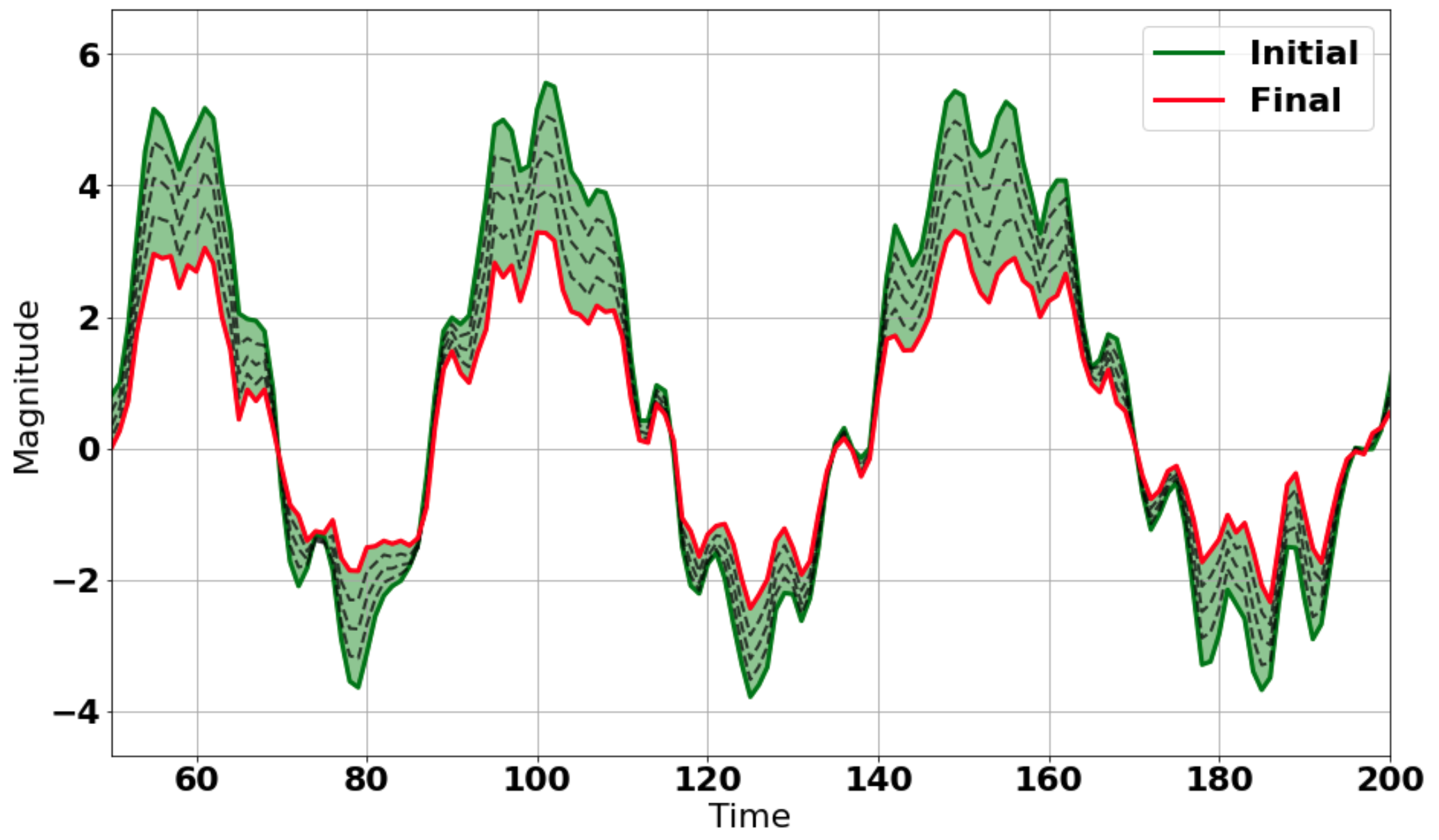}}
   \hspace{0.01\textwidth} 
  \subfigure[Several signal plots which aim to make sure that the final perturbed signal cannot be misconstrued in any way as being healthy.\label{fig:shm_adversarial_d}]{\includegraphics[width=0.45\textwidth]
  {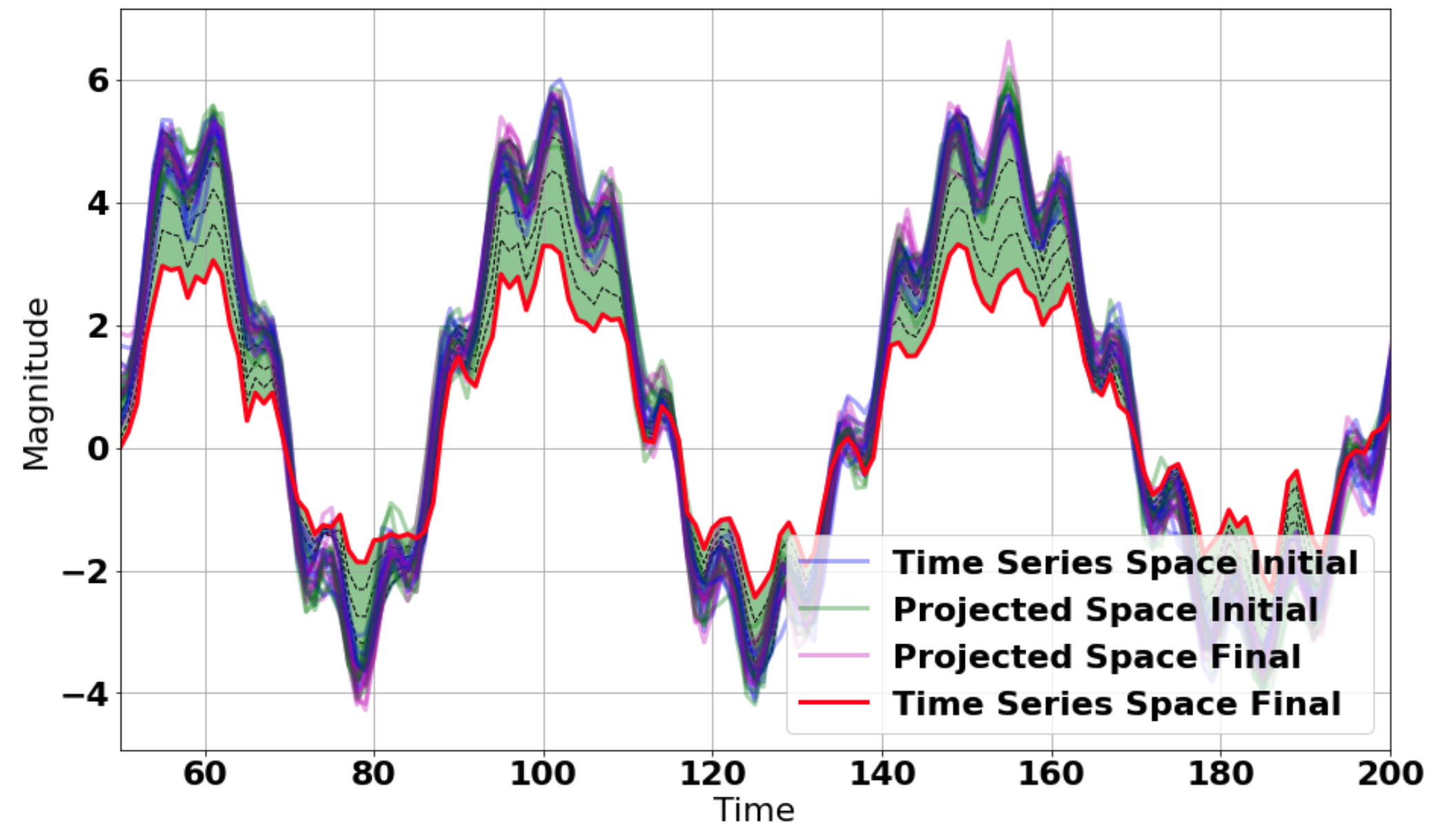}}
  \caption{Plots which show how to identify, generate, and analyse adversarial time series signal using StiefelGen for the SHM case study.}
  \label{fig:shm_adversarial}
\end{figure}

Finally, in Figure \ref{fig:shm_adversarial_d}, we aim to conclusively demonstrate the adversarial nature of the generated signal by presenting, (i) \textit{Time Series Space Initial}: The initial appearance of the signal before perturbation, (ii) \textit{Projected Space Initial}: An overlay of the ten closest time series signals relative to the projected data point under study, and (iii) \textit{Projected Space Final}: An overlay of the ten closest time series signals corresponding to the data point under study in projected space \textit{after} it has completed its full geodesic path. Figure \ref{fig:shm_adversarial_d} ensures there are no data points, whether in the original signal space or in the projected space, at the initial or final geodesic positions, resembling the generated adversarial signal. This reaffirms that the red time series signal in Figure \ref{fig:shm_adversarial_d} labeled \textit{Time Series Space Final}, is genuinely adversarial.

\subsection{\textit{StiefelGen}: Uncertainty Quantification and Linear Time Series Dynamics} \label{sec:UQ_DMD}

In this subsection, we shall delve into the compatibility of \textit{StiefelGen} for a joint application of uncertainty quantification (UQ) and dynamic mode decomposition (DMD) over spatio-temporal signals. Traditional UQ approaches in time series often involve creating prediction intervals that expand over time \cite{chatfield2001prediction, xiao2012time}. These methods often incorporate Bayesian analysis by providing a probabilistic framework over the signal by leveraging prior knowledge of the signal generating process \cite{kumar2008bayesian}. Monte Carlo methods are also frequently employed \cite{maybank2020mcmc}, and more recently, conformal predictions have gained attention due to its theoretical guarantees and distribution-free claims \cite{stankeviciute2021conformal}.

\subsubsection{Funtional Data Analysis}

At first glance, \textit{StiefelGen} may not seem to work well with conventional UQ methods. However, its strength lies in its ability to swiftly generate novel time series augmentations, with the required level of signal novelty scaled against the sampling distance away from the injectivity radius. Consequently, \textit{StiefelGen} is well-suited for a UQ-based approach to time series analysis through functional data analysis (FDA) \cite{wang2016functional}. FDA encompasses a suite of statistical methods designed for analyzing data that varies over a curve, surface, or continuum. In this context, observations are treated as \textit{functions}, emphasizing the analysis of datasets where the primary units of measurement are curves or functions, often observed over a continuous domain such as time. Mathematically we shall consider that \textit{StiefelGen} has generated a set of time series augmentations: $\bm{\mathcal{T}}(t) = \{\mathcal{T}_i(t)\}_i^K$, for $K$ possible input time series, defined over a closed interval $t\in [0,1]$. Using this construction, $\{\mathcal{T}_i(t)\}_i$ may be interpreted as a collection of stochastic random variables. Then, assuming a finite square integrability condition as: $\mathbb{E}\left(\int_0^1 |\bm{\mathcal{T}}(t)|^2 \,dt\right) < \infty$, one can generalize typical statistical quantities over function spaces. For example, the mean function of $\bm{\mathcal{T}}$ can be taken to be as $\mathbb{E}\langle \bm{\mathcal{T}},h\rangle = \langle\mu, h\rangle$, where $\mu$ is unique, and $\mu, h\in \mathcal{H}$, are separable Hilbert spaces of square-integrable functions \cite{kokoszka2017introduction}.

In employing FDA for UQ, we extend the conventional statistical box plot to \textit{functional box plots} \cite{sun2011functional} as part of our approach. Specifically, we leverage \textit{StiefelGen} to swiftly generate diverse instances of the provided reference uni-variate time signal. Subsequently, a functional box plot is constructed to encapsulate key descriptive statistics, such as the median signal and the envelope of the 50\% central region. This methodology becomes evident when applied to the SteamGen dataset. Figure \ref{fig:UQ_partone_a} illustrates the rapid generation of 500 time series signals, employing a scaling factor of $\beta=0.3$, and identical $m$ (50) and $n$ (40) reshape values as outlined in Section \ref{sec:overview}. The resulting functional box plot is illustrated in Figure \ref{fig:UQ_partone_b}.

\begin{figure}[htbp]
  \centering
  \subfigure[Over plotting of multiple generated signals based on the original reference signal. \label{fig:UQ_partone_a}]{\includegraphics[width=0.43\textwidth]
  {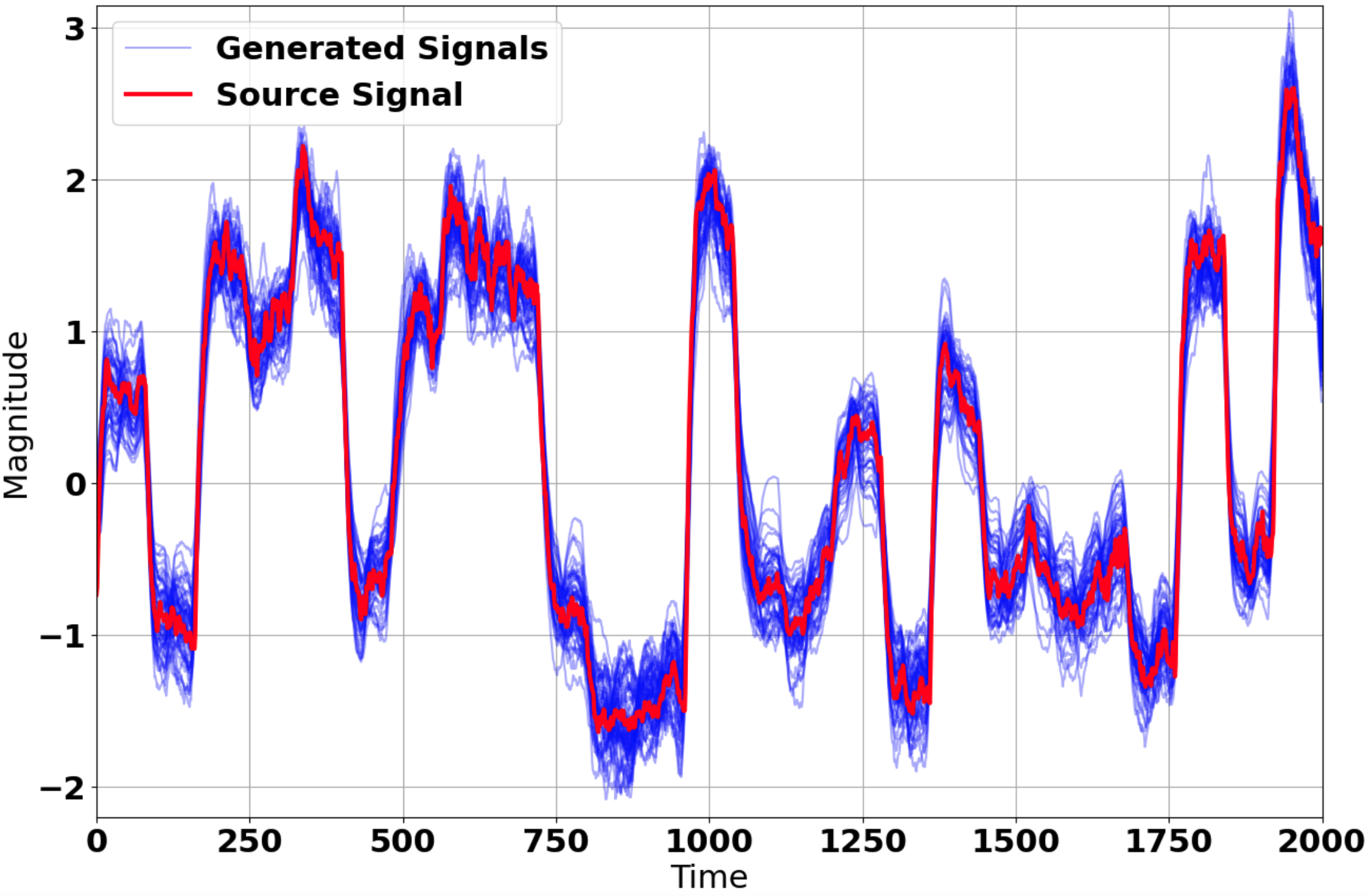}}
 \hspace{0.01\textwidth} 
  \subfigure[An example of plotting the 50\% central envelope of the functional box plot around the median signal.\label{fig:UQ_partone_b}]{\includegraphics[width=0.45\textwidth]
  {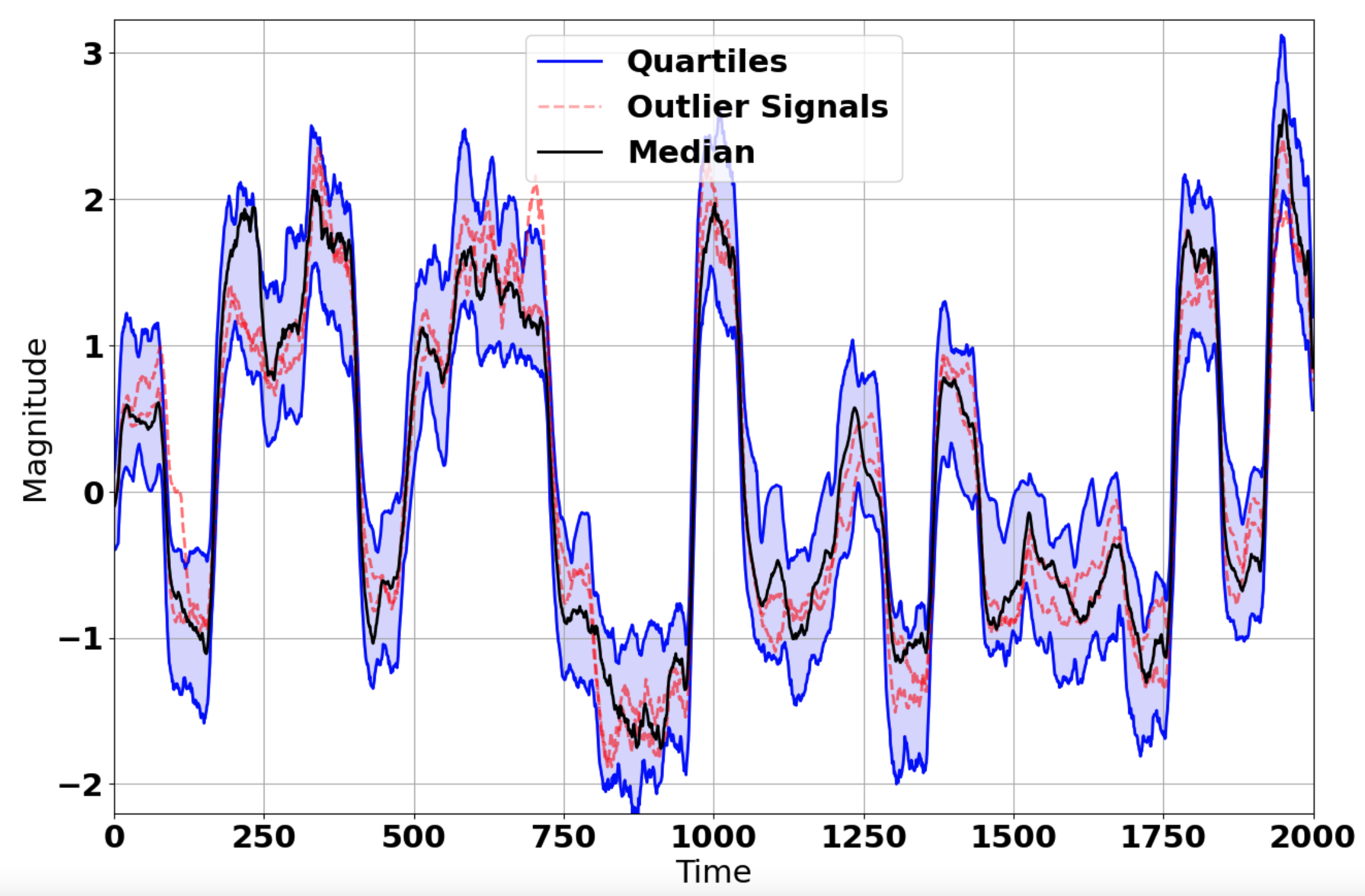}}
  \caption{An example of applying FDA to allow for \textit{StiefelGen} to be used for the purpose of UQ.}
  \label{fig:FDA_example}
\end{figure}

In order to create Figure \ref{fig:UQ_partone_b}, the sci-kit fda library was utilized \cite{ramos2022scikit}, and adopted the modified band depth (MBD) approach for functional box plot generation. This method was chosen for its simplicity, interpretability, and parameter-free nature. Briefly, the MBD computation relies on the arrangement of points concerning a reference point or curve in the data set. Typically, the median function serves as the reference point when aiming to calculate the functional box plot, offering a natural choice for assessing depth centrality within a distribution \cite{lopez2009concept}. Given this reference point, MBD then calculates the proportion of curves inside bands surrounding the target curve, providing a depth measure that reflects its position relative to the distribution of functional data in the dataset. Finally, it is crucial to emphasize that functional data lacks a universally accepted definition for outliers. This ambiguity arises because a curve can be identified as an outlier based on different criteria, including but not limited to substantial distance from the mean (magnitude outlier) or a distinctive pattern compared to other curves (shape outlier). In essence, a curve is typically classified as an outlier if it stems from a distinct underlying process \cite{lopez2009concept}.

The ability for \textit{StiefelGen} to perform UQ in this manner will be shown to be highly beneficial when implemented along side dynamic mode decomposition (DMD) for the purpose of perturbing system dynamics to obtain a multitude of different spatio-temporal signal evolutions.

\subsubsection{Dynamic Mode Decomposition}

Dynamic mode decomposition (DMD) originated within the fluid dynamics community as a means to break down intricate flows into a simplified representation based on spatio-temporal coherent structures \cite{schmid2010dynamic}. The increasing success of DMD over time can be attributed to its equation-free, data-driven nature, allowing for an accurate deconstruction of complex systems into spatio-temporal coherent structures \cite{kutz2016dynamic}, as the DMD algorithm can be leveraged for short-term future-state prediction and control \cite{lu2020prediction, yuan2021flow, grosek2014dynamic}. Algorithmically, DMD relies upon collecting a set of data snapshots, denoted as $x_k$, from a dynamical system at various time points, where $k \in \mathbb{N}.$ DMD then performs a regression of these snapshots over locally linear dynamics, represented as, $x_{k+1} = Ax_k$, where the $A$ matrix holds the locally linear system physics, and is chosen to minimize $\|x_{k+1} - Ax_k\|_2$. DMD's  advantages lie in its simplicity of execution and the minimal assumptions it imposes on the underlying system, which parallels the approach taken by \textit{StiefelGen}. It has also seen a recent re-surge of interest due to its close relationship to the \textit{Koopman operator} theory \cite{kutz2016dynamic}. The reason why \textit{SteifelGen} readily works with the DMD framework can be seen if one inspects its mathematics. Assume one has two separate snapshots of input data as:
\begin{align*}
    \bm{\mathcal{T}_1} &= [\mathcal{T}_1, ..., \mathcal{T}_{N-1}]\\
    \bm{\mathcal{T}_2} &= [\mathcal{T}_2, ..., \mathcal{T}_{N}]
\end{align*}
As made clear, these snapshots are assumed to vary according to a \textit{linear} dynamical system with aleatoric uncertainty orthogonal :
\begin{align*}
    \mathcal{T}_{1,i+1} = A \mathcal{T}_{1,i} 
\end{align*}
for some state space matrix, $A$. Written in its matrix form and adding aleatoric uncertainty, $\bm{\mathcal{T}_2} = A \bm{\mathcal{T}_1} + \varepsilon^{\intercal}$. Now the manner in which \textit{StiefelGen} arises, is that the base DMD algorithm requires calculating the SVD over $\bm{\mathcal{T}_1}$, leading to: $\bm{\mathcal{T}_2} = A U \Sigma V + \varepsilon^{\intercal}$. Assuming that the residuals $\varepsilon$, remain orthogonal to the minimizing basis found through DMD (often termed as the proper orthogonal decomposition modes \cite{chatterjee2000introduction}), one can left-right multiply the matrix, $A$, in order to discover the quintessential DMD relationship: $U^{\intercal} A U = U^{\intercal} \bm{\mathcal{T}_2} V \Sigma^{-1} = \tilde{S}$. Now, since $A$ and $\tilde{S}$ are related through a \textit{similarity transform} \cite{kutz2016dynamic}, one can find the eigenbasis for $\tilde{S}$, in order to obtain precisely the required eigenbasis for $A$. If one were to use \textit{StiefelGen} to \textit{perturb} the $U$ and $V$ matrices in this similarity transform, then what one is doing is \textit{perturbing} the linear dynamics of the system (encapsulated in the $A$ matrix), thereby projecting the spatio-temporal signal ahead in time through a multitude of pathways.

As $A$ and $\tilde{S}$ are connected through a similarity transform, finding the eigenbasis for $\tilde{S}$ allows one to precisely determine the required eigenbasis for $A$.  Thus information regarding the linear state evolution matrix $A$, can be directly obtained simply be observing the empirical relationship between $\bm{\mathcal{T}_1}$, and $\bm{\mathcal{T}_2}$. Ultimately, since the SVD is used to determine the evolutionary dynamics of the spatio-temporal system, \textit{StiefelGen} may be used to perturb the $U$ and $V$ matrices within this similarity transform, leading to different spatio-temporal path ways of the signal. In effect this imbues the dynamics with a non-trivial form of epistemic uncertainty. This integration presents a noteworthy extension of \textit{StiefelGen}'s utility beyond simply conventional time series data augmentation.

In order to demonstrate how this works we utilize the example shown in Section 1.4 of the textbook by Kutz et al. \cite{kutz2016dynamic} which involves mixing two spatio-temporal signals over the complex domain, clarified in Equation \ref{eqn:spatiotemporal}.
\begin{align}\label{eqn:spatiotemporal}
    f(x,t) &= f_1(x,t) + f_2(x,t) \\ \nonumber
    &= \text{sech}(x+3)\exp(i2.3t) + 2\text{sech}(x)\text{tanh}(x) \exp(i2.8t),
\end{align}
where the use of two separate frequencies, $\omega=2.3$ and $\omega=2.8$ allows for distinct spatial structures to arise \cite{kutz2016dynamic}. It is crucial to emphasize that Equation \ref{eqn:spatiotemporal} functions solely as the ground truth observation, initiating the iterative data-driven process of DMD. Furthermore, in alignment with the methodology outlined in \cite{kutz2016dynamic}, we adopt the reduced rank versions of matrices $U$ and $V^{\intercal}$, specifically retaining only the first two columns after performing the manifold perturbation. Lastly, it's worth observing that the integration of \textit{StiefelGen} into DMD does not necessitate any smoothing or rearrangement of the page matrix, re-emphasizing the lack of need for hyper-parameter selection. The only hyper parameter technically required would once again be that of $\beta$ which was chosen to be 0.2 without loss of generality.

To illustrate the impact of \textit{StiefelGen} on the spatio-temporal evolution, we generated a visual representation by plotting the ground truth signal against a diverse set of solution pathways. These pathways correspond to different perturbed $A$ matrices, as depicted in Figure \ref{fig:DMD_Frames_no_FDA}. The real component of the spatio-temporal signal is presented over a 4$\pi$ seconds interval for clarity. To ensure a varied degree of spacing across the 200 iterations, an exponential spacing scheme was employed for the ``frames''. This choice results in closely spaced initial frames, gradually growing apart, effectively highlighting the increasingly deviant paths taken by the perturbed signals.

As observed in Figure \ref{fig:DMD_Frames_no_FDA}, the perturbed signals consistently align with the overall modal behavior and exhibit smoothness properties akin to those of the original signal. This smoothness arises naturally from the inherent properties of the Stiefel manifold, thereby offering an empirical validation of the influence exerted by \textit{StiefelGen} on the spatio-temporal signal evolution.

\begin{figure*}[htbp]
  \centering
  {\includegraphics[width=0.9\textwidth]{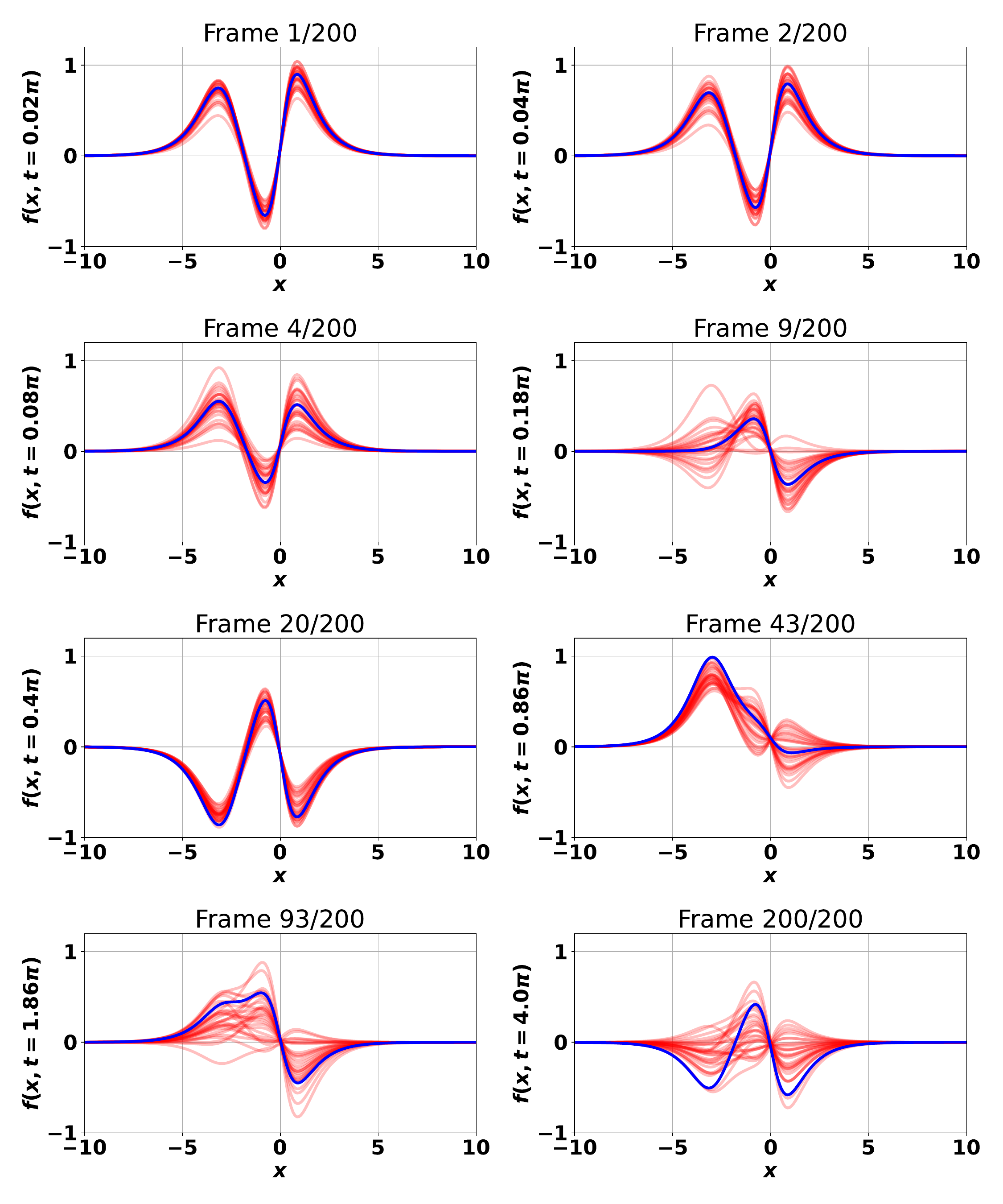}}
  \caption{A plot of different spatio-temporal states of the ground truth signal (blue), against thirty separate solution path ways for perturbed $A$ matrix with $\beta=0.2$.\label{fig:DMD_Frames_no_FDA}}
\end{figure*}

Expanding on this analysis to include a temporal functional box plot for the spatio-temporal signal, Figure \ref{fig:DMD_Frames_FDA} illustrates the effective tracking of the 50\% central region over time. To showcase the versatility of the approach and highlight the dynamic emergence of functional outlier data, an additional outer envelope representing the 75\% central region was chosen. This envelope not only underscores the flexibility of the method but also demonstrates the dynamic nature in which functional outlier data may manifest, as defined relative to the aforementioned Modified Band Depth (MBD) method. 

\begin{figure}[htbp]
  \centering
  {\includegraphics[width=0.9\textwidth]{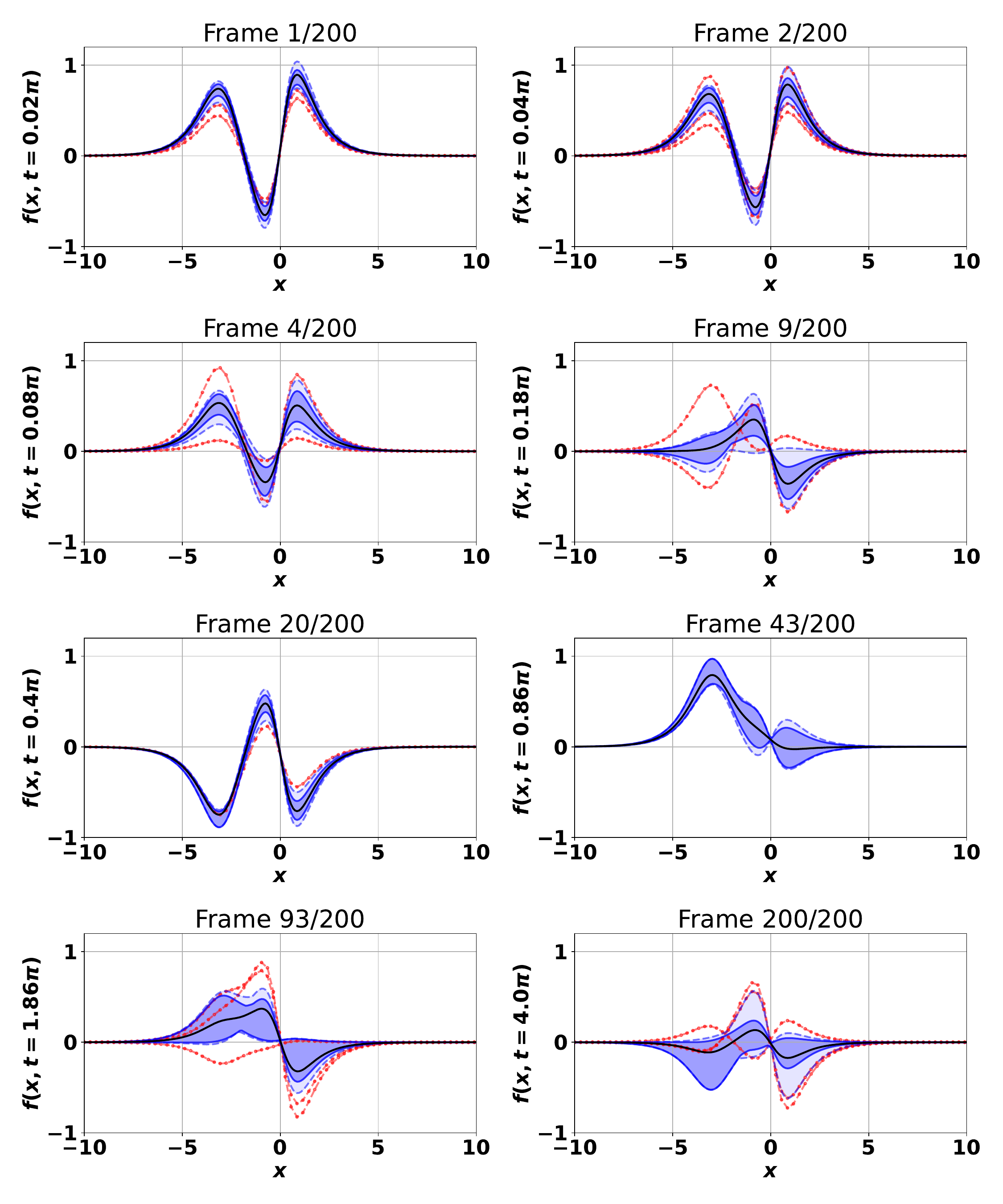}}
\caption{A plot of different spatio-temporal states of the functional box plot. The median signal (black), is encompassed by the 50\% central region (dark blue shading with solid blue boundary), as well as the 75\% envelope (ligh blue shading with dashed blue lines). The outlier signals are shown in red, and are defined to be those signals outside of the inter-quartile range.\label{fig:DMD_Frames_FDA}}
\end{figure}

Ultimately, the \textit{StiefelGen} algorithm appears to seamlessly integrate into the DMD framework, facilitating the exploration of induced epistemic uncertainty levels within linear dynamical models in a manner which respects the underlying geometry of its SVD step. This integration holds promise as a valuable data augmentation tool for projecting time series forward. Nevertheless, it's crucial to acknowledge the current limitations of DMD in projecting forward sets of signals with pronounced nonlinear behavior, excessive chaos, or non-stationary properties \cite{kutz2016dynamic}. A notable effort within the DMD community to address some of these limitations include multi-resolution DMD, a recursive, hierarchical extension of the conventional DMD algorithm, designed to mitigate drawbacks by separating microscale and macroscale effects \cite{kutz2016multiresolution}. 

\subsection{On Neural Networks}

Finally, in addition to the unique ways in which \textit{StiefelGen} can be used in many applications, we explore its conventional, implied usage as a synthetic data generator for the purpose of increasing the amount of overall training data of a deep, network model. In order to investigate this, we constructed an LSTM model for the purpose of sequence data classification for the \textit{Japanese Vowels} dataset from the UCI machine learning repository \cite{kudo1999multidimensional}. This dataset is a multidimensional dataset in which nine male speakers uttered two vowels in sequence ('a' and 'e') reflective of a phonetic dipthong, /ae/ that is common in Japanese. On each utterance of the dipthong, a ``12 degree linear prediction analysis'' \cite{atal1971speech} was performed by splitting each utterance waveform into a 12 dimensional feature vector (12 separate coefficients)  input per utterance,  where each individual dimension (coefficient) may be anywhere from 7 to 29 units long. The total number of time series (experimental observations) used in this study was 640, with 270 time series for training (30 measurement instances spread amongst the 9 speaker classes) and the other set of 370 time series for testing (24 - 88 measurement instances for each class, spread this time non-equally amongst the same 9 speaker classes). Further information such as sampling rate, frame lengths, and shift lengths (which are not relevant to the current discussion) may be found online \cite{misc_japanese_vowels_128}.

Expanding on the versatile applications of \textit{StiefelGen}, we delve into its more traditional yet inherently valuable role as a synthetic data generator. To explore this aspect, we employed a Long Short-Term Memory (LSTM) model designed for sequence data classification, utilizing the \textit{Japanese Vowels} dataset from the UCI machine learning repository \cite{kudo1999multidimensional}.

The \textit{Japanese Vowels} dataset comprises multidimensional features extracted from the utterances of nine male speakers producing the phonetic diphthong /ae/. The dataset involved performing a ``12 degree linear prediction analysis'' \cite{atal1971speech}, breaking down each utterance waveform into a 12-dimensional feature vector. Further, the length of this 12-dimensional feature vector varied in from 7 to 29 units depending on the recording. Mathematically, each observation, $\bm{x}_i$ has matrix dimension $\bm{x}_i\in\mathbb{R}^{12 \times k}$, where $k\in \mathbb{N}_{[7,29]}$. Further clarification is available at \cite{matlab_LSTM}. In order to combat the varying $k$, each $\bm{x}_i$ was padded to the same total length, so that $\bm{x}_i=\bm{x}_j$ for all $i,j$ in the training and testing sets respectively. In total, 640 time series observations were used in the experiment. Of these, 270 time series were allocated for training, with 30 measurement instances each distributed among the 9 speaker classes. The remaining 370 time series were designated for testing, with 24 to 88 measurement instances for each class (thus in the testing data the experimental observations were distributed non-uniformly across the same 9 speaker classes). Additional details, such as sampling rate, frame lengths, and shift lengths (though not central to the current discussion), can be found online \cite{misc_japanese_vowels_128}.

Given that \textit{StiefelGen} operates primarily through a perturbation factor/percentage (where $\beta=1\rightarrow 100\%$ implies lying directly on the radius of injectivity), a comprehensive assessment of \textit{StiefelGen} necessitates exploring the impact of varying this percentage value. The goal is to generate additional data that is "similar enough" to the input data for augmentation. To achieve this, we focus on smaller to moderate percentage ranges, avoiding training on strong outlier behavior. Hence, we selected percentage perturbation factors ($\beta\times 100)$ of: $[0\%, 5\%, 10\%, 15\%, 30\%]$, where 0\% signifies the default operating condition (applied perturbation). The increasing percentages reflect an escalation in variance within the augmented datasets. It's noteworthy that, since we are working with a multidimensional dataset, there is no need for any \textit{reshape} operation on the input data, as it naturally stacks in a matrix format (each $\bm{x}_i$ is already a matrix ). Additionally, we did not introduce any additional smoothing, as the data sequences are relatively short on a per-experiment/observation basis and exhibit minimal noise. 

The nested structure of this experiment unfolds as follows:
(i) Investigating the effects of taking the first $n$ samples per class, which allows the experiment to explore low data cardinality settings ($n=\{5,10,15,20,25,30\}$ where $n=30$ implies utilizing the entire training dataset consisting of 30 observations across 9 classes). (ii) Selecting the amount of additional data to generate, acting as the synthetic data multiplier over the $n$ samples ($\text{gen} = \{5,10,15,20,25,30\}$). For instance, choosing $n=10$ samples per training class and opting to generate $\text{gen}=15$ times more data results in working with $n×\text{gen}=10×15=150$ synthetic data samples, across each of the 9 classes (The original $n=15$ data samples used for generating synthetic data are entirely discarded for simplicity so that one doesn't need to work with $15+150=165$ augmented data samples per class). (iii) Testing various levels of percentage perturbations whilst iterating through points (i) and (ii) concurrently. For each new percentage perturbation factor, an LSTM model was trained and its testing accuracy scores were grouped 20 random seed iterations. Naturally, this allows for the generation of reasonable standard error uncertainty bar estimates and mean value estimates per experiment. The overall nested structure of the experiment's flow is visually represented in Figure \ref{fig:big_flow_chart}.

\begin{figure}[htbp]
  \centering
 {\includegraphics[width=1\textwidth]
  {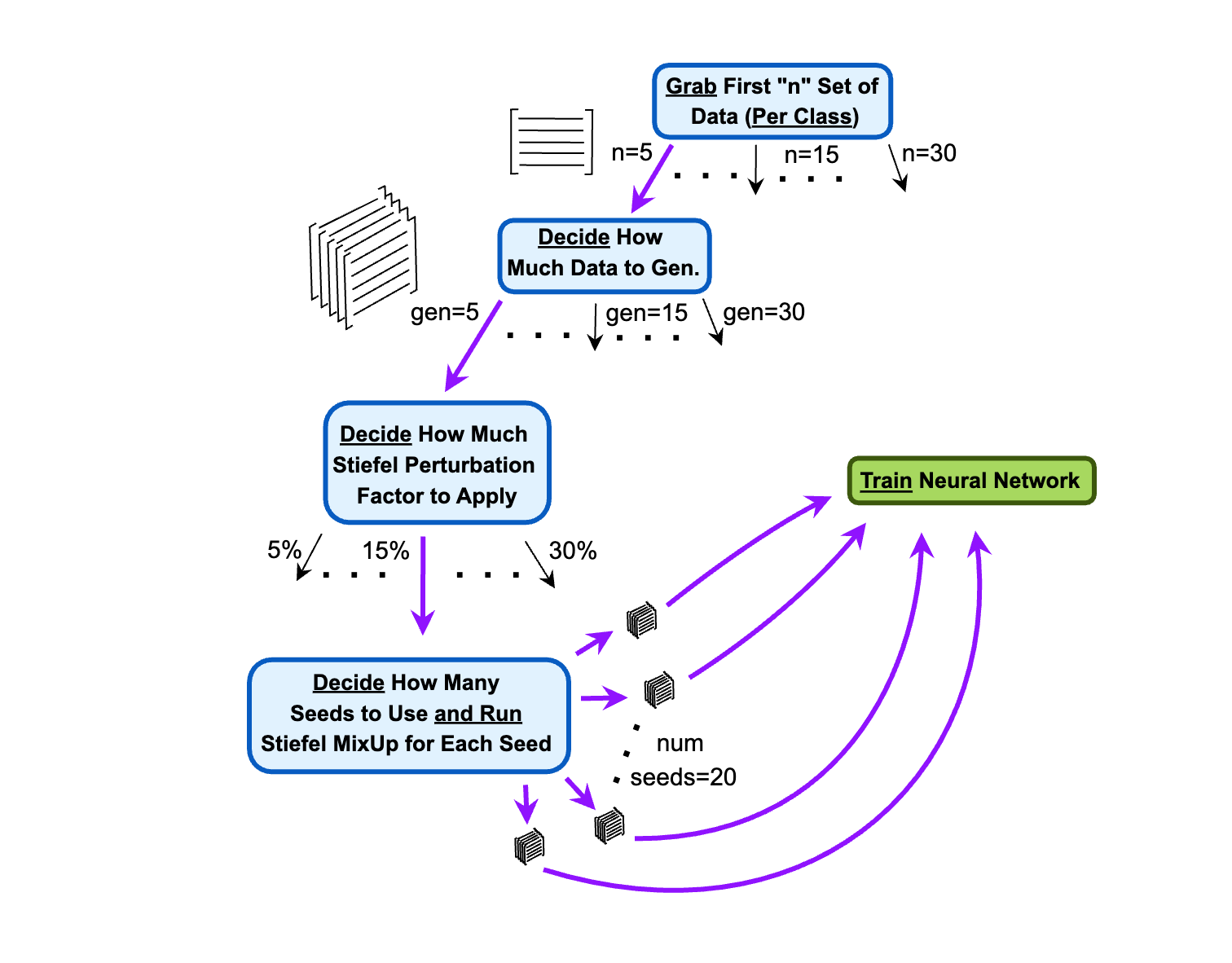}}
  \caption{A diagram showcasing the nested workflow of experiments designed to validate the effectiveness of \textit{StiefelGen} for a numerical time series classification problem. The primary objective is to address the necessity for data augmentation in this context, ultimately enhancing the learned capacity of the LSTM model for improved performance.}
  \label{fig:big_flow_chart}
\end{figure}

Concerning the architecture of the complete end-to-end LSTM model, the initial layer features a bidirectional LSTM with 100 hidden units in total \cite{graves2005framewise}. This was followed by a fully connected layer with 9 output units (matching the number of classes) and a subsequent softmax layer. The chosen loss function was cross-entropy, and a mini-batch size of 64 was selected without loss of generality, and the training time spanned 50 epochs in total. These hyper parameters were selected as empirical evidence suggests they can lead to an LSTM model with ample learning capacity to achieve high accuracy scores on the given dataset (with testing accuracy scores above that of 97\%) \cite{matlab_LSTM}.

However, since the primary aim is to illustrate that the proposed data augmentation method through \textit{StiefelGen} can significantly boost training and testing accuracy scores, the learning rate for teh ADAM optimizer in \textit{PyTorch} \cite{kingma2014adam,paszke2019pytorch} was selected to be 0.001. This choice ensures that the \textit{PyTorch} LSTM model does not attain its full learning capacity. Although a comparable learning rate was used in \cite{matlab_LSTM} in its MATLAB\textsuperscript{\textregistered} implementation, it is essential to note that the default hyper parameter inputs of the ADAM function differ significantly across the \textit{PyTorch} and MATLAB\textsuperscript{\textregistered} implementations, which leads to vastly different end states of the 50 epoch learning process across both implementations. Consequently, equivalent ADAM learning rates do not necessarily imply reaching an equivalent capacity LSTM model within the specified 50 epochs, which in the case of \textit{PyTorch} for this experiment arrives at an under capacity LSTM model given the chosen random seed. The empirical results of performing this nested set of experiments is show in Figure \ref{fig:NN_results}. To facilitate discussion, the Table \ref{tab:times1_main} is shown to summarize the quantitative results of the vanilla, un-augmentated data case. Furthermore, Table \ref{tab:30times_main} (corresponding to Figure \ref{fig:NN_results_f}) is presented which contains the best possible training and testing scores that approximately doubles the scores of the vanilla model. All of the quantitative results have been presented for further reading in Appendix \ref{app_sec:LSTM_Table}. 

\begin{table}[htbp]
\centering
\caption{The result of using the first $N$ elements from the each class of the data set, where $N=[5,10,15,20,25,30]$. This is the base reference set of training and testing accuracies which have not received any data augmentation.}
\label{tab:times1_main}
\resizebox{\textwidth}{!}{%
\begin{tabular}{cccccccc}
\hline
\textbf{Pert. Level {[}\%{]}} & \textbf{Accuracy {[}\%{]}} & \textbf{$5(\times 1)$} & \textbf{$10(\times 1)$} & \textbf{$15(\times 1)$} & \textbf{$20(\times 1)$} & \textbf{$25(\times 1)$} & \textbf{$30(\times 1)$} \\ \hline
\multirow{2}{*}{\textbf{0}} & \textbf{Train} & 32.78$\pm$1.45 & 38.67$\pm$1.83 & 42.26$\pm$2.43 & 49.17$\pm$1.95 & 49.13$\pm$2.40 & \textbf{50.02$\pm$2.22} \\
 & \textbf{Test} & 22.42$\pm$1.33 & 32.64$\pm$1.39 & 38.39$\pm$1.87 & 40.77$\pm$1.86 & 39.76$\pm$2.03 & \textbf{41.53$\pm$2.33} \\ \hline
\end{tabular}%
}
\end{table}

\begin{table}[htbp]
\centering
\caption{Results of applying \textit{StiefelGen} with various perturbation levels ($[5\%, 10\%, 15\%, 30\%]$, across various amounts of data taken from the dataset for generation $[5,10,15,20,25,30]$ for augmentation, assuming 30 times the amount of data is generated.}
\label{tab:30times_main}
\resizebox{\textwidth}{!}{%
\begin{tabular}{cccccccc}
\hline
\textbf{Pert. Level {[}\%{]}} & \textbf{Accuracy {[}\%{]}} & \textbf{$5(\times 30)$} & \textbf{$10(\times 30)$} & \textbf{$15(\times 30)$} & \textbf{$20(\times 30)$} & \textbf{$25(\times 30)$} & \textbf{$30(\times 30)$} \\ \hline
\multirow{2}{*}{\textbf{5}} & \textbf{Train} & 94.79 $\pm$1.70 & 93.94$\pm$3.04 & \textbf{97.14$\pm$1.08} & 87.02$\pm$4.49 & 93.76$\pm$2.43 & 88.88$\pm$2.76 \\
 & \textbf{Test} & 61.12$\pm$2.12 & 76.53$\pm$3.06 & \textbf{88.49$\pm$1.43} & 80.62$\pm$4.29 & 88.05$\pm$2.02 & 85.20$\pm$2.61 \\ \hline
\multirow{2}{*}{\textbf{10}} & \textbf{Train} & 93.73$\pm$2.11 & 94.98$\pm$1.46 & 95.60$\pm$1.58 & 89.96$\pm$3.15 & 94.28$\pm$1.84 & \textbf{96.35$\pm$1.01} \\
 & \textbf{Test} & 61.39$\pm$2.58 & 78.50$\pm$1.74 & 88.74$\pm$1.63 & 84.23$\pm$3.71 & 90.61$\pm$1.76 & \textbf{93.55$\pm$0.77} \\ \hline
\multirow{2}{*}{\textbf{15}} & \textbf{Train} & 92.28$\pm$2.22 & 96.79$\pm$1.27 & 94.24$\pm$1.71 & 91.29$\pm$4.44 & \textbf{94.09$\pm$1.43} & 93.16$\pm$1.49 \\
 & \textbf{Test} & 61.97$\pm$2.56 & 80.49$\pm$2.10 & 89.46$\pm$1.13 & 87.35$\pm$4.25 & \textbf{91.43$\pm$1.48} & 90.49$\pm$1.72 \\ \hline
\multirow{2}{*}{\textbf{30}} & \textbf{Train} & 93.67$\pm$1.57 & 91.20$\pm$1.38 & 88.14$\pm$1.65 & 88.01$\pm$2.29 & 87.68$\pm$1.57 & \textbf{90.47$\pm$0.88} \\
 & \textbf{Test} & 59.42$\pm$2.22 & 73.43$\pm$1.73 & 85.14$\pm$1.32 & 87.61$\pm$2.61 & 88.09$\pm$1.31 & \textbf{91.14$\pm$0.83} \\ \hline
\end{tabular}%
}
\end{table}

\begin{figure}[htbp]
  \centering
  \subfigure[Augmenting with $\text{gen}=5$ times the number of observed points. \label{fig:NN_results_a}]{\includegraphics[width=0.43\textwidth]
  {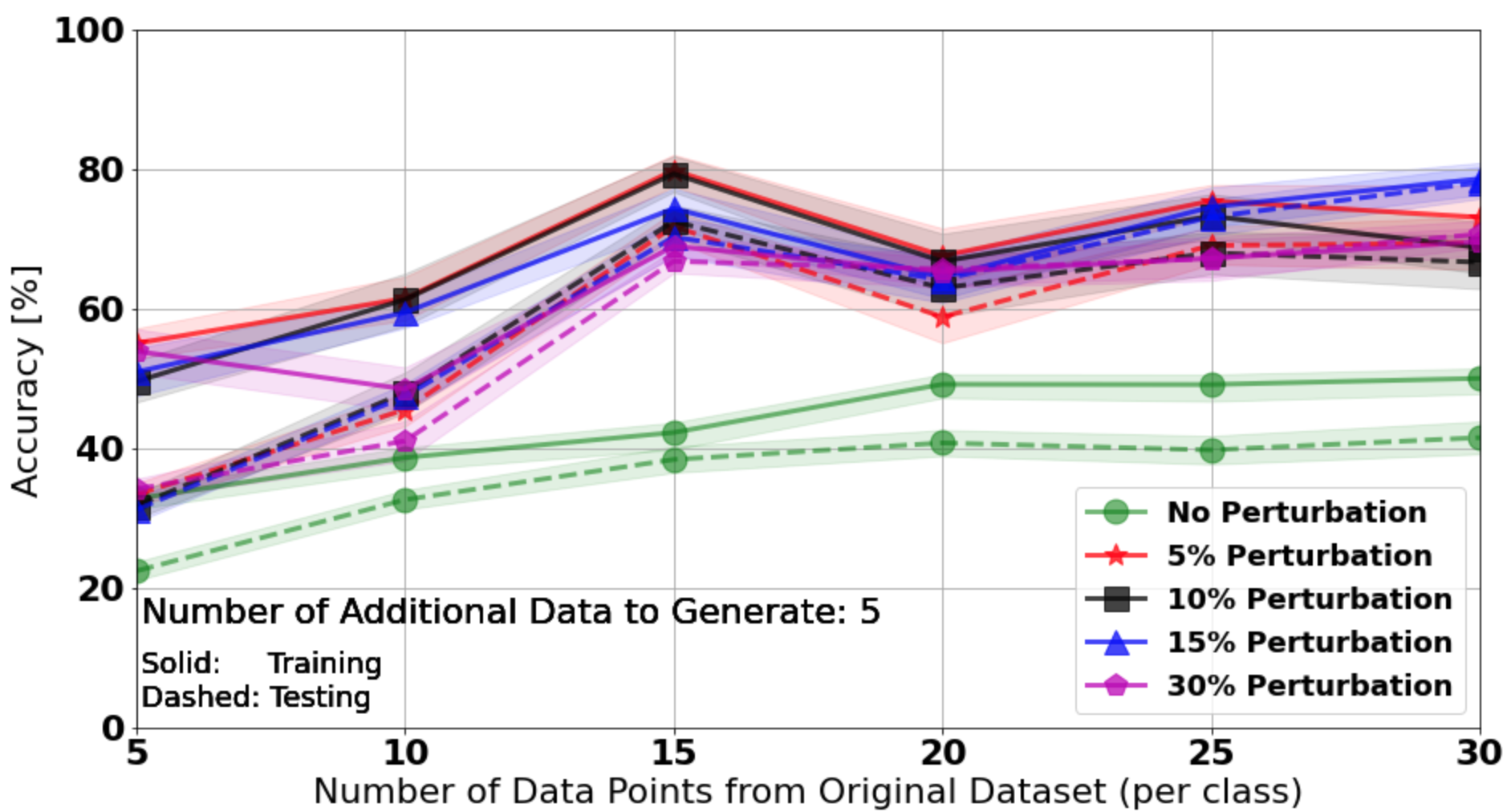}}
 \hspace{0.01\textwidth} 
  \subfigure[Augmenting with $\text{gen}=10$ times the number of observed points.\label{fig:NN_results_b}]{\includegraphics[width=0.45\textwidth]
  {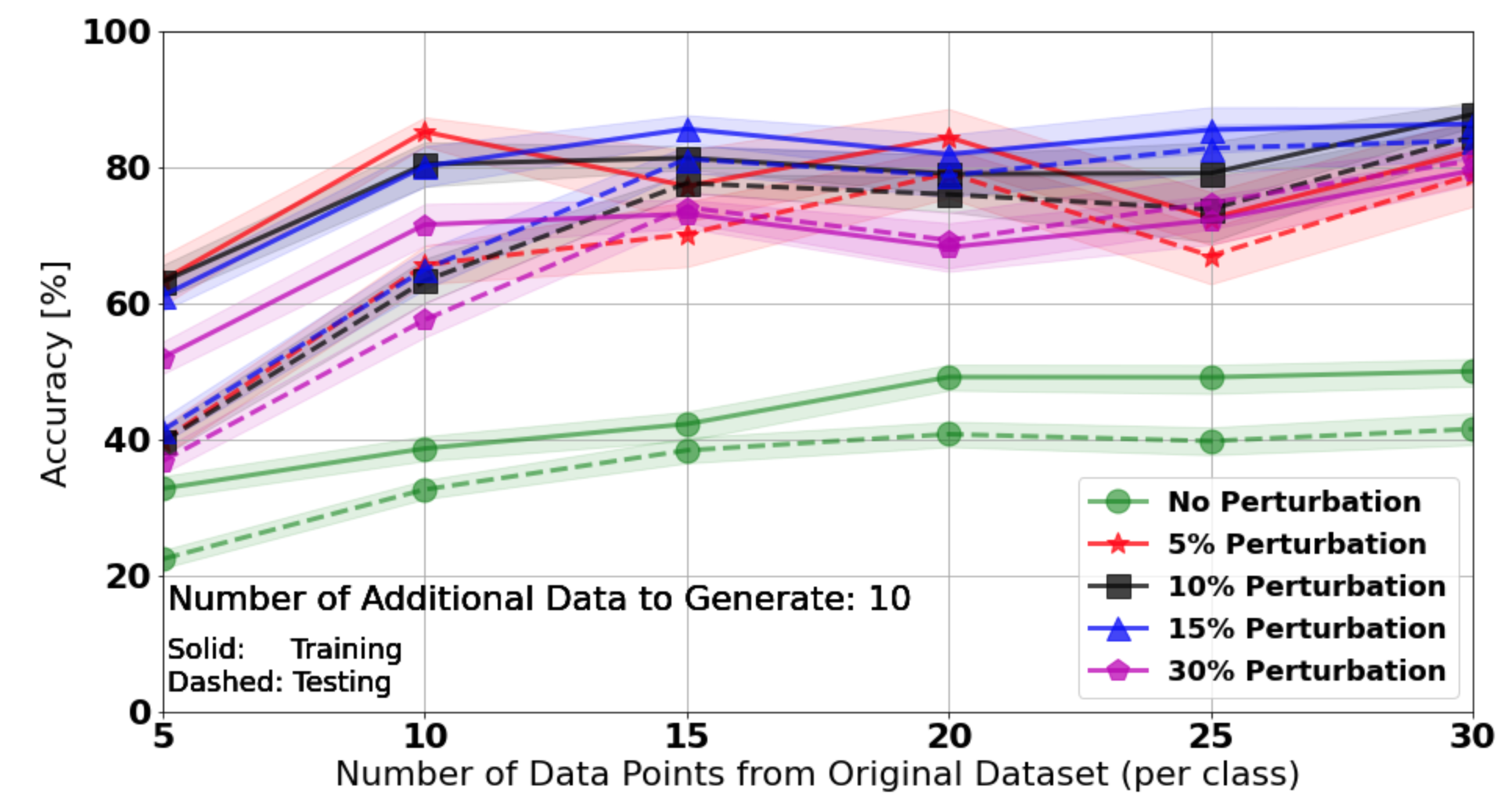}}

  \subfigure[Augmenting with $\text{gen}=15$ times the number of observed points.\label{fig:NN_results_c}]{\includegraphics[width=0.45\textwidth]
  {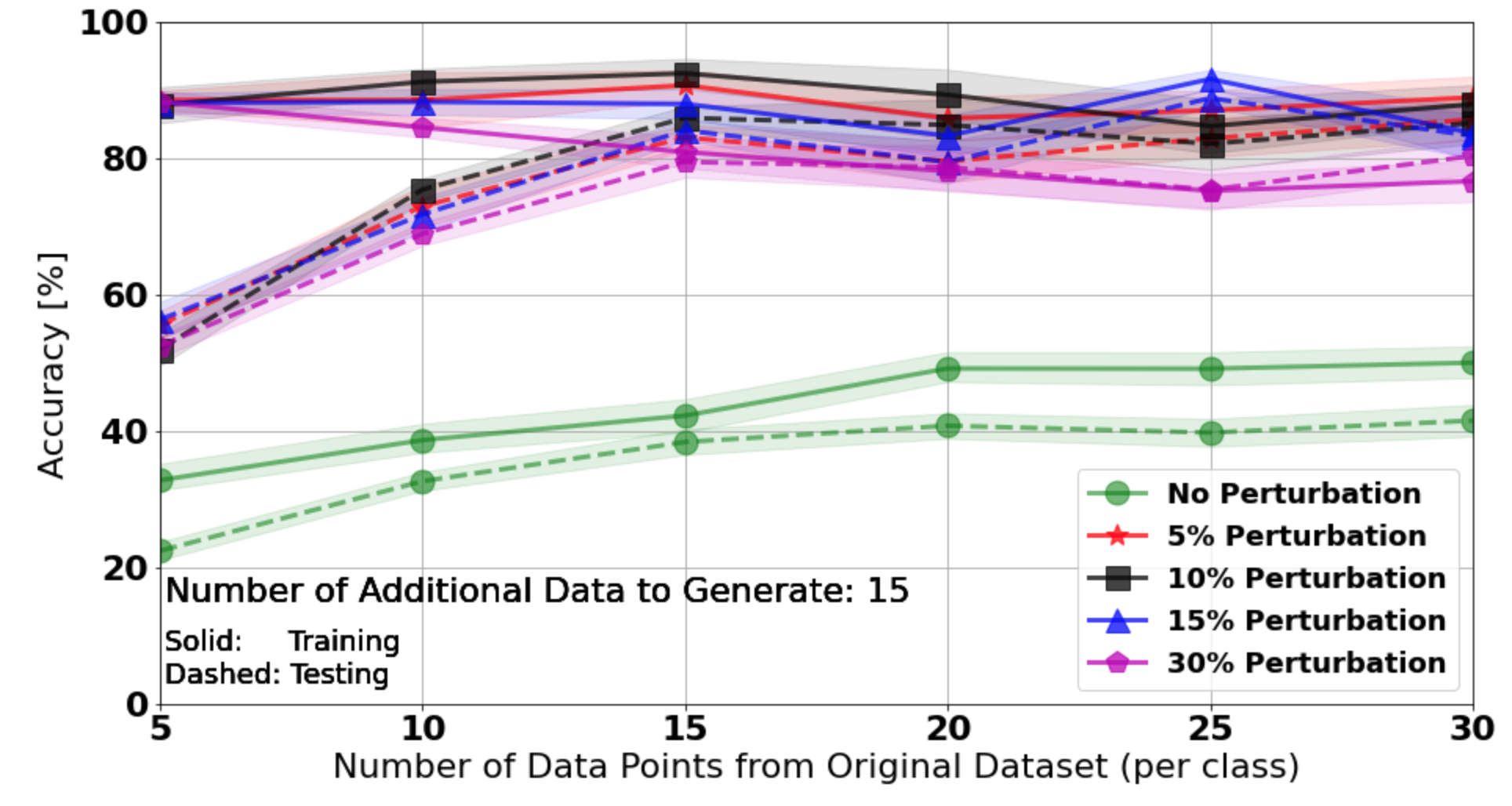}}
   \hspace{0.01\textwidth} 
  \subfigure[Augmenting with $\text{gen}=20$ times the number of observed points.\label{fig:NN_results_d}]{\includegraphics[width=0.45\textwidth]
  {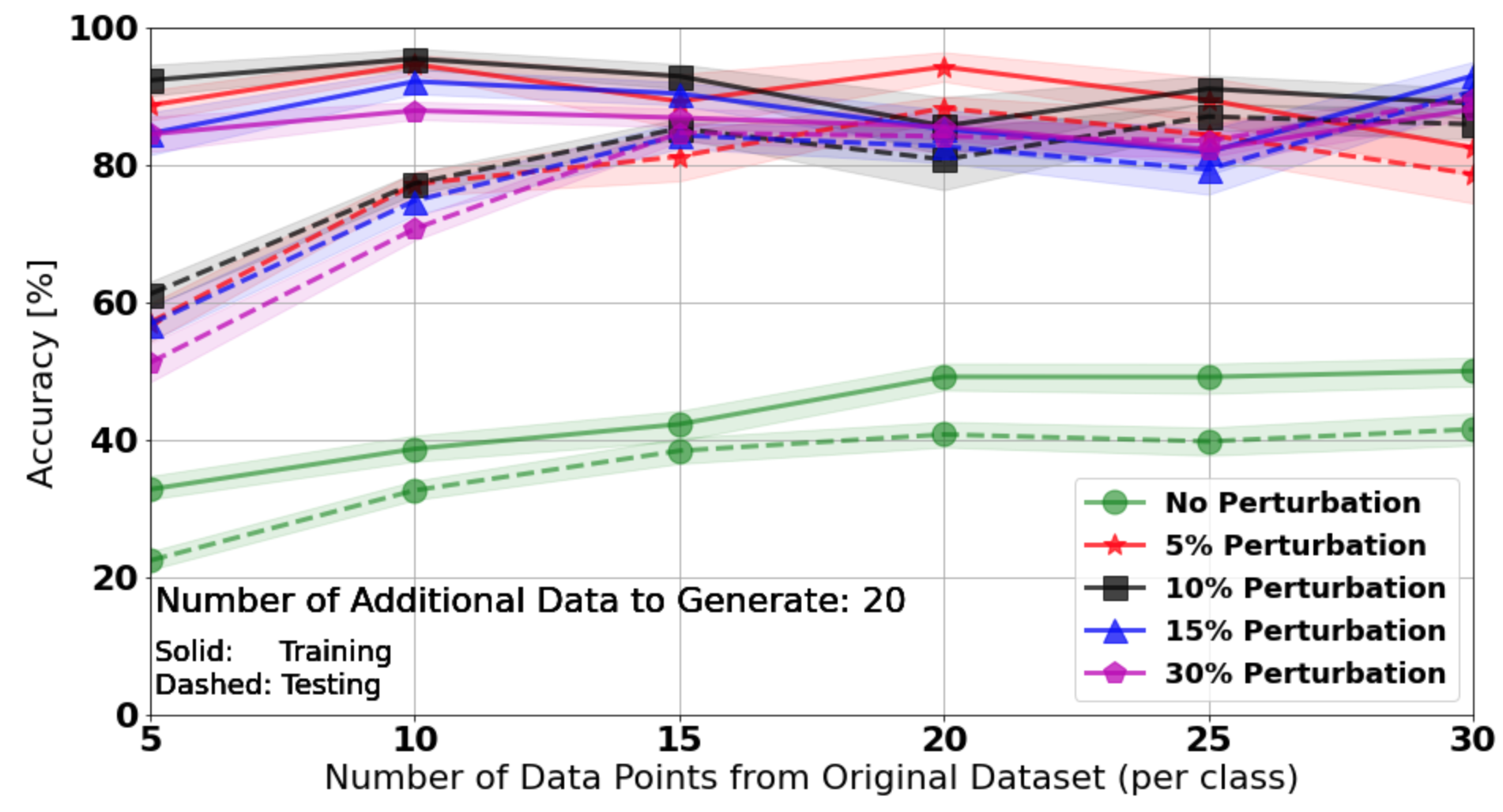}}

    \subfigure[Augmenting with $\text{gen}=25$ times the number of observed points.\label{fig:NN_results_e}]{\includegraphics[width=0.45\textwidth]
  {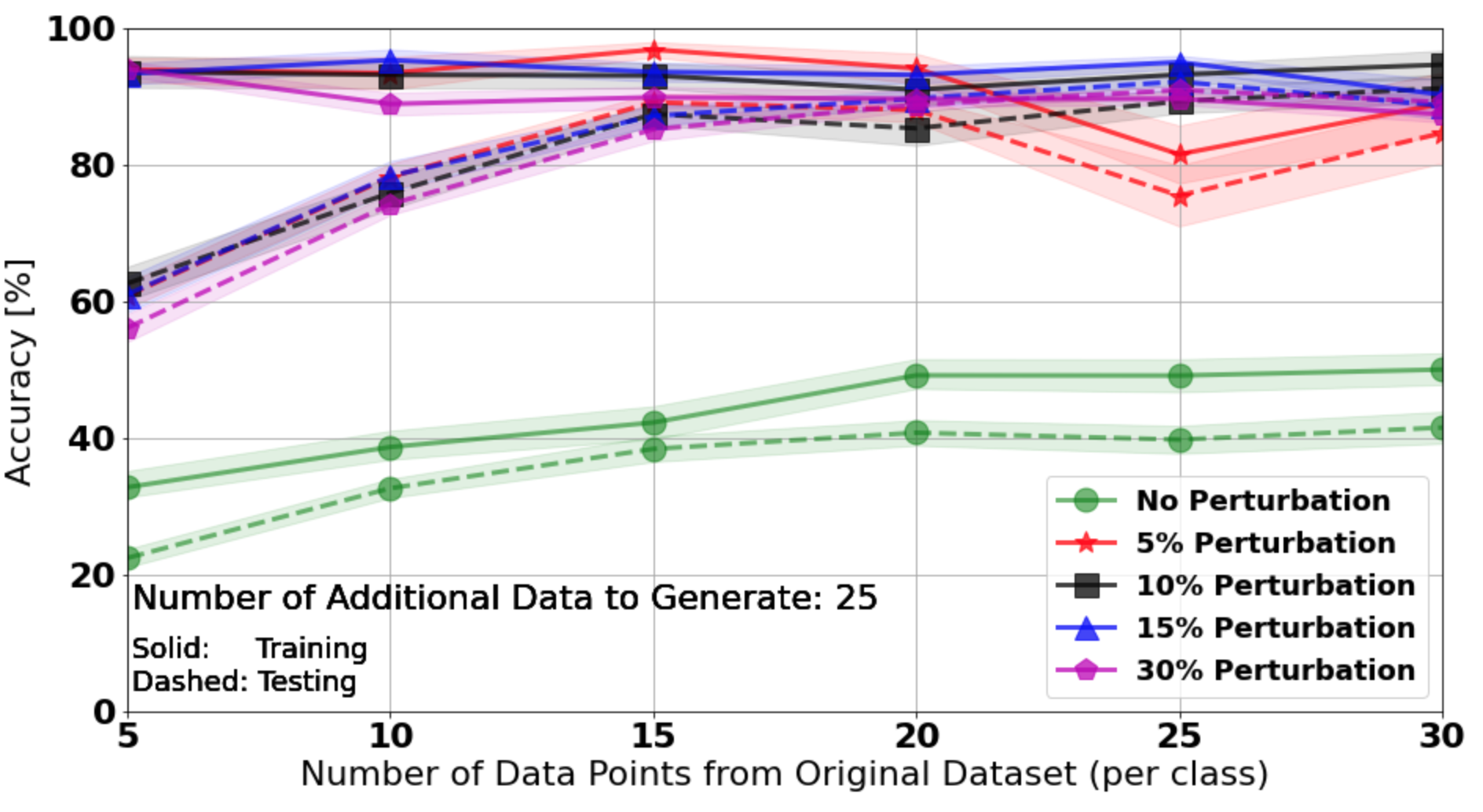}}
   \hspace{0.01\textwidth} 
  \subfigure[Augmenting with $\text{gen}=30$ times the number of observed points.\label{fig:NN_results_f}]{\includegraphics[width=0.45\textwidth]
  {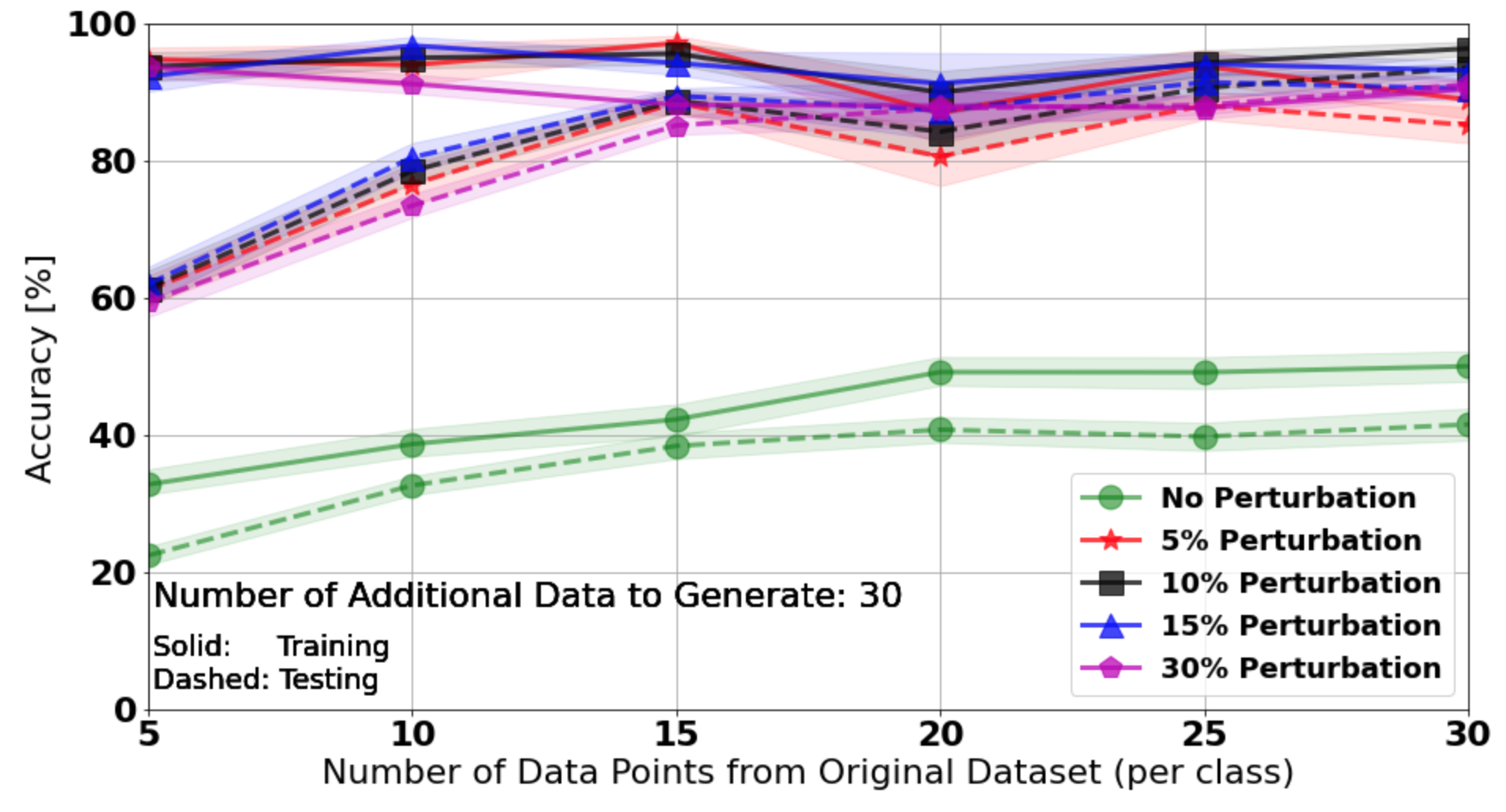}}
  \caption{Plots of the empirical results as the number of augmented datasets ($\text{gen}$) increases. The $x$-axis of each plot represents, $n$, the number of observed data taken from the training data set. Different levels of perturbation are shown on each plot, as well as training and testing accuracy results.}
  \label{fig:NN_results}
\end{figure}

Firstly, it should be noted that in Figure \ref{fig:NN_results} each subplot's data augmentation level ($\text{gen}$) does not apply to the green curves representing the original dataset. For instance, at $n=10$ on the $x$-axis in Figure \ref{fig:NN_results_a}, the green curve will only see $n=10$ observations per class (resulting in $10 \times 9 = 90$ data points in total) for LSTM model training. This will be the same for every subplot in Figure \ref{fig:NN_results}, as this curve represents the non-augmented, vanilla, reference model control, and is plotted across all plots for the sake of comparison. Conversely, each \textit{perturbed} curve in Figure \ref{fig:NN_results_a} will involve the generation of $n×\text{gen}=10×5=50$ augmented data points \textit{per class} for each perturbation level in Figure \ref{fig:NN_results_a}, resulting in $10 \times 5 \times 9 = 450$ data points in total for LSTM model training. Note that we do not have $450 + 15 = 465$ data points because the original data set was discarded after augmentation (for simplicity).

Looking at Figure \ref{fig:NN_results} holistically, it's evident that by applying the \textit{StiefelGen} augmentation procedure, one consistently outperforms the training and testing scores of the original dataset across all perturbation levels, for every augmentation generation level. Also, increasing the number of augmented datasets ($\text{gen}$) for a fixed $n$ tends to lead to a gradual rise in accuracy scores for both training and testing sets. However, this effect seems to plateau around $gen=15$, as the augmentation cases ($\text{gen}=\{15,20,25,30\}$) tend to display similar score magnitudes and overall shapes (from Figures \ref{fig:NN_results_c} to \ref{fig:NN_results_f})), in that they all seemingly share an elbow-like effect at $n=15$. Beyond $n=15$, testing scores appear to approximately level out with a steady, albeit slightly noisy increase in the test score as $n$ grows. This behavior is shared by both the original non-augmented dataset and the augmented datasets, suggesting that from $n=15$ onwards, the provided dataset perhaps offers limited additional variety for modeling out-of-sample scenarios which may be present in the testing set. Despite this, the \textit{StiefelGen} methodology, with its more rapid gain in percentage testing accuracy per unit $n$, from $n=5$ up to the elbow point of $n=15$, suggests it is performing an effective generation of out-of-sample data even in the event of working with a severe sub sample of the original data set, which suggets that \textit{StiefelGen} is augmenting the data set with effective (and or realistic) signal data.

\section{Conclusion}

This paper has introduced \textit{StiefelGen}, a novel model-agnostic approach for time series data augmentation that leverages the geometry of Stiefel manifolds. Key advantages of \textit{StiefelGen} include its simplicity, flexibility, lack of need for model training, and its interpretability (due to the well-understood nature of the SVD factorization in general).

The algorithm requires minimal hyper parameters, with the perturbation scale, $\beta$, acting as the sole hyper parameter in most cases. By smoothly perturbing signals along geodesic paths on the Stiefel manifold, \textit{StiefelGen} can generate both augmented data instances for enhancing model training as well as outlier examples for robustness analysis. The effect of the page matrix dimensions provides further control over the type of synthetic signals produced. We have demonstrated the capabilities of \textit{StiefelGen} across a diverse set of experiments important to the domain of empirical time series analysis, including structural health monitoring, uncertainty quantification, dynamic mode decomposition for spatio-temporal forecasting, and improving neural network training via data augmentation. Across all cases, \textit{StiefelGen} enabled a nuanced and often fine-tuned investigation to be carried out.

Overall, \textit{StiefelGen} pioneers an innovative, matrix-geometric perspective for time series augmentation. Its model-agnostic nature ensures wide applicability across a range of problem domains. The primary direction of ongoing work for extending \textit{StiefelGen} will be to those problems in machine learning which currently rely upon the SVD factorization in a strong, and algorithmic sense. 

\section*{Acknowledgments}
This work was supported by JSPS, KAKENHI Grant Number JP21H03503, Japan and JST, CREST
Grant Number JPMJCR22D3, Japan.

\bibliographystyle{unsrt}  
\bibliography{biblo}

\begin{thebibliography}{10}

\bibitem{shorten2019survey}
Connor Shorten and Taghi~M Khoshgoftaar.
\newblock A survey on image data augmentation for deep learning.
\newblock {\em Journal of big data}, 6(1):1--48, 2019.

\bibitem{zhang2017mixup}
Hongyi Zhang, Moustapha Cisse, Yann~N Dauphin, and David Lopez-Paz.
\newblock mixup: Beyond empirical risk minimization.
\newblock {\em arXiv preprint arXiv:1710.09412}, 2017.

\bibitem{verma2018manifold}
Vikas Verma, Alex Lamb, Christopher Beckham, Amir Najafi, Aaron Courville,
  Ioannis Mitliagkas, and Yoshua Bengio.
\newblock Manifold mixup: learning better representations by interpolating
  hidden states.
\newblock {\em stat}, 1050:4, 2018.

\bibitem{iglesias2023data}
Guillermo Iglesias, Edgar Talavera, {\'A}ngel Gonz{\'a}lez-Prieto, Alberto
  Mozo, and Sandra G{\'o}mez-Canaval.
\newblock Data augmentation techniques in time series domain: a survey and
  taxonomy.
\newblock {\em Neural Computing and Applications}, 35(14):10123--10145, 2023.

\bibitem{iwana2021empirical}
Brian~Kenji Iwana and Seiichi Uchida.
\newblock An empirical survey of data augmentation for time series
  classification with neural networks.
\newblock {\em Plos one}, 16(7):e0254841, 2021.

\bibitem{balakrishnan1962problem}
A~v Balakrishnan.
\newblock On the problem of time jitter in sampling.
\newblock {\em IRE transactions on information theory}, 8(3):226--236, 1962.

\bibitem{zhang2007neural}
G~Peter Zhang.
\newblock A neural network ensemble method with jittered training data for time
  series forecasting.
\newblock {\em Information Sciences}, 177(23):5329--5346, 2007.

\bibitem{iwana2021time}
Brian~Kenji Iwana and Seiichi Uchida.
\newblock Time series data augmentation for neural networks by time warping
  with a discriminative teacher.
\newblock In {\em 2020 25th International Conference on Pattern Recognition
  (ICPR)}, pages 3558--3565. IEEE, 2021.

\bibitem{fujiwara2021data}
Kento Fujiwara, Ryoichi Takashima, Chihiro Sugiyama, Nobukazu Tanaka, Kanji
  Nohara, Kazunori Nozaki, and Tetsuya Takiguchi.
\newblock Data augmentation based on frequency warping for recognition of cleft
  palate speech.
\newblock In {\em 2021 Asia-Pacific Signal and Information Processing
  Association Annual Summit and Conference (APSIPA ASC)}, pages 471--476. IEEE,
  2021.

\bibitem{small2005applied}
Michael Small.
\newblock {\em Applied nonlinear time series analysis: applications in physics,
  physiology and finance}, volume~52.
\newblock World Scientific, 2005.

\bibitem{abucide2021data}
Alvaro Abucide-Armas, Koldo Portal-Porras, Unai Fernandez-Gamiz, Ekaitz
  Zulueta, and Adrian Teso-Fz-Beto{\~n}o.
\newblock A data augmentation-based technique for deep learning applied to cfd
  simulations.
\newblock {\em Mathematics}, 9(16):1843, 2021.

\bibitem{kang2020gratis}
Yanfei Kang, Rob~J Hyndman, and Feng Li.
\newblock Gratis: Generating time series with diverse and controllable
  characteristics.
\newblock {\em Statistical Analysis and Data Mining: The ASA Data Science
  Journal}, 13(4):354--376, 2020.

\bibitem{fruhwirth1994data}
Sylvia Fr{\"u}hwirth-Schnatter.
\newblock Data augmentation and dynamic linear models.
\newblock {\em Journal of time series analysis}, 15(2):183--202, 1994.

\bibitem{meng1999seeking}
Xiao-Li Meng and David~A Van~Dyk.
\newblock Seeking efficient data augmentation schemes via conditional and
  marginal augmentation.
\newblock {\em Biometrika}, 86(2):301--320, 1999.

\bibitem{tu2018spatial}
Juanhui Tu, Hong Liu, Fanyang Meng, Mengyuan Liu, and Runwei Ding.
\newblock Spatial-temporal data augmentation based on lstm autoencoder network
  for skeleton-based human action recognition.
\newblock In {\em 2018 25th IEEE International Conference on Image Processing
  (ICIP)}, pages 3478--3482. IEEE, 2018.

\bibitem{devries2017dataset}
Terrance DeVries and Graham~W Taylor.
\newblock Dataset augmentation in feature space.
\newblock {\em arXiv preprint arXiv:1702.05538}, 2017.

\bibitem{goodfellow2014generative}
Ian Goodfellow, Jean Pouget-Abadie, Mehdi Mirza, Bing Xu, David Warde-Farley,
  Sherjil Ozair, Aaron Courville, and Yoshua Bengio.
\newblock Generative adversarial nets.
\newblock {\em Advances in neural information processing systems}, 27, 2014.

\bibitem{lou2018one}
Huan Lou, Zongfeng Qi, and Jianxun Li.
\newblock One-dimensional data augmentation using a wasserstein generative
  adversarial network with supervised signal.
\newblock In {\em 2018 Chinese Control And Decision Conference (CCDC)}, pages
  1896--1901. IEEE, 2018.

\bibitem{haradal2018biosignal}
Shota Haradal, Hideaki Hayashi, and Seiichi Uchida.
\newblock Biosignal data augmentation based on generative adversarial networks.
\newblock In {\em 2018 40th annual international conference of the IEEE
  engineering in medicine and biology society (EMBC)}, pages 368--371. IEEE,
  2018.

\bibitem{hasibi2019augmentation}
Ramin Hasibi, Matin Shokri, and Mehdi Dehghan.
\newblock Augmentation scheme for dealing with imbalanced network traffic
  classification using deep learning.
\newblock {\em arXiv preprint arXiv:1901.00204}, 2019.

\bibitem{ramponi2018t}
Giorgia Ramponi, Pavlos Protopapas, Marco Brambilla, and Ryan Janssen.
\newblock T-cgan: Conditional generative adversarial network for data
  augmentation in noisy time series with irregular sampling.
\newblock {\em arXiv preprint arXiv:1811.08295}, 2018.

\bibitem{che2017boosting}
Zhengping Che, Yu~Cheng, Shuangfei Zhai, Zhaonan Sun, and Yan Liu.
\newblock Boosting deep learning risk prediction with generative adversarial
  networks for electronic health records.
\newblock In {\em 2017 IEEE International Conference on Data Mining (ICDM)},
  pages 787--792. IEEE, 2017.

\bibitem{sutskever2011generating}
Ilya Sutskever, James Martens, and Geoffrey~E Hinton.
\newblock Generating text with recurrent neural networks.
\newblock In {\em Proceedings of the 28th international conference on machine
  learning (ICML-11)}, pages 1017--1024, 2011.

\bibitem{osburg2011pricing}
Jan Osburg, Chad~JR Ohlandt, and Thomas Light.
\newblock Pricing strategies for nasa wind-tunnel facilities.
\newblock 2011.

\bibitem{zhang2022self}
Xiang Zhang, Ziyuan Zhao, Theodoros Tsiligkaridis, and Marinka Zitnik.
\newblock Self-supervised contrastive pre-training for time series via
  time-frequency consistency.
\newblock {\em Advances in Neural Information Processing Systems},
  35:3988--4003, 2022.

\bibitem{rilling2003empirical}
Gabriel Rilling, Patrick Flandrin, Paulo Goncalves, et~al.
\newblock On empirical mode decomposition and its algorithms.
\newblock In {\em IEEE-EURASIP workshop on nonlinear signal and image
  processing}, volume~3, pages 8--11. Grado: IEEE, 2003.

\bibitem{zhang2023towards}
Xiyuan Zhang, Ranak~Roy Chowdhury, Jingbo Shang, Rajesh Gupta, and Dezhi Hong.
\newblock Towards diverse and coherent augmentation for time-series
  forecasting.
\newblock In {\em ICASSP 2023-2023 IEEE International Conference on Acoustics,
  Speech and Signal Processing (ICASSP)}, pages 1--5. IEEE, 2023.

\bibitem{beckham2019adversarial}
Christopher Beckham, Sina Honari, Vikas Verma, Alex~M Lamb, Farnoosh Ghadiri,
  R~Devon Hjelm, Yoshua Bengio, and Chris Pal.
\newblock On adversarial mixup resynthesis.
\newblock {\em Advances in neural information processing systems}, 32, 2019.

\bibitem{yun2019cutmix}
Sangdoo Yun, Dongyoon Han, Seong~Joon Oh, Sanghyuk Chun, Junsuk Choe, and
  Youngjoon Yoo.
\newblock Cutmix: Regularization strategy to train strong classifiers with
  localizable features.
\newblock In {\em Proceedings of the IEEE/CVF international conference on
  computer vision}, pages 6023--6032, 2019.

\bibitem{verma2021towards}
Vikas Verma, Thang Luong, Kenji Kawaguchi, Hieu Pham, and Quoc Le.
\newblock Towards domain-agnostic contrastive learning.
\newblock In {\em International Conference on Machine Learning}, pages
  10530--10541. PMLR, 2021.

\bibitem{xu2021fourier}
Qinwei Xu, Ruipeng Zhang, Ya~Zhang, Yanfeng Wang, and Qi~Tian.
\newblock A fourier-based framework for domain generalization.
\newblock In {\em Proceedings of the IEEE/CVF Conference on Computer Vision and
  Pattern Recognition}, pages 14383--14392, 2021.

\bibitem{kim2021specmix}
Gwantae Kim, David~K Han, and Hanseok Ko.
\newblock Specmix: A mixed sample data augmentation method for training
  withtime-frequency domain features.
\newblock {\em arXiv preprint arXiv:2108.03020}, 2021.

\bibitem{demirel2023finding}
Berken~Utku Demirel and Christian Holz.
\newblock Finding order in chaos: A novel data augmentation method for time
  series in contrastive learning.
\newblock {\em arXiv preprint arXiv:2309.13439}, 2023.

\bibitem{west1999evaluation}
Mike West, Raquel Prado, and Andrew~D Krystal.
\newblock Evaluation and comparison of eeg traces: Latent structure in
  nonstationary time series.
\newblock {\em Journal of the American Statistical association}, pages
  375--387, 1999.

\bibitem{absil2008optimization}
P-A Absil, Robert Mahony, and Rodolphe Sepulchre.
\newblock {\em Optimization algorithms on matrix manifolds}.
\newblock Princeton University Press, 2008.

\bibitem{bendokat2024grassmann}
Thomas Bendokat, Ralf Zimmermann, and P-A Absil.
\newblock A grassmann manifold handbook: Basic geometry and computational
  aspects.
\newblock {\em Advances in Computational Mathematics}, 50(1):1--51, 2024.

\bibitem{rentmeesters2013algorithms}
Quentin Rentmeesters et~al.
\newblock {\em Algorithms for data fitting on some common homogeneous spaces}.
\newblock PhD thesis, Ph. D. thesis, Universit{\'e} Catholique de Louvain,
  Louvain, Belgium, 2013.

\bibitem{damen1982approximate}
AAH Damen, PMJ Van~den Hof, and AK~Hajdasinski.
\newblock Approximate realization based upon an alternative to the hankel
  matrix: the page matrix.
\newblock {\em Systems \& Control Letters}, 2(4):202--208, 1982.

\bibitem{schoellhamer2001singular}
David~H Schoellhamer.
\newblock Singular spectrum analysis for time series with missing data.
\newblock {\em Geophysical research letters}, 28(16):3187--3190, 2001.

\bibitem{lian2021koopman}
Yingzhao Lian, Renzi Wang, and Colin~N Jones.
\newblock Koopman based data-driven predictive control.
\newblock {\em arXiv preprint arXiv:2102.05122}, 2021.

\bibitem{agarwal2018model}
Anish Agarwal, Muhammad~Jehangir Amjad, Devavrat Shah, and Dennis Shen.
\newblock Model agnostic time series analysis via matrix estimation.
\newblock {\em Proceedings of the ACM on Measurement and Analysis of Computing
  Systems}, 2(3):1--39, 2018.

\bibitem{edelman1998geometry}
Alan Edelman, Tom{\'a}s~A Arias, and Steven~T Smith.
\newblock The geometry of algorithms with orthogonality constraints.
\newblock {\em SIAM journal on Matrix Analysis and Applications},
  20(2):303--353, 1998.

\bibitem{spivak1978comprehensive}
Michael Spivak.
\newblock {\em A comprehensive introduction to differential geometry}.
\newblock Publish or Perish, Incorporated, 1999.

\bibitem{do2016differential}
Manfredo~P Do~Carmo.
\newblock {\em Differential geometry of curves and surfaces: revised and
  updated second edition}.
\newblock Courier Dover Publications, 2016.

\bibitem{zimmermann2022computing}
Ralf Zimmermann and Knut H\"{u}per.
\newblock Computing the riemannian logarithm on the stiefel manifold: Metrics,
  methods, and performance.
\newblock {\em SIAM Journal on Matrix Analysis and Applications},
  43(2):953--980, 2022.

\bibitem{geomstats}
Nina Miolane, Nicolas Guigui, Alice~Le Brigant, Johan Mathe, Benjamin Hou, Yann
  Thanwerdas, Stefan Heyder, Olivier Peltre, Niklas Koep, Hadi Zaatiti, Hatem
  Hajri, Yann Cabanes, Thomas Gerald, Paul Chauchat, Christian Shewmake, Daniel
  Brooks, Bernhard Kainz, Claire Donnat, Susan Holmes, and Xavier Pennec.
\newblock Geomstats: A python package for riemannian geometry in machine
  learning.
\newblock {\em Journal of Machine Learning Research}, 21(223):1--9, 2020.

\bibitem{townsend2016pymanopt}
James Townsend, Niklas Koep, and Sebastian Weichwald.
\newblock Pymanopt: A python toolbox for optimization on manifolds using
  automatic differentiation.
\newblock {\em arXiv preprint arXiv:1603.03236}, 2016.

\bibitem{zimmermann2017matrix}
Ralf Zimmermann.
\newblock A matrix-algebraic algorithm for the riemannian logarithm on the
  stiefel manifold under the canonical metric.
\newblock {\em SIAM Journal on Matrix Analysis and Applications},
  38(2):322--342, 2017.

\bibitem{pellegrinetti1996nonlinear}
Gordon Pellegrinetti and Joseph Bentsman.
\newblock Nonlinear control oriented boiler modeling-a benchmark problem for
  controller design.
\newblock {\em IEEE transactions on control systems technology}, 4(1):57--64,
  1996.

\bibitem{Stumpy_zenodo}
Sean~M Law.
\newblock Stumpy basics: Analyzing motifs and anomalies with stump, 2023.
\newblock [Online; accessed 24-01-2024].

\bibitem{law2019stumpy}
Sean~M Law.
\newblock Stumpy: A powerful and scalable python library for time series data
  mining.
\newblock {\em Journal of Open Source Software}, 4(39):1504, 2019.

\bibitem{cheema2023use}
P~Cheema, M~Makki Alamdari, G~Vio, L~Azizi, and S~Luo.
\newblock On the use of matrix profiles and optimal transport theory for
  multivariate time series anomaly detection within structural health
  monitoring.
\newblock {\em Mechanical Systems and Signal Processing}, 204:110797, 2023.

\bibitem{cheema2016structural}
Prasad Cheema, Nguyen Lu~Dang Khoa, Mehrisadat Makki~Alamdari, Wei Liu, Yang
  Wang, Fang Chen, and Peter Runcie.
\newblock On structural health monitoring using tensor analysis and support
  vector machine with artificial negative data.
\newblock In {\em Proceedings of the 25th ACM international on conference on
  information and knowledge management}, pages 1813--1822, 2016.

\bibitem{anaissi2018tensor}
Ali Anaissi, Mehrisadat Makki~Alamdari, Thierry Rakotoarivelo, and Nguyen
  Lu~Dang Khoa.
\newblock A tensor-based structural damage identification and severity
  assessment.
\newblock {\em Sensors}, 18(1):111, 2018.

\bibitem{schmid2010dynamic}
Peter~J Schmid.
\newblock Dynamic mode decomposition of numerical and experimental data.
\newblock {\em Journal of fluid mechanics}, 656:5--28, 2010.

\bibitem{kutz2016dynamic}
J~Nathan Kutz, Steven~L Brunton, Bingni~W Brunton, and Joshua~L Proctor.
\newblock {\em Dynamic mode decomposition: data-driven modeling of complex
  systems}.
\newblock SIAM, 2016.

\bibitem{misc_japanese_vowels_128}
Kudo Mineichi, Toyama Jun, and Shimbo Masaru.
\newblock Japanese vowels.
\newblock UCI Machine Learning Repository.
\newblock DOI: \url{https://doi.org/10.24432/C5NS47}.

\bibitem{kudo1999multidimensional}
Mineichi Kudo, Jun Toyama, and Masaru Shimbo.
\newblock Multidimensional curve classification using passing-through regions.
\newblock {\em Pattern Recognition Letters}, 20(11-13):1103--1111, 1999.

\bibitem{cheema2022bridge}
P~Cheema, MM~Alamdari, KC~Chang, CW~Kim, and M~Sugiyama.
\newblock Bridge indirect monitoring using uniform manifold approximation and
  projection (umap).
\newblock In {\em Bridge Safety, Maintenance, Management, Life-Cycle,
  Resilience and Sustainability}, pages 997--1002. CRC Press, 2022.

\bibitem{cheema2022drive}
P~Cheema, M~Makki Alamdari, KC~Chang, CW~Kim, and M~Sugiyama.
\newblock A drive-by bridge inspection framework using non-parametric clusters
  over projected data manifolds.
\newblock {\em Mechanical Systems and Signal Processing}, 180:109401, 2022.

\bibitem{pedregosa2011scikit}
Fabian Pedregosa, Ga{\"e}l Varoquaux, Alexandre Gramfort, Vincent Michel,
  Bertrand Thirion, Olivier Grisel, Mathieu Blondel, Peter Prettenhofer, Ron
  Weiss, Vincent Dubourg, et~al.
\newblock Scikit-learn: Machine learning in python.
\newblock {\em the Journal of machine Learning research}, 12:2825--2830, 2011.

\bibitem{chatfield2001prediction}
Chris Chatfield.
\newblock Prediction intervals for time-series forecasting.
\newblock {\em Principles of forecasting: A handbook for researchers and
  practitioners}, pages 475--494, 2001.

\bibitem{xiao2012time}
Zhijie Xiao.
\newblock Time series quantile regressions.
\newblock In {\em Handbook of statistics}, volume~30, pages 213--257. Elsevier,
  2012.

\bibitem{kumar2008bayesian}
D~Nagesh Kumar and Rajib Maity.
\newblock Bayesian dynamic modelling for nonstationary hydroclimatic time
  series forecasting along with uncertainty quantification.
\newblock {\em Hydrological Processes: An International Journal},
  22(17):3488--3499, 2008.

\bibitem{maybank2020mcmc}
Philip Maybank, Patrick Peltzer, Uwe Naumann, and Ingo Bojak.
\newblock Mcmc for bayesian uncertainty quantification from time-series data.
\newblock In {\em Computational Science--ICCS 2020: 20th International
  Conference, Amsterdam, The Netherlands, June 3--5, 2020, Proceedings, Part
  VII 20}, pages 707--718. Springer, 2020.

\bibitem{stankeviciute2021conformal}
Kamile Stankeviciute, Ahmed M~Alaa, and Mihaela van~der Schaar.
\newblock Conformal time-series forecasting.
\newblock {\em Advances in neural information processing systems},
  34:6216--6228, 2021.

\bibitem{wang2016functional}
Jane-Ling Wang, Jeng-Min Chiou, and Hans-Georg M{\"u}ller.
\newblock Functional data analysis.
\newblock {\em Annual Review of Statistics and its application}, 3:257--295,
  2016.

\bibitem{kokoszka2017introduction}
Piotr Kokoszka and Matthew Reimherr.
\newblock {\em Introduction to functional data analysis}.
\newblock CRC press, 2017.

\bibitem{sun2011functional}
Ying Sun and Marc~G Genton.
\newblock Functional boxplots.
\newblock {\em Journal of Computational and Graphical Statistics},
  20(2):316--334, 2011.

\bibitem{ramos2022scikit}
Carlos Ramos-Carre{\~n}o, Jos{\'e}~Luis Torrecilla, Miguel Carbajo-Berrocal,
  Pablo Marcos, and Alberto Su{\'a}rez.
\newblock scikit-fda: a python package for functional data analysis.
\newblock {\em arXiv preprint arXiv:2211.02566}, 2022.

\bibitem{lopez2009concept}
Sara L{\'o}pez-Pintado and Juan Romo.
\newblock On the concept of depth for functional data.
\newblock {\em Journal of the American statistical Association},
  104(486):718--734, 2009.

\bibitem{lu2020prediction}
Hannah Lu and Daniel~M Tartakovsky.
\newblock Prediction accuracy of dynamic mode decomposition.
\newblock {\em SIAM Journal on Scientific Computing}, 42(3):A1639--A1662, 2020.

\bibitem{yuan2021flow}
Yuan Yuan, Kaiwen Zhou, Wenwu Zhou, Xin Wen, and Yingzheng Liu.
\newblock Flow prediction using dynamic mode decomposition with time-delay
  embedding based on local measurement.
\newblock {\em Physics of Fluids}, 33(9), 2021.

\bibitem{grosek2014dynamic}
Jacob Grosek and J~Nathan Kutz.
\newblock Dynamic mode decomposition for real-time background/foreground
  separation in video.
\newblock {\em arXiv preprint arXiv:1404.7592}, 2014.

\bibitem{chatterjee2000introduction}
Anindya Chatterjee.
\newblock An introduction to the proper orthogonal decomposition.
\newblock {\em Current science}, pages 808--817, 2000.

\bibitem{kutz2016multiresolution}
J~Nathan Kutz, Xing Fu, and Steven~L Brunton.
\newblock Multiresolution dynamic mode decomposition.
\newblock {\em SIAM Journal on Applied Dynamical Systems}, 15(2):713--735,
  2016.

\bibitem{atal1971speech}
Bishnu~S Atal and Suzanne~L Hanauer.
\newblock Speech analysis and synthesis by linear prediction of the speech
  wave.
\newblock {\em The journal of the acoustical society of America},
  50(2B):637--655, 1971.

\bibitem{matlab_LSTM}
Mathworks.
\newblock Sequence classification using deep learning, 2023.
\newblock [Online; accessed 22-12-2023].

\bibitem{graves2005framewise}
Alex Graves and J{\"u}rgen Schmidhuber.
\newblock Framewise phoneme classification with bidirectional lstm and other
  neural network architectures.
\newblock {\em Neural networks}, 18(5-6):602--610, 2005.

\bibitem{kingma2014adam}
Diederik~P Kingma and Jimmy Ba.
\newblock Adam: A method for stochastic optimization.
\newblock {\em arXiv preprint arXiv:1412.6980}, 2014.

\bibitem{paszke2019pytorch}
Adam Paszke, Sam Gross, Francisco Massa, Adam Lerer, James Bradbury, Gregory
  Chanan, Trevor Killeen, Zeming Lin, Natalia Gimelshein, Luca Antiga, et~al.
\newblock Pytorch: An imperative style, high-performance deep learning library.
\newblock {\em Advances in neural information processing systems}, 32, 2019.

\bibitem{alcohol}
Federal Reserve Bank of St.~Louis FRED.
\newblock U.s. census bureau: Merchant wholesalers, except manufacturers' sales
  branches and offices: Nondurable goods: Beer, wine, and distilled alcoholic
  beverages sales (s4248sm144ncen), 2024.
\newblock [Online; accessed 10-01-2024].

\bibitem{hyndman2018forecasting}
Rob~J Hyndman and George Athanasopoulos.
\newblock {\em Forecasting: principles and practice}.
\newblock OTexts, 2018.

\bibitem{turaga2008statistical}
Pavan Turaga, Ashok Veeraraghavan, and Rama Chellappa.
\newblock Statistical analysis on stiefel and grassmann manifolds with
  applications in computer vision.
\newblock In {\em 2008 IEEE conference on computer vision and pattern
  recognition}, pages 1--8. IEEE, 2008.

\bibitem{zimmermann2019manifold}
Ralf Zimmermann.
\newblock Manifold interpolation and model reduction.
\newblock {\em arXiv preprint arXiv:1902.06502}, 2019.

\bibitem{TagareNotes}
Hemant~D. Tagare.
\newblock Notes on optimization on stiefel manifolds, 2011.
\newblock [Online; accessed 11-01-2024].

\bibitem{absil2004riemannian}
P-A Absil, Robert Mahony, and Rodolphe Sepulchre.
\newblock Riemannian geometry of grassmann manifolds with a view on algorithmic
  computation.
\newblock {\em Acta Applicandae Mathematica}, 80:199--220, 2004.

\bibitem{baralic2011understand}
Djordje Barali{\'c}.
\newblock How to understand grassmannians?
\newblock {\em The Teaching of Mathematics}, (27):147--157, 2011.

\bibitem{guigui2023introduction}
Nicolas Guigui, Nina Miolane, Xavier Pennec, et~al.
\newblock Introduction to riemannian geometry and geometric statistics: from
  basic theory to implementation with geomstats.
\newblock {\em Foundations and Trends{\textregistered} in Machine Learning},
  16(3):329--493, 2023.

\bibitem{huper2021lagrangian}
Knut H{\"u}per, Irina Markina, and F{\'a}tima~Silva Leite.
\newblock A lagrangian approach to extremal curves on stiefel manifolds.
\newblock AIMS, 2021.

\end{thebibliography}

\newpage
\appendix
\onecolumn
\section{Appendix}

\subsection{StiefelGen in Python} \label{app:alg}

The code provided in Algorithm \ref{alg_app:StiefelGen_python} includes all the essential \textit{Python} code necessary to execute \textit{StiefelGen}. It is intentionally designed to be minimal and straightforward, requiring no training. The variable \texttt{perc} corresponds to the $\beta$ variable, representing the ``percentage perturbation'' scaled to the range $[0,1]$. The variable $U$ denotes the reshaped page matrix, or in a multivariate context, a stack of time series data. Once applying \textit{StiefelGen} to the signal, it is occasionally recommended to apply a smoothing filter to the output signal, especially if working in the domain of moderate to large perturbations (for outlier signal generation). Note that the code relies upon \textit{geomstats} version 2.5.0.

\label{alg_app:StiefelGen_python}
\begin{mintedbox}{python} 

import numpy as np
from geomstats.geometry.stiefel import Stiefel, StiefelCanonicalMetric

def StiefelGen(U, perc):
    '''
    U - Numpy array (2D) matrix
    perc - Percentage perturbation.
            Values in [0,1] (0=no perturbation, 1=maximal)
    '''
    
    # Constants 
    INJ_RADIUS = 0.89*np.pi
    
    # Geomstats Instantiation
    dim1, dim2 = U.shape  
    st = Stiefel(dim1,dim2)  
    st_metric = StiefelCanonicalMetric(dim1, dim2)  
    
    ########### StiefelGen Algorithm ############
    # Steps 1 and Step 2 -- Random Matrix Projected Onto Tangent Space 
    tan_plane_vec = st.random_tangent_vec(U,1) # It randomly generates from a normal distribution then uses pymanopt projection
    
    # Step 3 -- Scale wrt Radius of Injectivity
    canonical_ip = st_metric.inner_product(tan_plane_vec,tan_plane_vec,U)
    scaled_tan = tan_plane_vec / np.sqrt(canonical_ip) * perc * INJ_RADIUS
    
    # Step 4 -- Exponential Retraction 
    St_mat = st_metric.exp(scaled_tan, U)
    #############################################
    
    return St_mat
\end{mintedbox}

\newpage
\subsection{\textit{StiefelGen}: Taxi Data Analysis} \label{app:Taxi}

Here, we delve into the application of \textit{StiefelGen} using the NYC Taxi dataset mentioned in Section \ref{sec:datasets}. In the main paper, while working with the SteamGen dataset, we considered a moderate perturbation factor at the level of $\beta=0.4$ and a large perturbation factor at $\beta=0.9$. These factors remain unchanged here. However, the smoothing window size and the length of the univariate signal to be reshaped into the page matrix will be adjusted for the Taxi dataset. Specifically, we will extract the signal from indices 1300 to 1700, allowing \textit{StiefelGen} to focus on a sharp outlier (refer to Figures \ref{fig:outlier_small} and \ref{fig:outlier_large}). Consequently, the reshaped page matrix will have different dimensions compared to SteamGen, set at 20 rows by 20 columns.

The outcomes of applying the moderate perturbation factor are depicted in Figure \ref{fig:outlier_small}. Figures \ref{fig:outlier_small_a} and \ref{fig:outlier_small_b} highlight the impact of \textit{StiefelGen} on the original signal. Generally, the generated signal appears as a slightly noisier version of the original (as holistically clarified in Figure \ref{fig:outlier_small_a}). Notably, \textit{StiefelGen} successfully perturbs the Taxi dataset signal while preserving the impact of the point outlier (Figure \ref{fig:outlier_small_b}). Achieving this property through a simple noise addition would be challenging since clearly the noise levels added elsewhere in the generated signal are relatively high. Hand-engineering noise addition to minimize or eliminate noise at the outlier location would definitely be required in that case. Furthermore, the minimal smoothing window length played a crucial role in this property for \textit{StiefelGen}; a larger smoothing window would tend to ``smooth out'' the impact of the outlier, retaining only lower-frequency structures. Due to the highly repetitive pattern of the signal, the histogram of the residuals (Figure \ref{fig:outlier_small_d}) appears much less unusual than that of the SteamGen dataset, exhibiting a definite ``bell-shape'' reminiscent of a Gaussian distribution, albeit slightly extended here and with moderately unusual tail behaviour.

\begin{figure}[H]
  \centering
  \subfigure[A global view of the signal with the generated signal appended to the end.\label{fig:outlier_small_a}]{\includegraphics[width=0.45\textwidth]{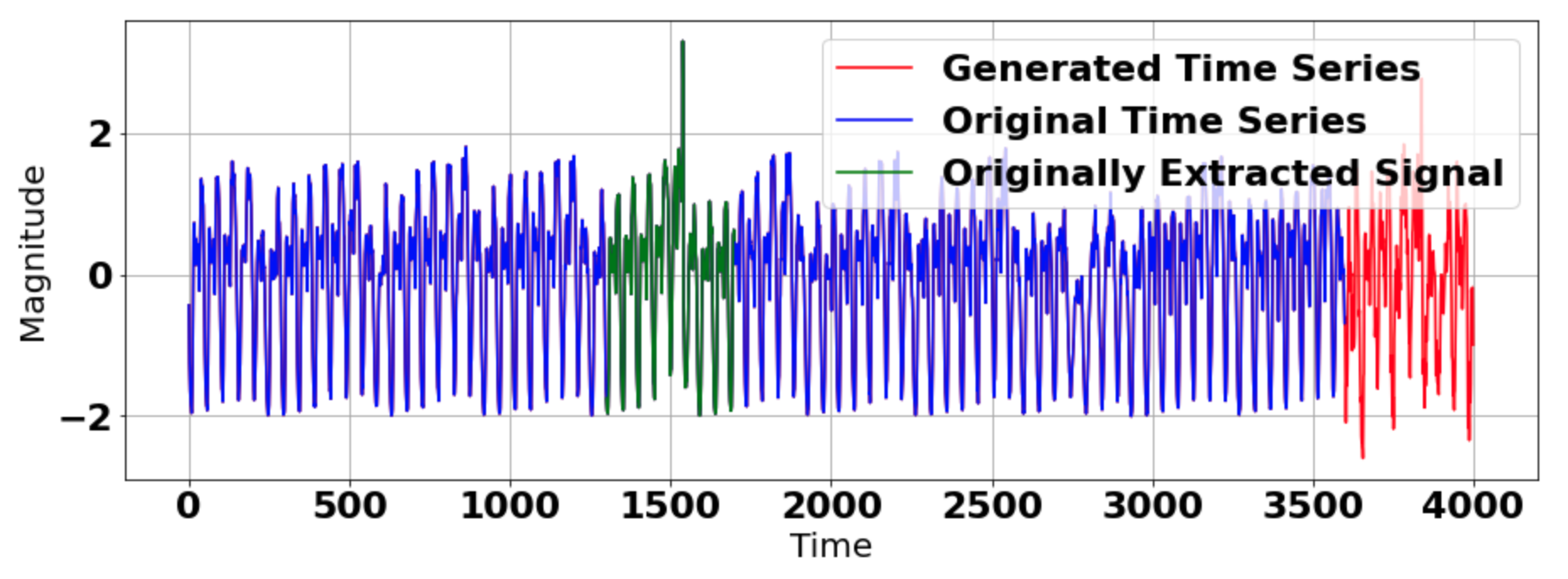}}
    \hspace{0.01\textwidth}  
  \subfigure[An over plot of the original signal with the newly generated signal.\label{fig:outlier_small_b}]{\includegraphics[width=0.45\textwidth]{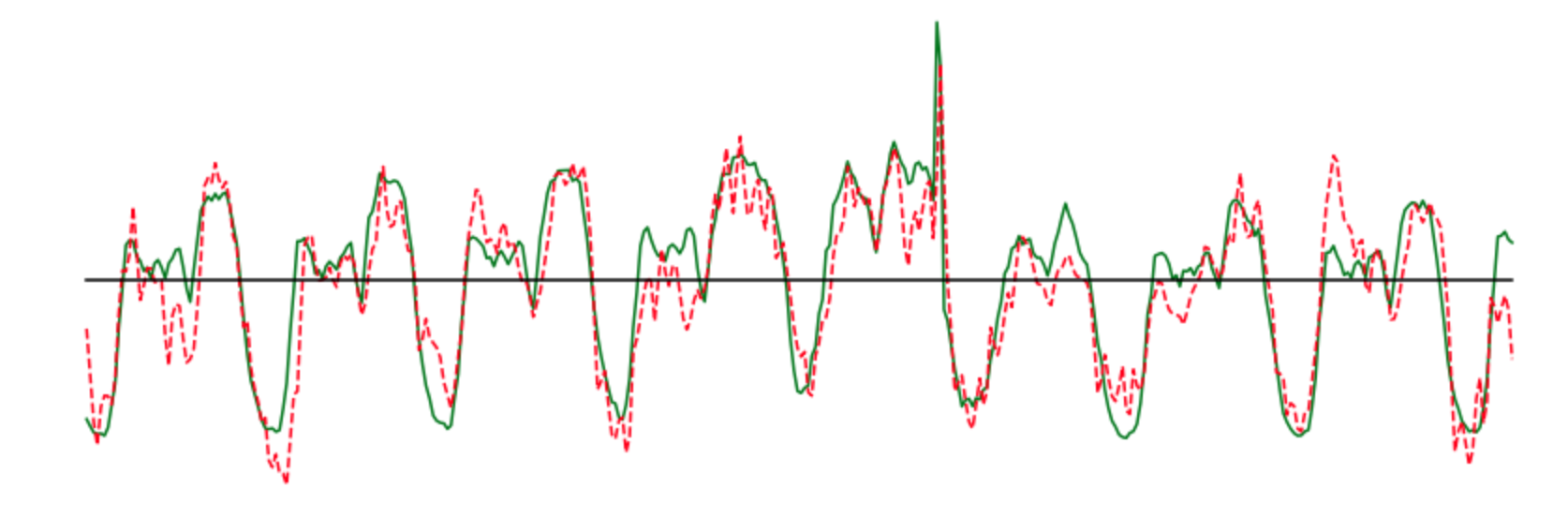}}

    \subfigure[A residual plot of the original and generated signals.\label{fig:outlier_small_c}]{\includegraphics[width=0.45\textwidth]{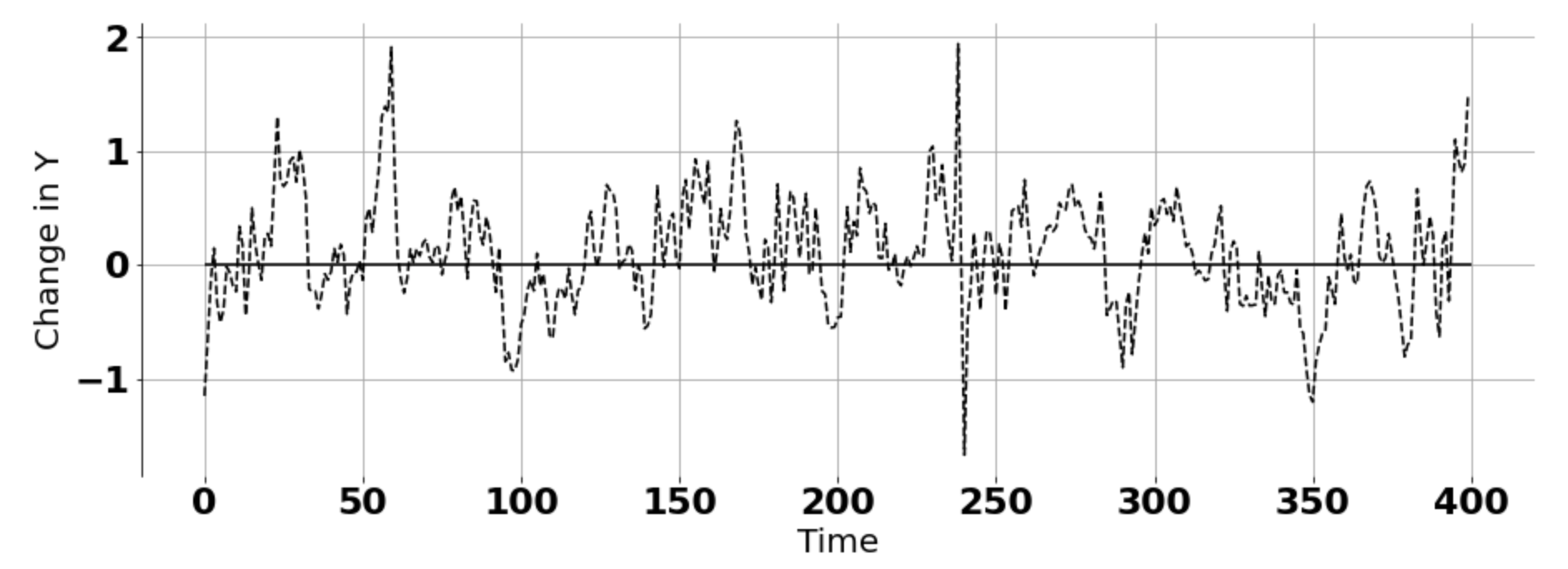}}
     \hspace{0.01\textwidth} 
      \subfigure[A histogram over the residuals.\label{fig:outlier_small_d}]{\includegraphics[width=0.45\textwidth]{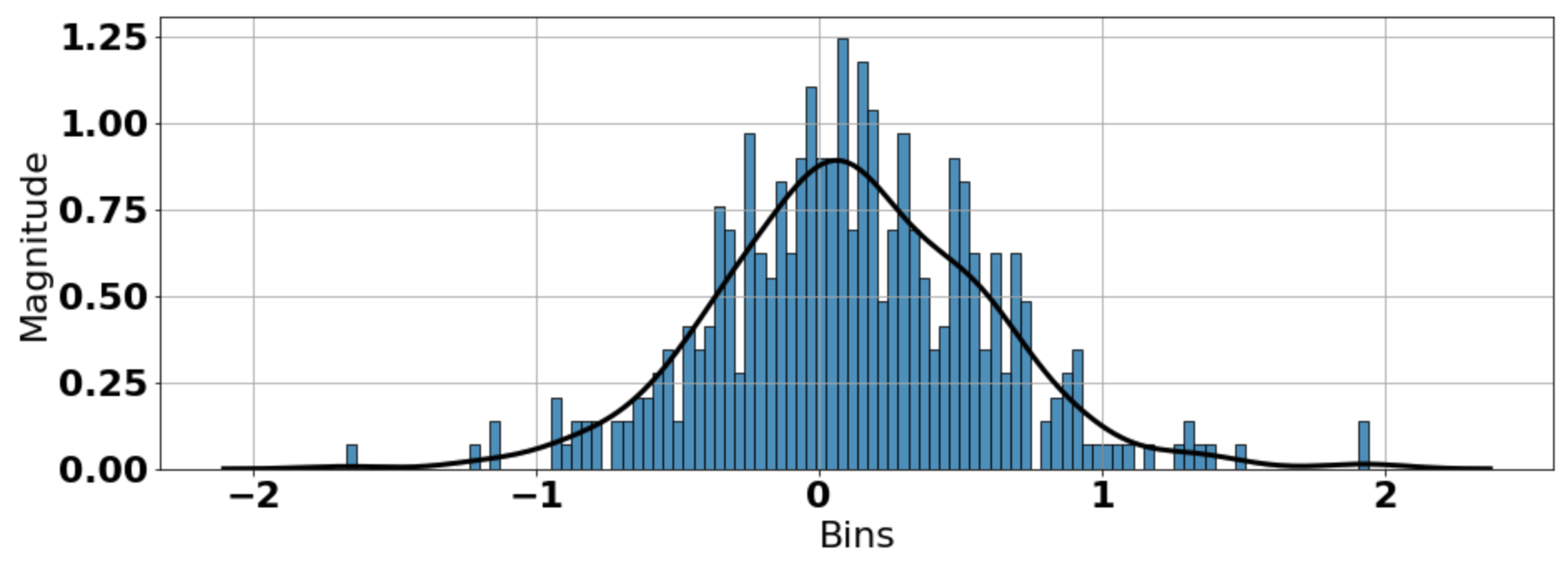}}
  \caption{An overview of the effect of applying a moderate perturbation factor to generate a new signal based on indices 1300 - 1700 of the Taxis data set.}
  \label{fig:outlier_small}
\end{figure}

Examining the effects of a large perturbation (thereby shifting the purpose of \textit{StiefelGen} from augmentation to outlier generation), Figures \ref{fig:outlier_large_a} and \ref{fig:outlier_large_b} illustrate that the generated signal significantly differs from the original reference signal. Excessive noise has clearly been generated, which also implicitly underscores the importance of applying a post-analysis smoothing filter when performing large perturbations nearing the radius of injectivity. However, for this level of perturbation and signal type, it might be challenging to preserve the structure of the original reference signal through smoothing alone. Similar to the plot for moderate perturbation, the histogram of the residuals (Figure \ref{fig:outlier_large_d}) appears roughly ``bell-shaped'', but with a notable difference — the distribution has been substantially widened and exhibits much stronger tail behavior. This widening and enhanced tail behavior are expected, given the elevated noise levels (increased variance) in the generated signal.

\begin{figure}[H]
  \centering
  \subfigure[A global view of the signal with the generated signal appended to the end.\label{fig:outlier_large_a}]{\includegraphics[width=0.45\textwidth]{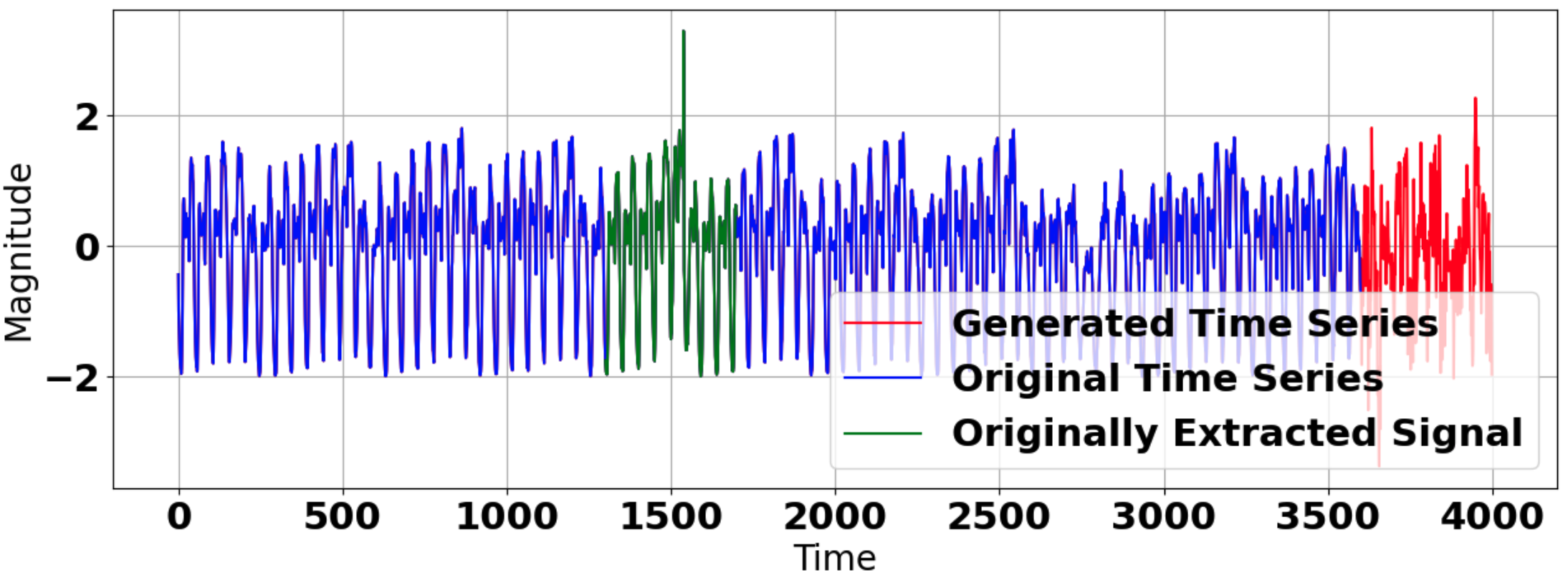}}
    \hspace{0.01\textwidth} 
  \subfigure[An over plot of the original signal with the newly generated signal.\label{fig:outlier_large_b}]{\includegraphics[width=0.45\textwidth]{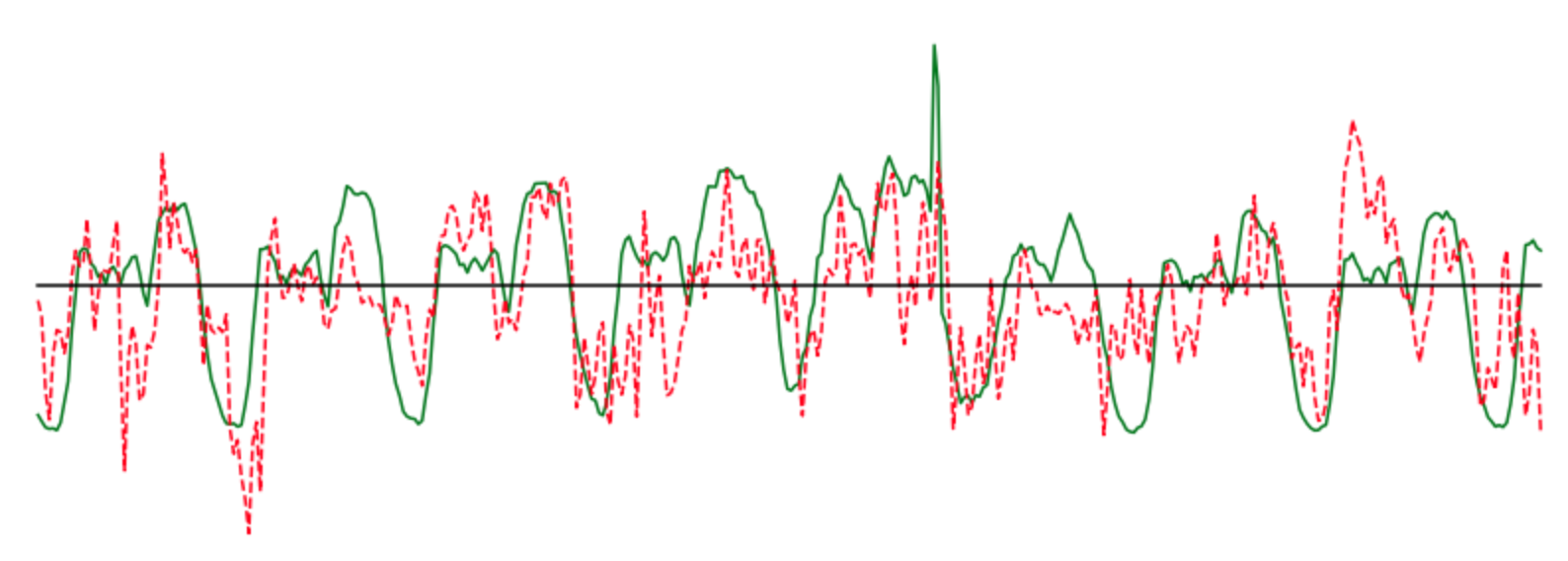}}

    \subfigure[A residual plot of the original and generated signals.\label{fig:outlier_large_c}]{\includegraphics[width=0.45\textwidth]{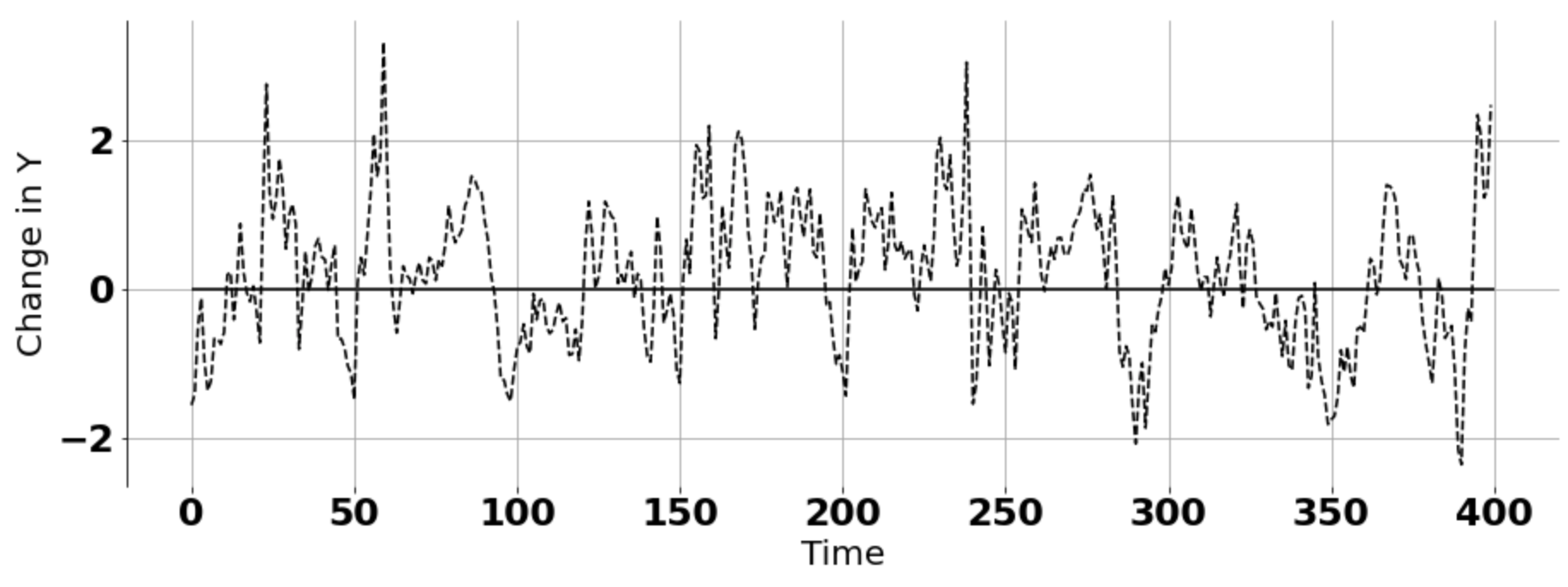}}
      \hspace{0.01\textwidth} 
      \subfigure[A histogram over the residuals.\label{fig:outlier_large_d}]{\includegraphics[width=0.45\textwidth]{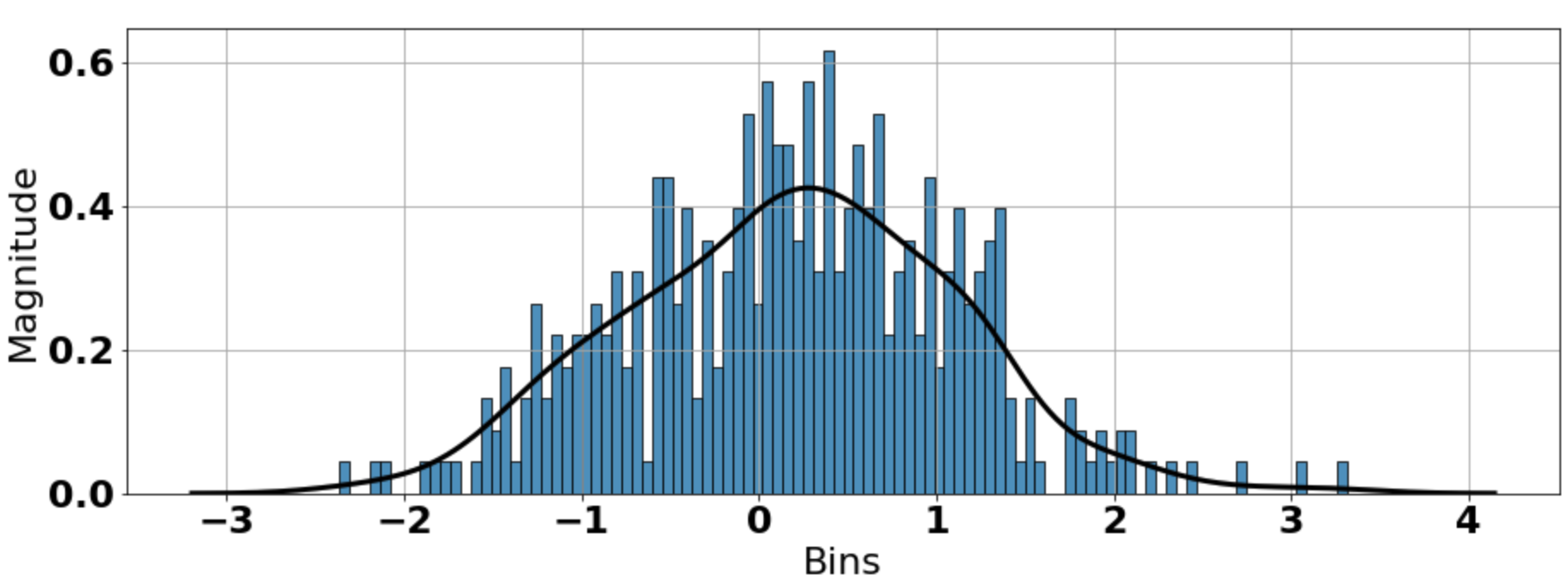}}
  \caption{An overview of the effect of applying a large perturbation factor to generate a new signal based on indices 1300 - 1700 of the Taxis data set.}
  \label{fig:outlier_large}
\end{figure}


\newpage
\subsection{\textit{StiefelGen}: Limitations and Quirks} \label{app_sec:quirks_lims}

The subsequent discussions endeavor to elucidate certain properties of \textit{StiefelGen} that we believe users of the algorithm should be aware of, and intuit, taking into account its limitations. While the main paper serves  to present a lot of the favorable and novel properties of \textit{StiefelGen}, by showcasing it through versatile applications, it is imperative for machine learning practitioners to acknowledge the absence of a universal solution —ultimately adhering to the principle that ``there is no such thing as a free lunch'' in machine learning.

\textbf{How Low Can You Go?}

Since the sole requirement of \textit{StiefelGen} is the ability to reshape a single uni-variate signal into a page matrix, the smallest matrix that can undergo a non-trivial SVD factorization is a 2 $\times$ 2 matrix. Consequently, \textit{StiefelGen} is capable of generating new signals as short as four units. This is exemplified in Figures \ref{fig:four_data_points_a} and \ref{fig:four_data_points_b}, where the smaller perturbation factor yields a novel signal that closely follows the shape of the original reference signal. In contrast, the larger perturbation leads to the endpoint of the generated signal moving in the opposite direction to its reference.

\begin{figure}[htbp]
  \centering
  \subfigure[A small perturbation ($\beta=0.3$) applied to a signal that is four units long.\label{fig:four_data_points_a}]{\includegraphics[width=0.4\textwidth]{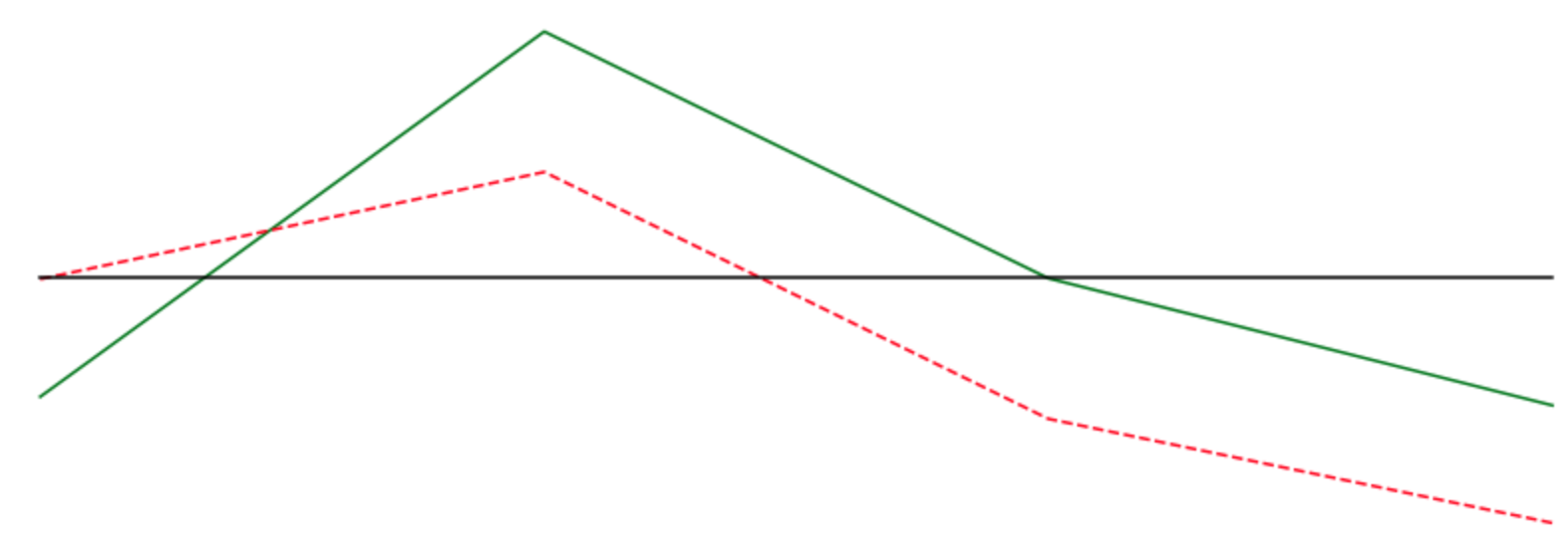}}
    \hspace{0.01\textwidth} 
  \subfigure[A small perturbation ($\beta=0.8$) applied to a signal that is four units long.\label{fig:four_data_points_b}]{\includegraphics[width=0.4\textwidth]{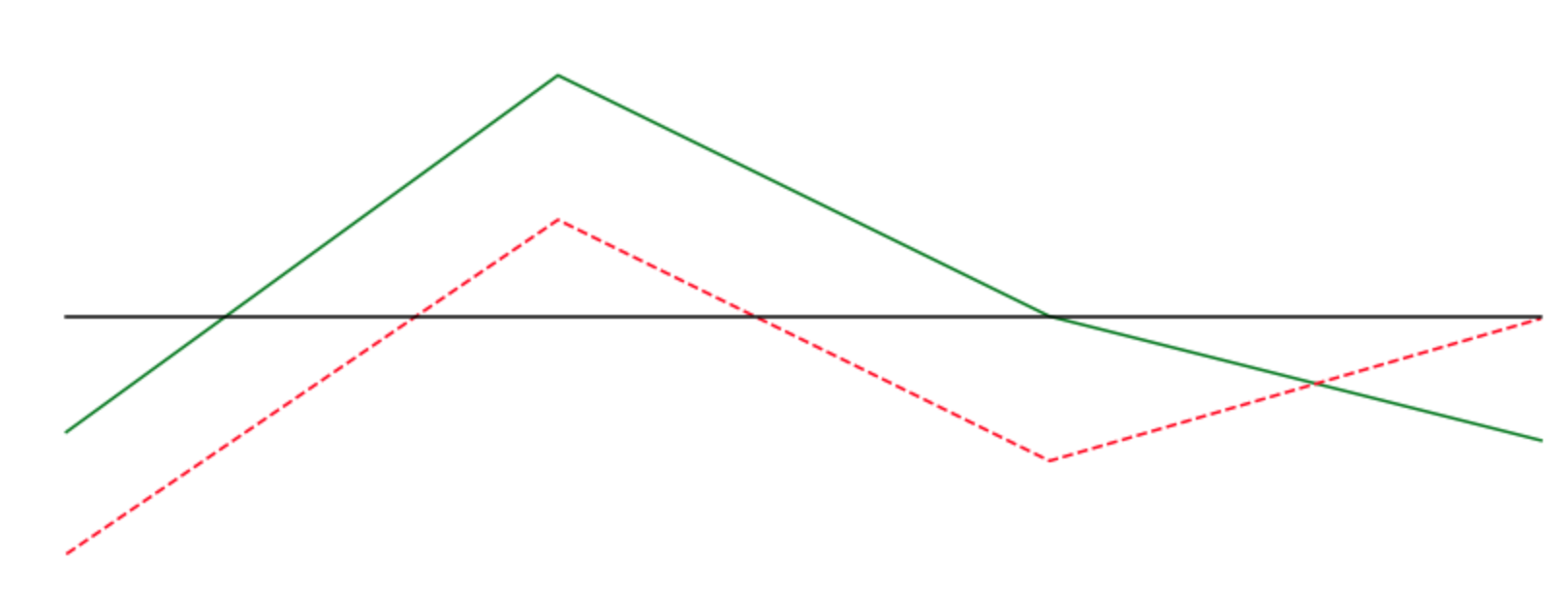}}
  \caption{Applying \textit{StiefelGen} to the shortest possible signal it can perturb}
  \label{fig:four_data_points}
\end{figure}

\textbf{Not Every Signal is Reachable}

While \textit{StiefelGen} exhibits the capability to generate a diverse array of signals, ranging from those closely aligned with the original (for augmentation purposes) to those resembling outliers, and traversing the geodesic path in between, it is crucial not to view \textit{StiefelGen} as a universal default for time series data augmentation. The tool has inherent limitations; it cannot generate every conceivable type of augmentation or outlier that might manifest in practical scenarios. This restriction stems from its construction, as \textit{StiefelGen} is specifically designed to produce signals that smoothly adhere to the orthogonality constraints of the $U$ and $V$ matrices following the SVD. In practice, \textit{StiefelGen} is effective only in generating a subset of signals within the broader set of possible signals encountered in real-world applications. This concept is illustrated in Figure \ref{fig:mathcha_venn_diagram}, where $\mathcal{H}$ denotes the entire hypothesis space of reachable signals.

\begin{figure}[htbp]
  \centering
 {\includegraphics[width=0.85\textwidth]
  {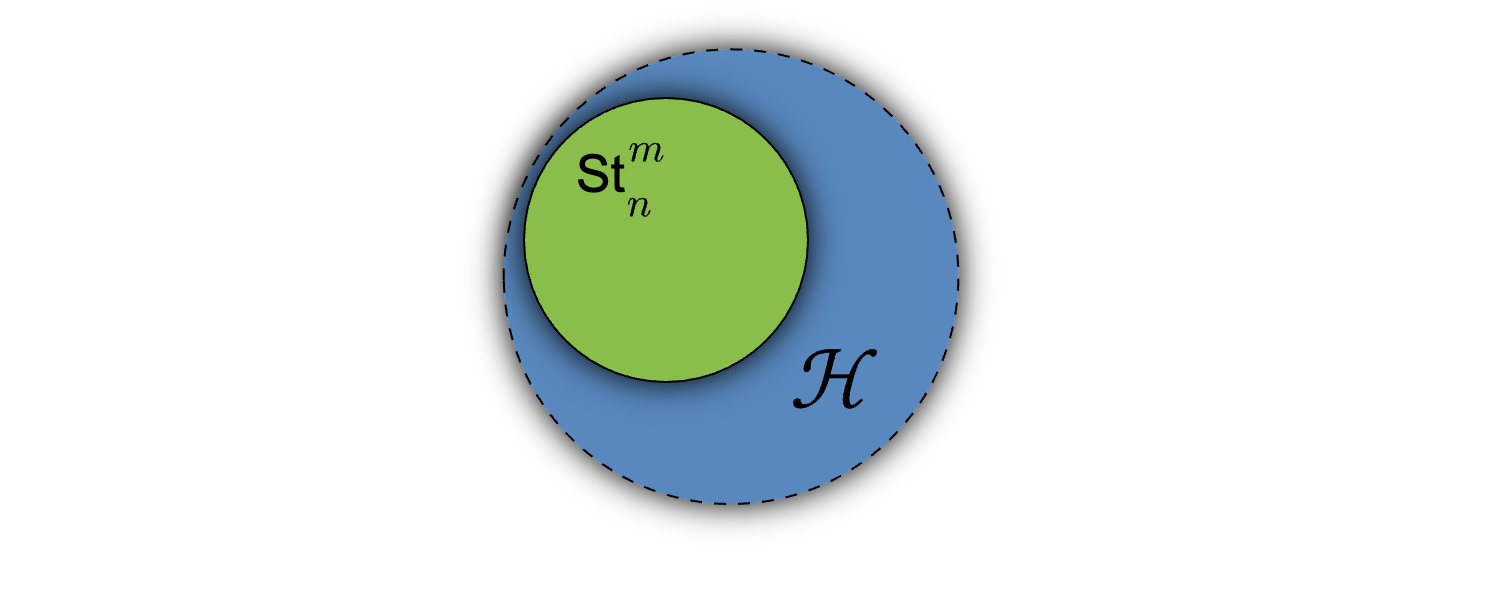}}
  \caption{Generating novel signals using \textit{StiefelGen} cannot generate every type of possible signal. \label{fig:mathcha_venn_diagram}}
\end{figure}

\textit{What types of signals or unobtainable via \textit{StiefelGen} then?} An obvious answer is to look for cases which will lead to the  breaking the orthogonality constraints of the $U$ and $V$ matrices. For reference a sinusoidal signal with random Gaussian noise is shown in Figure \ref{fig:unreachable_stiefel}, as well as an example \textit{StiefelGen} augmentation signal (generated from $\beta=0.1$, and a page matrix with $(m,n)=(20,10)$).

\begin{figure}[htbp]
  \centering
 {\includegraphics[width=0.5\textwidth]
  {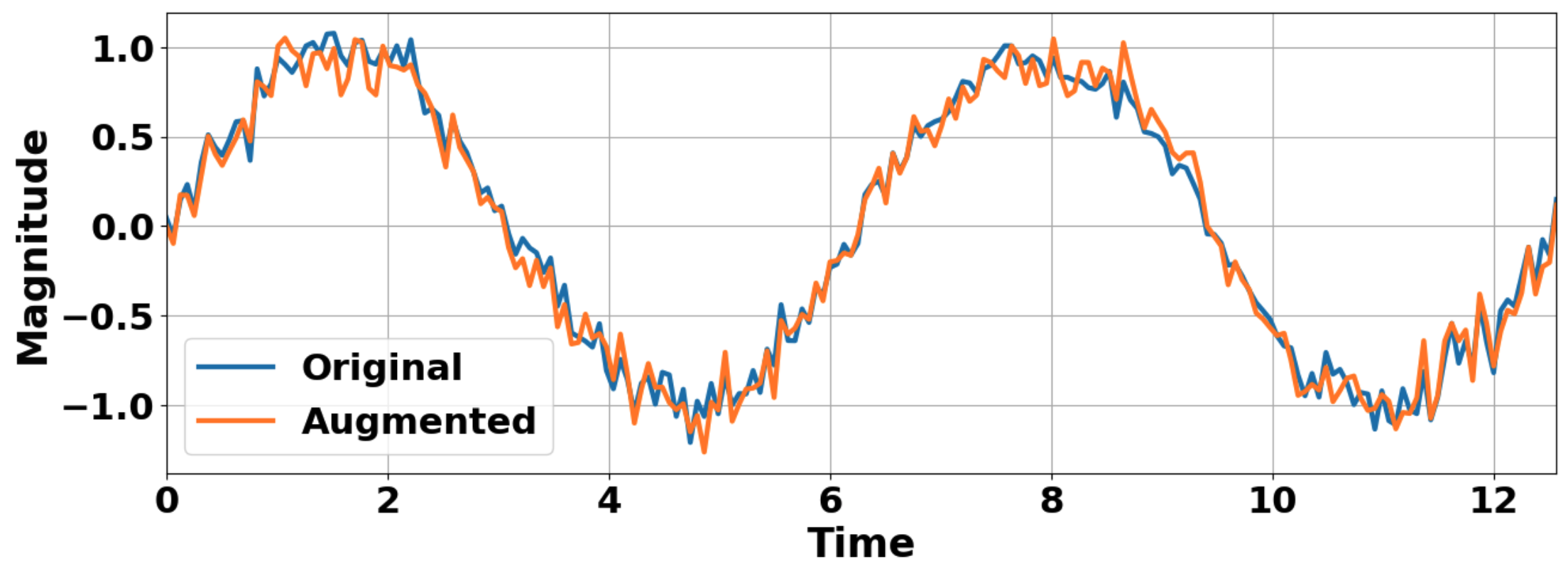}}
  \caption{A generic sinudsoidal signal with noise, as well as a \textit{StiefelGen} augmentation signal. \label{fig:unreachable_stiefel}}
\end{figure}

Based on using Figure \ref{fig:unreachable_stiefel} as a reference we may generate some example signals which \textit{StiefelGen} will effectively never be able to generate:

\begin{itemize}
    \item In order to break the orthogonality constraints placed on $U$ and $V$, a simple solution would be to zero-out rows (or columns) of either matrix. The results of this are made clear in Figure \ref{fig:unreachable_U0} which has had the first five columns of the $U$ matrix zeroed out, and Figure \ref{fig:unreachable_V0} which has had the first five rows of the $V$ matrix zeroed out. The reason for the unusual pattern Figure \ref{fig:unreachable_V0} may be intuited by noticing that the page matrix as stacked is size $(m,n)=(20,10)$, and by noticing that the signal has been zeroed out in 20 subsequent locations. 
    \item Another obvious way to break orthogonality is to perturb either the $U$ and $V$ matrices without regards to the underlying Stiefel geometry. This way $U+\varepsilon =  U'$ will lie in the ambient $\mathbb{R}^{m \times m}$ space (or $\mathbb{R}^{m \times k}$ space if one opts to work with the first orthonormal $k$ frames). The result of this is shown in Figure \ref{fig:unreachable_ambient}. Evidently even with small perturbations ($\sigma^2=0.04$) the act of leaving the Stiefel manifold and randomly perturbing the $U$ matrix results in quite large deviations from the original source signal. More information about this will be provided later under the heading ``Faster Than \textit{StiefelGen}?''
    \item Finally, it is possible to generate signals which are reachable with probability zero. We term these as being ``technically unreachable'' signals, an example of which is shown Figure \ref{fig:unreachable_change_point}, which has a change point. The reason that this is technically unreachable is that in order to generate the signal change point at a specified time location, one would need an overall perturbation that is zero everywhere except at that one specific location, which is ultimately a measure zero event.
\end{itemize}

\begin{figure}[htbp]
  \centering
  \subfigure[Resulting signal if first five columns of $U$ are zeroed out.\label{fig:unreachable_U0}]{\includegraphics[width=0.4\textwidth]{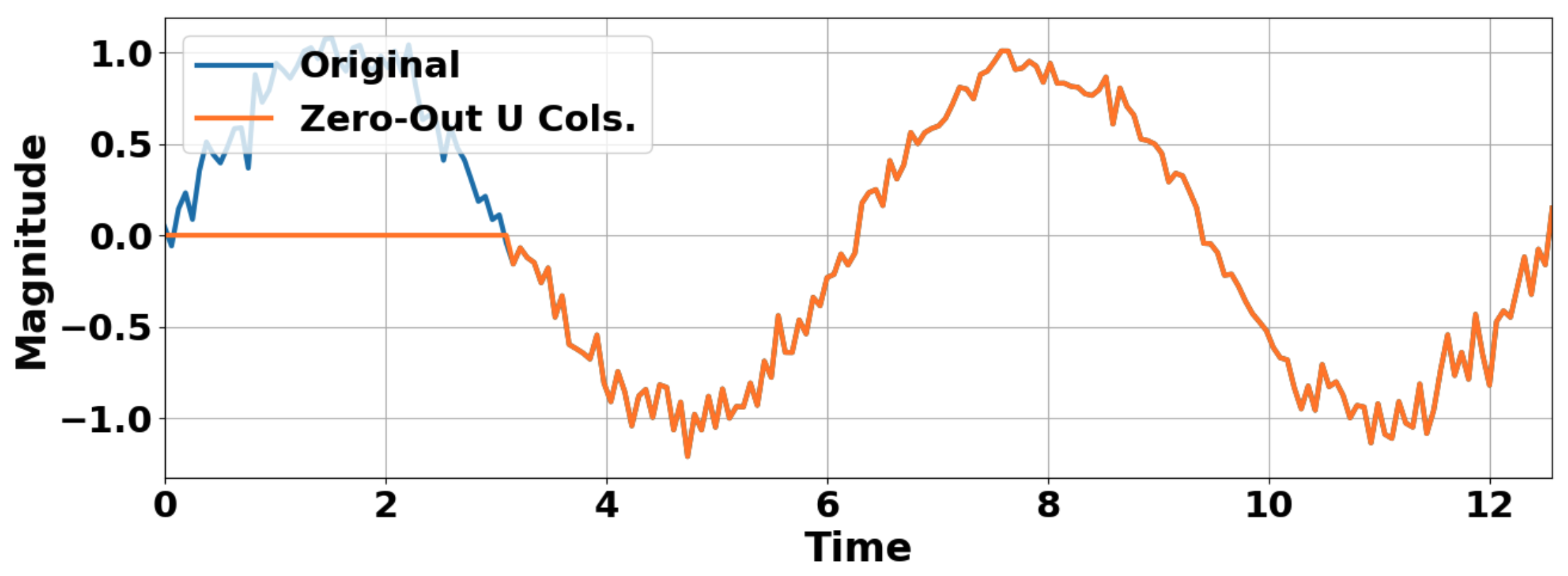}}
    \hspace{0.01\textwidth} 
  \subfigure[Resulting signal if first five rows of $V$ are zeroed out.\label{fig:unreachable_V0}]{\includegraphics[width=0.4\textwidth]{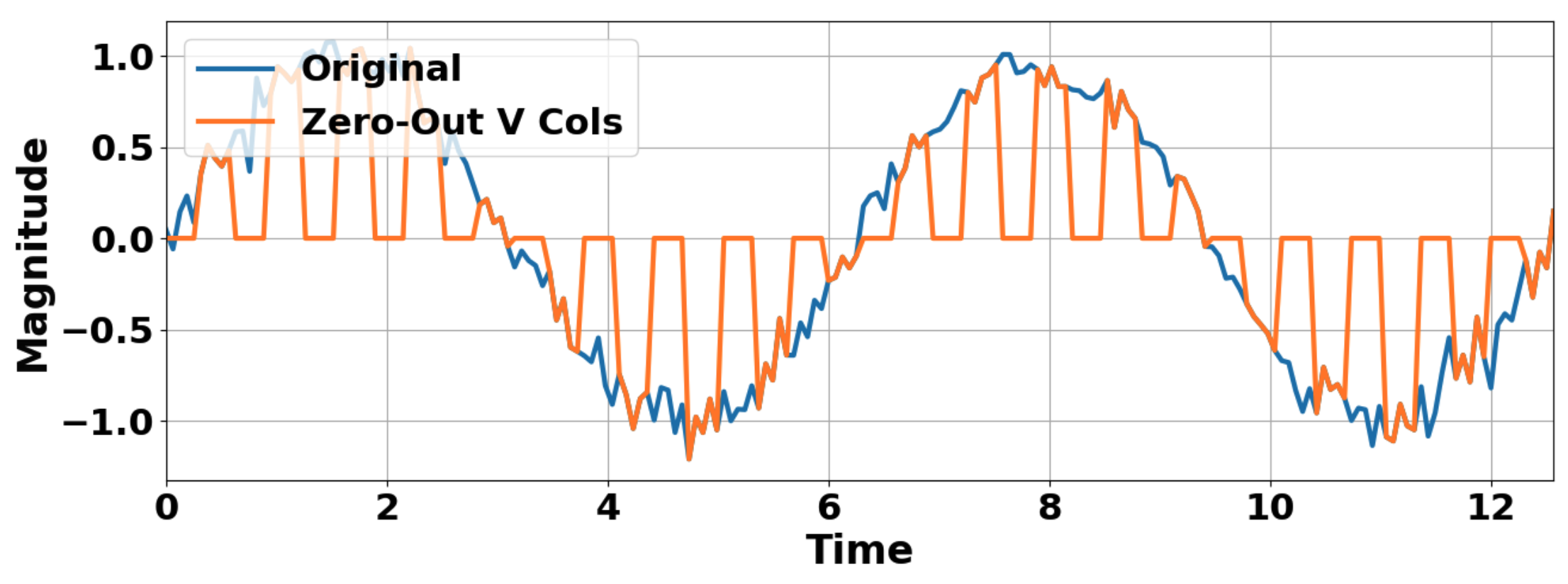}}

    \subfigure[Resulting signal if $U$ is perturbed by a random normal matrix into ambient $\mathbb{R}^{m \times m}$ space with $(\mu,\sigma^2)=(0,0.04)$.\label{fig:unreachable_ambient}]{\includegraphics[width=0.4\textwidth]{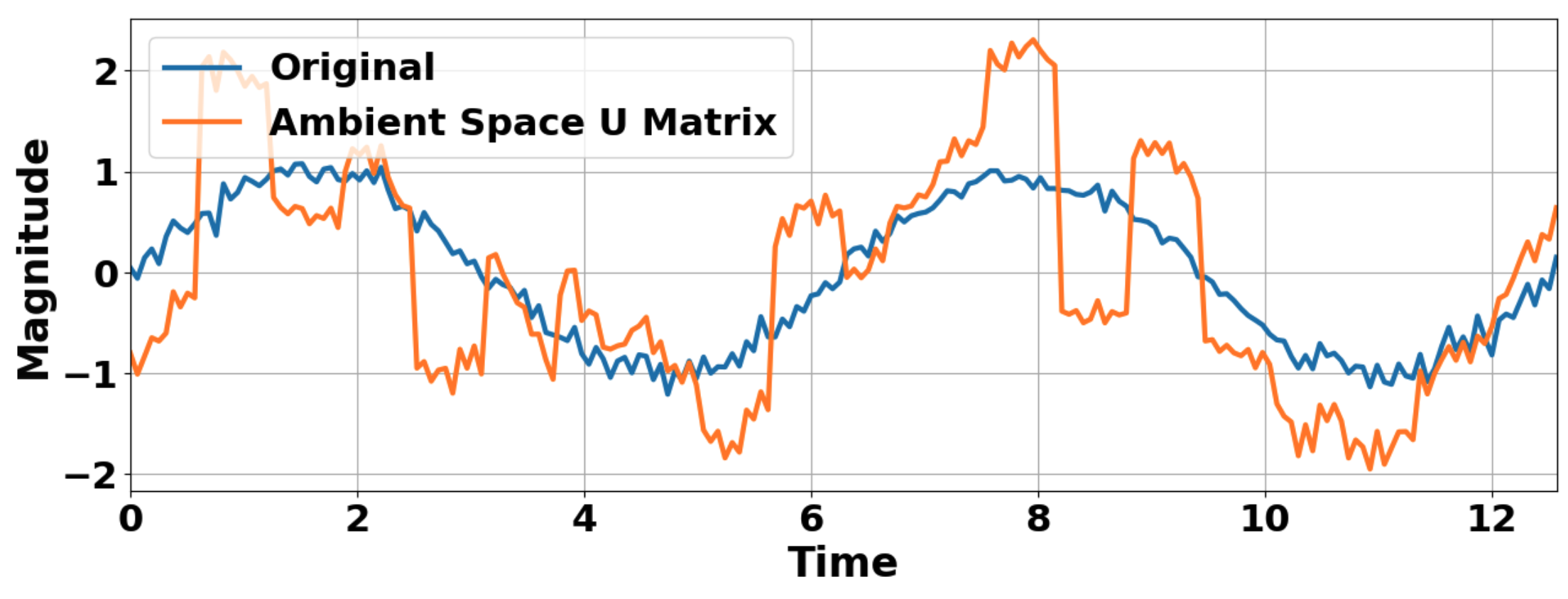}}
      \hspace{0.01\textwidth} 
      \subfigure[Resulting signal if a change point occurs half way through signal, pushing the second half of the signal to a mean value of five.\label{fig:unreachable_change_point}]{\includegraphics[width=0.4\textwidth]{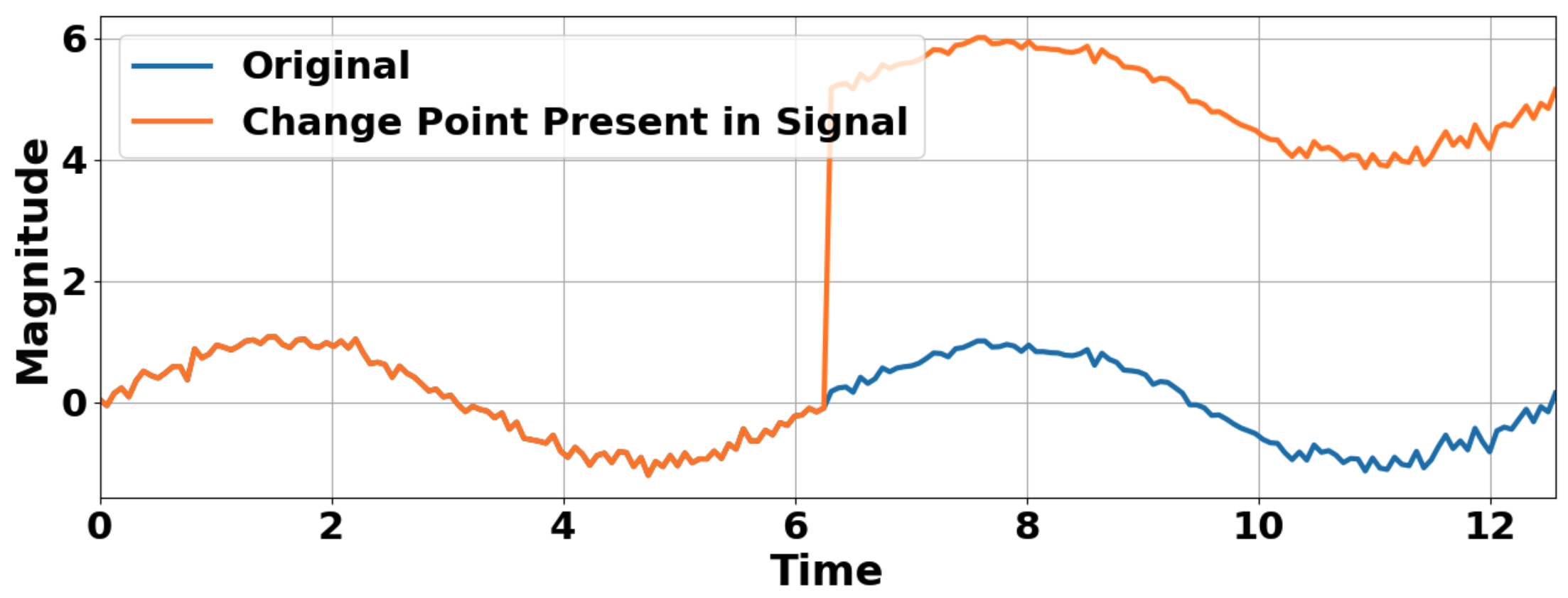}}
  \caption{A number of valid time series signals which would be completely unreachable via \textit{StiefelGen}.}
  \label{fig:unreachable}
\end{figure}


\textbf{Faster Than \textit{StiefelGen}?}

The inquiry prompted by Figure \ref{fig:unreachable_ambient} naturally leads to the question of ``how much flexibility is there in perturbing the $U$ and $V$ matrices directly, provided the perturbation remains within reasonable bounds?'' To delve into this, a set of 20 signals has been generated by introducing perturbations based on the addition of random normal matrices, as illustrated in Figure \ref{fig:stiefel_direct}.

\begin{figure}[htbp]
\centering
\subfigure[A series of signals resulting from perturbing a combination of $U$ and $V$ with a random normal matrix, where the elements are drawn from $\mathcal{N}(0,0.1^2)$. \label{fig:stiefel_direct_a}]{\includegraphics[width=0.48\textwidth]{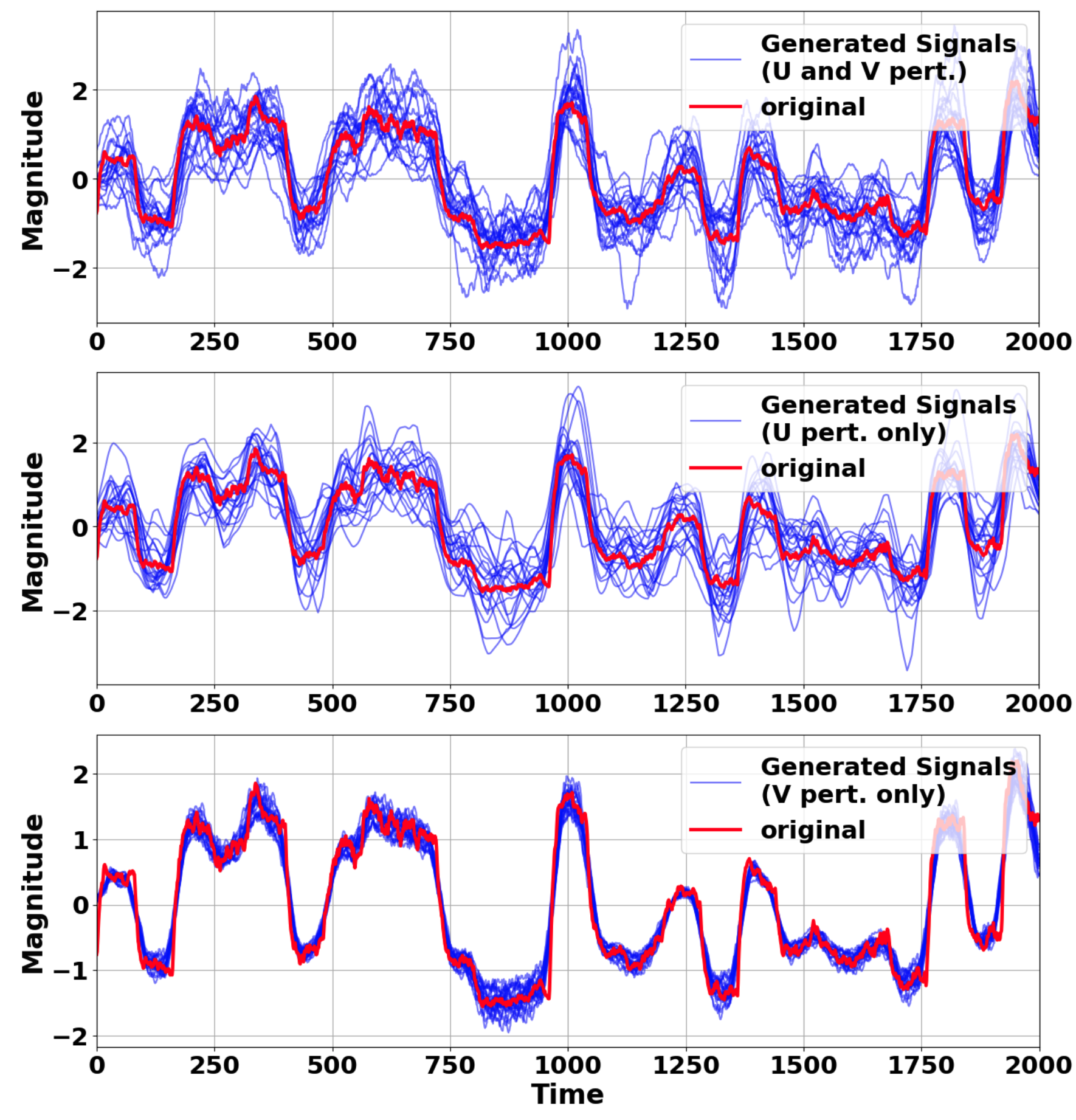}}
\hspace{0.01\textwidth}
\subfigure[A series of signals resulting from perturbing a combination of $U$ and $V$ with a random normal matrix, where the elements are drawn from $\mathcal{N}(0,0.05^2)$. \label{fig:stiefel_direct_b}]{\includegraphics[width=0.48\textwidth]{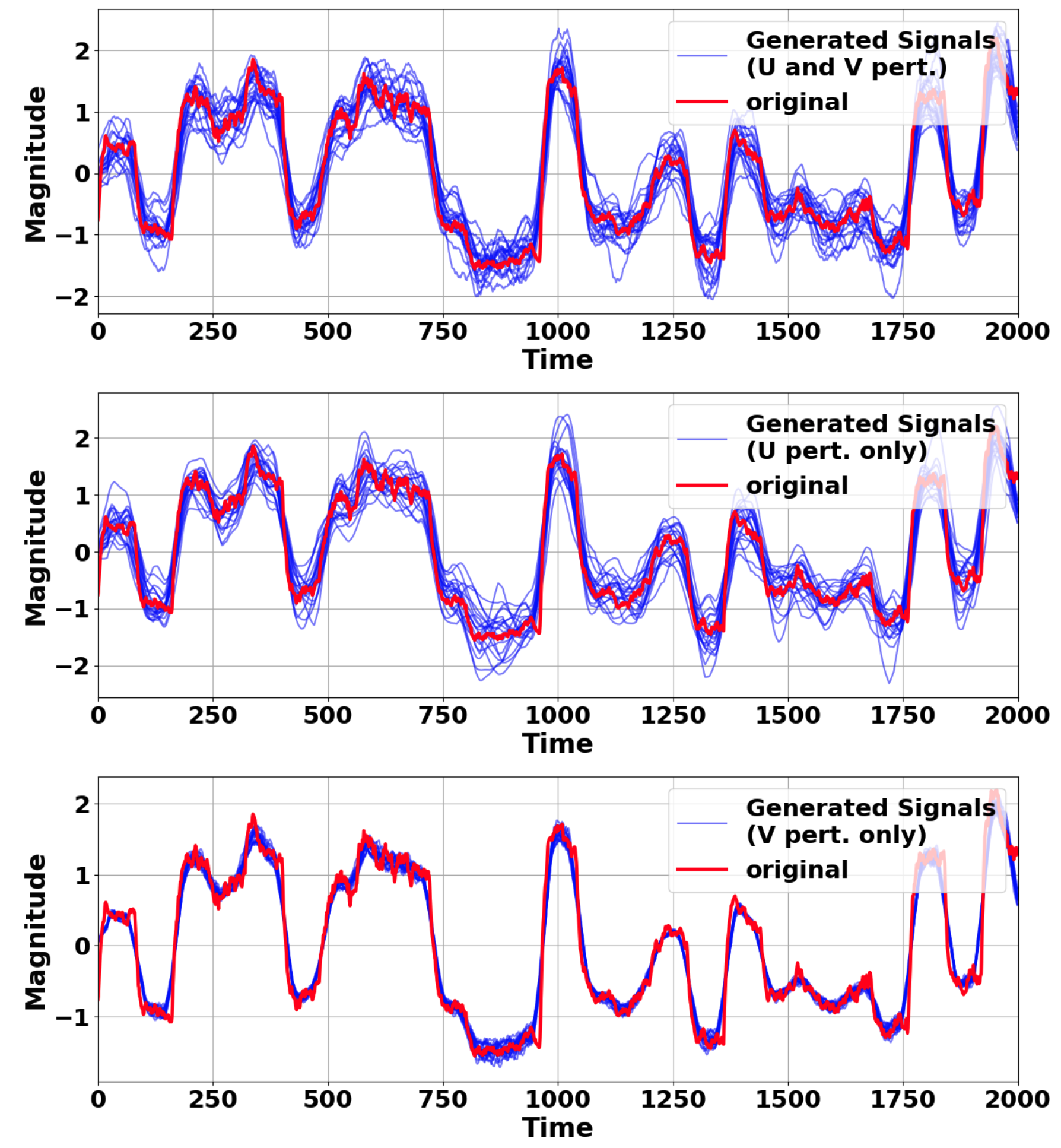}}
\caption{Comparison of the effects of perturbing the SVD orthonormal matrices directly, thereby departing from the Stiefel manifold.}
\label{fig:stiefel_direct}
\end{figure}

Clearly, the quality of signals deteriorates rapidly with increasing perturbation levels, even at modest magnitudes. However, as depicted in Figure \ref{fig:stiefel_direct_b}, there exists a threshold where ``small enough'' perturbations yield signals that closely resemble the original reference signal. In essence, an algorithm following this approach would be akin to implementing only Step 1 of the \textit{StiefelGen} algorithm. Such an algorithm could operate considerably faster (linear time after the initial SVD), yet it sacrifices the nuanced control afforded by constraining the geometry to the Stiefel manifold. Furthermore, integration into additional algorithms, such as DMD for spatio-temporal signals, becomes less straightforward, as failure to respect the orthonormality of $U$ and $V$ matrices in each iteration rapidly degrades forecasted signal quality. The choice between these algorithms ultimately hinges on the functional requirements of the problem. Direct perturbation of the $U$ and $V$ matrices is a viable option if perturbations are sufficiently small for the context, and the problem solely necessitates pure time series signal generation (e.g., enhancing the training-testing performance of a neural network). Lastly, it should be noted that as the signal length increases, the scale of perturbations would generally need to decrease also—a natural consequence of the orthonormality constraint on $U$ and $V$.

\textbf{From Hyperspheres to Flipped Signals: Introducing \textit{SphereGen}}

Exploring further with \textit{StiefelGen}, we contemplate the scenario of: ``what if we forgo the construction of a page matrix and directly perform an SVD on the signal itself?'' Essentially, this involves extending the concept of the SVD to a 1D matrix, an unconventional approach given that SVD is traditionally employed for factorizing 2D matrices. However, since the Stiefel manifold can be defined for 1D matrices, this idea is not without merit. It's important to recall that this perspective ties back to the geometry of the hypersphere. That is $\St$ with $n=1$ is by definition a hypersphere. Thus performing \textit{StiefelGen} for the 1D case implies implicitly working with hypersphere geometry, a facet that emerges naturally in the mathematical underpinnings of the SVD.

To elaborate, consider $\mathcal{T}=U\Sigma V^{\intercal}$, where $\mathcal{T}\in\mathbb{R}^{m\times 1}$ so that $\mathcal{T}\in \St$, with $n=1$. If we assume $V=1$ for this discussion, then $U$ must encapsulate all the signal information. However, owing to orthonormality in that $\|U\|=1$, we may see that $U$ is positioned on the circumference of a unit hypersphere. Meanwhile, $\Sigma$ acts as a scalar, facilitating the restoration of the signal magnitude to its original scale. This unique corner case necessitates explicit since a lot of Stiefel geometry coding libraries do not readily work with the $n=1$. As seen in libraries like \textit{geomstats}, one would ideally opt for the explicit use of the hypersphere class.

The outcome of applying \textit{StiefelGen} directly to the 1D signal, thereby bypassing the page matrix construction (an algorithm term which we shall term as \textit{SphereGen}), is depicted in Figure \ref{fig:spheregen}.

\begin{figure}[htbp]
\centering
\subfigure[The first 5 incremental geodesic steps.]{\includegraphics[width=0.45\textwidth]{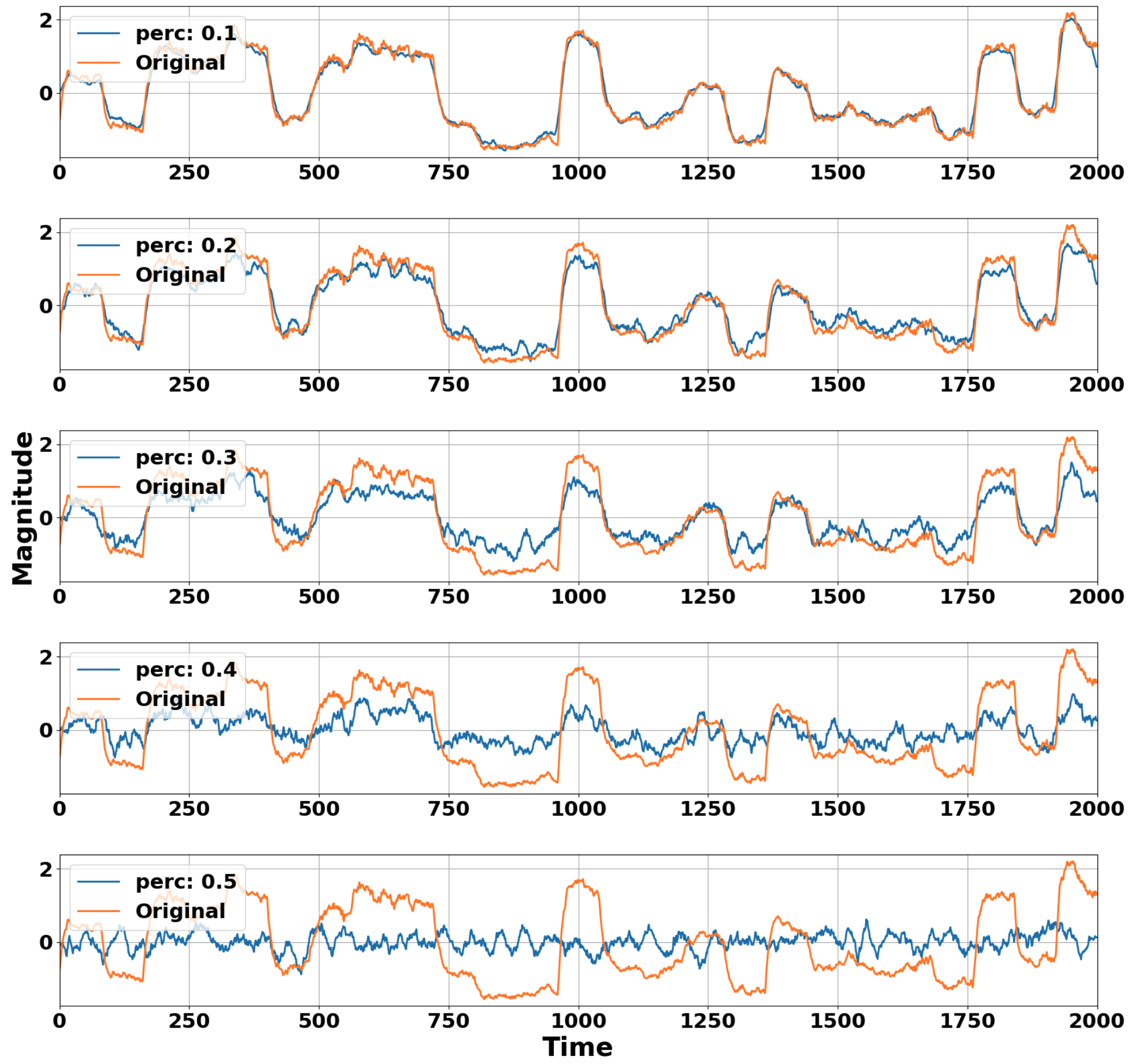}}
\hspace{0.01\textwidth}
\subfigure[The last 5 incremental geodesic steps.]{\includegraphics[width=0.45\textwidth]{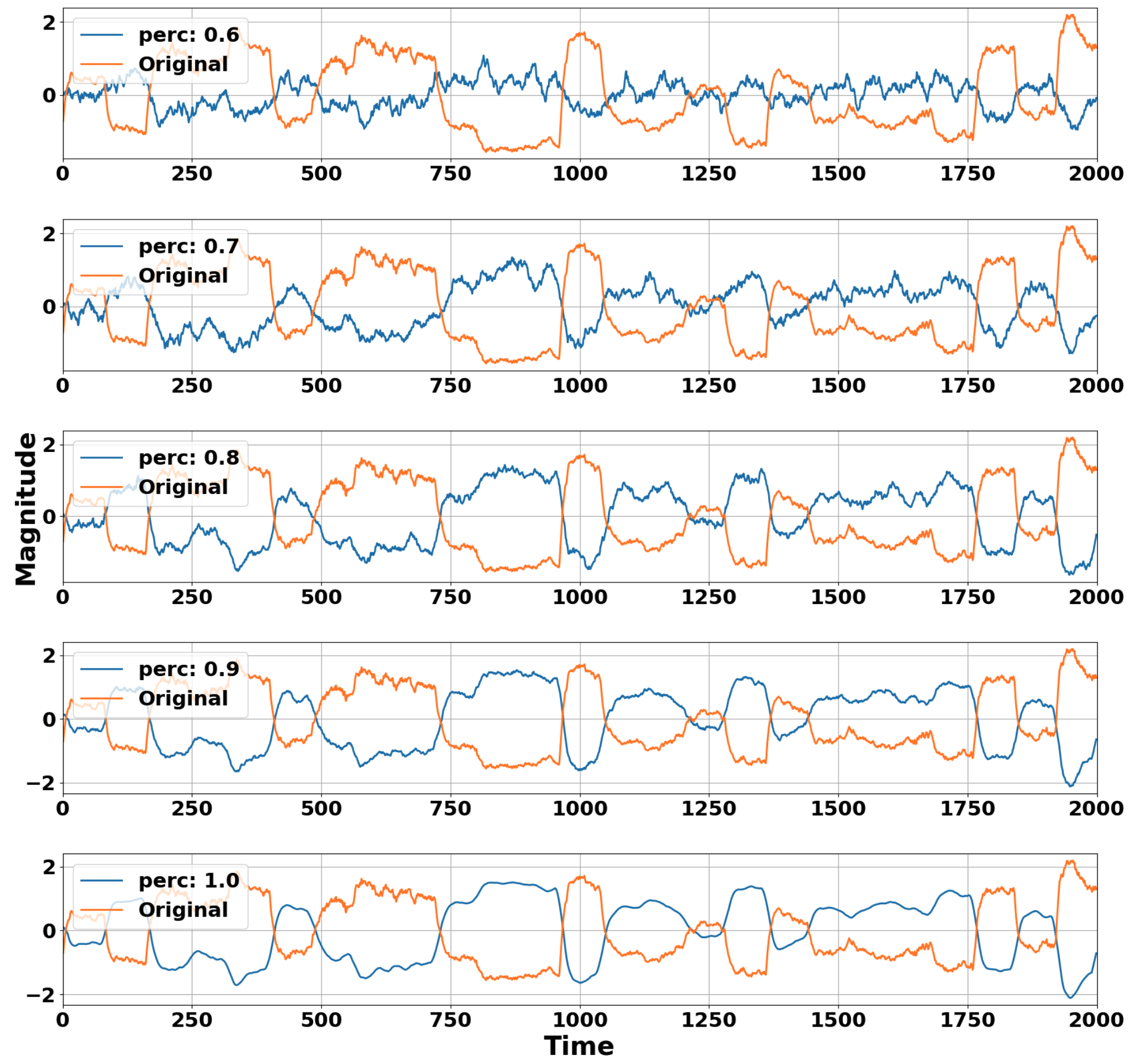}}
\caption{Incrementally deforming the generated signal with respect to a \textit{Hypersphere manifold} for the SteamGen data set. A smoothing window of $\ell=20$ was used, and no page matrix construction was performed.}
\label{fig:spheregen}
\end{figure}

Examining Figure \ref{fig:spheregen}, we observe intriguing behavior. Firstly, perturbations exhibit a significantly larger impact compared to that seen in \textit{StiefelGen}. For instance, note the level of signal deviation at a 0.3 perturbation level in Figure \ref{fig:spheregen} compared to Figure \ref{fig:intro_large} which underwent a perturbation factor of 0.9. Secondly, perturbing all the way to the radius of injectivity results in the original signal being flipped in its $y$-orientation. To elucidate these scenarios, we further delve into the geometry of the hypersphere.

\begin{figure}[htbp]
\centering
\subfigure[Diagram of the scaling that occurs in the degenerate 1D SVD case.\label{fig:hyp_explain}]{\includegraphics[width=1\textwidth]{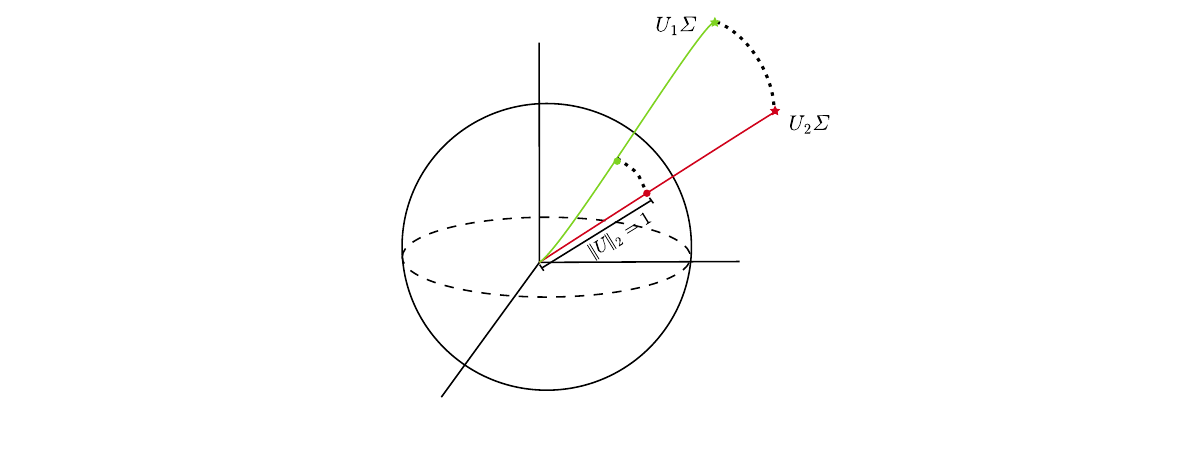}}
\hspace{0.01\textwidth}
\subfigure[Diagram explaining the negative behavior at maximal percent perturbation (at the radius of injectivity of the hypersphere).\label{fig:hyp_negative}]{\includegraphics[width=1\textwidth]{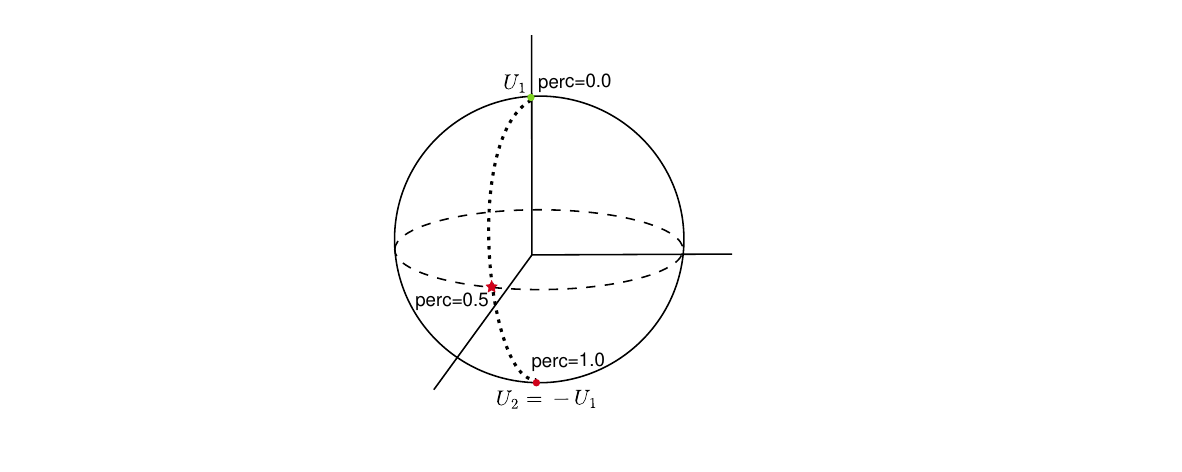}}
\caption{Illustrating the behaviors occurring geometrically for \textit{SphereGen}.}
\label{fig:hypersphere_example}
\end{figure}

The negative flipping of the signal can be attributed to reaching the edge of the injectivity radius, where one ends up at the extreme opposite end of the hypersphere, as illustrated in Figure \ref{fig:hyp_negative}. This figure also suggests why, at the 50\% perturbation mark, the signal appears excessively noisy. This is because this point maps to the equator of the sphere, the position maximally far away from either the signal itself, or its flipped orientation. Moreover, the equator has the largest radius among all horizontal slices, making accidentally arriving at a $U$ carrying useful structural information at the equator virtually a probability-zero event. It is far more likely that one would end up with a random $U$ (that satisfies orthonormality constraints) rather than anything with practical relevance. 

Due to the well known geometry of the hypersphere, the geodesic equation, also known as great circles, (as well as the exponential map) is also known in closed form

\begin{equation}
    \gamma(t) = p\cos(\|v\|t) + \frac{v}{\|v\|} \sin(\|v\|t),
\end{equation}

where $p$ is a position on the hypersphere, and $v$ is a velocity vector. Evidently the exponential map, takes on form: $\text{Exp}_p(v) = p\cos(\|v\|) + \frac{v}{\|v\|} \sin(\|v\|)$.  A simple substitution of values, $p=[1,0,0]$ (the ``North Pole''), and $v=[\pi,0,0]$, will result in $\text{Exp}_p(v) = [-1,0,0]$, confirming that radius of injectivity is in $\pi$ units in magnitude. A similar exercise with $v=[\pi/2,0,0]$, reveals that this velocity vector will map directly onto the equator, as Figure \ref{fig:hyp_exp} aims to illustrate. 

\begin{figure}[H]
  \centering
{\includegraphics[width=1\textwidth]{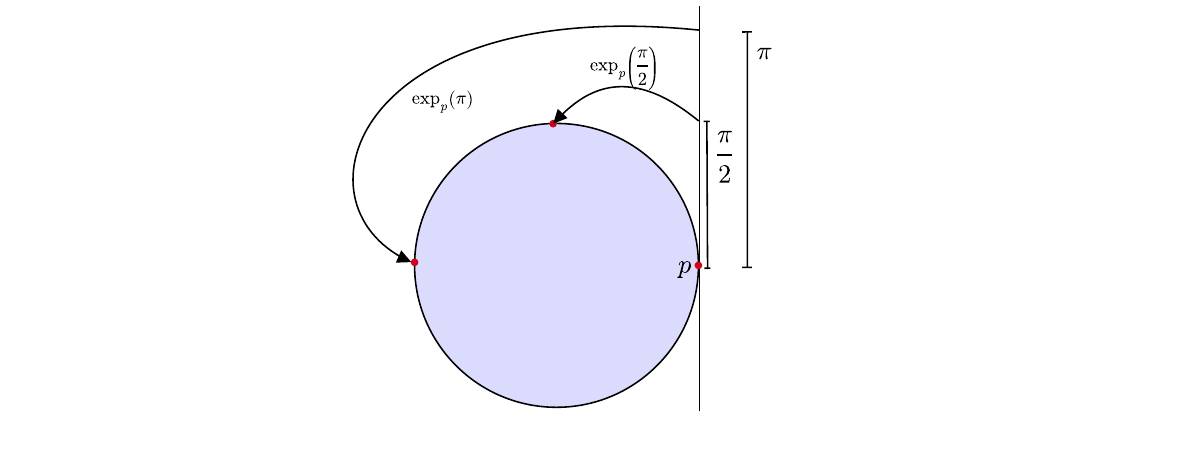}}
  \caption{Illustration of the exponential map of the tangent space elements generated at point $p$. }
  \label{fig:hyp_exp}
\end{figure}

Given this, it becomes possible to implement this \textit{StiefelGen} corner case in Python in a manner which only depends on the \textit{numpy} library, and which runs in linear $\mathcal{O}(m)$ time, 

\label{alg_app:StiefelSphere_python}
\begin{mintedbox}{python} 

import numpy as np

def SphereGen(signal, t=1, boundary=np.pi/6, smooth_fac=20):
    '''
    - signal: Input time seires (1D array)
    '''
    v = generate_random_tangent_vector(signal, boundary)
    out = smooth(sphere_geodesic(v, signal, t), smooth_fac)
    return out

def generate_random_tangent_vector(p,boundary):
    """
    Generate a random tangent vector at point p on a hypersphere with norm less than "boundary".

    Parameters:
    - p: Point on the hypersphere.
    - boundary: Limiting norm boundary for tangent vector to be samlped from.

    """
    # Dimension of the hypersphere
    dim = len(p)

    # Generate a random vector in the ambient space
    random_vector = np.random.randn(dim)

    # Project the vector onto the tangent space of the hypersphere at point p
    tangent_vector = random_vector - np.dot(p, random_vector) * p

    # Normalize the tangent vector to ensure its norm is less than boundary
    tangent_norm = np.linalg.norm(tangent_vector)
    if tangent_norm > boundary:
        tangent_vector = (boundary) * tangent_vector / tangent_norm
    return tangent_vector

def sphere_geodesic(v, p, t=1):
    """
    Exponential map for the unit sphere.

    Parameters:
    - v: Tangent vector.
    - p: Point on the sphere.
    - t: Time (t=1 defaults to exponential map)

    """
    v_norm = np.linalg.norm(v)

    if v_norm == 0:
        return np.array(p)  # Avoid division by zero for the zero vector (base case)

    exp_map_result = np.cos(v_norm*t)*np.array(p) + np.sin(v_norm*t)*np.array(v)/v_norm
    return exp_map_result
\end{mintedbox}

Similar to the \textit{StiefelGen}, \textit{SphereGen} operates without the need for model training and is entirely model-agnostic. However, \textit{SphereGen} presents certain challenges in terms of fine-tuning and controlling perturbation sizes, noise levels, and basis deformations. It lacks the straightforwardness of \textit{StiefelGen} and requires the specification of a "boundary" variable to prevent the sampling of excessively large tangent vectors. Additionally, occasional output may necessitate an extra magnitude scaling for enhanced signal variety.

One limitation of \textit{SphereGen} is its compatibility with other algorithms that rely on the non-degenerate case of the SVD, such as the DMD algorithm. Nevertheless, the absence of page matrix construction and the linear time complexity offer substantial advantages in specific contexts. Figure \ref{fig:sphere_geo_demo} illustrates examples of outputs generated by \textit{SphereGen}.

\begin{figure}[htbp]
  \centering
  \subfigure[\textit{SphereGen} being used to generate a signal and 
 travel the geodesics between the starting and ending points.]{\includegraphics[width=0.7\textwidth]{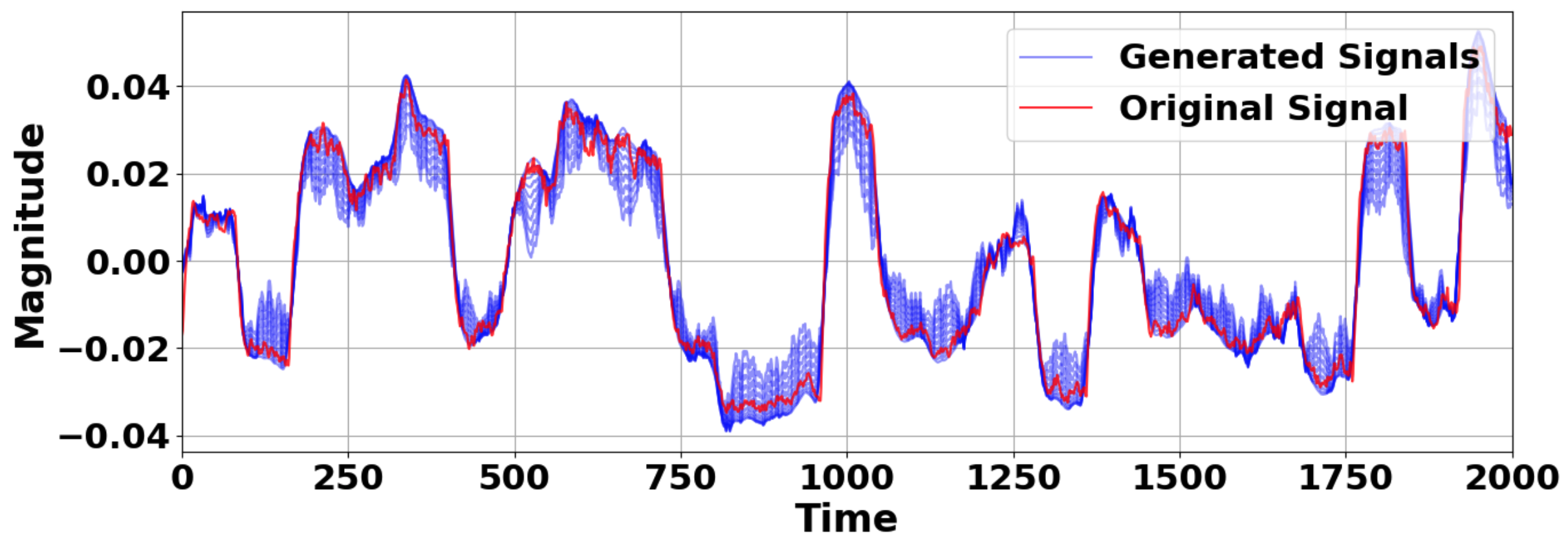}}

  \subfigure[Iterating over a range of 20 values, $(t)_{i=0}^19$, to generate multiple different signals between the two extremal flipped signal orientations, by continuously traversing the great circle of a hypersphere.]{\includegraphics[width=0.7\textwidth]{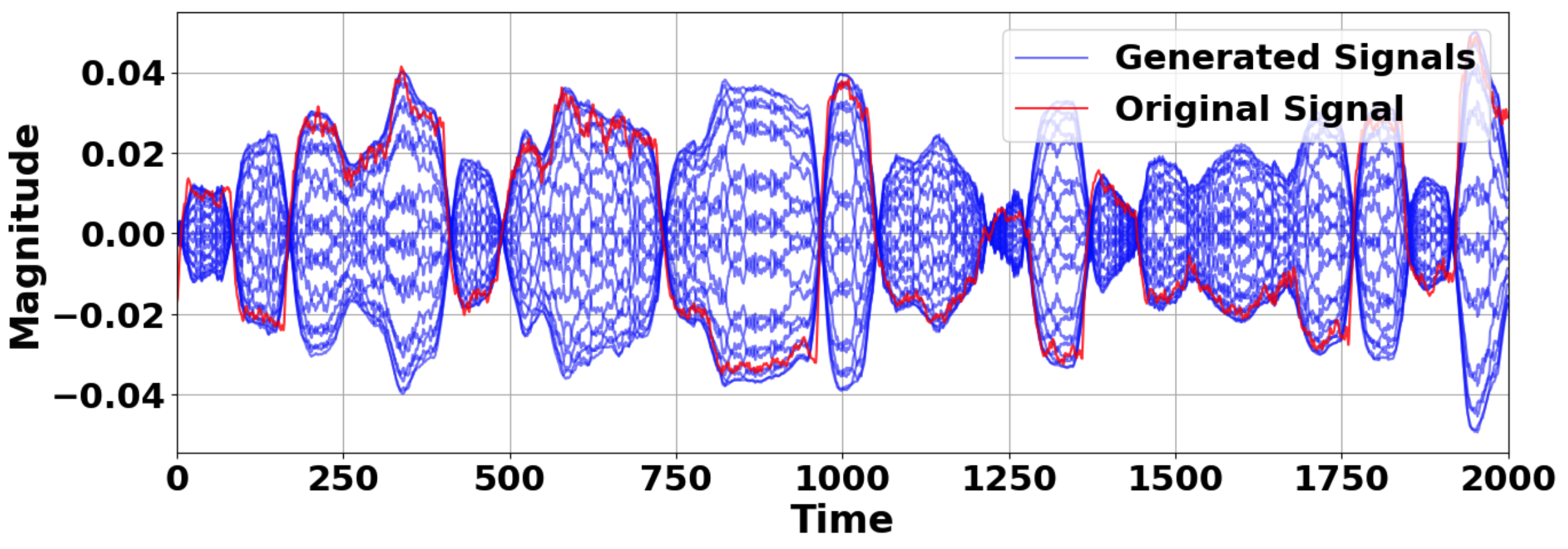}}
    \caption{Demonstration of \textit{SphereGen}.}
  \label{fig:sphere_geo_demo}
\end{figure}

\textbf{A Curse of (low) Dimensionality}

In this brief discussion, we underscore a point that, although seemingly obvious, bears significance and should be noted regardless. The majority of our research work presented thus far has focused on working with the Stiefel manifold in the sense of the orthogonal group geometries, namely $\text{St}_m^m = \mathcal{O}(m)$ for the case of $U$ and $\text{St}_n^n = \mathcal{O}(n)$ for the case of $V$. Briefly, we analyzed a corner case in the way of, $\text{St}_1^m$ and $\text{St}_1^n$, which correspond to hypersphere geometries.

Motivated by this insight, we delve into the application of the Stiefel manifold and its ability to work in a reduced rank setting. Specifically, $\text{St}_k^m$ for $k \ll m,n$. This approach becomes particularly relevant when dealing with vast data sets and databases if it is necessary to work with compressed representations of data. To illustrate this, we conduct an analysis on the Taxi dataset, considering the first 600 points (for visual clarity). The chosen dataset, known for its more periodic nature, serves to accentuate the impact of working with a lower-dimensional geometry ($\text{St}_2^30$ for $U$ and $\text{St}_2^20$ for $V$). Naturally then, the hyper parameters for the \textit{StiefelGen} reshape operation are set to $(m,n)=(30,20)$ for this analysis.

As a reference, we first explore the effect of a rank-2 signal SVD on the Taxi dataset without the application any Stiefel perturbation (Figure \ref{fig:rank2_base_case}) in order to get a base reference of the overall effect of the rank 2 reduction by itself. This initial SVD selection, where only the first two frames ($k=2$) are considered, sets the stage by highlighting how the signal is distorted before any Stiefel perturbations.

\begin{figure}[H]
\centering
\includegraphics[width=0.7\textwidth]{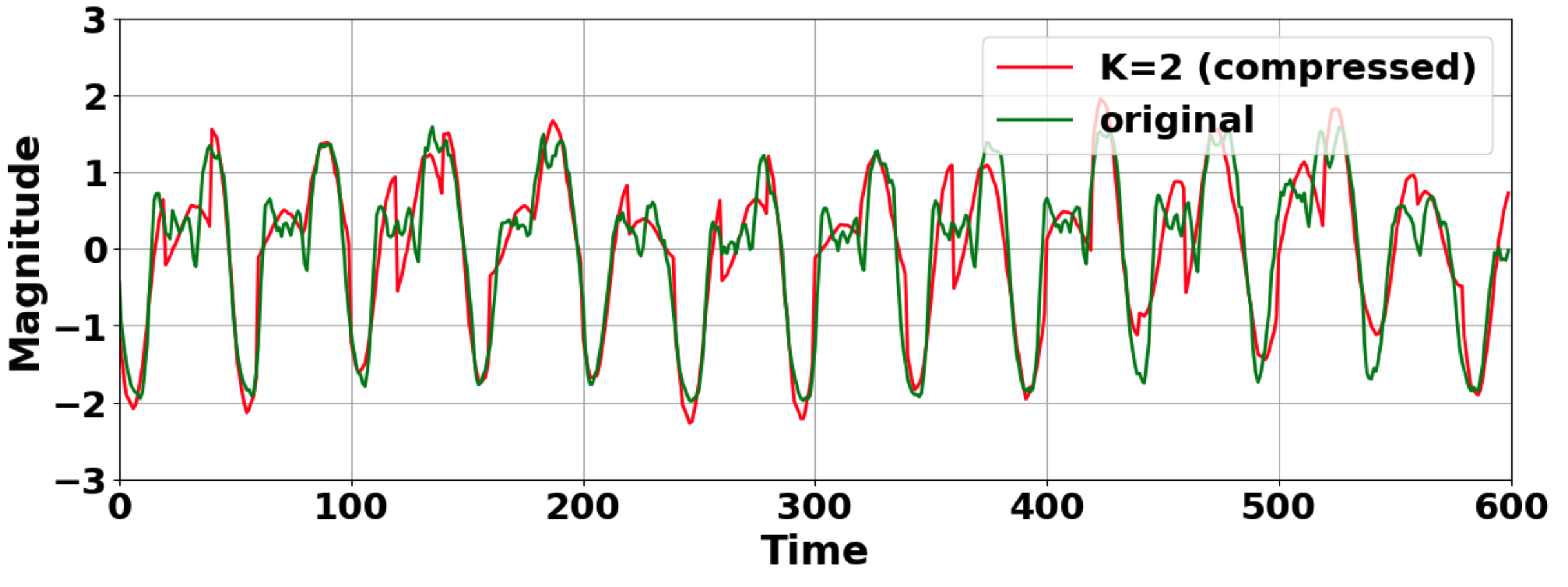}
\caption{The effect of considering a rank-2 signal SVD on the first 600 elements of the Taxi dataset with no Stiefel perturbation applied.}
\label{fig:rank2_base_case}
\end{figure}

As anticipated from Figure \ref{fig:rank2_base_case}, the rank-2 reconstruction of the original signal after the SVD introduces some deviation. We then shift our focus to a $\beta=0.2$ perturbation case, examining the impact with and without smoothing. Figure \ref{fig:rank2_no_smooth} clearly demonstrates that, at the same perturbation level, applying \textit{StiefelGen} in a rank-reduced setting results in more substantial and erratic changes compared to the full-rank system. The reasoning behind this is further clarified in Figure \ref{fig:sphere_reduced_rank}, which illustrates a 2-sphere embedded in $\mathbb{R}^3$ and a corresponding great circle curve on the sphere's surface. When perturbations over the 2-sphere are projected into a lower-rank setting (the great circle), they generally result in larger distances traveled in this low-rank setting. Additionally, more information is lost in the lower-dimensional manifold, leading to a significant reduction in finer-tuned ``per-coordinate'' control. The combination of these two factors is what leads to the increased erratic behaviour of the generated signals when working with compressed representations of the page matrix. 

\begin{figure}[htbp]
\centering
\subfigure[Comparing the impact of the generated signals with $\beta=0.2$, and with no smoothing applied ($\ell=1$).\label{fig:rank2_no_smooth}]{\includegraphics[width=0.7\textwidth]{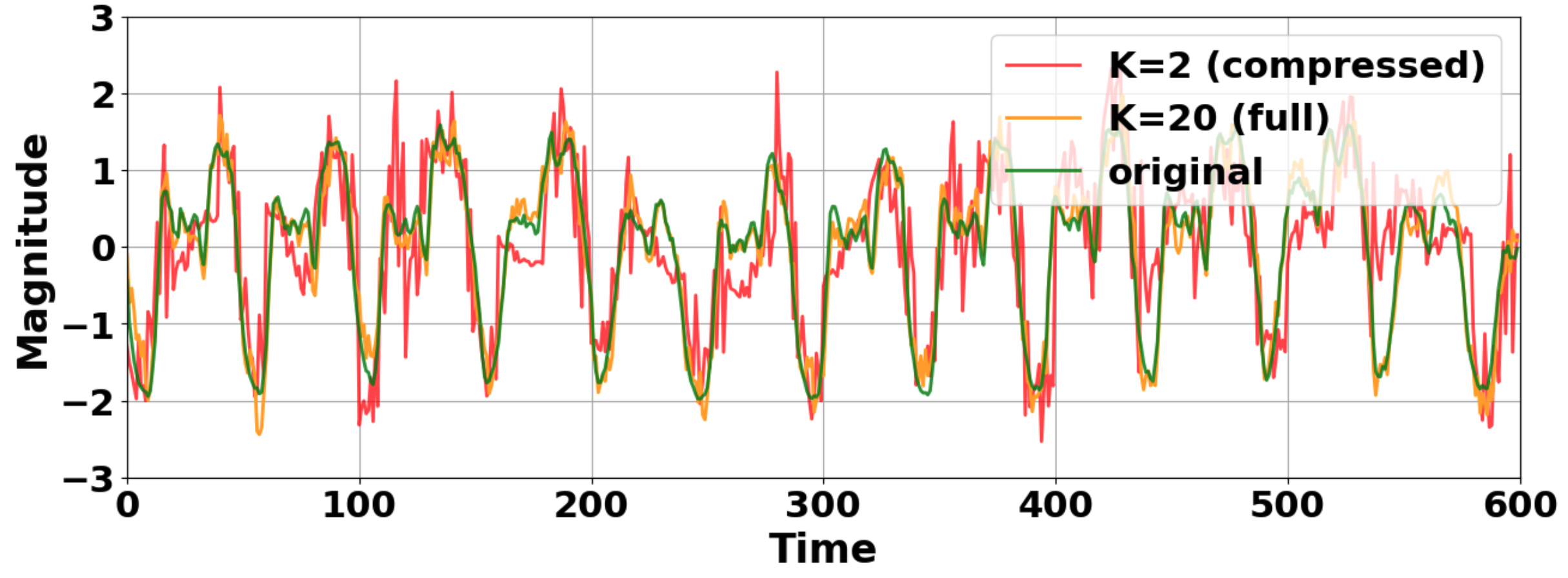}}

\subfigure[Comparing the impact of the generated signals with $\beta=0.2$, and with smoothing applied ($\ell=4$).\label{fig:rank2_smooth}]{\includegraphics[width=0.7\textwidth]{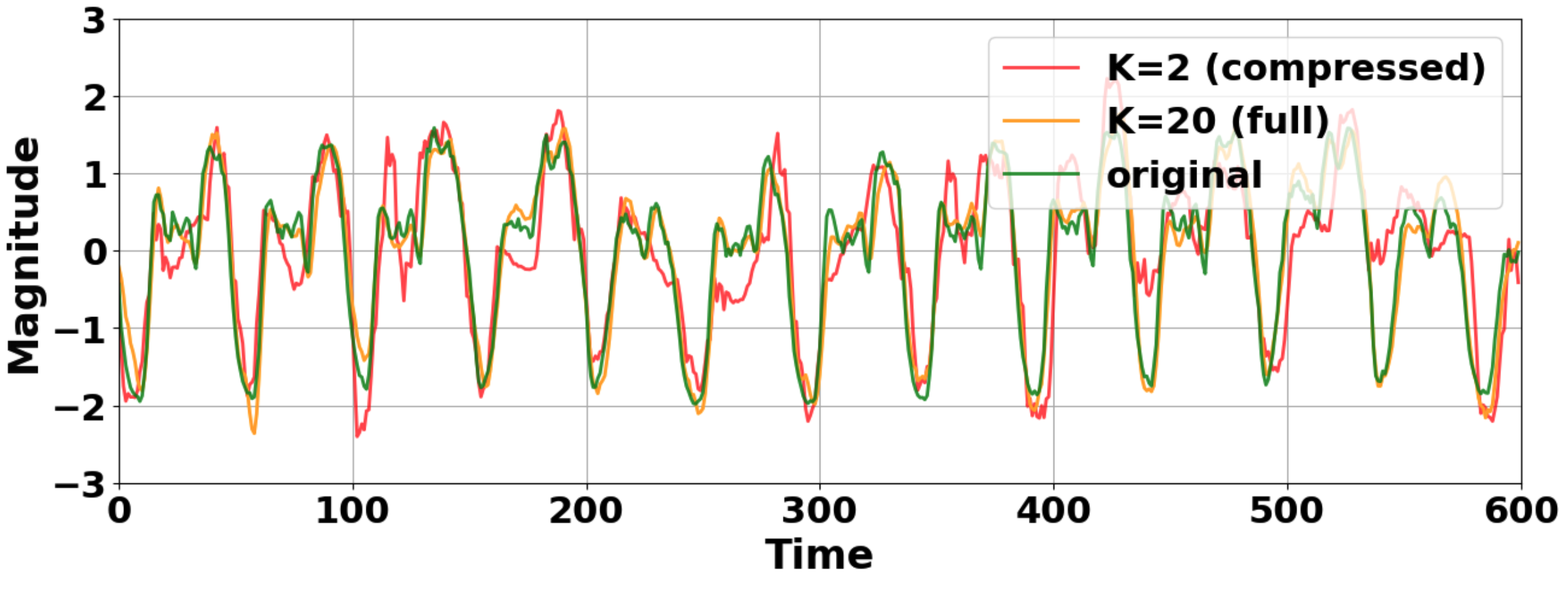}}

\subfigure[An illustration of how perturbation magnitudes are impacted when intrinsic geometric dimensions are reduced. \label{fig:sphere_reduced_rank}]{\includegraphics[width=1.15\textwidth]{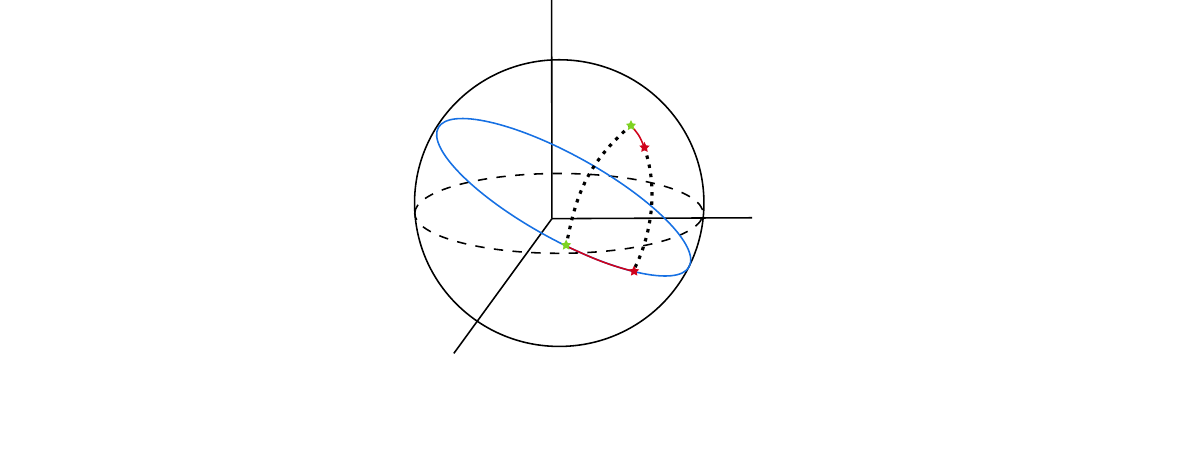}}

\caption{Comparing the impact of perturbation magnitude over the Stiefel manifold for a system that is rank 2 and full rank.}
\label{fig:rank2_cases}
\end{figure}

However, there is hope for scenarios requiring storage of rank-reduced representations. Firstly, one can simply to choose apply smaller $\beta$ perturbations. Additionally (and often recommended depending on context) with post-hoc smoothing, Figure \ref{fig:rank2_smooth} demonstrates that a reasonable signal can be generated from the original $(\beta,\ell)=(0.2,1)$ signal. This approach proves both simple and effective in mitigating the challenges associated with confining geodesics to sub-manifold structures, although naturally more complex pre, and post-hoc processing can always be applied depending on the problem functional requirements.

\textbf{Forecasting its Difficulties}

Here, we emphasize the importance of exercising caution when employing \textit{StiefelGen} in a loop for signal forecasting. It is crucial to understand that \textit{StiefelGen} is not inherently designed for time series forecasting; rather, its primary purpose lies in time series data augmentation, leveraging valuable geometric properties. For signals exhibiting weak statistical stationarity, generating signals and appending them to the current time series can be seen as a reasonable approach to forecasting to within certain short to medium term horizons. This approach serves as a naive form of data projection, allowing customization based on desired noise levels and basis function deformation within a specific geodesic distance from the source signal.

However, it is also essential to recognize the limitations of \textit{StiefelGen}, particularly when dealing with non-stationary time series. In order to demonstrate this point,   An example of this is in the publically available dataset from the ``Merchant Wholesalers, Except Manufacturers' Sales Branches and Offices'' for ``Nondurable Goods: Beer, Wine, and Distilled Alcoholic Beverages Sales'' \cite{alcohol}, which keeps track of monthly alcohol sales in millions of dollars (although here the signal has been standerdized before hand). Evidently from Figure \ref{fig:alcohol}, it is seen that the signal is highly non stationary, characterized by increasing variance and mean over time. While simple differencing schemes \cite{hyndman2018forecasting} combined with \textit{StiefelGen} augmentations may suffice for stationary signals, such an approach is ill-suited for non-stationary signals, as demonstrated in Figure \ref{fig:alcohol}.

\begin{figure}[htbp]
  \centering
  \subfigure[The misalignment of forecasted alcohol data and the appended time series highlights \textit{StiefelGen}'s inadequacy in addressing signal non-stationarity.]{\includegraphics[width=0.48\textwidth]
  {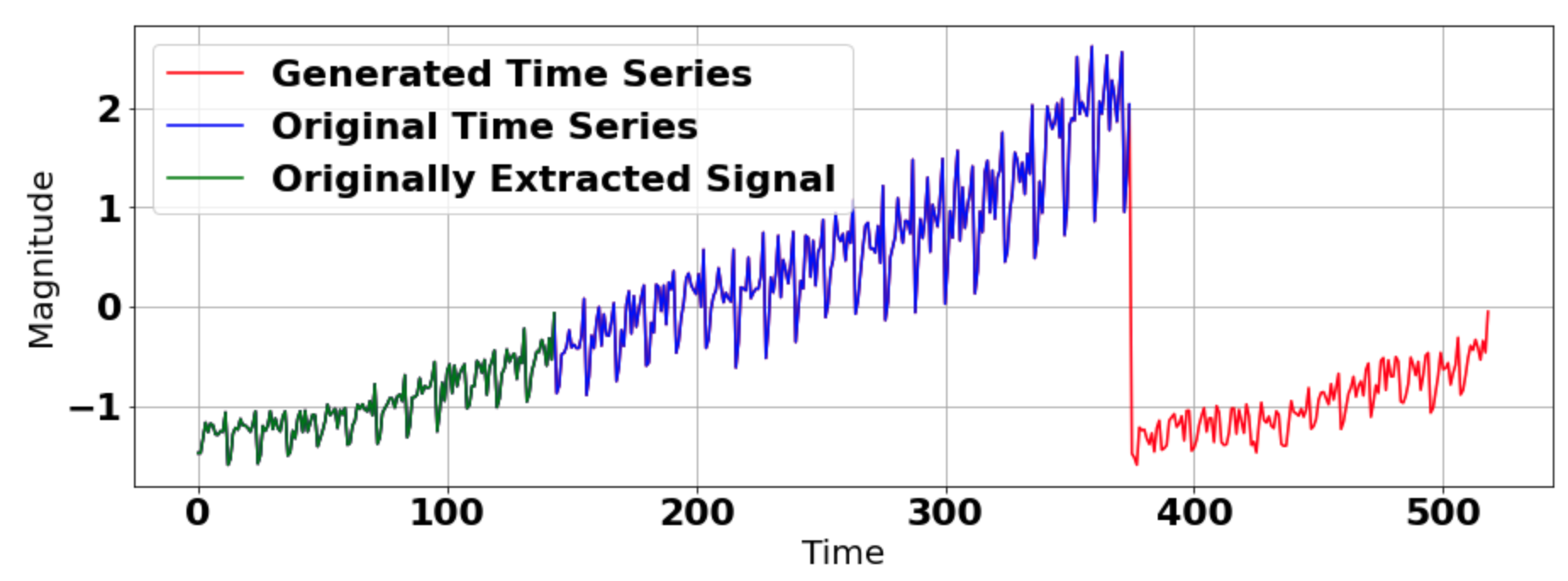}}
 \hspace{0.01\textwidth} 
  \subfigure[Efficiency verification of \textit{StiefelGen} as a signal augmentor through an overplot of the generated signal with the original portion of the alcohol data set.]{\includegraphics[width=0.48\textwidth]
  {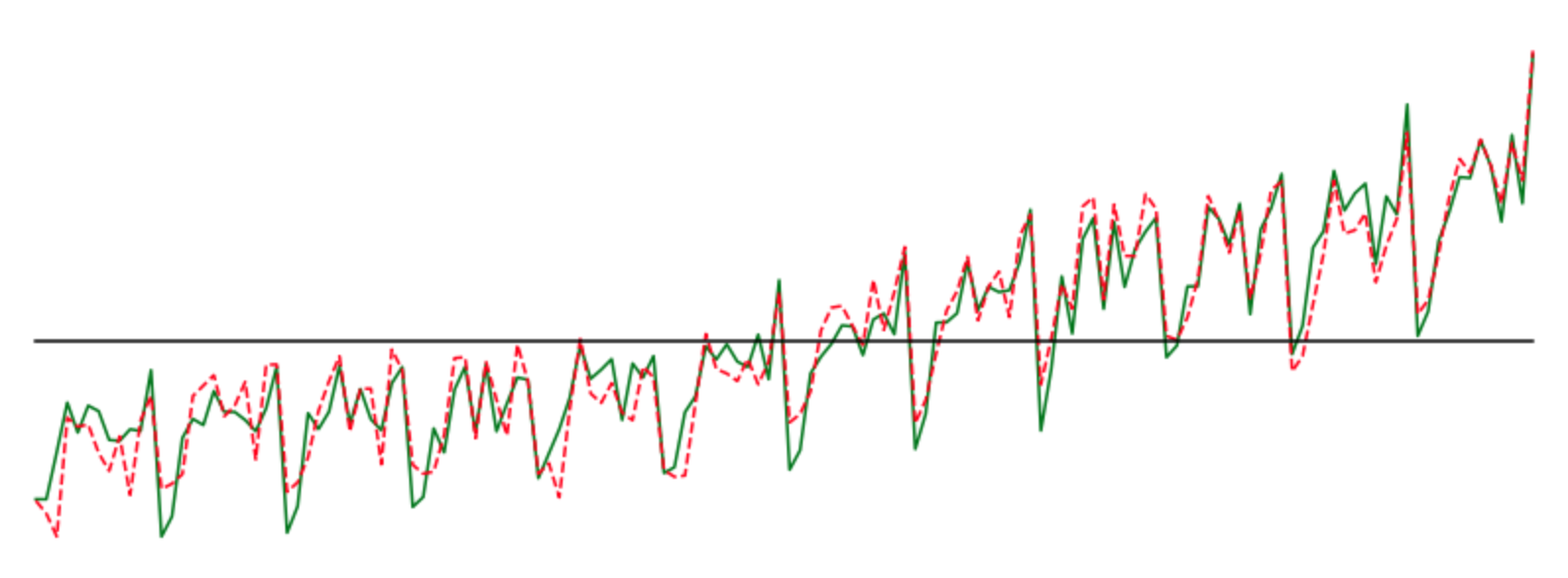}}
    \caption{Detailed investigation illustrating the careful use of \textit{StiefelGen} in forecasting problems.}
  \label{fig:alcohol}
\end{figure}

However, recall that in Section \ref{sec:UQ_DMD}, we demonstrated the capability of \textit{StiefelGen} to act as an intermediary for spatio-temporal signal forecasting. Notably, the dynamic mode decomposition (DMD) algorithm employs the singular value decomposition (SVD) in its physics loop. Thus, it is through this intermediate step that \textit{StiefelGen} seamlessly integrates to perturb the generative signal physics. Innovations building upon the na\"ive DMD algorithm presented in this paper, which utilize the SVD in its loop, suggest that \textit{StiefelGen} (or at least the principles behind its approach) could potentially serve as an invaluable tool to assist in the data augmentation of spatio-temporal signal forecasting, allowing for an additional layer of epistemic uncertainty quantification through functional data analysis across various problem settings \cite{kutz2016multiresolution, kutz2016dynamic}.

\textbf{\textit{StiefelGen} and Image Data}

In the realm of data augmentation, the spotlight has typically been shone on image data transformations. However, as this paper delves into the realm of time series data augmentation, a pertinent question emerges regarding the suitability of \textit{StiefelGen} for image data. Given that the page matrix is inherently a matrix, the theoretical possibility exists. However, practical considerations suggest that \textit{StiefelGen} may not outshine faster more suited alternatives, such as adversarial patches, hue and color changes, cropping, resizing, and rotation among others. These alternatives not only operate swiftly but also align with the natural spatial characteristics of image data.

It's crucial to note that the current implementation of the \textit{StiefelGen} algorithm operates independently along the row and column axes, a deliberate design choice enabled by the SVD's ability to separate row and column spaces. However, this independence contrasts with the spatial awareness required for image data, where the row and column surrounding a pixel should not be treated independently. To intuit this point further, from a deep learning perspective, working with images through convolutional layers is often more desirable than reshaping all images into 1D signals and stacking them, as most spatial awareness becomes lost in the process.

An illustrative example of the impact of row-column independence on applying Stiefel manifold constraints to image data is depicted in Figure \ref{fig:mnist}. The smooth transformation of the number zero into the number two over ten steps showcases the challenge, as the image quickly loses coherence and blurs between zero and two. While this unique ``blurring'' effect might eventually find applications in image data augmentation, the abundance of alternative, simple, interpretable, and spatially aware approaches questions the need to integrate \textit{StiefelGen} into the augmentation pipeline for image data in its current implementation.

\begin{figure}[H]
\centering
{\includegraphics[width=0.6\textwidth]{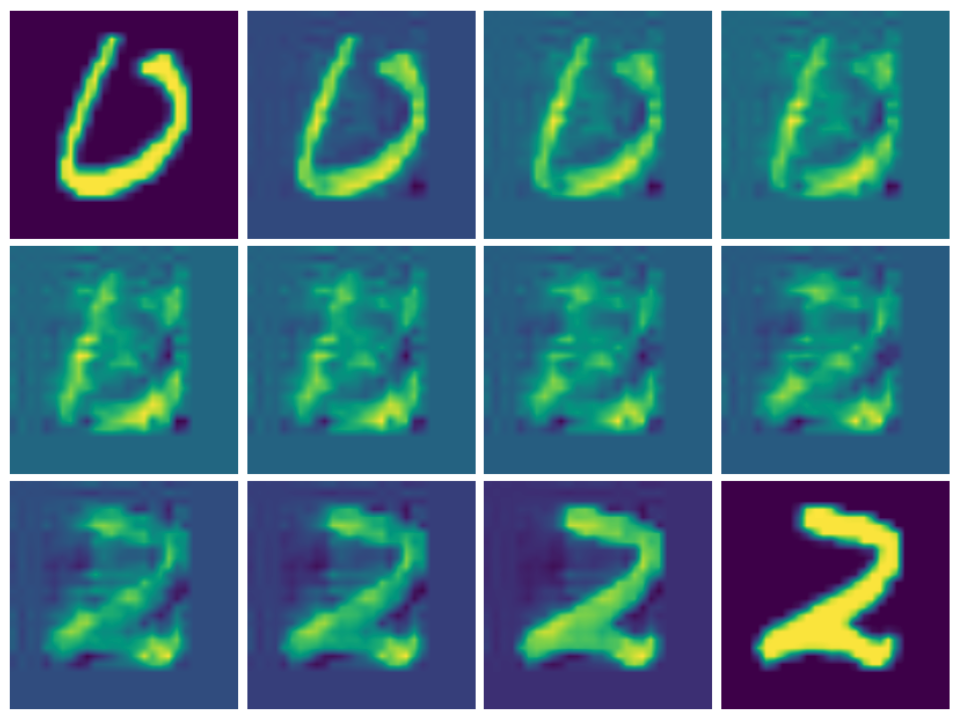}}
\caption{Following the Stiefel geodesic for two MNIST samples.}
\label{fig:mnist}
\end{figure}

On the contrary, a closely related mathematical structure, the Grassmann manifold, has found utility in image-based data due to its compatibility with symmetry. While briefly introduced in Section \ref{sec:background_math}, the Grassmann manifold will be slightly further explored in Section \ref{app_sec:Understanding_Grassmann}, as it holds a close relationship with the Stiefel manifold, acting as a defining mechanism through which the Stiefel canonical metric can be formed, and since it serves as a foundational concept for understanding matrix geometry as a whole.

In essence, the Grassmann manifold represents linear subspaces, distinguishing it from the Stiefel manifold, which instantiates basis vectors on a linear subspace. In essence, the Stiefel manifold corresponds to a ``point on a line'', whereas the Grassmann manifold represents ``the entire line'', collapsing all points on the line into a single equivalent entity. This collapse can be formulated as a mathematical equivalence relation, which ultimately proves useful for encoding shapes with general affine symmetry (such as rotational symmetry) as equivalent objects.

An application of the Grassmann manifold in image data is evident in Figure \ref{fig:turaga}, where shapes with rotational symmetry are treated as equivalent objects. In a study by Turaga et al. \cite{turaga2008statistical}, shapes from the MPEG-7 database were subjected to additive noise, and parametric Langevin distributions were trained on the Grassmann manifold for each class. Samples from this distribution represented subspaces in the form of projection matrices, and actual shapes were generated by selecting a 2-frame for the subspace using SVD of the projection matrix (Stiefel manifold). Random coordinates within the 2-frame facilitated symmetric shape generation.

\begin{figure}[H]
\centering
{\includegraphics[width=0.6\textwidth]{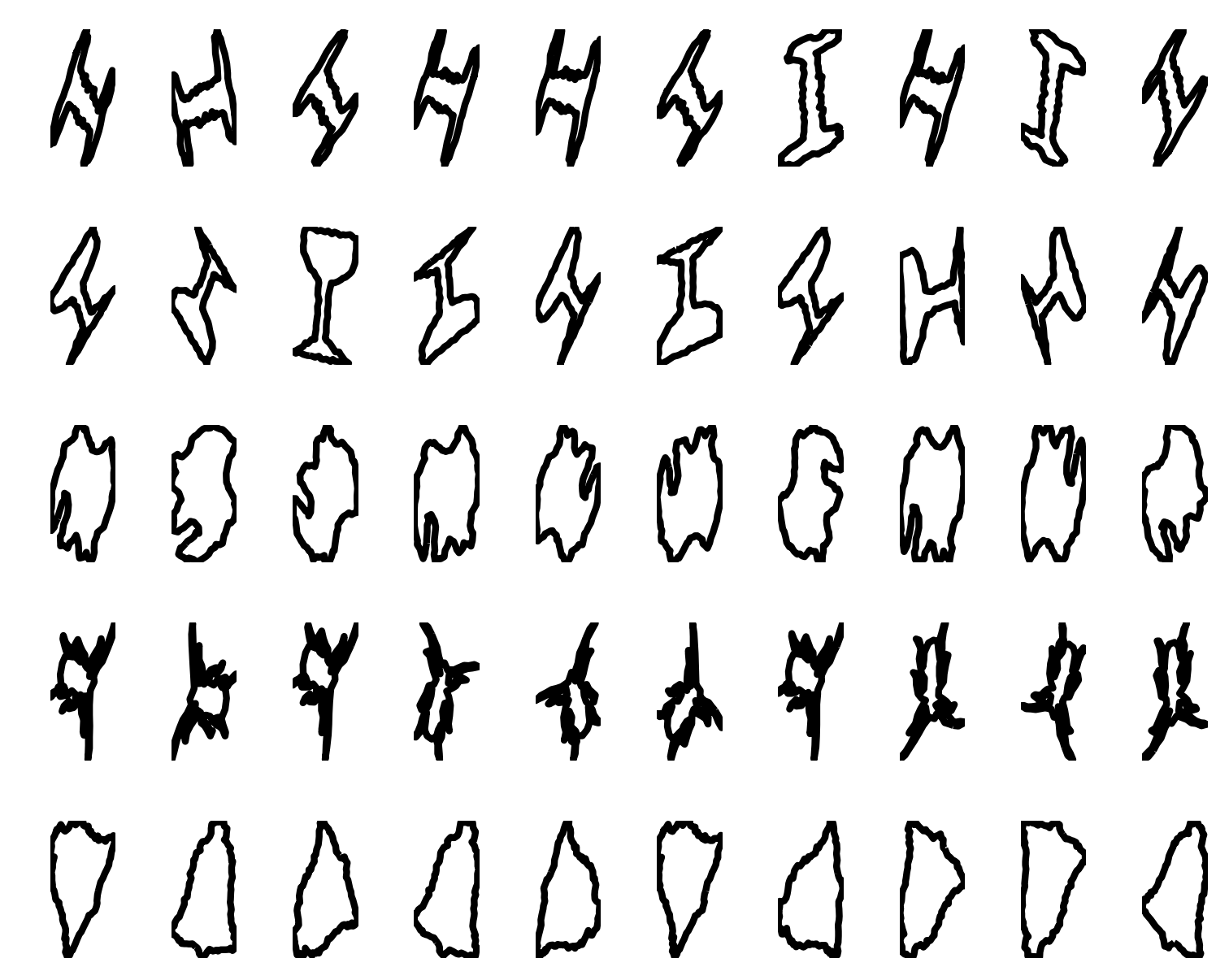}}
\caption{Samples drawn from a class conditional density using an MPEG dataset.}
\label{fig:turaga}
\end{figure}

Thus, while the Stiefel manifold excels in representing a specific orthonormal basis, suitable for scenarios where the basis is crucial for model representation and comparison, the Grassmann manifold shines when the focus is on the invariant or equivalence class of subspaces. The Grassmann abstraction proves more appropriate when the subspace or column space remains invariant regardless of the chosen basis vectors. This distinction also makes Grassmann manifolds natural candidates for applications involving spatio-temporal data modeling, shape analysis, and online visual learning in computer vision. For example, in spatio-temporal dynamical models, the Grassmann manifold is instrumental in comparing different models using subspace angles between observability matrices, complementing the Stiefel manifold's role in representing model parameters \cite{turaga2008statistical}.

Ultimately, the choice between the Grassmann and Stiefel manifolds depends on the nature of the data and the analytical goals at hand. Each manifold presents a unique set of advantages and is a valuable tool in the arsenal of geometric data analysis.

\newpage
\subsection{Full Table of Values: LSTM Experiment}\label{app_sec:LSTM_Table}

This subsection provides the quantitative summary of all the statistical values used to generate Figure \ref{fig:NN_results}. The bold values in each row indicate the experimental run which received the highest testing score for that row (which is often strongly correlated with the highest training scores for that row as well). It is clear that every single experimental augmentation setting outperformed the corresponding non-augmented experiment by a very large margin. The largest improvement was observed in the experiment with the experiment in Table \ref{tab:30times} which used a perturbation level of 10\% and the first $N=30$ data elements per class for the augmentation procedure (which coincidentally is all the available data), and then perturbing this 30 times over per class. This resulted in a mean training and test score of 96.35\% and 93.55\% respectively. This is more than double the test score of the vanilla, non-augmented model which had its best result occur similarly at $N=30$, with the training score being 50.02\%, and its testing score of 41.53\%.

\begin{table}[htbp]
\centering
\caption{The result of using the first $N$ elements from the each class of the data set, where $N=[5,10,15,20,25,30]$. This is the base reference set of training and testing accuracies which have not received any data augmentation.}
\label{tab:times1}
\resizebox{\textwidth}{!}{%
\begin{tabular}{cccccccc}
\hline
\textbf{Pert. Level {[}\%{]}} & \textbf{Accuracy {[}\%{]}} & \textbf{$5(\times 1)$} & \textbf{$10(\times 1)$} & \textbf{$15(\times 1)$} & \textbf{$20(\times 1)$} & \textbf{$25(\times 1)$} & \textbf{$30(\times 1)$} \\ \hline
\multirow{2}{*}{\textbf{0}} & \textbf{Train} & 32.78$\pm$1.45 & 38.67$\pm$1.83 & 42.26$\pm$2.43 & 49.17$\pm$1.95 & 49.13$\pm$2.40 & \textbf{50.02$\pm$2.22} \\
 & \textbf{Test} & 22.42$\pm$1.33 & 32.64$\pm$1.39 & 38.39$\pm$1.87 & 40.77$\pm$1.86 & 39.76$\pm$2.03 & \textbf{41.53$\pm$2.33} \\ \hline
\end{tabular}%
}
\end{table}

\begin{table}[htbp]
\centering
\caption{Results of applying \textit{StiefelGen} with various perturbation levels ($[5\%, 10\%, 15\%, 30\%]$, across various amounts of data taken from the dataset for generation $[5,10,15,20,25,30]$ for augmentation, assuming 5 times the amount of data is generated.}
\resizebox{\textwidth}{!}{%
\begin{tabular}{cccccccc}

\hline
\textbf{Pert. Level {[}\%{]}} & \textbf{Accuracy {[}\%{]}} & \textbf{$5(\times 5)$} & \textbf{$10(\times 5)$} & \textbf{$15(\times 5)$} & \textbf{$20(\times 5)$} & \textbf{$25(\times 5)$} & \textbf{$30(\times 5)$} \\ \hline
\multirow{2}{*}{\textbf{5}} & \textbf{Train} & 55.04$\pm$2.14 & 61.48$\pm$3.14 & \textbf{79.82$\pm$2.26} & 67.58$\pm$3.94 & 75.45$\pm$2.29 & 73.09$\pm$4.55 \\
 & \textbf{Test} & 33.38$\pm$1.65 & 45.54$\pm$2.37 & \textbf{71.80$\pm$2.14} & 58.82$\pm$3.72 & 69.07$\pm$2.70 & 69.39$\pm$3.77 \\ \hline
\multirow{2}{*}{\textbf{10}} & \textbf{Train} & 49.58$\pm$2.98 & 61.31$\pm$3.76 & \textbf{79.33$\pm$2.57} & 66.78$\pm$3.98 & 73.27$\pm$3.15 & 68.78$\pm$3.70 \\
 & \textbf{Test} & 31.73$\pm$1.75 & 47.99$\pm$2.90 & \textbf{72.51$\pm$2.32} & 62.96$\pm$4.08 & 67.99$\pm$3.01 & 66.66$\pm$3.82 \\ \hline
\multirow{2}{*}{\textbf{15}} & \textbf{Train} & 50.93$\pm$3.45 & 59.46$\pm$2.19 & 74.36$\pm$2.86 & 64.49$\pm$3.03 & 74.58$\pm$2.83 & \textbf{78.61$\pm$2.30} \\
 & \textbf{Test} & 31.19$\pm$1.74 & 47.46$\pm$1.81 & 70.30$\pm$2.72 & 64.15$\pm$3.27 & 73.14$\pm$2.58 & \textbf{78.07$\pm$2.29} \\ \hline
\multirow{2}{*}{\textbf{30}} & \textbf{Train} & 53.91$\pm$3.17 & 48.48$\pm$3.13 & 68.96$\pm$1.44 & 65.31$\pm$1.97 & 67.22$\pm$2.65 & \textbf{69.60$\pm$1.92} \\
 & \textbf{Test} & 34.15$\pm$1.48 & 41.05$\pm$2.86 & 66.80$\pm$1.67 & 65.68$\pm$2.61 & 67.18$\pm$3.14 & \textbf{70.70$\pm$1.90} \\ \hline
\end{tabular}%
}
\label{tab:5times}
\end{table}

\begin{table}[htbp]
\centering
\caption{Results of applying \textit{StiefelGen} with various perturbation levels ($[5\%, 10\%, 15\%, 30\%]$, across various amounts of data taken from the dataset for generation $[5,10,15,20,25,30]$ for augmentation, assuming 10 times the amount of data is generated.}
\resizebox{\textwidth}{!}{%
\begin{tabular}{cccccccc}
\hline
\textbf{Pert. Level {[}\%{]}} & \textbf{Accuracy {[}\%{]}} & \textbf{$5(\times 10)$} & \textbf{$10(\times 10)$} & \textbf{$15(\times 10)$} & \textbf{$20(\times 10)$} & \textbf{$25(\times 10)$} & \textbf{$30(\times 10)$} \\ \hline
\multirow{2}{*}{\textbf{5}} & \textbf{Train} & 63.39$\pm$3.66 & 85.22$\pm$2.14 & 77.24$\pm$5.26 & \textbf{84.41$\pm$4.18} & 72.50$\pm$3.88 & 82.39$\pm$4.71 \\
 & \textbf{Test} & 39.80$\pm$2.20 & 65.66$\pm$2.80 & 70.16$\pm$4.79 & \textbf{79.14$\pm$3.64} & 66.97$\pm$4.13 & 78.89$\pm$4.66 \\ \hline
\multirow{2}{*}{\textbf{10}} & \textbf{Train} & 63.19$\pm$2.46 & 80.43$\pm$3.26 & 81.37$\pm$1.72 & 79.02$\pm$2.82 & 79.12$\pm$4.58 & \textbf{87.77$\pm$1.86} \\
 & \textbf{Test} & 39.69$\pm$2.09 & 63.31$\pm$3.27 & 77.66$\pm$1.41 & 76.01$\pm$2.58 & 73.84$\pm$4.82 & \textbf{84.54$\pm$2.04} \\ \hline
\multirow{2}{*}{\textbf{15}} & \textbf{Train} & 61.12$\pm$2.20 & 80.14$\pm$2.97 & 85.63$\pm$2.05 & 81.93$\pm$2.88 & 85.50$\pm$3.34 & \textbf{86.40$\pm$2.43} \\
 & \textbf{Test} & 41.35$\pm$1.82 & 64.97$\pm$2.89 & 81.23$\pm$1.92 & 78.77$\pm$3.14 & 82.81$\pm$3.50 & \textbf{83.86$\pm$2.80} \\ \hline
\multirow{2}{*}{\textbf{30}} & \textbf{Train} & 52.02$\pm$2.37 & 71.56$\pm$3.06 & 73.16$\pm$2.10 & 68.24$\pm$3.56 & 72.23$\pm$3.58 & \textbf{79.53$\pm$2.06} \\
 & \textbf{Test} & 36.81$\pm$1.82 & 57.57$\pm$2.55 & 74.11$\pm$2.26 & 69.27$\pm$4.08 & 74.77$\pm$3.86 & \textbf{81.01$\pm$2.33} \\ \hline
\end{tabular}%
}

\label{tab:10times}
\end{table}

\begin{table}[htbp]
\centering
\caption{Results of applying \textit{StiefelGen} with various perturbation levels ($[5\%, 10\%, 15\%, 30\%]$, across various amounts of data taken from the dataset for generation $[5,10,15,20,25,30]$ for augmentation, assuming 15 times the amount of data is generated.}
\resizebox{\textwidth}{!}{%
\begin{tabular}{cccccccc}
\hline
\textbf{Pert. Level {[}\%{]}} & \textbf{Accuracy {[}\%{]}} & \textbf{$5(\times 15)$} & \textbf{$10(\times 15)$} & \textbf{$15(\times 15)$} & \textbf{$20(\times 15)$} & \textbf{$25(\times 15)$} & \textbf{$30(\times 15)$} \\ \hline
\multirow{2}{*}{\textbf{5}} & \textbf{Train} & 88.53$\pm$1.40 & 88.53$\pm$4.04 & 90.69$\pm$2.22 & 85.86$\pm$3.20 & 87.00$\pm$2.97 & \textbf{88.92$\pm$3.06} \\
 & \textbf{Test} & 55.57$\pm$2.05 & 72.95$\pm$3.07 & 83.07$\pm$1.86 & 79.54$\pm$3.02 & 82.92$\pm$2.71 & \textbf{85.66$\pm$2.97} \\ \hline
\multirow{2}{*}{\textbf{10}} & \textbf{Train} & 87.79$\pm$2.63 & 91.21$\pm$1.94 & \textbf{92.45$\pm$2.16} & 89.30$\pm$3.71 & 84.71$\pm$3.93 & 87.92$\pm$2.84 \\
 & \textbf{Test} & 52.01$\pm$2.42 & 75.39$\pm$1.68 & \textbf{85.91$\pm$1.74} & 84.88$\pm$3.78 & 82.12$\pm$3.73 & 85.16$\pm$3.04 \\ \hline
\multirow{2}{*}{\textbf{15}} & \textbf{Train} & 88.13$\pm$1.44 & 88.27$\pm$1.90 & 87.95$\pm$2.16 & 83.27$\pm$3.12 & \textbf{91.61$\pm$1.36} & 83.68$\pm$3.77 \\
 & \textbf{Test} & 56.31$\pm$2.63 & 71.76$\pm$2.52 & 84.09$\pm$1.46 & 79.43$\pm$3.08 & \textbf{88.82$\pm$1.25} & 83.27$\pm$3.31 \\ \hline
\multirow{2}{*}{\textbf{30}} & \textbf{Train} & 88.27$\pm$1.64 & 84.53$\pm$1.45 & 80.90$\pm$2.32 & 78.09$\pm$2.91 & 75.22$\pm$2.49 & \textbf{76.64$\pm$3.02} \\
 & \textbf{Test} & 52.43$\pm$1.64 & 68.91$\pm$1.72 & 79.46$\pm$2.28 & 78.69$\pm$3.29 & 75.38$\pm$2.92 & \textbf{80.34$\pm$2.66} \\ \hline
\end{tabular}%
}
\label{tab:15times}
\end{table}

\begin{table}[htbp]
\centering
\caption{Results of applying \textit{StiefelGen} with various perturbation levels ($[5\%, 10\%, 15\%, 30\%]$, across various amounts of data taken from the dataset for generation $[5,10,15,20,25,30]$ for augmentation, assuming 20 times the amount of data is generated.}
\resizebox{\textwidth}{!}{%
\begin{tabular}{cccccccc}
\hline
\textbf{Pert. Level {[}\%{]}} & \textbf{Accuracy {[}\%{]}} & \textbf{$5(\times 20)$} & \textbf{$10(\times 20)$} & \textbf{$15(\times 20)$} & \textbf{$20(\times 20)$} & \textbf{$25(\times 20)$} & \textbf{$30(\times 20)$} \\ \hline
\multirow{2}{*}{\textbf{5}} & \textbf{Train} & 88.68$\pm$2.19 & 94.61$\pm$1.45 & 89.26$\pm$3.98 & \textbf{94.28$\pm$2.14} & 89.44$\pm$3.29 & 82.52$\pm$4.10 \\
 & \textbf{Test} & 57.05$\pm$2.81 & 77.22$\pm$1.74 & 81.23$\pm$3.58 & \textbf{88.28$\pm$1.72} & 84.36$\pm$3.19 & 78.66$\pm$4.22 \\ \hline
\multirow{2}{*}{\textbf{10}} & \textbf{Train} & 92.33$\pm$2.25 & 95.45$\pm$1.47 & 92.92$\pm$1.78 & 85.67$\pm$4.23 & \textbf{91.09$\pm$1.95} & 88.95$\pm$2.20 \\
 & \textbf{Test} & 61.20$\pm$1.82 & 77.31$\pm$1.57 & 85.32$\pm$1.44 & 80.78$\pm$4.42 & \textbf{87.04$\pm$1.91} & 85.92$\pm$2.23 \\ \hline
\multirow{2}{*}{\textbf{15}} & \textbf{Train} & 84.57$\pm$3.11 & 92.15$\pm$1.79 & 90.39$\pm$1.70 & 85.23$\pm$2.79 & 81.85$\pm$3.29 & \textbf{92.99$\pm$2.02} \\
 & \textbf{Test} & 56.85$\pm$2.47 & 74.78$\pm$2.07 & 84.31$\pm$1.92 & 82.70$\pm$2.38 & 79.34$\pm$3.62 & \textbf{90.77$\pm$2.18} \\ \hline
\multirow{2}{*}{\textbf{30}} & \textbf{Train} & 84.48$\pm$2.57 & 87.91$\pm$1.29 & 86.87$\pm$1.36 & 85.56$\pm$1.79 & 82.22$\pm$2.24 & \textbf{88.07$\pm$1.28} \\
 & \textbf{Test} & 51.23$\pm$2.83 & 70.70$\pm$1.66 & 84.66$\pm$1.29 & 84.18$\pm$2.19 & 83.51$\pm$2.41 & \textbf{89.66$\pm$1.18} \\ \hline
\end{tabular}%
}
\label{tab:20times}
\end{table}

\begin{table}[htbp]
\centering
\caption{Results of applying \textit{StiefelGen} with various perturbation levels ($[5\%, 10\%, 15\%, 30\%]$, across various amounts of data taken from the dataset for generation $[5,10,15,20,25,30]$ for augmentation, assuming 25 times the amount of data is generated.}
\label{tab:25times}
\resizebox{\textwidth}{!}{%
\begin{tabular}{cccccccc}
\hline
\textbf{Pert. Level {[}\%{]}} & \textbf{Accuracy {[}\%{]}} & \textbf{$5(\times 25)$} & \textbf{$10(\times 25)$} & \textbf{$15(\times 25)$} & \textbf{$20(\times 25)$} & \textbf{$25(\times 25)$} & \textbf{$30(\times 25)$} \\ \hline
\multirow{2}{*}{\textbf{5}} & \textbf{Train} & 93.91$\pm$1.75 & 93.42$\pm$2.37 & \textbf{96.86$\pm$1.12} & 94.12$\pm$2.17 & 81.53$\pm$4.23 & 88.95$\pm$4.53 \\
 & \textbf{Test} & 60.85$\pm$1.85 & 78.19$\pm$1.98 & \textbf{89.14$\pm$1.32} & 88.03$\pm$1.78 & 75.49$\pm$4.48 & 84.65$\pm$4.37 \\ \hline
\multirow{2}{*}{\textbf{10}} & \textbf{Train} & 93.63$\pm$2.33 & 93.16$\pm$1.95 & 93.07$\pm$1.85 & 90.99$\pm$2.33 & 93.21$\pm$1.50 & \textbf{94.66$\pm$2.06} \\
 & \textbf{Test} & 62.62$\pm$2.45 & 75.86$\pm$2.36 & 87.42$\pm$1.56 & 85.34$\pm$2.54 & 89.30$\pm$1.74 & \textbf{91.20$\pm$2.03} \\ \hline
\multirow{2}{*}{\textbf{15}} & \textbf{Train} & 93.33$\pm$1.55 & 95.33$\pm$1.57 & 93.59$\pm$1.36 & 93.16$\pm$1.30 & \textbf{95.02$\pm$0.97} & 90.26$\pm$2.21 \\
 & \textbf{Test} & 61.00$\pm$2.59 & 78.35$\pm$2.20 & 87.15$\pm$1.42 & 89.70$\pm$1.27 & \textbf{92.18$\pm$1.21} & 88.65$\pm$1.97 \\ \hline
\multirow{2}{*}{\textbf{30}} & \textbf{Train} & 94.04$\pm$1.05 & 88.90$\pm$1.65 & 89.87$\pm$1.56 & 89.64$\pm$1.11 & \textbf{89.65$\pm$0.85} & 87.40$\pm$1.17 \\
 & \textbf{Test} & 56.11$\pm$2.04 & 74.22$\pm$1.55 & 85.18$\pm$1.67 & 88.77$\pm$0.84 & \textbf{90.93$\pm$0.66} & 89.19$\pm$0.97 \\ \hline
\end{tabular}%
}
\end{table}

\begin{table}[htbp]
\centering
\caption{Results of applying \textit{StiefelGen} with various perturbation levels ($[5\%, 10\%, 15\%, 30\%]$, across various amounts of data taken from the dataset for generation $[5,10,15,20,25,30]$ for augmentation, assuming 30 times the amount of data is generated.}
\label{tab:30times}
\resizebox{\textwidth}{!}{%
\begin{tabular}{cccccccc}
\hline
\textbf{Pert. Level {[}\%{]}} & \textbf{Accuracy {[}\%{]}} & \textbf{$5(\times 30)$} & \textbf{$10(\times 30)$} & \textbf{$15(\times 30)$} & \textbf{$20(\times 30)$} & \textbf{$25(\times 30)$} & \textbf{$30(\times 30)$} \\ \hline
\multirow{2}{*}{\textbf{5}} & \textbf{Train} & 94.79 $\pm$1.70 & 93.94$\pm$3.04 & \textbf{97.14$\pm$1.08} & 87.02$\pm$4.49 & 93.76$\pm$2.43 & 88.88$\pm$2.76 \\
 & \textbf{Test} & 61.12$\pm$2.12 & 76.53$\pm$3.06 & \textbf{88.49$\pm$1.43} & 80.62$\pm$4.29 & 88.05$\pm$2.02 & 85.20$\pm$2.61 \\ \hline
\multirow{2}{*}{\textbf{10}} & \textbf{Train} & 93.73$\pm$2.11 & 94.98$\pm$1.46 & 95.60$\pm$1.58 & 89.96$\pm$3.15 & 94.28$\pm$1.84 & \textbf{96.35$\pm$1.01} \\
 & \textbf{Test} & 61.39$\pm$2.58 & 78.50$\pm$1.74 & 88.74$\pm$1.63 & 84.23$\pm$3.71 & 90.61$\pm$1.76 & \textbf{93.55$\pm$0.77} \\ \hline
\multirow{2}{*}{\textbf{15}} & \textbf{Train} & 92.28$\pm$2.22 & 96.79$\pm$1.27 & 94.24$\pm$1.71 & 91.29$\pm$4.44 & \textbf{94.09$\pm$1.43} & 93.16$\pm$1.49 \\
 & \textbf{Test} & 61.97$\pm$2.56 & 80.49$\pm$2.10 & 89.46$\pm$1.13 & 87.35$\pm$4.25 & \textbf{91.43$\pm$1.48} & 90.49$\pm$1.72 \\ \hline
\multirow{2}{*}{\textbf{30}} & \textbf{Train} & 93.67$\pm$1.57 & 91.20$\pm$1.38 & 88.14$\pm$1.65 & 88.01$\pm$2.29 & 87.68$\pm$1.57 & \textbf{90.47$\pm$0.88} \\
 & \textbf{Test} & 59.42$\pm$2.22 & 73.43$\pm$1.73 & 85.14$\pm$1.32 & 87.61$\pm$2.61 & 88.09$\pm$1.31 & \textbf{91.14$\pm$0.83} \\ \hline
\end{tabular}%
}
\end{table}

\newpage
\subsection{Understanding the Stiefel Manifold} \label{app_sec:Understanding_Stiefel}

This subsection serves as an introductory point to matrix manifold theory, intending to provide a clearer understanding of the central themes explored in the paper. While it does not introduce much in the way of information, its purpose is to assist newcomers to the field as a whole. Due to it not being as widely used as other geometric structures, the subject of matrix manifolds can appear fragmented to those unfamiliar with it. Moreover, the intricate ideas from differentiable geometry may occasionally be challenging due to its extensive use of mathematical symbols and high dimensional abstractions. Our goal is to address this by consolidating fundamental concepts from various resources on matrix manifold theory, presenting them in a slightly more accessible manner. Additionally, we strive to incorporate more details and explanations in some of the mathematical derivations. The following subsections reflect a general consolidation of introductory information found in \cite{absil2008optimization, zimmermann2022computing, zimmermann2017matrix, zimmermann2019manifold, TagareNotes, bendokat2024grassmann, absil2004riemannian, baralic2011understand, guigui2023introduction}

\subsubsection{The Fundamentals: Riemannian Geometry} \label{app_sec:Riemannian_Intro}

A manifold, is a topological space that resembles a Euclidean space locally, but may have a more complicated global structure. Formally, it is a space that is locally homeomorphic (that is, it has a bijective, continuous mapping), to a Euclidean space. Broadly speaking, manifolds can be classified into different types based on additional structural requirements imposed upon them.

\begin{itemize}
    \item \textbf{Manifold: }A topological space that, locally around each point, looks like Euclidean space. It can be either smooth or not. If it is smooth, it means that it comes equipped with a smooth structure, allowing for the definition of differentiable functions on it, which leads onto differentiable manifolds.

    \item \textbf{Differentiable Manifold: }A manifold equipped with a smooth structure, meaning that it allows the definition of differentiable functions on it. Locally, around each point, the manifold is homeomorphic to an open set in Euclidean space, and the transition maps (a manner in which to compare these local Euclidean maps) between these local charts are themselves smooth. 

    \item \textbf{Riemannian Manifold :}A differentiable manifold equipped with a Riemannian metric. A Riemannian metric is a smoothly varying inner product structure defined on the tangent space at each point of the manifold. It essentially adds an additional layer of structure over the differentiable manifold, allowing more intuitive notions of geometry to be studied, such as, the measurement of distances, angles, and the definition of a notion of curvature on the manifold. 
\end{itemize}

Let's formalize this concept mathematically by introducing a differentiable manifold denoted as \(\mathcal{M}\). At any specific point \(p\in\mathcal{M}\), we can define a local tangent space, \(\mathcal{T}_p \mathcal{M}\), often referred to as the tangent plane at \(p\). This tangent space, \(\mathcal{T}_p \mathcal{M}\), serves as a vector space that offers a linear approximation to the manifold at that particular point. Essentially, it captures the local behavior of curves and vectors in the vicinity of \(p\).

In the study of differentiable manifolds, the focus often shifts to Riemannian manifolds, where a preferred metric tensor, denoted as \(g_p : \mathcal{T}_p \mathcal{M} \times \mathcal{T}_p \mathcal{M} \rightarrow \mathbb{R}\), comes into play. This metric tensor is a symmetric, positive definite, and smooth (differentiable) tensor field of rank 2, effectively representing a matrix. Notably, \(g_p\) depends on the specific point \(p\), leading to variations over \(\mathcal{M}\) based on location. It associates, at each point on the manifold, an inner product to every pair of tangent vectors local to \(\mathcal{M}\) at \(p\). While, in a more abstract sense, discussions often omit the explicit mention of the point \(p\) for notational simplicity. This crucial information desribing a Riemannian manifold is conventionally summarized as a tuple: \((\mathcal{M}, g)\).

Given the construction of a differentiable manifold, \(\mathcal{M}\), which allows for the placement of smooth curves, the introduction of a metric tensor \(g\) facilitates the analysis of these curves. It is worth noting that \(g\) plays a role in generalizing distances and angles over \(\mathcal{M}\). Consequently, this analysis leads to the definition of \textit{geodesics} over the corresponding Riemannian manifold, \((\mathcal{M}, g)\). These geodesics are determined by minimizing the functional:

\begin{align}
\mathcal{L} = \int_0^1 \|\dot{\gamma}(t)\|_g dt,
\end{align}

subject to an autoparallel condition:

\begin{align}
\nabla_{\dot{\gamma}}\dot{\gamma} = 0. \label{eqn:autoparallel}
\end{align}

In this context, the symbol \(\dot{\gamma}(t)\) represents the velocity, or the first derivative, of a smooth curve \(\gamma(t)\). The norm notation \(\|\cdot \|_g : T_p\mathcal{M} \rightarrow \mathbb{R}\) signifies the induced norm from the metric tensor \(g\) on the tangent space \(T_p\mathcal{M}\). Notice that there is also an additional geometric structure, denoted as \(\nabla\), introduced through Equation \ref{eqn:autoparallel}. Formally, \(\nabla\) defines a \textit{connection} over the manifold \((\mathcal{M}, g)\). In Riemannian geometry, a connection is a mathematical tool that facilitates the differentiation of vector fields along curves on a smooth manifold. It is an abstract concept employed to extend the notion of differentiation from Euclidean space to more generalized curved spaces.

By imposing certain assumptions and structure on \(\nabla\), particularly satisfying the Leibniz rule given by $
\nabla_X (\gamma Y) = (\gamma \nabla X) \cdot Y  +f\nabla_X Y$\footnote{This can be interpreted as the locally linear rate of change of \(Y\), scaled by the function \(\gamma\), in the direction of the vector field \(X\) (at a point \(p\) on the manifold \(\mathcal{M}\).}, for a smooth function \(\gamma\), and vector fields \(X\), and \(Y\), the concept of the \textit{covariant derivative} arises. This derivative serves as a generalization of the traditional ``directional derivative'' and notably does not explicitly depend on the metric tensor \(g\), remaining an abstract concept.

Further conditions, such as symmetry (\(\nabla_X Y = \nabla_Y X\)), a torsion-free application of \(\nabla\), and metric compatibility (\(\nabla_X g = 0\), implying that the inner product of any two vectors remains constant when parallel transported along curves on the manifold), lead to the establishment of the special Levi-Civita connection. This connection holds significance due to the uniqueness property it imposes on \(\nabla\) and provides a canonical method for comparing tangent vectors across different tangent spaces. While geodesics typically require only the condition of autoparallelism (Equation \ref{eqn:autoparallel}), stating that a tangent vector, when moved along the direction of itself, does not change, the additional metric compatibility condition (\(\nabla_X g = 0\)) ensures that these geodesics are indeed \textit{length-minimizing}. This property proves valuable in various practical settings. Figure \ref{fig:autoparallel} aids in consolidating these concepts.

\begin{figure}[htbp]
\centering
{\includegraphics[width=1.15\textwidth]{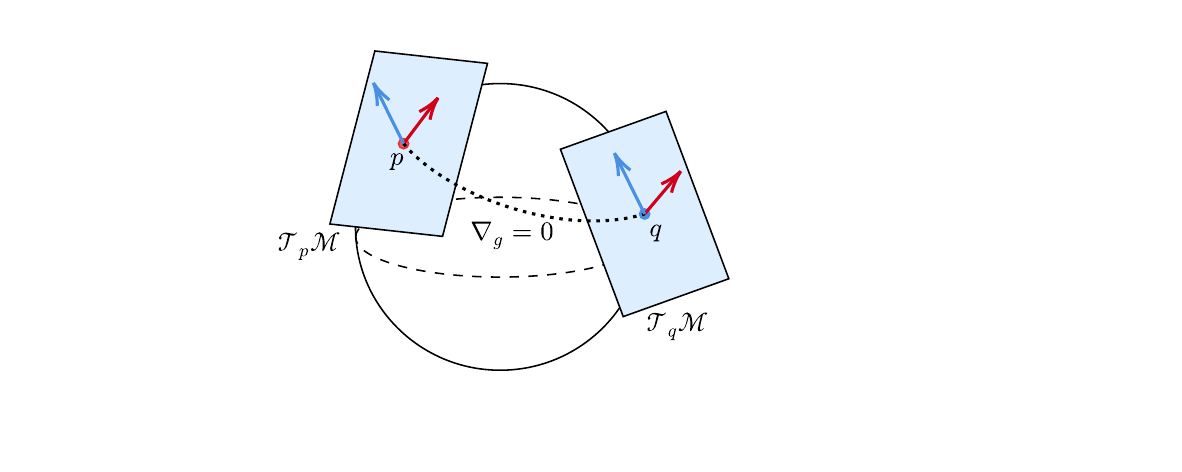}}
\caption{How to ``connect'' two separate tangent spaces together, $\mathcal{T}_p\mathcal{M}$ and $\mathcal{T}_q\mathcal{M}$ at points $p$ and $q$, by defining affine properties for the connection $\nabla$. The metric compatibility condition $\nabla g = 0$, ensures that geodesics (curves traced out via autoparallelism $\nabla_{\dot{\gamma}} \dot{\gamma}=0$ are length minimizing.}
\label{fig:autoparallel}
\end{figure}

It's crucial to note that, thus far, no coordinate system has been established over \(\mathcal{M}\), requiring an additional layer of construction for the implementation of a coordinate system. This step influences the resulting equations governing geodesics. For instance, in 2D Euclidean space, two prevalent coordinate systems exist: the canonical Cartesian system parameterized by orthogonal \((x, y)\) coordinates and the polar coordinates parameterized by orthogonal \((r, \theta)\). Depending on the chosen coordinate system, the resulting \textit{geodesic equations} will vary. To derive such geodesic equations, one must solve a set of second-order differential equations as shown in Equation \ref{eqn:geodesic_eqns}.

\begin{equation}\label{eqn:geodesic_eqns}
\frac {d^{2}x^{\lambda }}{dt^{2}}+\Gamma _{\mu \nu }^{\lambda }{\frac {dx^{\mu }}{dt}}{\frac {dx^{\nu }}{dt}}=0,
\end{equation}

Here, \(\Gamma_{\mu \nu }^{\lambda }\) represents the Christoffel symbol of the second kind corresponding to the chosen metric \(g\), and Equation \ref{eqn:geodesic_eqns} is expressed in Einstein summation notation. The theoretical solutions to the geodesic equations are often challenging to obtain, leading to computational methods that solve them incrementally in steps of \(\delta t\) over arbitrary manifolds. However, for simpler geometries like hyperspheres, these solutions are well-known (refer to \textit{SphereGen} in Section \ref{app_sec:quirks_lims}).

Under specific conditions for the manifold \(\mathcal{M}\) (such as smoothness, completeness, and prescribed initial conditions), which align with the practical uses of manifolds, the fact that Equation \ref{eqn:geodesic_eqns} constitutes a set of second-order differential equations ensures the existence and uniqueness of the resultant geodesic curve (according to the Picard-Lindel\"of theorem). Despite this, the iterative discretization and solution of geodesics can be computationally expensive and slow in various scenarios. To address this challenge, one can introduce the concept of the \textit{exponential map}.

The exponential map, denoted as \(\text{Exp}_p : \mathcal{T}_p \mathcal{M} \rightarrow \mathcal{M}\), serves as a fundamental tool in Riemannian geometry, mapping tangent vectors from the tangent space generated at point \(p\) onto the manifold \(\mathcal{M}\). Mathematically, the exponential map is defined by \(\text{Exp}_p(v) = \gamma_v(1)\), where \(v\) is a tangent vector in the tangent space \(\mathcal{T}_p \mathcal{M}\), \(\gamma_v(t)\) is a smooth curve passing through point \(p\) with \(\gamma_v(0) = p\), and \(\dot{\gamma}_v(0)=0\), indicating zero velocity at the base point \(p\). An illustrative example of the exponential map was presented in the discussion on \textit{SphereGen} in Section \ref{app_sec:quirks_lims}.

It is essential to note that the geodesics induced by the practical manifolds in this paper demand both uniqueness and existence properties. Consequently, the application of the exponential map to reach the end point of \textit{any geodesic} in \(\mathcal{M}\) is not universally possible. Instead, \(\text{Exp}_p\) can only be applied within a local radius around point \(p\), known as the \textit{radius of injectivity}. This radius is employed as a limiting factor to scale randomly sampled tangent vectors using \textit{StiefelGen}, ensuring a valid interpretation for the hyperparameter \(\beta \in [0,1]\). Mathematically, the radius of injectivity \(\mathcal{R}\) at a point \(p\) on \(\mathcal{M}\) is defined as follows:
\begin{align}
    \mathcal{R} = \sup \{ r > 0 \mid \exp_p:\mathcal{T}_p\mathcal{M}\rightarrow \mathcal{M} \text{ is a diffeomorphism on } \mathcal{B}(0, r) \subset \mathcal{T}_p\mathcal{M} \},
\end{align}
where \(\mathcal{B}(0, r)\) is an open ball centered at point \(p\), extending into the tangent space \(\mathcal{T}_p\mathcal{M}\) with radius \(r\). Figure \ref{fig:radius_injectv2} visually illustrates this process.

\begin{figure}[htbp]
\centering
{\includegraphics[width=1.15\textwidth]{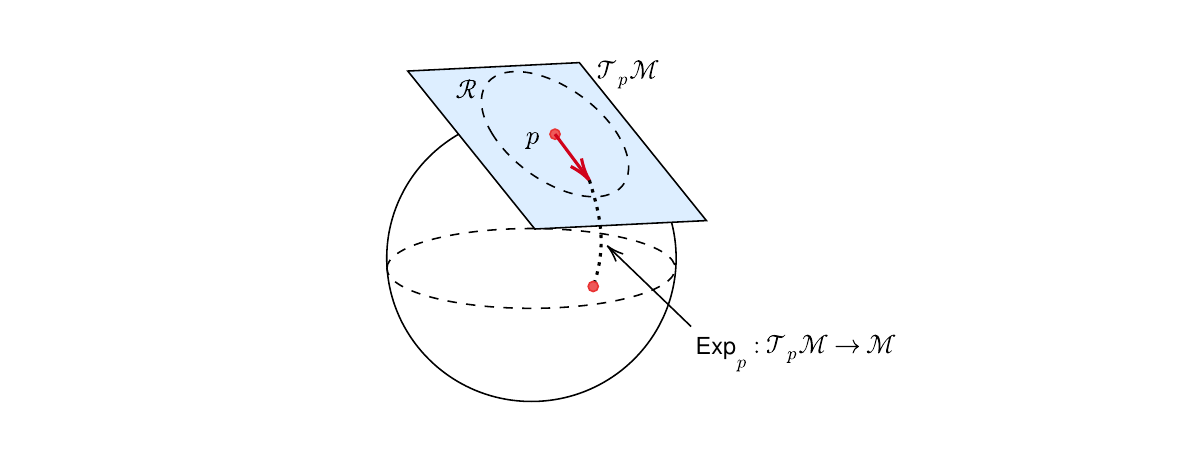}}
\caption{A point \(p\) on \(\mathcal{M}\), emitting a tangent vector extending into \(\mathcal{T}_p\mathcal{M}\) within the radius \(\mathcal{R}\) to validly apply the \(\text{Exp}_p\) operation.}
\label{fig:radius_injectv2}
\end{figure}

Throughout this paper, \(\mathcal{R}\) for the Stiefel manifold is known to have a tight global lower bound of \(0.89\pi\) (Section \ref{sec:background_math}) and is globally \( \pi \) for the hypersphere (Section \ref{app_sec:quirks_lims}).

\subsubsection{Matrix Manifolds: The Basics} 

The transition from vector spaces defined over \(\mathbb{R}^m\) to the realm of matrices in \(\mathbb{R}^{m\times n}\) introduces intriguing mathematical structures. Despite their computational portrayal as "matrices" or "arrays," these objects maintain their essence as vector space entities. This recognition allows us to extend the concept of manifolds to objects in \(\mathbb{R}^{m\times n}\), unraveling a diverse and intricate mathematical landscape.

In exploring the \(\mathbb{R}^{m\times n}\) manifold within the context of Euclidean space, we employ an inner product \(\langle V_1, V_2 \rangle\) that captures the essence of matrix interactions:
\begin{align}
    \langle V_1, V_2 \rangle = \text{Vec}(V_1)^{\intercal}\cdot \text{Vec}(V_2) = \text{tr}(V_1^{\intercal} V_2),
\end{align}
Here, \(\text{Vec} : \mathbb{R}^{m\times n}\rightarrow \mathbb{R}^{mn}\) signifies a vector stacking operation, and \(\text{tr}: \mathbb{R}^{m\times n} \rightarrow \mathbb{R}\) denotes the standard matrix trace.

One captivating facet of this exploration is the emergence of matrix manifolds, such as the \textit{Stiefel manifold}. Defined as the embedded submanifold of \(\mathbb{R}^{m\times n}\) consisting of column orthogonal rectangular matrices:
\begin{equation}
    \St \coloneqq \{U \in \mathbb{R}^{m\times n} \mid U^{\intercal} U = I_{n}\}.
\end{equation}
This specific structure, where the columns of \(U\) are orthogonal and form an orthonormal basis, presents a fascinating departure from conventional manifolds.

The tangent space \(T_U \St\) at a point \(U\in \St\) is naturally represented by:
\begin{align}
    T_U\St = \{\Delta \in\mathbb{R}^{m\times n} \mid U^{\intercal} \Delta = - U\Delta^{\intercal}\}.
\end{align}
This representation elegantly stems from differentiating \(U^{\intercal} U = I_{n}\), where the differential of \(U\) is denoted as \(\Delta\) (to be expanded upon shortly).

Comparing matrix manifolds with traditional vector space manifolds, the unique challenges and intricacies arise from the algebraic structure imposed by matrix operations. The matrix trace, determinant, and eigenvalues play pivotal roles, offering a distinct flavor to the geometry of these manifolds. Unlike vector spaces, matrix manifolds exhibit non-commutativity, influencing the development of geometric concepts.

While Euclidean metrics inherited from the ambient embedding space remain applicable, the canonical metrics for matrix manifolds often involve tailored measures that account for specific algebraic properties which arise in the matrix algebra context. The Stiefel manifold exemplifies this, showcasing the need for metrics aligned with the skew-symmetry properties inherent in certain parameterizations.

In essence, the extension to matrix manifolds not only broadens the scope of mathematical exploration but also introduces a unique interplay between geometry and algebra, unveiling a rich tapestry of structures and relationships. The contrast between traditional vector space manifolds and matrix manifolds underscores the importance of understanding the nuanced characteristics that arise in these diverse mathematical spaces.

\subsubsection{Understanding the Stiefel Manifold: Delving Deeper} 

Here we formally introduce the Stiefel manifold and delve into it some of its mathematical properties.

The Stiefel manifold, denoted as $\St$, is defined as the set of matrices $U\in\mathbb{R}^{m\times n}$ that satisfy the condition $U^{\intercal}U = I_n$, where $m\geq n$. This definition encapsulates matrices with orthonormal columns, forming a unique and structured subset of the matrix space. Based on this geometric constraint, it is possible to study some properties over the induced tangent space at point (matrix) $U$, denoted as $\Sttan$. A key Lemma is as follows:

\begin{lemma} For any $\Delta \in \Sttan$, the relation $\Delta^{\intercal}U + U^{\intercal}\Delta = 0$ holds true.
\end{lemma}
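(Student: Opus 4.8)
The plan is to derive the relation by treating $\St$ as an embedded submanifold of $\mathbb{R}^{m\times n}$ cut out by the orthonormality constraint $U^{\intercal}U = I_n$, and then differentiating that constraint along an arbitrary curve. First I would fix a point $U \in \St$ and take an arbitrary tangent vector $\Delta \in \Sttan$. By the kinematic definition of the tangent space, there exists a smooth curve $\gamma : (-\epsilon, \epsilon) \to \St$ with $\gamma(0) = U$ and $\dot{\gamma}(0) = \Delta$. Because the image of $\gamma$ lies entirely on the manifold, the identity $\gamma(t)^{\intercal}\gamma(t) = I_n$ holds for every $t$ in the domain.

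Next I would differentiate this matrix identity with respect to $t$. Applying the product rule to the matrix-valued map $t \mapsto \gamma(t)^{\intercal}\gamma(t)$ gives
\begin{equation}
\frac{d}{dt}\bigl(\gamma(t)^{\intercal}\gamma(t)\bigr) = \dot{\gamma}(t)^{\intercal}\gamma(t) + \gamma(t)^{\intercal}\dot{\gamma}(t).
\end{equation}
Since the right-hand side of the constraint $\gamma(t)^{\intercal}\gamma(t) = I_n$ is the constant matrix $I_n$, its derivative vanishes identically, so the expression above is zero for all $t$. Evaluating at $t = 0$ and substituting $\gamma(0) = U$ and $\dot{\gamma}(0) = \Delta$ yields $\Delta^{\intercal}U + U^{\intercal}\Delta = 0$, which is exactly the claimed relation. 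This simultaneously recovers the parameterization of $\Sttan$ stated in Equation \ref{eqn:tangent_space_background} and confirms that $U^{\intercal}\Delta$ is skew-symmetric.

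I do not anticipate a genuine obstacle here, as the argument is a one-line differentiation of the defining constraint; the only point requiring a little care is justifying that every element of $\Sttan$ arises as the velocity of some admissible curve that remains on $\St$, rather than merely as a vector in the ambient space. This is the standard fact that, for a regular level set, the tangent space equals the kernel of the differential of the defining map, here $F(U) = U^{\intercal}U - I_n$, whose differential at $U$ sends $\Delta$ to $\Delta^{\intercal}U + U^{\intercal}\Delta$; the lemma is precisely the assertion that tangent vectors lie in this kernel. If one prefers to bypass the curve argument entirely, the relation can instead be read off directly from the definition of $\Sttan$ in Equation \ref{eqn:tangent_space_background}, rendering the lemma essentially a restatement of that definition.
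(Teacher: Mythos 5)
Your proof is correct and follows essentially the same route as the paper's: both take a smooth curve on $\St$ through $U$ with velocity $\Delta$, differentiate the constraint $U(t)^{\intercal}U(t) = I_n$ via the product rule, and evaluate at $t=0$. Your added remark that every element of $\Sttan$ genuinely arises as such a velocity (via the regular level set characterization) is a small point of rigor the paper leaves implicit, but the argument is the same.
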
\label{lem:deltaCondition}

\begin{proof} 
Consider a smooth curve $U(t): \mathbb{R} \rightarrow \mathcal{M} = \St$, where $t\in[0,1]$ and $U(0) \coloneqq U_0 \in \St$. Consider now the geometric constraint of $\St$ as it evolves over time: $U^{\intercal}(t) U(t) = I_n$. Applying now the derivative operator denoted, $D$, we arrive at the following equation, $D(U^{\intercal}(t) U(t)) = D(I_n) = 0$, since there is no change $I_n$ over time. Considering the now the LHS of the expression, $D(U^{\intercal}(t) U(t)) = D(U^{\intercal}(t)) U(t) + U^{\intercal}(t) D(U(t)) = 0$. Evaluating this expression at $t=0$, where we exist in both the tangent vector space (as it touches $\St$) and on the Stiefel manifold, gives:
\begin{equation}
    D(U(0))\in \mathcal{T}_{U_0} \St \text{ and } \St.
\end{equation}

Simultaneously, writing $D(U(0)) = \Delta$ and $D(U^{\intercal}(0)) = \Delta^{\intercal}$ completes the proof.

\end{proof}

Thus, we see that the geometric constraint of $U^{\intercal} U = I$, leads to the tangent vector property of: $\Delta^{\intercal}U + U^{\intercal}\Delta = 0$. Now, notice that by definition skew-symmetry matrices have the property that $A + A^{\intercal} = 0$. Taking $A = \Delta^{\intercal} U$ leads to the corollary:

\begin{cor}
    The matrix $\Delta^{\intercal} U$ is skew-symmetric, as $\Delta^{\intercal} U = - U^{\intercal} \Delta$.
\end{cor}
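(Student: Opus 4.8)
The plan is to derive the corollary directly from the preceding Lemma, which establishes that $\Delta^{\intercal}U + U^{\intercal}\Delta = 0$ for any $\Delta \in \Sttan$. The key observation is that a square matrix $A$ is skew-symmetric precisely when $A + A^{\intercal} = 0$, so it suffices to recognize the expression appearing in the Lemma as an instance of this defining relation.

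First I would set $A \coloneqq \Delta^{\intercal}U$, noting that since $\Delta, U \in \mathbb{R}^{m\times n}$ the product $\Delta^{\intercal}U$ lies in $\mathbb{R}^{n\times n}$ and is therefore a genuine square matrix for which skew-symmetry is meaningful. Next I would compute its transpose using the order-reversal rule for the transpose of a product together with the involution $(\Delta^{\intercal})^{\intercal} = \Delta$, which gives $A^{\intercal} = (\Delta^{\intercal}U)^{\intercal} = U^{\intercal}\Delta$.

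Finally, I would substitute back into the Lemma's identity: since $A + A^{\intercal} = \Delta^{\intercal}U + U^{\intercal}\Delta$, the Lemma yields $A + A^{\intercal} = 0$, which is exactly the assertion that $A = \Delta^{\intercal}U$ is skew-symmetric. Equivalently, rearranging the same identity gives $\Delta^{\intercal}U = -U^{\intercal}\Delta$, the form stated in the corollary.

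I do not anticipate any genuine obstacle, as the corollary is an immediate algebraic reformulation of the Lemma rather than an independent result. The only point warranting minor care is the transpose computation $(\Delta^{\intercal}U)^{\intercal} = U^{\intercal}\Delta$, which must correctly invoke both the order-reversal of transposition and the double-transpose identity; once that is in hand the conclusion follows by a single substitution.
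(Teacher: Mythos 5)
Your proposal is correct and follows essentially the same route as the paper: the paper also obtains the corollary by taking $A = \Delta^{\intercal}U$, noting $A^{\intercal} = U^{\intercal}\Delta$, and reading the Lemma's identity $\Delta^{\intercal}U + U^{\intercal}\Delta = 0$ as the defining relation $A + A^{\intercal} = 0$ for skew-symmetry. Your additional remark that $\Delta^{\intercal}U$ is a square $n\times n$ matrix is a welcome bit of explicit care, but the argument is otherwise identical.
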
 

Thus we may formalize the Stiefel manifold tangent space, $\Sttan$ as the following,

\begin{equation}
    \Sttan \coloneqq \{\Delta \in \mathbb{R}^{m\times n} \mid \Delta^{\intercal} U = -U^{\intercal} \Delta, \text{ with, } U \in \St\}.
\end{equation}

This tangent space characterization acts as the starting block for building further geometric and algebraic structures over $\St$. In particular it will be shown to be pivotal for generalizing the parametric representation of $\Sttan$ with respect to ambient matrices in $\mathbb{R}^{m \times n}$, which may not necessarily lie on $\St$ or on $\Sttan$. To begin with let's consider  some random $\UTilde \in \mathbb{R}^{m\times n}$, which is an arbitrary matrix that lies in the ambient space of $\mathbb{R}^{m\times n}$. Therefore it can be taken to be a zero probability event that $\UTilde$ will lie on $\St$ (and by consequence it is a probability zero event that the randomly sampled $\UTilde$ will be orthonormal). Write the orthogonal complement\footnote{Two finite linear subspaces $\mathcal{G}$, $\mathcal{H}$ are said to be orthogonal complements with respect to $\mathcal{I}$, if (i) $\langle g, h \rangle, \forall g\in\mathcal{G}$, $h\in\mathcal{H}$, and (ii) The direct sum of each subspace spans the construction of the full space. That is $\mathcal{G}\oplus\mathcal{H}=\text{span}(\mathcal{I})$.} of \textit{any} $\UTilde\in\mathbb{R}^{m\times n}$ as $\UTilde_{\bot}\in\mathbb{R}^{m\times (m-n)}$. Using $\UTilde$, and $\UTilde_{\bot}$, we can form the \textit{orthogonal completion} as follows, 

\begin{equation}
W= 
\begin{bNiceArray}{c|c}
\UTilde & \UTilde_{\bot}\ 
\end{bNiceArray}
\in \mathbb{R}^{m \times m},
\end{equation}

so that $W\in\mathbb{R}^{m\times m}$. Similarly, construct the matrix, $X\in\mathbb{R}^{m\times n}$, with the aim that the calculation, $WX \in \mathbb{R}^{m\times n}$, and we land back into the desired ambient space. Such an $X$ would take the following form,

\begin{equation}
 X = 
\left[
\begin{array}{c}
A \\ \hline 
B
\end{array}
\right],   
\end{equation}

so that the overall calculation of $WX$ would result in the matrix,

\begin{equation}
    WX =
\begin{bNiceArray}{c|c}
\UTilde & \UTilde_{\bot}\ 
\end{bNiceArray} 
        \cdot 
\left[
\begin{array}{c}
A \\ \hline 
B
\end{array}
\right]
=\UTilde A + \UTilde_{\bot} B \in \mathbb{R}^{m\times n}
\end{equation}

To keep matrix sizes consistent, it follows then that $A\mathbb{R}^{n\times n}$ and $B\mathbb{R}^{(m-n)\times n}$. Since $WX$ lies in $\mathbb{R}^{m\times n}$, a pertinent question now arises as: 

\textit{Under what conditions would we see} $WX$ \textit{land onto a valid member of the tangent space generated by some point,} $\UTilde$? \textit{That is, what conditions would allow for} $WX\in\Sttan$? 

Luckily, we have developed the minimal condition required for matrices to belong to $\Sttan$ in Lemma \ref{lem:deltaCondition}, which leads to the following Lemma,

\begin{lemma}
    For $WX = \UTilde A + \UTilde_{\bot}B$ to lie on $\Sttan$, we require, $\UTilde=U$, $A$ skew-symmetric, and $B$, arbitrary. 
\end{lemma}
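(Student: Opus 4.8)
The plan is to test the three asserted conditions against the tangent-space criterion of Lemma~\ref{lem:deltaCondition}, namely that a matrix $\Delta$ belongs to $\Sttan$ exactly when $\Delta^{\T}U + \UT\Delta = 0$. The strategy is to substitute $\Delta = WX = \UTilde A + \UTilde_{\bot} B$ into this criterion and read off what it forces on $\UTilde$, $A$, and $B$, treating $A\in\mathbb{R}^{n\times n}$ and $B\in\mathbb{R}^{(m-n)\times n}$ as the two blocks of $X$.

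First I would take $\UTilde = U$, which is the natural choice since we are constructing the tangent space \emph{at} $U$; this promotes the orthogonal completion $W = [\,U \mid U_{\bot}\,]$ to a genuine orthogonal matrix and supplies the block relations $\UT U = I_n$, $\UT U_{\bot} = 0$, $U_{\bot}^{\T} U = 0$, and $U_{\bot}^{\T} U_{\bot} = I_{m-n}$. Substituting $\Delta = U A + U_{\bot} B$ and applying these relations collapses the cross terms to $\UT \Delta = A$ and $\Delta^{\T} U = A^{\T}$, while every term carrying $B$ is annihilated by $\UT U_{\bot} = 0$. The criterion thus reduces to $A + A^{\T} = 0$, i.e. $A$ is skew-symmetric, with $B$ constrained by nothing at all; this establishes that the stated conditions are sufficient.

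For the role of the choice $\UTilde = U$, I would then argue that it is precisely this choice that \emph{decouples} the constraint. For a generic $\UTilde$ the cross products $\UT\UTilde$ and $\UT\UTilde_{\bot}$ do not vanish, so the criterion becomes a coupled linear relation intertwining $A$ and $B$ rather than the clean form in which $A$ alone carries the skew-symmetry condition and $B$ is free. To confirm that this family exhausts the tangent space rather than spanning only a proper subspace, I would close with a dimension count: a skew-symmetric $A$ supplies $n(n-1)/2$ free parameters and an arbitrary $B\in\mathbb{R}^{(m-n)\times n}$ supplies $(m-n)n$, summing to $mn - n(n+1)/2$, which matches the known dimension of $\St$.

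The main obstacle I anticipate is making the necessity of $\UTilde = U$ precise, since strictly speaking $\Sttan$ is a fixed linear subspace that could in principle be coordinatised through other frames. The honest content of the lemma is that $\UTilde = U$ is the unique choice orthogonalising $W$ and separating the skew-symmetry requirement on $A$ from the unconstrained block $B$; the dimension count above is then what certifies that the resulting decoupled family is all of $\Sttan$ and not merely a part of it.
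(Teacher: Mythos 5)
Your proof is correct and takes essentially the same route as the paper's: substitute $\Delta = \UTilde A + \UTilde_{\bot}B$ into the tangent-space criterion $\Delta^{\T}U + \UT\Delta = 0$ from the preceding lemma, set $\UTilde = U$ so that orthogonality annihilates every term involving $B$, and read off $A + A^{\T} = 0$. Your closing dimension count ($n(n-1)/2 + (m-n)n = mn - n(n+1)/2$) is a welcome addition the paper omits, since the paper's own argument only verifies that the stated conditions are sufficient and never checks that the family $UA + U_{\bot}B$ exhausts $\Sttan$.
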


\begin{proof}
    Recall from Lemma \ref{lem:deltaCondition}, $\DeltaT U + U^{\intercal} \Delta=0$, where $U\in\St$. Assume $WX=\UTilde^{\intercal}T A + \UTilde_{\bot}B = \Delta \in \Sttan$, and substitute this into the condition that, $\DeltaT U + \UT \Delta=0$:

    \begin{align*}
        (\UTilde A + \UTilde_{\bot}B)^{\intercal}U + U^{\intercal} (\UTilde A + \UTilde_{\bot}B) &= 0 \\ 
        \iff (A^{\T}\UTilde^{\T}  + B^{\T}\UTilde_{\bot}^{\T})U + U^{\intercal} (\UTilde A + \UTilde_{\bot}B) &= 0 \\
        \iff (A^{\T}\UTilde^{\T}U  + B^{\T}\UTilde_{\bot}^{\T}U) +  (U^{\intercal}\UTilde A + U^{\intercal}\UTilde_{\bot}B) &= 0
    \end{align*}

    Taking $\UTilde=U$ (which by consequence means $\UTilde \in \St$), we can see that the conditions $\UT\UTilde = \UTilde^{\intercal}U = \UT U = I_{n}$. Further, $\UTilde_{\bot}^{\T}U=U^{\intercal}\UTilde_{\bot}=0$, as a consequence of the property of the orthogonal complement. Then, we are left with,

    \begin{align*}
        A + A^{\T} = 0,
    \end{align*}

    which completes the proof.
\end{proof}

Thus with reference to the total space $W$, and from the geometric properties of $\Sttan$, it can be shown that the tangent space generated with respect to $U$ can indeed be represented as $\Delta = UA + U_{\bot}B$, where we require $A$ skew symmetric.

Now that we have a manner in which to parameterise $\Sttan$ with respect to matrices $(A,B)$, we can consider developing an inner product space structure, so that notions such as lengths, and angles can be generalized to $\St$, and geodesics constructed. Let us initially consider the Euclidean metric, inherited from the ambient Euclidean space $\mathbb{R}^{m\times n}$:

\begin{align*}
    \langle \cdot, \cdot \rangle_E : \mathbb{R}^{m\times n } \times \mathbb{R}^{m\times n} &\rightarrow \mathbb{R} \\ \langle \Delta_1, \Delta_2 \rangle_E &\mapsto \text{tr}(\Delta_1^{\T}\Delta_2),
\end{align*}

where $\Delta_1,\Delta_2\in\Sttan$ must span the \textit{same} tangent vector space generated with respect to $U$. For example, in Figure \ref{fig:autoparallel}, it is not valid to have $\Delta_1\in\mathcal{T}_p\mathcal{M}$, and $\Delta_2\in\mathcal{T}_q\mathcal{M}$, then calculate $\langle\Delta_1, \Delta_2\rangle$. Both $\Delta_1, \Delta_2$ must be from one tangent space or the other. Returning to the inner product space problem, we shall proceed to expand as follows, 

\begin{align*}
    \|\Delta\|_E^2  &= \langle \Delta, \Delta \rangle_E \\
    &= \text{tr}(\Delta^{\T} \Delta) \\
    &= \text{tr}((A^{\T} U^{\T} + B^{\T} U_{\bot}^{\T})(UA + U_{\bot}B) ) \\
    &= \text{tr}(A^{\T} A + B^{\T} B) \\ 
    &=\text{tr}(A^{\T} A) + \text{tr}(B^{\T} B)\\
    &= \sum_i \sum_j a_{ij}^2 + \sum_i \sum_j b_{ij}^2 \\
    &= 2 \sum_{j> i}a_{ij}^2 + \sum_i \sum_j b_{ij}^2
\end{align*}

where we notice that $\sum_i \sum_j a_{ij}^2=2 \sum_{j> i}a_{ij}^2$ arises as a consequence of the skew symmetry property over $A$.

This presents itself as a slightly unnatural result since the representational power of $A$ is twice that of $B$ when considering the Euclidean inner product. However, if one instead defines the Stiefel manifold via an equivalence class relationship with respect to the Grassmann manifold (to be defined shortly), then one arrives at an inner product result which is evenly weighted, and in general has access to easier algorithmic implementations of geometric calculations (such as the geodesic, exponential, and logarithmic maps). This is also important since many out of the box matrix geometry libraries work from the perspective of this construction, and thus use the canonical metric by default. This is particularly important for this paper, as we work to rescale the randomly sampled tangent vectors with respect to the radius of injectivity via,

\begin{align*}
    \bar{\Delta } = \frac{\Delta}{\sqrt{\langle \Delta, \Delta \rangle}\rangle_w},
\end{align*}

where $w$ refers to a particular corrective weighting factor for the inner product. Before we begin to calculate $w$ from first principles, is there perhaps a way in which we can guess of back calculate $w$? The answer is yes.

Consider working with the following weighted inner product, $\langle \cdot, \cdot \rangle_w: \langle \Delta, \Delta \rangle _w \mapsto \text{tr}(\Delta^{\T} w \Delta)$. We require $w$ to act as a corrective weighting factor to fix the double counting problem. What should $w$ be? 

Consider again the proposed parameterization of $\Delta$ as,

\begin{align*}
    \Delta = U A + U_{\bot} B.
\end{align*}

Left multiply both sides by $U^{\T}$ resulting in,

\begin{align*}
    U^{\T} \Delta = A
\end{align*}

Furthermore, once again left multiplying, but this time by $U$, 

\begin{align*}
    UA = UU^{\T} \Delta.
\end{align*}

It is very important to note that even though $U^{\T}U=I_n$, it is not true that $U U^{\T} = I_n$ (unless $U$ is a square matrix). In fact the expression for $U U^{\T}$ is often used an orthogonal projector within linear algebra (if $U\in \St$). This idea will be discussed again when considering the Grassmann manifold. Now, let's consider the following weighted expression, which is built considering that $UA = UU^{\T} \Delta$,

\begin{align*}
    w\Delta &= (I_n - \frac{1}{2}UU^{\T})\Delta\\&=\Delta -  \frac{1}{2}UU^{\T}\Delta\\ 
    &= \Delta - \frac{1}{2}UA \\
    &= (UA + U_{\bot}B) - \frac{1}{2}UA \\
    &= \frac{1}{2}UA + U_{\bot} B,
\end{align*}

Thereby introducing the additional multiplicative ``half factor'' in front of $A$ as required to balance the inner product. Proving this point by expanding this weighted inner product we arrive at, 
\begin{align*}
    \langle \Delta, \Delta \rangle_w &= \text{tr}(\Delta^{\T}w \Delta) \\ 
    &=\text{tr}(\Delta^{\T}(1/2 UA + U_{\bot} B)) \\
    &= \sum_{j>i} a_{ij}^2 + \sum_{i,j} b_{ij}^2.
\end{align*}

\subsubsection{The Canonical Metric: Quotient Manifolds and the Grassmannian}\label{app_sec:Understanding_Grassmann}

The Grassmann manifold,$\Gr$, is intricately linked to the Stiefel manifold through the orthogonal group $O(m)$. This manifold exhibits notable flexibility, primarily attributable to its nature as a quotient manifold. This characteristic allows it to be defined over subspaces, resulting in each ``point'' on the manifold representing an entire equivalence class. The manifold's flexibility manifests through its multiple interpretations and applications within various mathematical contexts. But before delving into the multiple possible interpretations of $Gr$, we quickly go over the notion of equivalence classes.

Equivalence relations hold significant importance in mathematics as they capture isomorphisms (a bijective mapping that preserves the structure, relationships, and operations defined on the sets) between elements belonging to distinct sets. To illustrate this, consider the set \(\mathbb{Z}_5\), which represents integers modulo 5. One example of such a set is naturally, \(\{0, 1, 2, 3, 4\}\). Equivalently, another valid set can be expressed as \(\{5, 6, 7, 8, 9\}\) since \(0 \mod 5 = 5 \mod 5\) and \(1 \mod 5 = 6 \mod 5\), and so on and so forth. Mathematically, we write denote this equivalence structure as, \(0 \sim 5\) and \(1 \sim 6\), ultimately leading to the set representations of equivalence classes, often denoted through square bracket notation as: \([0] \coloneqq \{ k \in \mathbb{Z} \mid (k \cong 0) \mod 5 \}\) and similarly \([1] \coloneqq \{ k \in \mathbb{Z} \mid (k \cong 1) \mod 5 \}\). In essence, we can now express \(\mathbb{Z}_5\) in a very general form as \(\{[0], [1], [2], [3], [4]\}\), where all such alternate set representations of \(\mathbb{Z}_5\) have been ``collapsed'' into a single equivalence class of relations. Alternatively, this collapsing could have been articulated notationally as, \(\mathbb{Z}_5 = \mathbb{Z} / \sim\), where this operation is also to be understood to be modulo 5. What this notation is trying to imply is that we begin with all the integers $\mathbb{Z}$, and then ``divide out'' or collapse this set by considering all such equivalence elements. Due to this notation, such a set is often termed as the quotient set, or in terms of manifolds, a structure defined as $\mathcal{M}/\sim$ is known as a \textit{quotient manifold.} 

For the sake of intuition, we shall quickly go over a simple quotient manifold known as the real projective space  \(\mathbb{RP}^1\). We shall break down its construction in a three step process as follows.

\begin{enumerate}
    \item \textbf{Initial Space:} Start with the vector space \(\mathbb{R}^2\), which consists of all vectors with two real components.
    \item \textbf{Define Equivalence Relation:} Define an equivalence relation \(\sim\) on \(\mathbb{R}^2\) by considering two vectors \((x, y)\) and \((x', y')\) to be equivalent if one is a nonzero scalar multiple of the other. That is, \((x, y) \sim (x', y')\) if and only if there exists a \(\lambda \neq 0\) such that \((x, y) = \lambda(x', y')\).
    \item \textbf{Generate Quotient Space:} The real projective space \(\mathbb{RP}^1\) is then defined as the set of equivalence classes of vectors in \(\mathbb{R}^2\) under this equivalence relation. Each equivalence class represents a line through the origin in \(\mathbb{R}^2\), and \(\mathbb{RP}^1\) can be represented as the set of all such lines. It can be thought of collapsing each line through the origin as a single point in some collapsed (that is, quotient) space. 
\end{enumerate}

In this way, \(\mathbb{RP}^1\) consists of equivalence classes of lines in \(\mathbb{R}^2\), where lines that are scalar multiples of each other are in the same equivalence class. To be more explicit, the equivalence relation \(\sim\) is defined on \(\mathbb{R}^2\) as follows:

\[ (x, y) \sim (\lambda x, \lambda y) \]

for all \((x, y) \in \mathbb{R}^2\) and \(\lambda \neq 0\). This relation identifies all vectors on the same line through the origin. Thus, the real projective space \(\mathbb{RP}^1\) is the set of equivalence classes of vectors in \(\mathbb{R}^2\) under the equivalence relation:

\[ \mathbb{RP}^1 = \{ [(x, y)] \mid (x, y) \in \mathbb{R}^2 \} \]

Here, \([(x, y)]\) denotes the equivalence class containing all vectors that are scalar multiples of each other. Each equivalence class corresponds to a line through the origin in \(\mathbb{R}^2\). A diagram summarizing this idea is represented in Figure \ref{fig:equiv_relation}.

\begin{figure}[htbp]
\centering
{\includegraphics[width=1.15\textwidth]{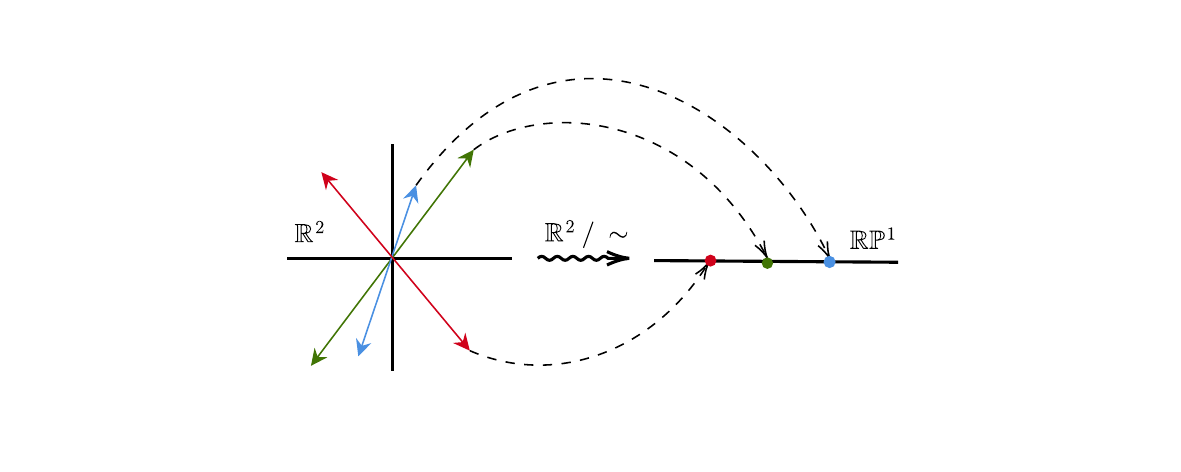}}
\caption{Demonstrating the construction of the real projective space $\mathbb{R}\mathbb{P}^1$ as a quotient manifold structure.}
\label{fig:equiv_relation}
\end{figure}

With this foundational context established, we now delve into the introduction of the Grassmann manifold from various perspectives, shedding light on its multifaceted nature.

\textbf{As a Quotient Manifold}

This perspective is important for Grassmann manifolds since they can also be expressed as quotient manifolds. Specifically, the Grassmann manifold $\Gr$ can be expressed as the quotient:

\begin{equation}
\Gr = \frac{\mathcal{O}(m)}{\mathcal{O}(n) \times \mathcal{O}(m-n)}
\end{equation}

where $\mathcal{O}(m)$ denotes the orthogonal group of $m \times m$ orthogonal matrices:

\begin{equation}
\mathcal{O}(m) = {U \in \mathbb{R}^{m \times m} : U^{\intercal} U = U U^{\intercal} = I_m}
\end{equation}

The orthogonal group $\mathcal{O}(m)$ can be viewed as the group of rotations in $\mathbb{R}^m$. Thus, the quotient structure divides out free rotations in the two subspaces of dimensions $n$ and $m-n$.

This quotient relationship is made clearer by comparing to the Stiefel manifold $\St$, which has an analogous quotient structure:

\begin{equation}
\St = \frac{\mathcal{O}(m)}{\mathcal{O}(m-n)}
\end{equation}

Notice then that there is an explicit relationship between $\Gr$ and $\St$, as,

\begin{equation}
    \Gr = \frac{\St}{\mathcal{O}(n)}
\end{equation}

which clarifies the idea the $\St$ forms an element of the equivalence set, $\Gr$. In other words, if one points on $\St$ represents an orthonormal basis, then one point of $\Gr$ would be the entire hyper plane spanned by this basis, and ultimately house a collection of possible $\St$'s. This will be further clarified when $\Gr$ is seen through the perspective of a linear algebraic subspace.

The key benefit of the quotient perspective is that it provides a precise mathematical link between the Grassmann and Stiefel manifolds in terms of the orthogonal group $\mathcal{O}(m)$. This allows properties of the Grassmannian to be understood in terms of rotational symmetries.

In particular, the quotient structure gives rise to the canonical Riemannian metric on the Stiefel manifold. Since points on the Grassmannian can be represented as points on the Stiefel manifold, this induces a metric structure on the Grassmannian. The quotient perspective also enables computations on the Grassmannian in terms of Stiefel matrix representatives.

Overall, the quotient manifold perspective elucidates the rotational symmetry underlying the Grassmannian structure. It links the Grassmannian geometrically to the better understood Stiefel manifold, providing a pathway for endowing the Grassmannian with metric properties. By revealing these connections, the quotient view facilitates both theoretical analysis and practical computations involving Grassmann manifolds.

\textbf{As a linear algebraic subspace:}

The Grassmannian $\Gr$ can be defined as the set of all $n$-dimensional linear subspaces of $\mathbb{R}^m$:

\begin{equation}
\Gr = { \text{span}(U) \mid U \in \mathbb{R}^{m\times n}, U^\intercal U = I_n}
\end{equation}

Where $\text{span}(U)$ denotes the subspace spanned by the columns of $U$. This is the most direct definition, as it simply identifying a subspace via orthonormal bases. While conceptually simple, this perspective does not readily lend to a  geometric structure of the Grassmannian. It tends to be more used a factual point of reference or representation to the best of the authors' understanding.

\textbf{As an orthogonal projector:}

The Grassmann manifold can alternatively be defined as the set of rank-$n$ orthogonal projectors in $\mathbb{R}^m$:

\begin{equation}
\Gr = { P \in \mathbb{R}^{m\times m} \mid P^\intercal = P, P^2 = P, \text{rank}(P)=n }
\end{equation}

Where the key properties of an orthogonal projector, $P$ are:

\begin{itemize}
\item \textbf{Idempotent}: $P^2 = P$
\item \textbf{Symmetric}: $P = P^\top$
\item \textbf{Rank $n$}: Projects onto an $n$-dimensional subspace of $\mathbb{R}^m$
\end{itemize}

The tangent vectors from this perspective are now characterized by the symmetry condition:

\begin{equation}
\Delta P + P \Delta = \Delta, \quad \Delta \in \text{sym}(\mathbb{R}^n)
\end{equation}

Where $\text{sym}(\mathbb{R}^n$ denotes the space of $n\times n$ symmetric matrices. Given any $U \in \St$, one can define a $P$ which satisfies these projector properties via, $P = UU^\top$. In turn this defines a point in $\Gr$. 

\textbf{As a Lie group:}

Although this view point of Grassmannians is not employed in this paper, it is kept here for completeness. However the explanation here will be largely qualitative in nature. 

A Lie group can be thought of as a smooth manifold endowed with a group structure. This group structure involves a set equipped with a binary operation satisfying properties such as closure, associativity, the existence of an identity element, and the presence of an inverse for each element. The smoothness requirement ensures that the group operations are differentiable. This structure provides a seamless integration of algebraic and geometric properties. Associated with a Lie group is its Lie algebra, a vector space that captures the group's infinitesimal behavior around the identity element. The Lie algebra consists of tangent vectors at the identity element, forming a vector space with a Lie bracket, a binary operation reflecting the group commutation relations. This Lie algebra serves as a linear approximation to the group, facilitating the study of local properties and transformations within the manifold.

Central to the understanding of Lie groups is the concept of the exponential map. This map establishes a crucial link between elements in the Lie algebra and the corresponding elements in the Lie group. It expedites the translation of algebraic operations to geometric transformations, providing a powerful tool to navigate between the algebraic and geometric realms of Lie theory. The exponential map plays a pivotal role in analyzing the global and local structure of Lie groups within the manifold framework.

In the context of the Grassmann manifold ($\Gr$) and its connection with the orthogonal group ($\mathcal{O}(m)$), the Lie algebra $\mathfrak{o}(m)$ of $\mathcal{O}(m)$ consists of $m\times m$ skew-symmetric matrices. Although the Lie algebra may not be directly employed in the present paper, its understanding is essential for delving into the deeper mathematical intricacies. For further exploration, the interested reader is encouraged to refer to \cite{bendokat2024grassmann}, providing comprehensive insights into the Grassmann manifold and its Lie algebraic structure.

\textbf{Deriving the Canonical Metric via Quotient Geometry}

One of the primary objectives in introducing the Grassmann manifold was to establish the canonical metric relationship that finds widespread use in various Stiefel matrix algorithms and coding implementations. This stems from the evident quotient relationship existing between $\St$ and $\Gr$ concerning the total space $\mathcal{O}(m)$, as elucidated in the preceding subsections.

Let us commence by defining an equivalence class over elements in $\St$:
\begin{align*}
    [U] &= \left\{W \begin{bmatrix} I_n & 0 \\ 0 & Q \end{bmatrix} \middle| Q \in \mathcal{O}(m-n) \right\} \\
    &= \left\{\left[U \mid U_{\perp}\right] \begin{bmatrix} I_n & 0 \\ 0 & Q \end{bmatrix} \middle| Q \in \mathcal{O}(m-n) \right\}
\end{align*}

Here, the equivalence class denotes a construction wherein we initially form $W=\left[U \mid U_{\perp}\right]\in\mathbb{R}^{m\times m}$ with respect to the total space $\mathcal{O}(m)$ (which was a construction used earlier in this subsection). The equivalence relation $U_1\sim U_2$ is said to hold if the first $n$ columns are identical (while the last $m-n$ columns are arbitrary). Consequently, the expression for $[U]$ embodies the quotient manifold definition of $\St$, as being that of $\St = \mathcal{O}(m)/\mathcal{O}(m-n)$ (since we have define a structure with respect to total space $\mathcal{O}(m)$, and do not ``care as much'' about the last $m-n$ colums). For the sake of formality, the isotropy subgroup action (``the group stabilizer'') of $\begin{bmatrix} I_n & 0 \\ 0 & Q_{\perp} \end{bmatrix}$ is presented in the definition of $[U]$. It represents the possible set of rotations within the total space $\mathcal{O}(m)$ that can be performed without affecting the equivalence class structure, $[U]$. Without loss of generality, these are simply relegated to the $m-n$ columns of $[U]$. However, as per \cite{edelman1998geometry}, from here onwards we shall omit this for notational brevity, and because, as will be demonstrated soon, its inclusion is ultimately dispensable.

An intriguing aspect of indirectly working with $\St$ as a quotient of $\mathcal{O}(m)$ is the division of the total tangent space at some point $W\in\mathcal{O}(m)$ into two complementary linear subspaces: the \textit{horizontal} and \textit{vertical} tangent spaces. In essence, given the tangent space at $W$ denoted as $\mathcal{T}_W \mathcal{O}(m)$, it can be decomposed into two mutually orthogonal albeit linear tangent subspaces, expressed via direct summation as:
\begin{align}
    \mathcal{T}_W \mathcal{O}(m) = \textrm{Vert}^W_{[Q]} \oplus \textrm{Horz}^W_{[Q]}.
\end{align}

The identification of these spaces involves first considering $\textrm{Vert}^W_{[Q]}$ as those vectors which move tangentially \textit{within} the equivalence relation structure. Once this is determined, $\textrm{Horz}^W_{[Q]}$ can be determined as the vector space pointing orthogonally to $\textrm{Vert}^W_{[Q]}$. To illustrate this idea further, consider the original quotient space demonstration of $\mathbb{Z}_5$. Recall that we had $\mathbb{Z}_5 = \{[0],[1],[2],[3],[4]\}$. There are essentially two tangential directions to move within the $\mathbb{Z}_5$ structure — one can either transition between equivalence classes horizontally:
\begin{equation}
    [0] \longrightarrow [1] \longrightarrow [2] \longrightarrow [3] \longrightarrow [4]
\end{equation}
known as moving tangentially in the horizontal direction, $\textrm{Horz}^W_{[Q]}$. Alternatively, one can move within the equivalence structure, creating a an effective vertical tangential movement, as demonstrated in the following flow diagram, denoted as $\textrm{Vert}^W_{[Q]}$.

\begin{centering}
\resizebox{\textwidth}{!}{%
\begin{tikzpicture}
    \node (node) at (0, 0) {[0]};
    \draw[->] (node) to [out=30, in=150, looseness=8] node[pos=0.15, above=3mm] {[5]} node[pos=0.85, above=3mm] {[10] ...} (node);

    \begin{scope}[xshift=3.5cm]
        \node (node2) at (0, 0) {[1]};
        \draw[->] (node2) to [out=30, in=150, looseness=8] node[pos=0.15, above=3mm] {[6]} node[pos=0.85, above=3mm] {[11] ...} (node2);
    \end{scope}

    \begin{scope}[xshift=7cm]
        \node (node3) at (0, 0) {[2]};
        \draw[->] (node3) to [out=30, in=150, looseness=8] node[pos=0.15, above=3mm] {[7]} node[pos=0.85, above=3mm] {[12] ...} (node3);
    \end{scope}

    \begin{scope}[xshift=10.5cm]
        \node (node4) at (0, 0) {[3]};
        \draw[->] (node4) to [out=30, in=150, looseness=8] node[pos=0.15, above=3mm] {[8]} node[pos=0.85, above=3mm] {[13] ...} (node4);
    \end{scope}

    \begin{scope}[xshift=14cm]
        \node (node5) at (0, 0) {[4]};
        \draw[->] (node5) to [out=30, in=150, looseness=8] node[pos=0.15, above=3mm] {[9]} node[pos=0.85, above=3mm] {[14] ...} (node5);
    \end{scope}
\end{tikzpicture}
}
\end{centering}

In this way, the tangent space $\mathcal{T}_W\mathcal{O}(m)$ with respect to total space, $\mathcal{O}(m)$ can similarly be broken down into two complementary tangent spaces, which move in different directions within the quotient space structure. It can be shown that the parametric forms of $\textrm{Vert}^W_{[Q]}$ and $\textrm{Horz}^W_{[Q]}$ are as follows \cite{edelman1998geometry,bendokat2024grassmann},
\begin{align}
    \textrm{Horz}^W_{[Q]} = \begin{bNiceArray}{c|c}U & U_{\bot}\ \end{bNiceArray}\begin{bmatrix} A & -B^{\intercal} \\ B & 0 \end{bmatrix}
\end{align}
\begin{align}
    \textrm{Vert}^W_{[Q]} = \begin{bNiceArray}{c|c}U & U_{\bot}\ \end{bNiceArray}\begin{bmatrix} 0 & 0 \\ 0 & C \end{bmatrix}
\end{align}

where $A$, $B$ have been defined earlier via the standard tangent space parameterization of $\Delta = UA + U_{\bot}B$, and $C$ is defined to be an arbitrary skew symmetric matrix. The component related to deriving the Stiefel canonical metric would be  $ \textrm{Horz}^W_{[Q]} = [\Delta]$\footnote{Recall that we are omitting the effect of post multiplying by isotropy subgroup, so the square bracket notation isn't the most precise here. In full it should be written as $\textrm{Horz}^W_{[Q]} = \left\{\begin{bNiceArray}{c|c}U & U_{\bot}\ \end{bNiceArray}\begin{bmatrix} A & -B^{\intercal} \\ B & 0 \end{bmatrix}\begin{bmatrix} I & 0 \\ 0 & Q \end{bmatrix}\middle| Q \in \mathcal{O}(m-n)\right\}$.} This information can be summarized in Figure \ref{fig:Proj_image}, which has been adapted from \cite{bendokat2024grassmann}, with additional annotations relevant to this discussion.

\begin{figure}[htbp]
\centering
{\includegraphics[width=0.45\textwidth]{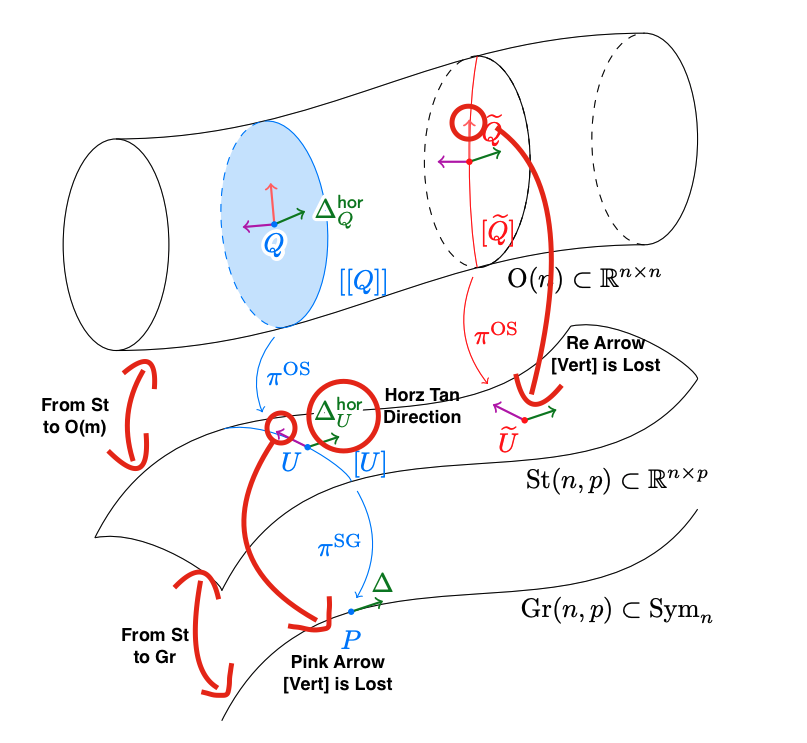}}
\caption{Demonstrating the jumps which occur as one moves between the quotient structures of $\St$ and $\Gr$ as defined relative to total space $\mathcal{O}(m)$. Notice that if one is defining $\Gr$ via a quotient manifold relative to $\mathcal{O}(m)$, then $\textrm{Horz}^W_{[[Q]]}$ would include the red, and purple arrows (and notice that this is a double-quotient space as two projections are required). Image adapted from \cite{bendokat2024grassmann}.}
\label{fig:Proj_image}
\end{figure}

With this in mind, let us proceed to evaluate the inner product with respect to the lifted horizontal tangent vectors, $\textrm{Horz}^W_{[Q]} = [\Delta]$, noting that an extra factor of $1/2$ is at the front for the purpose of scaling, which will be made clear towards the end,

\begin{align*}
\frac{1}{2}\text{tr}\left([\Delta]^\intercal [\Delta] \right) &= \text{tr}\left( \begin{bmatrix} A^{\intercal}U^{\intercal} + B^\intercal U_{\bot}^{\intercal} \\ 
-BU^\intercal \end{bmatrix} \left[UA + U_{\bot}B \mid -U B^{\intercal} \right] \right)\\
&= \text{tr}\left(\begin{bmatrix}(A^{\intercal}U^{\intercal} + B^\intercal U_{\bot}^{\intercal}) (UA + U_{\bot}B) & \ldots \\ \ldots &BU^\intercal U B^\intercal \end{bmatrix} \right)\\
&=\frac{1}{2}\text{tr}\left((A^{\intercal}U^{\intercal} + B^\intercal U_{\bot}^{\intercal}) (UA + U_{\bot}B) \right) + \text{tr}\left(B B^\intercal \right)\\
&= \frac{1}{2}\text{tr}\left(A^\intercal A\right) + 2\text{tr}\left(B^\intercal B\right)\\
&= \frac{1}{2}\cdot 2(\sum_{j>i} a_{ij}^2 + \sum_{i,j} b_{ij}^2)\\
&= \sum_{j>i} a_{ij}^2 + \sum_{i,j} b_{ij}^2,
\end{align*}

which is \textit{perfectly balanced, as all things should be.} One can also show that this works for both ``corner cases'', which are that of the hyper sphere ($\text{St}_1^m$), and the orthogonal group ($\text{St}_m^m$) respectively. We shall use the weighted representation of the metric in these examples. 

\textbf{Applying Canonical Metric to Hypersphere}

Consider $U\in\mathbb{R}^{m\times 1}$. Write this in lower-case as $u$ to denote it's a vector in $\mathbb{R}^m$, and note that $u^{\intercal}u=1$, but $uu^{\intercal}\in\mathbb{R}^{m\times m}$, with $\text{rank}(uu^{\intercal})=1$. Thus,
\begin{align*}
   \text{tr}\left(\Delta^{\intercal}\left(I - \frac{1}{2}uu^{\intercal} \right) \tilde{\Delta}\right) &= \text{tr}(\Delta^{\intercal}\tilde{\Delta}) - \frac{1}{2}\text{tr}\left(\Delta^{\intercal} uu^{\intercal}\tilde{\Delta} \right)\\
   &= \text{tr}(\Delta^{\intercal}\tilde{\Delta})- \frac{1}{2}\text{tr}\left(uu^{\intercal} \tilde{\Delta}\Delta^{\intercal}\right) \\
   &= \text{tr}(\Delta^{\intercal}\tilde{\Delta})- \frac{1}{2}\text{tr}\left((u^{\intercal}u)^{\intercal} \tilde{\Delta}\Delta^{\intercal}\right) \\
   &=\text{tr}(\Delta^{\intercal}\tilde{\Delta})- \frac{1}{2}\text{tr}\left(\tilde{\Delta}\Delta^{\intercal}\right)\\
   &=\text{tr}(\Delta^{\intercal}\tilde{\Delta})- \frac{1}{2}\text{tr}\left(\Delta^{\intercal}\tilde{\Delta}\right) \\
   &=\frac{1}{2}\text{tr}\left(\Delta^{\intercal}\tilde{\Delta}\right),
\end{align*}

where interestingly it can be seen that once again the factor of a $1/2$ is introduced out front. 

\textbf{Applying Canonical Metric to Orthogonal Group}

If $U\in\mathcal{O}(m)$, then $U^{\intercal}U =UU^{\intercal}=I $ the,
\begin{align*}
     \text{tr}\left(\Delta^{\intercal}\left(I - \frac{1}{2}UU^{\intercal} \right) \tilde{\Delta}\right) &= \text{tr}\left(\Delta^{\intercal}\left(I - \frac{1}{2}I \right) \tilde{\Delta}\right)\\
     &= \text{tr}\left(\Delta^{\intercal}\left(\frac{1}{2}I\right) \tilde{\Delta}\right) \\
     &=\frac{1}{2}\text{tr}\left(\Delta^{\intercal}\tilde{\Delta}\right),
\end{align*}

which also necessitates the half scaling factor. 

\textbf{Stiefel $\alpha$-metrics}

As a final note we will make note of a recent development by Zimmerman \cite{zimmermann2022computing} that has seen a unification of a set of Stiefel metrics parameterisied by $\alpha\in\mathbb{R}\-1$. The corresponding generalized $\alpha$-metric over the Stiefel manifold is given as follows,

\begin{align*}
  \langle \Delta, \tilde{\Delta} \rangle_U^{\alpha} = \text{tr} \left( \Delta^T \left(I - \frac{2\alpha + 1}{ 2(\alpha + 1)}UU^T\right) \tilde{\Delta}  \right) = \frac{1}{2(\alpha + 1)} \text{tr} (A^T \tilde{A}) + \text{tr} (B^T \tilde{B}). 
\end{align*}

Evidently, if $\alpha = 0$ one arrives at the canonical metric, and if $\alpha=-\frac{1}{2}$, one arrives at the Euclidean metric. Additionally, if one takes the limit of $\alpha \rightarrow \infty$, the $\alpha$-metric quasi-geodesic curves (smooth curves close to geodesics in a sense, but perhaps may have ill-defined logarithmic maps, or poor tangent space retractions). Moreover, if one considers the case of $\alpha < -1$, one arrives at a set of pseudo-Riemannian metrics, of which there is currently no known use case \cite{huper2021lagrangian}. 

The generalization of the $\alpha$-metric has also lead to the generalization of the exponential maps (and Stiefel logarithms, although they were not used here). The generalized $\alpha$-exponential map over $\St$ is defined as, 

\begin{equation*}
\text{Exp}_U^{\alpha}(\Delta) = \exp_m\left(-\frac{2\alpha + 1}{\alpha + 1}UAU^\intercal + \Delta U^\intercal - U\Delta^\intercal\right) \exp_m\left(\frac{\alpha}{\alpha + 1}A\right)
\end{equation*}

where $\exp_m(U) = \sum_{n=0}^{\infty} \frac{U^n}{n!}$, denotes matrix exponentiation. An algorithm for computing the exponential retraction with such metrics are provided in \cite{zimmermann2022computing, edelman1998geometry}, and often rely on the efficiency gains imposed by the QR-decomposition. If $\Delta\in\mathbb{R}^{m\times n}$, then the corresponding time complexity is then typically taken $O(mn^2)$, for $m>n$ (and is naturally then, $O(m^2n)$, for $m<n$) \cite{zimmermann2022computing,edelman1998geometry}.

We now assess the impact of various $\alpha$-metrics on the \textit{StiefelGen} data augmentation algorithm, examining their effects on the SteamGen and Taxi datasets (Figure \ref{fig:alpha_steam} and Figure \ref{fig:alpha_taxi}). Both datasets, with shapes $(m, n) = (20, 50)$, were subjected to moderate to large perturbation levels of 0.6 and smoothed with a 3-unit window. Notably, the $\alpha=-0.5$ metric, resembling the conventional ambient Euclidean metric, generally exhibited less perturbation behavior than the Stiefel canonical metric $\alpha=0$ (Figure \ref{fig:alpha}). This observation encourages further investigation, offering a compelling avenue for future research into the nuanced impact of differently weighting the $A$ and $B$ parameters via one's choice of $\alpha$-metric.

\begin{figure}[htbp]
\centering
\subfigure[Applying $\alpha$-metric geometry to the SteamGen data set.\label{fig:alpha_steam}]{\includegraphics[width=0.7\textwidth]{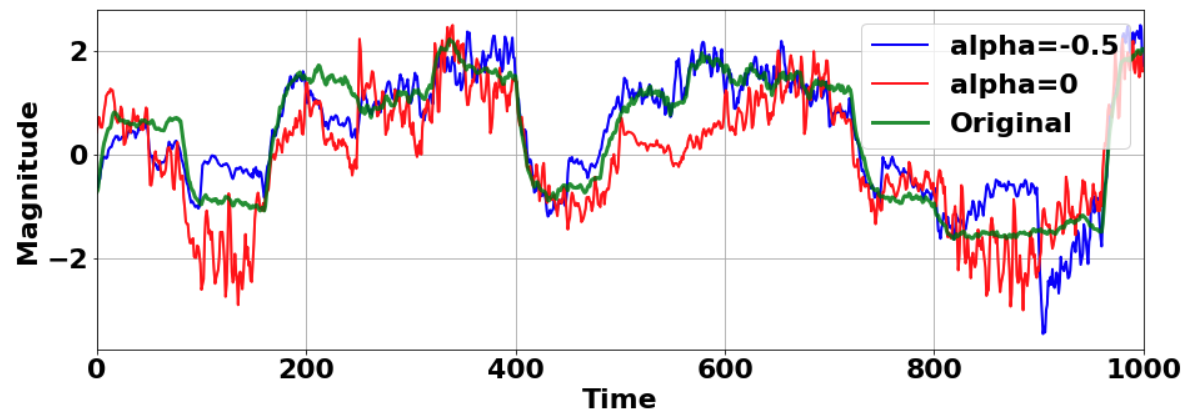}}

\subfigure[Applying $\alpha$-metric geometry to the Taxi data set.\label{fig:alpha_taxi}]{\includegraphics[width=0.7\textwidth]{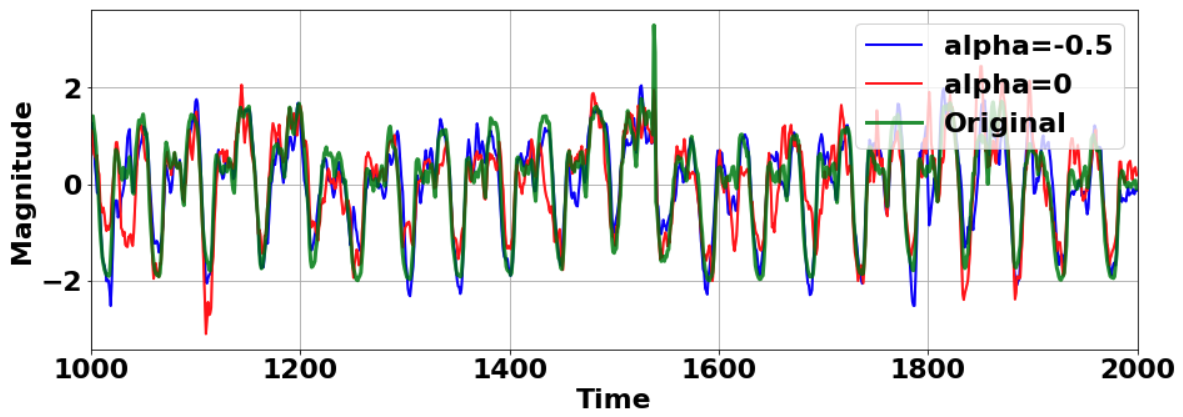}}

\caption{Applying $\alpha=[-0.5,0.0]$ to the SteamGen and Taxi datasets, to see the effect of different $\alpha$-metrics on the \textit{StiefelGen algorithm}.}
\label{fig:alpha}
\end{figure}

\end{document}